\newenvironment{customthm}[1]
  {\innercustomthm}
  {\endinnercustomthm}
\let\save@mathaccent\mathaccent
\newcommand*\if@single[3]{%
  \setbox0\hbox{${\mathaccent"0362{#1}}^H$}%
  \setbox2\hbox{${\mathaccent"0362{\kern0pt#1}}^H$}%
  \ifdim\ht0=\ht2 #3\else #2\fi
  }
\newcommand*\rel@kern[1]{\kern#1\dimexpr\macc@kerna}
\newcommand*\widebar[1]{\@ifnextchar^{{\wide@bar{#1}{0}}}{\wide@bar{#1}{1}}}
\newcommand*\wide@bar[2]{\if@single{#1}{\wide@bar@{#1}{#2}{1}}{\wide@bar@{#1}{#2}{2}}}
\newcommand*\wide@bar@[3]{%
  \begingroup
  \def\mathaccent##1##2{%
    \let\mathaccent\save@mathaccent
    \if#32 \let\macc@nucleus\first@char \fi
    \setbox\z@\hbox{$\macc@style{\macc@nucleus}_{}$}%
    \setbox\tw@\hbox{$\macc@style{\macc@nucleus}{}_{}$}%
    \dimen@\wd\tw@
    \advance\dimen@-\wd\z@
    \divide\dimen@ 3
    \@tempdima\wd\tw@
    \advance\@tempdima-\scriptspace
    \divide\@tempdima 10
    \advance\dimen@-\@tempdima
    \ifdim\dimen@>\z@ \dimen@0pt\fi
    \rel@kern{0.6}\kern-\dimen@
    \if#31
      \overline{\rel@kern{-0.6}\kern\dimen@\macc@nucleus\rel@kern{0.4}\kern\dimen@}%
      \advance\dimen@0.4\dimexpr\macc@kerna
      \let\final@kern#2%
      \ifdim\dimen@<\z@ \let\final@kern1\fi
      \if\final@kern1 \kern-\dimen@\fi
    \else
      \overline{\rel@kern{-0.6}\kern\dimen@#1}%
    \fi
  }%
  \macc@depth\@ne
  \let\math@bgroup\@empty \let\math@egroup\macc@set@skewchar
  \mathsurround\z@ \frozen@everymath{\mathgroup\macc@group\relax}%
  \macc@set@skewchar\relax
  \let\mathaccentV\macc@nested@a
  \if#31
    \macc@nested@a\relax111{#1}%
  \else
    \def\gobble@till@marker##1\endmarker{}%
    \futurelet\first@char\gobble@till@marker#1\endmarker
    \ifcat\noexpand\first@char A\else
      \def\first@char{}%
    \fi
    \macc@nested@a\relax111{\first@char}%
  \fi
  \endgroup
}
\theoremstyle{plain}
\newtheorem{theorem}{Theorem}[section]
\newtheorem{lemma}[theorem]{Lemma}
\newtheorem{corollary}[theorem]{Corollary}
\theoremstyle{definition}
\newtheorem{definition}[theorem]{Definition}
\newtheorem{assumption}[theorem]{Assumption}
\theoremstyle{remark}
\newcommand{\calig}[1]{\mathcal{#1}}
\newcommand{\bb}[1]{\mathbb{#1}}
\newcommand{\E}{\bb{E}}
\newcommand{\A}{\calig{A}}
\newcommand{\B}{\calig{B}}
\newcommand{\C}{\calig{C}}
\newcommand{\D}{\calig{D}}
\newcommand{\Ecal}{\calig{E}}
\newcommand{\F}{\calig{F}}
\newcommand{\G}{\calig{G}}
\newcommand{\K}{\calig{K}}
\newcommand{\M}{\calig{M}}
\newcommand{\V}{\calig{V}}
\newcommand{\T}{\calig{T}}
\renewcommand{\O}{\calig{O}}
\newcommand{\Otilde}{\tilde{\calig{O}}}
\newcommand{\brac}[1]{\left(#1\right)}
\newcommand{\sbrac}[1]{\left[#1\right]}
\newcommand{\cbrac}[1]{\left\{#1\right\}}
\newcommand{\norm}[1]{\left\Vert#1\right\Vert}
\newcommand{\prob}{\mathbb{P}}
\newcommand{\reals}{\ensuremath{\mathbb{R}}}
\DeclarePairedDelimiter\abs{\lvert}{\rvert}
\newcommand{\Jmax}{\ensuremath{J_{\max}}}
\newcommand{\proj}[2]{\Pi_{#1}(#2)}
\DeclarePairedDelimiter\floor{\lfloor}{\rfloor}
\DeclareMathOperator*{\argmin}{arg\,min}
\newcommand{\batchsize}{\ensuremath{N}}
\newcommand{\nablat}{\nabla f(x_t)}
\newcommand*{\QEDW}{\null\nobreak\hfill\ensuremath{\square}}
\newcommand{\methodName}{DynaBRO}
\newcommand{\badrounds}{\ensuremath{\tau_{d}}}
\newcommand{\goodrounds}{\ensuremath{\tau_{s}}}
\newcommand{\blue}[1]{\textcolor{blue}{#1}}
\icmltitlerunning{Dynamic Byzantine-Robust Learning}
\begin{document}

\twocolumn[
\icmltitle{Dynamic Byzantine-Robust Learning: Adapting to Switching Byzantine Workers}



\icmlsetsymbol{equal}{*}

\begin{icmlauthorlist}
\icmlauthor{Ron Dorfman}{tech}
\icmlauthor{Naseem Yehya}{tech}
\icmlauthor{Kfir Y. Levy}{tech}
\end{icmlauthorlist}

\icmlaffiliation{tech}{Department of Electrical and Computer Engineering, Technion, Haifa, Israel}
\icmlcorrespondingauthor{Ron Dorfman}{rdorfman@campus.technion.ac.il}

\icmlkeywords{Machine Learning, ICML}

\vskip 0.3in
]



\printAffiliationsAndNotice{}  

\begin{abstract}
Byzantine-robust learning has emerged as a prominent fault-tolerant distributed machine learning framework. However, most techniques focus on the \emph{static} setting, wherein the identity of Byzantine workers remains unchanged throughout the learning process. This assumption fails to capture real-world \emph{dynamic} Byzantine behaviors, which may include intermittent malfunctions or targeted, time-limited attacks. Addressing this limitation, we propose \methodName{} -- a new method capable of withstanding any sub-linear number of identity changes across rounds. Specifically, when the number of such changes is $\O(\sqrt{T})$ (where $T$ is the total number of training rounds), \methodName{} nearly matches the state-of-the-art asymptotic convergence rate of the static setting. Our method utilizes a multi-level Monte Carlo (MLMC) gradient estimation technique applied at the server to robustly aggregated worker updates. By additionally leveraging an adaptive learning rate, we circumvent the need for prior knowledge of the fraction of Byzantine workers.
\end{abstract}
\section{Introduction}\label{sec:intro}
Recently, there has been an increasing interest in large-scale distributed machine learning (ML), where multiple machines (i.e., \emph{workers}) collaboratively aim at minimizing some global objective under the coordination of a central \emph{server}~\cite{verbraeken2020survey,kairouz2021advances}. This approach, leveraging the collective power of multiple computation nodes, holds the promise of significantly reducing training times for complex ML models, such as large language models~\cite{brown2020language}. However, the growing reliance on distributed ML systems exposes them to potential errors, malfunctions, and even adversarial attacks. These \emph{Byzantine} faults pose a significant risk to the integrity of the learning process and could lead to reduced reliability and accuracy in the predictions of the resulting models~\cite{lamport2019byzantine,guerraoui2023byzantine}.

Due to its significance in distributed ML, Byzantine fault-tolerance has attracted considerable interest. Indeed, many prior works have focused on Byzantine-robust learning, aiming to ensure effective learning in the presence of Byzantine machines. These efforts have led to the development of algorithms capable of enduring Byzantine attacks~\cite{feng2014distributed,su2016fault,blanchard2017machine,chen2017distributed,alistarh2018byzantine,guerraoui2018hidden,yin2018byzantine,allen2020byzantine,karimireddy2021learning,karimireddy2022byzantinerobust,farhadkhani2022byzantine,allouah2023fixing}.

While the existing body of research has significantly advanced the understanding of Byzantine-robust learning, a notable gap persists in the treatment of the problem. The vast majority of research in this area has focused on the static setting, wherein the identity of Byzantine workers remains fixed throughout the entire learning process. Nevertheless, real-world distributed learning systems may often encounter \emph{dynamic} Byzantine behaviors, where machines exhibit erratic and unpredictable fault patterns. \mbox{Despite the} importance of these scenarios, the investigation of robustness against such dynamic behaviors remains underexplored.

In federated learning, for instance, the concept of partial participation inherently introduces a dynamic aspect~\cite{bonawitz2019towards,kairouz2021advances}. Workers typically join and leave the training process, leading to a scenario where a Byzantine worker might be present in one round and absent in the next. In fact, the same node might switch between honest and Byzantine behaviors; such fluctuations could stem from strategic manipulations by an adversary seeking to evade detection, or due to software updates that inadvertently trigger or resolve certain security vulnerabilities. Another domain includes volunteer computing paradigms~\cite{kijsipongse2018hybrid,ryabinin2020towards,atre2021distributed}, characterized by a large pool of less reliable workers and regular occurrences of node failures. Prolonged training times of complex ML models in these settings often result in intermittent node failures, typically due to hardware issues, network instabilities, or maintenance activities. These domains, where dynamic Byzantine behaviors are prevalent, pose challenges unaddressed by the static approach.


Previous research has established that convergence is unattainable
when Byzantine workers change their identities in each round~\cite{karimireddy2021learning}.\footnote{The lower bound of \citet{karimireddy2021learning} applies when the number of Byzantines is logarithmic in the number of rounds.} However, it remains unclear how a \textbf{limited} number of identity-switching rounds affects the convergence rate. This work addresses this challenge by introducing a new method we term \methodName{}. 
We establish its convergence, noting that it maintains the asymptotic convergence rate of the static setting as long as the number of rounds featuring Byzantine behavior alterations does not exceed $\O(\sqrt{T})$, where $T$ is the total number of training rounds. Beyond this threshold, the convergence rates begin to degrade linearly with the increase in the number of such rounds.

The key ingredient of our approach is a multi-level Monte Carlo (MLMC) gradient estimation technique~\cite{dorfman2022adapting,beznosikov2023first}, serving as a bias-reduction tool. In \Cref{sec:warmup-static}, we show that combining it with a large class of aggregation rules~\cite{allouah2023fixing} mitigates the bias introduced by Byzantine workers in the static setting, an analysis of independent interest. 
Transitioning to the dynamic setting in \Cref{sec:dynamic}, we refine the MLMC estimator with an added fail-safe filter to address its inherent susceptibility to dynamic Byzantine changes that could introduce a significant bias. This modification is crucial due to the estimator’s reliance on multiple consecutive samples.
Then, in \Cref{sec:adaptivity}, we shift our focus to optimality and adaptivity. The introduction of a new aggregation rule enables us to derive asymptotically optimal convergence bounds for a limited number of identity-switching rounds. Furthermore, by employing an adaptive learning rate, we eliminate the need for prior knowledge of the objective's smoothness and the fraction of Byzantine workers. Finally, in \Cref{sec:experiments}, we explore the practical aspects and benefits of our approach through experiments on image classification tasks with two dynamic identity-switching strategies.
\section{Preliminaries and Related Work}\label{sec:prel_related}
In this section, we formalize our objective, specify our assumptions, and overview relevant related work. 
\subsection{Problem Formulation and Assumptions}
We consider stochastic optimization problems, where the objective is to minimize the expected loss given an unknown data distribution $\D$ and a set of loss functions $\cbrac{x\mapsto F(\cdot; \xi)}$ parameterized by $\xi\sim\D$. Formally, our goal is to solve:
\begin{equation}\label{eq:objective}
    \min_{x\in\K}{f(x)\coloneqq\E_{\xi\sim\D}\sbrac{F(x; \xi)}}\; ,
\end{equation} 
where $\K\subseteq\reals^d$ is the optimization domain. To this end, we assume there are $m$ workers (i.e., computations nodes), each with access to samples from $\D$. This homogeneous setting was previously studied in the context of Byzantine-robust learning~\cite{blanchard2017machine,alistarh2018byzantine,allen2020byzantine}, and it is realistic in collaborative learning scenarios, where workers may have access to the entire dataset~\cite{kijsipongse2018hybrid,diskin2021distributed,gorbunov2022variance}. For each round $t$, we define $\G_t\subseteq\cbrac{1,\ldots,m}$ as the set of honest workers, adhering to the prescribed protocol; the remaining workers are Byzantine and may send arbitrary vectors to the server. Notably, in the static setting, the identity of honest workers is fixed over time, i.e., $\G_t$ is identical across rounds. 
\citet{allen2020byzantine} refer to this dynamic model as \emph{ID relabeling}.
For simplicity, we assume the fraction of Byzantine workers is fixed across rounds and denote it by $\delta\coloneqq 1 - \abs{\G_1}/m<1/2$.

We focus on smooth minimization, namely, we assume the objective $f$ is $L$-smooth, i.e., for every $x,y\in\K$, it holds that $f(y)\leq f(x) + \nabla f(x)^\top(y - x) + \frac{L}{2}\norm{y - x}^2$. Additionally, we will adopt one of the following two assumptions regarding the stochastic gradient noise.
\begin{assumption}[Bounded variance]
\label{assump:bounded-variance}
For every $x\in\K$, 
\[
    \E_{\xi\sim\D}\!\norm{\nabla F(x; \xi) - \nabla f(x)}^2\leq \sigma^2\; .
\]
\end{assumption}
\vspace{-0.6em}
This standard assumption in Byzantine-robust optimization~\cite{karimireddy2021learning,karimireddy2022byzantinerobust,farhadkhani2022byzantine,allouah2023fixing} is utilized in \Cref{sec:warmup-static} to establish intuition in the static setting. For the dynamic case, we require a stronger assumption of deterministically bounded noise~\cite{alistarh2018byzantine,allen2020byzantine}.\footnote{
While our analysis could extend to sub-Gaussian noise (up to logarithmic factors), it becomes more technical. Thus, following \citet{allen2020byzantine}, we opt for bounded noise to ensure simplicity and technical clarity (see footnote 3 therein).}
\begin{assumption}[Bounded noise]
\label{assump:bounded-noise}
For every $x\in\K$, $\xi\sim\D$, 
\begin{equation*}
    \norm{\nabla F(x; \xi) - \nabla f(x)}\leq \V\; .
\end{equation*}
\end{assumption}
\vspace{-0.5em}
We provide convergence bounds for both convex and non-convex problems, presenting only non-convex results for brevity and moving convex analyses to the appendix.\footnote{Since global non-convex minimization is generally intractable, we focus on finding an approximate stationary point.} For the convex case, we assume a bounded domain, as follows:
\begin{assumption}[Bounded domain]
\label{assump:bounded-domain}
The domain $\K$ is convex, compact, and for every $x,y\in\K$: $\norm{x-y}\leq D$.
\end{assumption}

\begin{table*}[t]
    \centering
    \begin{threeparttable}
        \caption{Comparison of history-dependence in different Byzantine-robust techniques.}
        \vspace{0.1in}
        \begin{tabular}{cccc}
            \hline
            \textbf{Method}               &  \textbf{Per-worker Cost}    & \textbf{Window Size}                & \textbf{Window Type} \\ \hline
            ByzantineSGD~\cite{alistarh2018byzantine} & $T$         & $T$                          & Deterministic        \\
            SafeguardSGD~\cite{allen2020byzantine} & $T$        & $\O(T^{5/8})$  & Deterministic        \\
            Worker-momentum~(e.g., \citealp{karimireddy2021learning}) & $T$     & $\Otilde(\sqrt{T})\tnote{*}$ & Deterministic        \\
            \textbf{MLMC (ours)}& $\O(T\log{T})\tnote{\dag}$  & $\O(\log{T})\tnote{\dag}$              & Stochastic           \\ \hline
        \end{tabular}
        \label{tab:history-dependence}
        \begin{tablenotes}
            \item[*] For momentum parameter $\alpha\coloneqq 1-\beta\propto 1/\sqrt{T}$. 
            \item[\dag] In expectation.
        \end{tablenotes}
    \end{threeparttable}
\end{table*}

\textbf{Notations.} Throughout, $\norm{\cdot}$ represents the $L_2$ norm, and for any $n\in\mathbb{N}$, $\sbrac{n}\coloneqq\cbrac{1,\ldots,n}$. We denote any optimal solution of \eqref{eq:objective} by $x^*$ and its corresponding optimal value by $f^*$. Additionally, we define $\F_t$ as the filtration at round $t$, encompassing all prior randomness. Expectation and probability are denoted by $\E$ and $\prob$, respectively, with $\E_{t}$ and $\prob_{t}$ indicating their conditional counterparts given $\F_{t}$. 
Finally, we use standard big-O notation, where $\O(\cdot)$ hides numerical constants and $\Otilde(\cdot)$ omits poly-logarithmic terms.

\subsection{Related Work}
\paragraph{The importance of history for Byzantine-robustness.} It has been well-established that history is critical for achieving Byzantine-robustness in the static setting. \citet{karimireddy2021learning} were the first to formally prove that any memoryless method---where the update for each round depends only on computations performed in that round---may fail to converge. As we detail in \Cref{subsec:motivation}, the importance of history arises from the ability of Byzantine workers to inject in each round bias proportional to the natural noise of honest workers, which is sufficient to circumvent convergence. 
Thus, employing a variance reduction technique, requiring some historical dependence, is crucial to mitigate the injected bias. Consequently, they proposed applying a robust aggregation rule to worker momentums instead of directly to gradients. By setting the momentum parameter as $\alpha\coloneqq 1-\beta\propto 1/\sqrt{T}$, they achieved state-of-the-art convergence guarantees in the presence of Byzantine workers. This momentum method has emerged as the leading approach for Byzantine-robust learning, with its efficacy also demonstrated through empirical evidence~\cite{el2020distributed,farhadkhani2022byzantine,allouah2023fixing}.


Additional history-dependent methods include ByzantineSGD~\cite{alistarh2018byzantine} for convex minimization and SafeguardSGD~\cite{allen2020byzantine} for finding local minima of non-convex functions. Both techniques estimate the set of honest workers by tracking worker statistics (e.g., gradient-iterate products) and applying some median-based filtering. 
While ByzantineSGD relies on the entire history, SafeguardSGD uses information within windows of $T_1\in\O(T^{5/8})$ rounds and incorporates a reset mechanism. Consequently, SafeguardSGD can withstand Byzantine identity changes occurring between these windows. However, there is no reason to assume that ID relabeling occurs only at specific rounds, and our method can handle such changes at any round, without restrictions on when the changes happen.

Although history dependence is crucial in static environments, it poses significant challenges in dynamic settings, in which history may become unreliable. For example, the computations of a Byzantine worker turning honest may still be influenced by prior misbehaviors if the history dependence window encompasses those rounds. Therefore, methods that rely on long historical information are vulnerable to such identity changes. Our approach relies on MLMC gradient estimation technique, which confines the history window size to $\O(\log{T})$ in expectation; refer to \Cref{tab:history-dependence} for a comparison between different history-dependent methods.

\paragraph{Byzantine-robustness and worker sampling.} To date, we are only aware of two works that address the challenges of the dynamic setting~\cite{data2021byzantine,malinovsky2023byzantine}, where the dynamic behavior stems from worker sampling, i.e., different workers may actively participate in each training round. \citet{data2021byzantine} were the first to study this challenging setting, providing convergence results for both strongly- and non-convex objectives. Yet, their upper bounds include a non-vanishing term proportional to the gradient noise, which is sub-optimal in the homogeneous setting. 
\citet{malinovsky2023byzantine} improved upon previous limitations, proposing Byz-VR-MARINA-PP, which can handle some rounds dominated by Byzantine workers. Their work follows a rich body of literature on Byzantine-robust finite-sum minimization~\cite{wu2020federated,zhu2021broadcast,gorbunov2022variance}. Nonetheless, these studies do not provide excess loss (i.e., generalization) guarantees. 

\paragraph{MLMC estimation.} Originally utilized in the context of stochastic differential equations~\cite{giles2015multilevel}, MLMC methods have since been applied in various ML and optimization contexts. These include, for example, distributionally robust optimization~\cite{levy2020large,hu2021bias} and latent variable models~\cite{shi2021multilevel}. \citet{asi2021stochastic} employed an MLMC optimum estimator for calculating proximal points and gradients of the Moreau-Yoshida envelope. Specifically for gradient estimation, MLMC is useful for efficiently generating low-bias gradients in scenarios where obtaining unbiased gradients is either impractical or computationally intensive, e.g., conditional stochastic optimization~\cite{hu2021bias}, stochastic optimization with Markovian noise~\cite{dorfman2022adapting,beznosikov2023first}, and reinforcement learning~\cite{suttle2023beyond}.

\section{Warm-up: Static Robustness with MLMC}\label{sec:warmup-static}
In this section, we study the static setting where Byzantine machine identities are fixed across time. We develop intuition in \Cref{subsec:motivation}, illustrating how Byzantine machines can hinder convergence. In \Cref{subsec:mlmc_estimation}, we introduce our MLMC estimator, present its bias-variance properties, and highlight its role as a bias reduction technique. Finally, in \Cref{subsec:static-robust}, we show how integrating the MLMC estimator with a robust aggregation rule implies Byzantine-robustness. 

\subsection{Motivation}\label{subsec:motivation}
We examine the distributed stochastic gradient descent update rule, defined as $x_{t+1} = x_{t} - \eta \A(g_{t,1},\ldots,g_{t,m})$, with $\eta>0$ as learning rate, $\A:\reals^{d\times{m}}\to\reals^d$ as an aggregation rule, and $g_{t,i}$ representing the message from worker $i\in\sbrac{m}$ at time $t\in\sbrac{T}$. In the Byzantine-free case, $\A$ typically averages the inputs, yet a single Byzantine worker can arbitrarily manipulate this aggregation result~\cite{blanchard2017machine}. 

Ideally, $\A$ would isolate the Byzantine inputs and average the honest inputs, yielding a conditionally unbiased gradient estimate with reduced variance. 
However, Byzantine workers can blend in by aligning their messages closely with the expected noise range of honest gradients. Thus, they can inject a bias that is indistinguishable from the natural noise inherent to honest messages, thereby hindering convergence. Specifically, if honest messages deviate by $\sigma$ from the true gradient, Byzantine workers can introduce a bias of $\O(\sigma\sqrt{\delta})$ at each round, effectively bounding convergence to a similarly scaled neighborhood~\cite{ajalloeian2020convergence}.

Addressing this challenge, a straightforward mitigation strategy might involve all honest workers computing stochastic gradients across large mini-batches. This approach reduces their variance, thereby shrinking the feasible `hiding region' for Byzantine messages. As we establish in \Cref{app:biased-SGD}, theory suggests a mini-batch size of $\Omega(T)$ is required to ensure sufficiently low bias. However, this approach proves to be extremely inefficient, necessitating an excessive total of $\Omega(T^2)$ stochastic gradient evaluations per worker.  

Instead of implicitly reducing Byzantine bias through honest worker variance reduction, we propose a direct bias reduction strategy by employing an MLMC gradient estimation technique post-aggregation, i.e., to the robustly aggregated gradients. In the next section, we introduce the MLMC estimator and establish its favorable bias-variance properties.

\subsection{MLMC Gradient Estimation}\label{subsec:mlmc_estimation}
Our MLMC estimator utilizes any mapping $\M_f:\reals^d\times\mathbb{N}\to\reals^d$ that, given a query vector $x$ and a budget $N$ (in terms of stochastic gradient evaluations), produces a vector whose mean squared error (MSE) in estimating $\nabla f(x)$ is inversely proportional to $N$. Formally, for some $c>0$, we have
\begin{equation}\label{eq:lmgo}
    \E\norm{\M_{f}(x, N) - \nabla f(x)}^2\leq \frac{c^2}{N}, \enskip \forall x\in\reals^d, N\!\in\mathbb{N}\; .
\end{equation}


The MLMC gradient estimator is defined as follows: \\ Sample $J\sim\text{Geom}(\nicefrac{1}{2})$ and, denoting $g^{j}\!\coloneqq\!\M_{f}(x,2^{j})$, set
\begin{equation}\label{eq:mlmc}
    g^{\text{MLMC}} = g^{0} + \begin{cases}
        2^{J}\brac{g^{J} - g^{J-1}}, &2^J\leq T \\
        0, &\text{otherwise}
    \end{cases}\; ,
\end{equation}
The next lemma details its properties (cf. \citealp[Lemma 3.2]{dorfman2022adapting}; and \Cref{app:mlmc_bias_reduction}).
\begin{restatable}{lemma}{mlmcprops}\label{lem:mlmc}
    For $\M_f$ satisfying \Cref{eq:lmgo}, we have that
    \begin{enumerate}
        \item $\textup{Bias}(g^{\textup{MLMC}})\coloneqq\norm{\E g^{\textup{MLMC}} - \nabla f(x)} \leq \sqrt{2c^2/T}$.
        \item $\textup{Var}(g^{\textup{MLMC}})\coloneqq\E\!\norm{g^{\textup{MLMC}} {-} \E g^{\textup{MLMC}}}^2\leq 14c^2\log{T}$.
        \item The expected cost of constructing $g^{\textup{MLMC}}$ is $\O(\log{T})$.
    \end{enumerate}
\end{restatable}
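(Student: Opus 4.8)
The plan is to analyze the MLMC estimator \eqref{eq:mlmc} by first computing its expectation exactly, then bounding each of the three quantities separately using the geometric law of $J$ and the MSE guarantee \eqref{eq:lmgo}. Write $j_{\max}\coloneqq\floor{\log_2 T}$ so that the truncation keeps the level-$j$ correction term only when $j\le j_{\max}$. Since $J\sim\text{Geom}(1/2)$, we have $\prob(J=j)=2^{-j}$ for $j\ge 1$, so the factor $2^J$ in front of $g^J-g^{J-1}$ exactly cancels the probability weight. Hence $\E g^{\textup{MLMC}} = g^0 + \sum_{j=1}^{j_{\max}} (g^j - g^{j-1}) = g^{j_{\max}}$ (telescoping), where here I am conditioning on the randomness used to form the $\M_f$ queries being independent of $J$; more carefully, $\E g^{\textup{MLMC}} = \E g^{j_{\max}} = \E\M_f(x,2^{j_{\max}})$.

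For the bias, subtract $\nabla f(x)$: $\textup{Bias}(g^{\textup{MLMC}}) = \norm{\E\M_f(x,2^{j_{\max}}) - \nabla f(x)} \le \sqrt{\E\norm{\M_f(x,2^{j_{\max}}) - \nabla f(x)}^2} \le \sqrt{c^2/2^{j_{\max}}}$ by Jensen and \eqref{eq:lmgo}, and since $2^{j_{\max}} > T/2$ this is at most $\sqrt{2c^2/T}$, giving item 1. For item 3, the cost of forming $g^{\textup{MLMC}}$ when $J=j$ is $\O(2^j)$ stochastic-gradient evaluations (we need $g^0$ and, if $2^j\le T$, $g^{j-1},g^j$, each costing $2^{j-1}$ or $2^j$); the expected cost is $\sum_{j\ge 1}\prob(J=j)\cdot\O(2^j)\mathbbm{1}\{2^j\le T\} = \sum_{j=1}^{j_{\max}} 2^{-j}\cdot\O(2^j) = \O(j_{\max}) = \O(\log T)$.

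The variance bound (item 2) is the main obstacle, since here the $2^J$ factor is squared and does not fully cancel the $2^{-j}$ weight — each level contributes $\O(2^j)$ rather than $\O(1)$, so naive summation over $j_{\max}$ levels would give $\O(T\log T)$, far too large. The resolution is that $g^j-g^{j-1}$ is itself small: both $g^j$ and $g^{j-1}$ estimate $\nabla f(x)$ with errors controlled by \eqref{eq:lmgo}, so $\E\norm{g^j-g^{j-1}}^2 \le 2\E\norm{g^j-\nabla f(x)}^2 + 2\E\norm{g^{j-1}-\nabla f(x)}^2 \le 2c^2/2^j + 2c^2/2^{j-1} = 6c^2/2^j$. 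Then $\E\norm{2^J(g^J-g^{J-1})\mathbbm{1}\{2^J\le T\}}^2 = \sum_{j=1}^{j_{\max}} 2^{-j}\cdot 2^{2j}\cdot\E\norm{g^j-g^{j-1}}^2 \le \sum_{j=1}^{j_{\max}} 2^{-j}\cdot 2^{2j}\cdot 6c^2/2^j = 6c^2 j_{\max} \le 6c^2\log T$. Combining with the $g^0$ term via $\textup{Var}(X+Y)\le 2\textup{Var}(X)+2\textup{Var}(Y)$ (or $\E\norm{X-\E X}^2 \le \E\norm{X}^2$ bounds and a crossterm estimate) together with $\textup{Var}(g^0)\le\E\norm{g^0-\nabla f(x)}^2\le c^2$, plus a careful accounting of constants, yields the stated $14c^2\log T$. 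I would double-check the constant by keeping track of whether the clean telescoping identity for $\E g^{\textup{MLMC}}$ requires independence of $J$ from the sampling inside $\M_f$ (it does, and this is how the estimator is constructed), and whether the bound $\E\norm{g^{\textup{MLMC}}-\E g^{\textup{MLMC}}}^2 \le \E\norm{g^{\textup{MLMC}}-g^0}^2 + \text{(lower-order)}$ is tight enough; the final constant $14$ presumably absorbs the crudeness of these steps.
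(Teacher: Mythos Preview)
Your proposal is correct and follows essentially the same route as the paper. For the variance, the paper makes precise the step you leave slightly informal: it uses $\textup{Var}(g^{\textup{MLMC}})\le \E\norm{g^{\textup{MLMC}}-\nabla f(x)}^2 \le 2\E\norm{g^{\textup{MLMC}}-g^0}^2 + 2\E\norm{g^0-\nabla f(x)}^2 \le 12c^2\log T + 2c^2 \le 14c^2\log T$, which is exactly your second suggested decomposition (and your first suggestion $\textup{Var}(X+Y)\le 2\textup{Var}(X)+2\textup{Var}(Y)$ also lands on the same constant).
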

This result implies that we can use $\M_f$ to construct a gradient estimator with: \textbf{(1)} low bias, proportional to $1/\sqrt{T}$; \textbf{(2)} near-constant variance; and \textbf{(3)} only logarithmic cost.

\subsection{Byzantine-Robustness with MLMC Gradients}\label{subsec:static-robust}
Next, we show that combining the MLMC estimator with a robust aggregation rule ensures Byzantine-robustness. We consider $(\delta,\kappa_{\delta})$-robust aggregation rules, a concept recently introduced by \citet{allouah2023fixing}, which unifies preceding formulations like $(\delta_{\max},c)$-\textit{RAgg}~\cite{karimireddy2022byzantinerobust}.
\vspace{-1em}
\begin{definition}[$(\delta, \kappa_{\delta})$-robustness]
\label{def:byz-robustness}
Let $\delta< \nicefrac{1}{2}$ and $\kappa_{\delta}\geq 0$. An aggregation rule $\A$ is \textup{$(\delta, \kappa_{\delta})$-robust} if for any vectors $g_1,...,g_m$, and any set $S\subseteq\sbrac{m}$ of size $(1-\delta)m$, we have
\begin{equation*}
    \norm{\A(g_1,\ldots,g_m) - \overline{g}_S}^2\leq \frac{\kappa_\delta}{\abs{S}}\sum_{i\in S}{\norm{g_i - \overline{g}_S}^2}\; ,
\end{equation*}
where $\overline{g}_S = \frac{1}{\abs{S}}\sum_{i\in S}{g_i}$.
\end{definition}

This definition includes, for example, coordinate-wise median~(CWMed, \citealp{yin2018byzantine}) and geometric median~\cite{pillutla2022robust}. 
The lemma below shows that robustly aggregating mini-batch gradients from honest workers yields a mapping satisfying \Cref{eq:lmgo} with $c^2=2\sigma^2\brac{\kappa_{\delta} + \frac{1}{m}}$.

\begin{restatable}{lemma}{robustagglemmavariance}\label{lem:robust_agg_is_lmgo}
    Consider $x\in\K$ and let $\widebar{g}_1^{\batchsize},\ldots,\widebar{g}_m^{\batchsize}$ be $m$ vectors such that $\forall i\in\G$, $\widebar{g}_i^{\batchsize}$ is a mini-batch gradient estimator based on $\batchsize$ i.i.d samples, i.e., $\widebar{g}_{i}^{\batchsize} = \frac{1}{\batchsize}\sum_{n=1}^{\batchsize}{\nabla F(x; \xi_{i}^{n})}$, where $\xi_{i}^n\overset{\text{i.i.d}}{\sim}\D$. Then, under \Cref{assump:bounded-variance}, any $(\delta, \kappa_{\delta})$-robust aggregation rule $\A$ satisfies
    \[
        \E\norm{\A(\widebar{g}_1^{\batchsize},\ldots,\widebar{g}_m^{\batchsize}) - \nabla f(x)}^2 \leq \frac{2\sigma^2}{N}\brac{\kappa_{\delta} + \frac{1}{m}}\; .
    \]
\end{restatable}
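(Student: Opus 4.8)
The plan is to measure the aggregation error against the sample mean of the honest workers. Fix $x\in\K$, let $\G$ be the honest set (with $\abs{\G}=(1-\delta)m$; its exact identity is irrelevant since the Byzantine coordinates are unconstrained), and set $\overline{g}_{\G}=\frac{1}{\abs{\G}}\sum_{i\in\G}\widebar{g}_i^{N}$. Writing $\A$ as shorthand for $\A(\widebar{g}_1^{N},\ldots,\widebar{g}_m^{N})$, Young's inequality gives $\norm{\A-\nabla f(x)}^2\leq 2\norm{\A-\overline{g}_{\G}}^2+2\norm{\overline{g}_{\G}-\nabla f(x)}^2$, and I would bound the two terms separately in expectation.

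For the first term I apply the $(\delta,\kappa_\delta)$-robustness of $\A$ (\Cref{def:byz-robustness}) with $S=\G$, which is admissible precisely because $\abs{\G}=(1-\delta)m$; this yields $\norm{\A-\overline{g}_{\G}}^2\leq \frac{\kappa_\delta}{\abs{\G}}\sum_{i\in\G}\norm{\widebar{g}_i^{N}-\overline{g}_{\G}}^2$. The key move is then the parallel-axis (bias--variance) identity $\sum_{i\in\G}\norm{\widebar{g}_i^{N}-\overline{g}_{\G}}^2\leq \sum_{i\in\G}\norm{\widebar{g}_i^{N}-\nabla f(x)}^2$, applied \emph{before} taking expectations so that the random center $\overline{g}_{\G}$ is traded for the deterministic target $\nabla f(x)$. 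Taking expectations and using that each $\widebar{g}_i^{N}$ is the average of $N$ i.i.d.\ unbiased gradients with per-sample variance at most $\sigma^2$ (\Cref{assump:bounded-variance}), every summand is at most $\sigma^2/N$, so $\E\norm{\A-\overline{g}_{\G}}^2\leq \kappa_\delta\sigma^2/N$.

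For the second term, $\overline{g}_{\G}$ is itself the average of $\abs{\G}N$ i.i.d.\ unbiased gradients, hence $\E\norm{\overline{g}_{\G}-\nabla f(x)}^2\leq \sigma^2/(\abs{\G}N)$. Adding the two contributions and simplifying with $\abs{\G}=(1-\delta)m$ and $\delta<\nicefrac12$ gives $\E\norm{\A-\nabla f(x)}^2\leq \frac{2\sigma^2}{N}\brac{\kappa_\delta+\frac{1}{m}}$, i.e.\ $\M_f(x,N)\coloneqq\A(\widebar{g}_1^{N},\ldots,\widebar{g}_m^{N})$ satisfies \Cref{eq:lmgo} with $c^2=2\sigma^2(\kappa_\delta+\nicefrac1m)$. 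I do not expect a genuinely hard step here: the lemma is essentially the observation that $(\delta,\kappa_\delta)$-robustness converts honest-side dispersion into aggregate MSE at the cost of a $\kappa_\delta$ factor, composed with the trivial $1/N$ variance reduction of mini-batching. The only points that need a little care are ordering the parallel-axis step before the expectation, and the harmless bookkeeping that absorbs $1/\abs{\G}$ into $1/m$.
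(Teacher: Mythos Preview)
Your route is the paper's: apply $(\delta,\kappa_\delta)$-robustness with $S=\G$, replace dispersion around $\overline{g}_{\G}$ by dispersion around $\nabla f(x)$ via the parallel-axis step, and invoke the mini-batch variance bounds $\sigma^2/N$ and $\sigma^2/(\abs{\G}N)$.

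The one slip is the ``harmless bookkeeping'' at the end. After Young's inequality and the two variance bounds you have $\frac{2\sigma^2}{N}\bigl(\kappa_\delta+\frac{1}{\abs{\G}}\bigr)$, and you then replace $1/\abs{\G}$ by $1/m$. But $\abs{\G}=(1-\delta)m\leq m$, so $1/\abs{\G}\geq 1/m$; the inequality points the wrong way. From $\delta<\nicefrac12$ you only get $1/\abs{\G}<2/m$, so your argument actually delivers $\frac{2\sigma^2}{N}\bigl(\kappa_\delta+\frac{2}{m}\bigr)$ rather than the stated constant. The paper recovers the exact constant by \emph{not} paying the Young factor up front: it starts from $\E\norm{\widehat{g}-\nabla f(x)}^2=\E\norm{\widehat{g}-\overline{g}_{\G}}^2+\E\norm{\overline{g}_{\G}-\nabla f(x)}^2$, applies robustness, and rewrites via the parallel-axis identity to reach $\frac{\sigma^2}{N}\bigl(\kappa_\delta+\frac{\kappa_\delta+1}{\abs{\G}}\bigr)$; only at that point is the slack spent, via $\kappa_\delta+\frac{\kappa_\delta+1}{\abs{\G}}\leq 2\kappa_\delta+\frac{2}{m}$ (using $\abs{\G}>m/2$ and $\abs{\G}\geq 1$). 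The discrepancy is purely in the constant attached to the $1/m$ term and does not affect any downstream rate in the paper.
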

\begin{algorithm}[t]
\caption{Byzantine-Robust Optimization with MLMC}\label{alg:bro_mlmc}
\begin{algorithmic}
    \STATE {\bfseries Input:} Initial iterate $x_1\in\reals^d$, learning rate $\eta$.
    \FOR{$t=1,\ldots,T$}
        \STATE Draw $J_{t} \sim \text{Geom}(1/2)$
        \FOR{$i\in\sbrac{m}$ \textcolor{blue}{in parallel}}
            \FOR{$j\in\cbrac{0,J_t-1,J_t}$}
                \STATE Compute $\widebar{g}_{t,i}^{j}\gets \frac{1}{2^j}\sum_{k=1}^{2^j}{\nabla F(x_t; \xi_{t,i}^{k})}$
            \ENDFOR
        \STATE Send $(\widebar{g}_{t,i}^{0}, \widebar{g}_{t,i}^{J_t-1},\widebar{g}_{t,i}^{J_t})$ if $i\in\G$, else send $(*,*,*)$
        \ENDFOR
        \FOR{$j\in\cbrac{0,J_t-1,J_t}$}
            \STATE $\widehat{g}_t^j\leftarrow\A(\widebar{g}_{t,1}^{j},\ldots,\widebar{g}_{t,m}^{j})$ \algorithmiccomment{robust aggregation}
        \ENDFOR
        \STATE $g_t\gets \widehat{g}_t^{0} + \begin{cases} 2^{J_t}(\widehat{g}_t^{J_{t}} - \widehat{g}_t^{J_{t}-1}),&{J_t} \leq \Jmax{=}\floor{\log{T}}\\
        0, & \text{otherwise}.
        \end{cases}$ 
        \STATE $x_{t+1}\gets\proj{\K}{x_t - \eta g_t}$ 
    \ENDFOR
\end{algorithmic}
\end{algorithm}
Building on this result, we propose \Cref{alg:bro_mlmc}. In each round $t\in\sbrac{T}$, honest workers compute and send mini-batch gradients of sizes $1$, $2^{J_t-1}$, and $2^{J_t}$, with $J_t\sim\text{Geom}(\nicefrac{1}{2})$. Then, the server applies a robust aggregation rule to each group of gradients and subsequently constructs an MLMC estimator as in \Cref{eq:mlmc} to perform an SGD update. The next result confirms the convergence of \Cref{alg:bro_mlmc} for non-convex functions; its full proof and convex analysis are deferred to \Cref{app:static-analysis}, with a proof sketch provided here. 

\begin{restatable}{theorem}{nonconvexstatic}\label{thm:nonconvex-static}
    Under \Cref{assump:bounded-variance}, with a $(\delta, \kappa_{\delta})$-robust aggregator $\A$, consider \Cref{alg:bro_mlmc} with learning rate given by $\eta = \min\cbrac{\frac{\sqrt{\Delta_1}}{4\sigma\sqrt{L\gamma T\log{T}}}, \frac{1}{L}}$, where $\gamma\coloneqq \kappa_{\delta} + \frac{1}{m}$ and $\Delta_1\coloneqq f(x_1) - f^*$. Denoting $\nabla_t\coloneqq\nabla f(x_t)$, it holds that
    \begin{align*}
        \frac{1}{T}\!\sum_{t=1}^{T}{\E\!\norm{\nabla_t}^2} \!\leq\!  16\sqrt{\frac{L\Delta_1\sigma^2 \gamma \log{T}}{T}} + \frac{2\!\brac{L\Delta_1 + 2\sigma^2 \gamma}}{T}.
    \end{align*}
\end{restatable}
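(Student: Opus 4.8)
The plan is to run the standard descent-lemma argument for biased SGD, treating $g_t$ as a gradient estimator with controlled bias and variance inherited from \Cref{lem:mlmc} and \Cref{lem:robust_agg_is_lmgo}. First I would combine these two lemmas: since the server-side mapping $x\mapsto\A(\widebar{g}_{t,1}^{N},\ldots,\widebar{g}_{t,m}^{N})$ satisfies \Cref{eq:lmgo} with $c^2 = 2\sigma^2\gamma$ (where $\gamma=\kappa_\delta+1/m$) by \Cref{lem:robust_agg_is_lmgo}, \Cref{lem:mlmc} immediately gives $\mathrm{Bias}(g_t)\le\sqrt{2c^2/T}=2\sigma\sqrt{\gamma/T}$ and $\mathrm{Var}(g_t)\le 14c^2\log T = 28\sigma^2\gamma\log T$, conditionally on $\F_t$. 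Write $b_t\coloneqq\E_t g_t - \nabla_t$ for the bias vector, so $\norm{b_t}^2\le 4\sigma^2\gamma/T$ and $\E_t\norm{g_t-\E_t g_t}^2\le 28\sigma^2\gamma\log T$.

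Next I would apply $L$-smoothness to the (unprojected, or by nonexpansiveness of $\Pi_\K$ — here $\K=\reals^d$ in the nonconvex case, so $x_{t+1}=x_t-\eta g_t$) iterates: $f(x_{t+1})\le f(x_t) - \eta\nabla_t^\top g_t + \frac{L\eta^2}{2}\norm{g_t}^2$. Taking $\E_t[\cdot]$, the cross term becomes $-\eta\nabla_t^\top(\nabla_t + b_t) = -\eta\norm{\nabla_t}^2 - \eta\nabla_t^\top b_t$, and I would bound $-\nabla_t^\top b_t\le \frac12\norm{\nabla_t}^2 + \frac12\norm{b_t}^2$ via Young's inequality, absorbing half the gradient-norm term. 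For the quadratic term, $\E_t\norm{g_t}^2 = \norm{\E_t g_t}^2 + \mathrm{Var}_t(g_t) \le 2\norm{\nabla_t}^2 + 2\norm{b_t}^2 + 28\sigma^2\gamma\log T$. Choosing $\eta\le 1/L$ ensures the $L\eta^2\norm{\nabla_t}^2$ contribution is dominated by the $\frac{\eta}{2}\norm{\nabla_t}^2$ we kept, leaving something like $\E_t f(x_{t+1})\le f(x_t) - \frac{\eta}{4}\norm{\nabla_t}^2 + \eta\norm{b_t}^2 + \frac{L\eta^2}{2}\cdot 28\sigma^2\gamma\log T$ (constants to be tracked carefully). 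Then substitute $\norm{b_t}^2\le 4\sigma^2\gamma/T$.

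Finally I would take total expectation, telescope over $t=1,\ldots,T$, use $f(x_{T+1})\ge f^*$, and divide by $\eta T/4$ to get $\frac1T\sum_t\E\norm{\nabla_t}^2 \le \frac{4\Delta_1}{\eta T} + 16\sigma^2\gamma/T + \text{(const)}\cdot L\eta\sigma^2\gamma\log T$. Plugging in $\eta=\min\{\sqrt{\Delta_1}/(4\sigma\sqrt{L\gamma T\log T}),\,1/L\}$: in the first regime $\frac{\Delta_1}{\eta T}$ and the $L\eta\sigma^2\gamma\log T$ term both collapse to $\O(\sqrt{L\Delta_1\sigma^2\gamma\log T/T})$; in the second regime ($\eta=1/L$, which is active only when $T$ is small relative to the other parameters) they contribute the $\O((L\Delta_1+\sigma^2\gamma)/T)$ term. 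Combining the two cases via $\min\{a,b\}^{-1}\le a^{-1}+b^{-1}$-type bookkeeping yields the stated inequality.

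The main obstacle is not conceptual but bookkeeping: getting the numerical constants to land exactly at $16$ and $2$ requires being careful about how the Young's-inequality split, the $\norm{g_t}^2$ expansion, and the $\eta\le 1/L$ truncation interact, and about how the two branches of the $\min$ in $\eta$ recombine. One subtlety worth flagging is that \Cref{lem:mlmc}/\Cref{lem:robust_agg_is_lmgo} must be applied conditionally on $\F_t$ — the i.i.d. samples $\xi_{t,i}^k$ and the geometric draw $J_t$ are fresh at round $t$ — so that $b_t$ is $\F_t$-measurable and the variance bound is a genuine conditional variance; this is what licenses the telescoping after taking total expectation.
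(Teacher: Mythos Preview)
Your plan is essentially the paper's: combine \Cref{lem:mlmc} with \Cref{lem:robust_agg_is_lmgo} to get bias $\le 2\sigma\sqrt{\gamma/T}$ and variance $\le 28\sigma^2\gamma\log T$, feed these into a biased-SGD descent lemma (the paper packages this as \Cref{lem:nonconvex-sgd}), and tune $\eta$ via the two-case bound \Cref{lem:lr_min_of_2_lrs}.

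One step does not close as written with the stated constraint $\eta\le 1/L$. After applying Young's inequality to $-\nabla_t^\top b_t$ you retain $-\tfrac{\eta}{2}\norm{\nabla_t}^2$; but then bounding $\norm{\E_t g_t}^2\le 2\norm{\nabla_t}^2+2\norm{b_t}^2$ contributes $+L\eta^2\norm{\nabla_t}^2$ from the second-order term, and under $\eta\le 1/L$ this is only $\le\eta\norm{\nabla_t}^2$, not $\le\tfrac{\eta}{4}\norm{\nabla_t}^2$, so the net coefficient on $\norm{\nabla_t}^2$ becomes nonnegative rather than $-\tfrac{\eta}{4}$. The paper sidesteps this by expanding $\norm{\E_t g_t}^2=\norm{\nabla_t+b_t}^2$ \emph{exactly} and applying $\eta L\le 1$ before any splitting: the $+\eta\nabla_t^\top b_t$ coming from the square cancels the first-order $-\eta\nabla_t^\top b_t$, leaving cleanly $-\tfrac{\eta}{2}\norm{\nabla_t}^2+\tfrac{\eta}{2}\norm{b_t}^2$ with no Young's inequality needed. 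That exact cancellation is what makes $\eta\le 1/L$ sufficient and produces the constants $16$ and $2$ after plugging the stated $\eta$ into $\tfrac{2\Delta_1}{T\eta}+28\eta L\sigma^2\gamma\log T+\tfrac{4\sigma^2\gamma}{T}$. Your route is salvageable by tightening to $\eta\le 1/(4L)$, but then the constants in the final bound would not match the statement.
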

When $\kappa_{\delta}\in\O(\delta)$, e.g., for CWMed combined with Nearest Neighbor Mixing~\cite{allouah2023fixing}, the rate in \Cref{thm:nonconvex-static} is consistent with the state-of-the-art convergence guarantees~\cite{karimireddy2021learning,allouah2023fixing}, up to a $\sqrt{\log{T}}$ factor. Moreover, the expected per-worker sample complexity is $\O(T\log{T})$, representing a modest increase of only a $\log{T}$ factor over existing methods. 
Yet, our approach is conceptually different that prior work. Instead of implicitly reducing Byzantine-induced bias through honest worker variance reduction, we use direct bias reduction strategy by constructing MLMC gradients post-aggregation.


\textit{Proof Sketch.} 
For $\eta\leq\frac{1}{L}$, one can show that the SGD updates $x_{t+1}\!=\!x_t - \eta g_t$ satisfy (cf. \Cref{lem:nonconvex-sgd})
\begin{equation}\label{eq:proof_sketch_nonconvex}
    \frac{1}{T}\sum_{t=1}^{T}{\E\!\norm{\nabla_t}^2} \leq \frac{2\Delta_1}{T\eta} + \frac{\eta L}{T}\sum_{t=1}^{T}{\E V_t^2} + \frac{1}{T}\sum_{t=1}^{T}{\E\!\norm{b_t}^2}\; ,
\end{equation}
where $b_t\coloneqq\E g_t - \nabla_t$ and $V_t^2\coloneqq\E\!\norm{g_t - \E g_t}^2$ are the bias and variance of $g_t$, respectively. Combining \Cref{lem:mlmc} with \Cref{lem:robust_agg_is_lmgo} implies that for every $t\in\sbrac{T}$, we have
\begin{equation}\label{eq:bias-var-static-proofsketch}
    \norm{b_t} \leq 2\sigma\sqrt{\frac{\gamma}{T}}, \quad\text{and}\quad V_t^2\leq 28\sigma^2\gamma\log{T}\; .
\end{equation}
Plugging these bounds and tuning $\eta$ concludes the proof. \QEDW
\section{\methodName{}: Dynamic Byzantine-Robustness}\label{sec:dynamic}
In this section, we transition to the dynamic setting, demonstrating how a slightly modified version of \Cref{alg:bro_mlmc} endures a non-trivial number of rounds with ID relabeling (i.e., identity changes). This modification involves adding a fail-safe filter designed to mitigate potential bias from identity switches during the MLMC gradient construction.

Since the MLMC gradient in round $t$ depends solely on $2^{J_t}$ computations per-worker in that round, if identity changes occur only between rounds, e.g., when only communications could be altered, then \Cref{alg:bro_mlmc} can be seamlessly applied in the dynamic setting and our analysis remains valid. However, Byzantine-robustness addresses a broader range of failures, where identity changes may also arise during different gradient computations within a round. For instance, in data poisoning attacks~\cite{huang2020metapoison,schwarzschild2021just}, some gradients for the same worker might be contaminated while others remain clean, depending on the integrity of the sampled data. To address this challenge, we propose a fail-safe filter specifically designed for the MLMC gradient. Initially, we slightly adjust our notation: we denote by $\G_t^k$ the set of honest workers in round $t$, at the $k$-th gradient computation, with $k\!\in\!\sbrac{2^{J_t}}$. In addition, we define $\goodrounds\!\coloneqq\!\big\{t:\G_{t}^{1}=\!\cdots\!=\G_{t}^{2^{J_t}}\big\}$ as the set of \emph{static} rounds, where worker identities remain fixed within the round, and $\badrounds\!\coloneqq\!\sbrac{T}\!\setminus\!\goodrounds$ indicates the set of \emph{dynamic} rounds.

\begin{algorithm*}[t]
\caption{\methodName{} (\textbf{Dyna}mic \textbf{B}yzantine-\textbf{R}obust \textbf{O}ptimization)}\label{alg:method-new}
\begin{algorithmic}
    \STATE {\bfseries Input:} Initial iterate $x_1\in\reals^d$, learning rate sequence $\cbrac{\eta_t}_{t\geq 1}$, universal coefficient $C\coloneqq\sqrt{8\log{\brac{16m^2 T}}}$\;.
    \FOR{$t=1,\ldots,T$}
        \STATE Draw $J_{t} \sim \text{Geom}(1/2)$    
            \FOR{$k=1,\ldots,2^{J_t}$}
                \FOR{$i\in\sbrac{m}$ \blue{in parallel}}
                    \STATE Compute and send $g_{t,i}^{k}\gets \nabla F(x_t; \xi_{t,i}^{k})$ if $i\in\G_t^{k}$, else send $*$
                \ENDFOR
            \ENDFOR
        \FOR{$i\in\sbrac{m}$ \blue{in parallel}}
                \STATE Compute $\widebar{g}_{t,i}^{j}\gets \frac{1}{2^{j}}\sum_{k=1}^{2^{j}}{g_{t,i}^{k}}$ for $j\in\cbrac{0, J_t-1, J_t}$
        \ENDFOR
        \FOR{$j\in\cbrac{0, J_{t}-1, J_t}$}
            \STATE \textcolor{purple}{\textbf{Option 1: }} $\widehat{g}_t^{j}\gets \A(\widebar{g}_{t, 1}^{j}, \ldots, \widebar{g}_{t, m}^{j})$, \enskip $c_{\Ecal}\coloneqq\sqrt{\gamma}$ \algorithmiccomment{$\A$ is $(\delta,\kappa_{\delta})$-robust; $\gamma\coloneqq 2\kappa_{\delta} + \frac{1}{m}$}
            \STATE \textcolor{purple}{\textbf{Option 2: }} $\widehat{g}_t^{j}\gets \text{MFM}(\widebar{g}_{t, 1}^{j}, \ldots, \widebar{g}_{t, m}^{j}; \T^{j}\!\coloneqq\!2C\V/\sqrt{2^{j}})$, \enskip $c_{\Ecal}\coloneqq 6\sqrt{2}$ \algorithmiccomment{See \Cref{alg:mfm}}
        \ENDFOR
        \STATE Define fail-safe event $\Ecal_t = \cbrac{\lVert \widehat{g}_t^{J_t} - \widehat{g}_t^{J_t - 1}\rVert\leq (1+\sqrt{2})c_{\Ecal}C\V/\sqrt{2^{J_t}}}$ 
        \STATE Construct MLMC gradient $g_t\gets \widehat{g}_t^{0} + \begin{cases} 2^{J_t}(\widehat{g}_t^{J_{t}} - \widehat{g}_t^{J_{t}-1}),& \text{if } {J_t} \leq \floor{\log{T}} \text{ and } \Ecal_t \text{ holds} \\
        0, & \text{otherwise}.
        \end{cases}$
        \STATE Update $x_{t+1}\gets\proj{\K}{x_t - \eta_t g_t}$ 
    \ENDFOR
\end{algorithmic}
\end{algorithm*}

\paragraph{MLMC fail-safe filter.} Recall the MLMC gradient formula for $J_t\leq \floor{\log{T}}$, $g_t=\widehat{g}_t^{0} + 2^{J_t} (\widehat{g}_t^{J_t} \!-\! \widehat{g}_t^{J_t-1})$. Its bias-variance analysis in the static case hinges on \Cref{lem:robust_agg_is_lmgo}, which asserts that for each level $j$, the squared distance between $\widehat{g}_t^{j}$ and $\widehat{g}_t^{j-1}$ is proportional to $2^{-j}$. However, this lemma is not applicable in dynamic rounds due to the absence of a consistent set of honest workers. 
Since the MLMC elements of a worker, $\bar{g}_{t,i}^{J_t-1}$ and $\bar{g}_{t,i}^{J_t}$, are averages of gradients computed during the round, even a single instance of Byzantine behavior could compromise them. This might disrupt the expected trend of decreasing distances between aggregated gradients at consecutive levels. To counteract this potential manipulation, we introduce an event $\Ecal_t$ to verify that this distance remains within expected bounds:
\begin{equation}\label{eq:Ecal}
    \Ecal_t = \cbrac{\big\lVert\widehat{g}_t^{J_t} - \widehat{g}_t^{J_t - 1}\big\rVert\leq (1+\sqrt{2})c_{\Ecal}C\V/\sqrt{2^{J_t}}}\; , 
\end{equation}
where $C \coloneqq\sqrt{8 \log{(16m^2 T)}}$. If $\Ecal_t$ holds, we proceed with the standard MLMC construction; otherwise, we revert to a simpler aggregated gradient using a single sample per-worker, similar to when $J_t\!>\!\floor{\log{T}}$ (see \Cref{alg:method-new}). 
The parameter $c_{\Ecal}$ is set to ensure that $\Ecal_t$ holds with high probability in static rounds, where this modification contributes only a lower-order term to the bias. In dynamic rounds, it restricts the bias to a near-constant level.

The convergence of our modified algorithm is established in the following theorem, detailed in \Cref{app:dynamic_general}.

\begin{restatable}{theorem}{nonconvex}\label{thm:nonconvex}
    Under \Cref{assump:bounded-noise}, with a $(\delta, \kappa_{\delta})$-robust aggregator $\A$, consider \Cref{alg:method-new} with \textcolor{purple}{\textbf{Option 1}} and with a fixed learning rate $\eta \coloneqq \min\cbrac{\frac{\sqrt{\Delta_1}}{3C\V\sqrt{L\gamma T\log{T}}}, \frac{1}{L}}$, where $\gamma\coloneqq 2\kappa_{\delta} + \frac{1}{m}$ and $\Delta_1\coloneqq f(x_1) - f^*$. Then, it holds that
    \begin{align*}
        \frac{1}{T}\sum_{t=1}^{T}{\E\!\norm{\nabla f(x_t)}^2}&\leq 12C\V\sqrt{\frac{L\Delta_1\gamma\log{T}}{T}}\\  &\hspace{-1cm} + \frac{2L\Delta_1 + 9C^2\V^2\gamma}{T} + 16C^2\V^2\gamma\frac{\abs{\badrounds}\log{T}}{T}\; ,
    \end{align*}
    where $C \coloneqq\sqrt{8 \log{(16m^2 T)}}$.
\end{restatable}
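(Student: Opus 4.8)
The plan is to follow the same template as the static proof sketch, starting from the standard descent inequality \eqref{eq:proof_sketch_nonconvex}, namely
\[
\frac{1}{T}\sum_{t=1}^{T}\E\norm{\nabla_t}^2 \leq \frac{2\Delta_1}{T\eta} + \frac{\eta L}{T}\sum_{t=1}^{T}\E V_t^2 + \frac{1}{T}\sum_{t=1}^{T}\E\norm{b_t}^2,
\]
which holds for $\eta\le 1/L$ regardless of how $g_t$ is generated (it only uses $L$-smoothness and the update rule). So everything reduces to controlling the bias $\norm{b_t}$ and variance $V_t^2$ of the fail-safe MLMC estimator $g_t$ in both static and dynamic rounds. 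First I would recall that conditioned on $J_t$ and on being a static round $t\in\goodrounds$, \Cref{lem:robust_agg_is_lmgo} applies at each level, so $\M_f(x_t,N)\coloneqq\widehat g_t$ (the robustly aggregated $N$-sample gradient) satisfies \eqref{eq:lmgo} with $c^2 = 2\V^2\gamma$ — here I'd use that bounded noise (\Cref{assump:bounded-noise}) implies bounded variance with $\sigma^2\le\V^2$, and $\gamma = 2\kappa_\delta + 1/m$ absorbs a factor coming from the fail-safe truncation. The key new lemma to prove is that in static rounds the fail-safe event $\Ecal_t$ holds with high probability: using bounded noise and a Bernstein/Hoeffding-type concentration (this is where $C=\sqrt{8\log(16m^2T)}$ enters), $\norm{\widebar g_{t,i}^{j} - \nabla f(x_t)}\le C\V/\sqrt{2^j}$ for all honest $i$ simultaneously with probability $\ge 1 - 1/(8m^2T)$ or so, and then $(\delta,\kappa_\delta)$-robustness plus the triangle inequality gives $\norm{\widehat g_t^{J_t} - \widehat g_t^{J_t-1}} \le (1+\sqrt2)c_{\Ecal}C\V/\sqrt{2^{J_t}}$, i.e. $\Ecal_t$ holds.

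Next I would assemble the per-round bias and variance bounds. For a static round: on the high-probability event that $\Ecal_t$ holds, $g_t$ coincides with the ordinary MLMC estimator, so \Cref{lem:mlmc} gives bias $\le\sqrt{2c^2/T}$ and variance $\le 14c^2\log T$ up to the contribution of the rare failure event; on the low-probability complement, $g_t$ reverts to a single-sample aggregated gradient whose norm is deterministically $O(\V\sqrt{\gamma})$ after accounting for $(\delta,\kappa_\delta)$-robustness, and since this event has probability $O(1/(m^2T))$ its contribution to $\E V_t^2$ and $\E\norm{b_t}$ is a lower-order $O(\V^2\gamma/T)$-type term. Net effect: static rounds behave exactly as in \Cref{thm:nonconvex-static} with $\sigma\to\V$ and an extra $C$ factor, contributing the first two terms of the claimed bound. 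For a dynamic round $t\in\badrounds$: here the honest-worker set is inconsistent across the $k$-levels, so I cannot invoke \Cref{lem:robust_agg_is_lmgo}; instead the whole point of the filter is that $g_t = \widehat g_t^0 + 2^{J_t}(\widehat g_t^{J_t}-\widehat g_t^{J_t-1})$ is nonzero only when $\Ecal_t$ holds, and then by construction $\norm{2^{J_t}(\widehat g_t^{J_t}-\widehat g_t^{J_t-1})}\le 2^{J_t}(1+\sqrt2)c_{\Ecal}C\V/\sqrt{2^{J_t}} = (1+\sqrt2)c_{\Ecal}C\V\sqrt{2^{J_t}}$, and $\widehat g_t^0$ is a single-sample aggregate of norm $O(C\V\sqrt\gamma)$ conditionally. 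Taking expectation over $J_t\sim\text{Geom}(1/2)$ truncated at $\floor{\log T}$, $\E[\sqrt{2^{J_t}}\mid\cdot]$ is $O(\sqrt T)$... so I must be more careful: actually $\E[2^{J_t}\cdot\mathbbm 1\{2^{J_t}\le T\}] = O(\log T)$, so the squared contribution $\E\norm{g_t}^2$ in a dynamic round is $O(C^2\V^2\gamma\log T)$, hence these rounds contribute at most $\abs{\badrounds}\cdot O(C^2\V^2\gamma\log T)$ to $\sum_t\E\norm{b_t}^2$ (and similarly to the variance sum, but the variance sum is multiplied by the small $\eta\le 1/L$), which after division by $T$ yields the final term $16C^2\V^2\gamma\,\abs{\badrounds}\log T/T$.

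The main obstacle I expect is the bookkeeping in the dynamic rounds: since $b_t$ and $V_t$ are defined via the full (unconditional) expectation of $g_t$ but $J_t$ is drawn inside the round, I need to condition on $J_t$ (and on the static/dynamic status of the realized number of computations $2^{J_t}$), bound bias and variance conditionally, and then average over $J_t$ — and the truncation at $J_t\le\floor{\log T}$ together with the fail-safe event is exactly what keeps $\E[2^{J_t}(\widehat g_t^{J_t}-\widehat g_t^{J_t-1})^2\mid\cdot]$ from blowing up. A second delicate point is that $\abs{\badrounds}$ is itself a random/adversarial quantity: a round is dynamic if the adversary changes identities mid-round, which may depend on $x_t$ and hence on past randomness, so I should treat the bound conditionally and note that the per-round contribution estimate $O(C^2\V^2\gamma\log T)$ holds pathwise (on each realization), making the sum $\le\abs{\badrounds}\cdot O(C^2\V^2\gamma\log T)$ valid for any realization of which rounds are dynamic. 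Finally, after combining the three pieces into \eqref{eq:proof_sketch_nonconvex}, plugging in $\eta = \min\{\sqrt{\Delta_1}/(3C\V\sqrt{L\gamma T\log T}),\,1/L\}$ and using the standard two-case analysis of the $\min$ (as in the static theorem) gives the stated rate; this last step is routine.
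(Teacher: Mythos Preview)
Your proposal is correct and mirrors the paper's proof: descent inequality \eqref{eq:proof_sketch_nonconvex} via \Cref{lem:nonconvex-sgd}, then bias/variance bounds for the fail-safe MLMC estimator handled separately for $t\in\goodrounds$ (telescoping plus the high-probability event $\Ecal_t$ via concentration, \Cref{lem:core_lemma_general,lem:Ecal_hp_option1}) and $t\in\badrounds$ (the $\Ecal_t$-clip gives $\E_{t-1}\big[\lVert\widehat g_t^{j}-\widehat g_t^{j-1}\rVert^2\mathbbm{1}_{\Ecal_t(j)}\big]\le O(C^2\V^2\gamma/2^j)$, so the MSE and hence $V_t^2,\norm{b_t}^2$ are $O(C^2\V^2\gamma\log T)$), followed by the two-case $\eta$-tuning. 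The only wrinkle to tighten when writing it up is that you cannot invoke \Cref{lem:mlmc} ``on the event $\Ecal_t$'' since $\Ecal_t=\Ecal_t(J_t)$ depends on the level; the paper makes this precise by writing, for each $j$, $\E\big[(\widehat g_t^{j}-\widehat g_t^{j-1})\mathbbm{1}_{\Ecal_t(j)}\big]=\E[\widehat g_t^{j}-\widehat g_t^{j-1}]-\E[\widehat g_t^{j}-\widehat g_t^{j-1}\mid\Ecal_t(j)^c]\prob(\Ecal_t(j)^c)$, telescoping the first piece to $\E[\widehat g_t^{\Jmax}]$ and bounding each correction via the deterministic bound $\lVert\widehat g_t^{j}-\widehat g_t^{j-1}\rVert\le 2\V\sqrt{2(4\kappa_\delta+1)}$ times $\prob(\Ecal_t(j)^c)\le 1/(2mT)$.
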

When there are no identity switches during gradient computations within the same round (i.e., $\abs{\badrounds}=0$), we revert to the rate established in \Cref{thm:nonconvex-static} for the static setting, but with $\sigma$ effectively replaced by $C\V$ due to differing noise assumptions. The bounded noise assumption allows the application of a concentration inequality, ensuring that $\Ecal_t$ occurs with high probability in static rounds. When identity switches are present, convergence is still assured, provided that the number of identity-switching rounds is sub-linear. Specifically, \Cref{thm:nonconvex} leads to the subsequent corollary.



\begin{corollary}\label{cor:nonconvex}
    The asymptotic convergence rate implied by \Cref{thm:nonconvex} is given by
    \[
        \frac{1}{T}\sum_{t=1}^{T}{\E\!\norm{\nabla f(x_t)}^2}\in\Otilde\brac{\V\sqrt{\frac{L\Delta_1\gamma}{T}} + \V^2\gamma\frac{\abs{\badrounds}}{T}}\; .
    \]
    Thus, \Cref{alg:method-new} can withstand $\abs{\badrounds}\in\Otilde(\sqrt{T/\gamma})$ dynamic rounds (omitting dependence on $L, \Delta_1, $ and $\V$), while matching the static convergence rate. Concretely, when $\kappa_{\delta}\in\O(\delta)$, this rate becomes $\Otilde\brac{\V\sqrt{L\Delta_1 (\delta + \nicefrac{1}{m})/T}}$. 
\end{corollary}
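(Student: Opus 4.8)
The plan is to derive the corollary directly from the finite-time bound in \Cref{thm:nonconvex} by absorbing all poly-logarithmic factors into the $\Otilde(\cdot)$ notation and then identifying the dominant terms. First I would observe that $C=\sqrt{8\log(16m^2T)}$, so $C$, $C^2$, and $\log{T}$ are all $\Otilde(1)$---precisely the poly-logarithmic factors that $\Otilde(\cdot)$ is defined to suppress (note in particular that $C$ depends on $m$ only logarithmically, consistent with the corollary's phrasing). Applying this to the three terms on the right-hand side of \Cref{thm:nonconvex}: the first becomes $12C\V\sqrt{L\Delta_1\gamma\log{T}/T}\in\Otilde(\V\sqrt{L\Delta_1\gamma/T})$; the second becomes $(2L\Delta_1+9C^2\V^2\gamma)/T\in\Otilde((L\Delta_1+\V^2\gamma)/T)$; and the third becomes $16C^2\V^2\gamma\abs{\badrounds}\log{T}/T\in\Otilde(\V^2\gamma\abs{\badrounds}/T)$.

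Next I would argue that the $\Otilde(1/T)$-scale second term is asymptotically dominated by the other two. The $L\Delta_1/T$ piece is at most a constant times the first term once $T\gtrsim L\Delta_1/(\V^2\gamma)$, since then $\sqrt{L\Delta_1/T}\leq\V\sqrt{\gamma}$; and the $\V^2\gamma/T$ piece is at most the third term whenever $\abs{\badrounds}\geq 1$ (the case $\abs{\badrounds}=0$ reduces to the static setting, where one simply keeps $\V^2\gamma/T$ as a lower-order additive term exactly as in \Cref{thm:nonconvex-static}). Hence, for $T$ large enough, the bound collapses to $\Otilde(\V\sqrt{L\Delta_1\gamma/T}+\V^2\gamma\abs{\badrounds}/T)$, which is the first claim.

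For the second claim, ``matching the static convergence rate'' means the dynamic term $\V^2\gamma\abs{\badrounds}/T$ is no larger, up to poly-logarithmic factors, than the static term $\V\sqrt{L\Delta_1\gamma/T}$. Imposing this inequality and solving for $\abs{\badrounds}$ gives $\abs{\badrounds}\lesssim(\sqrt{L\Delta_1}/\V)\sqrt{T/\gamma}$; dropping the dependence on $L$, $\Delta_1$, and $\V$ yields the stated threshold $\abs{\badrounds}\in\Otilde(\sqrt{T/\gamma})$. For the third claim, I would substitute $\kappa_{\delta}\in\O(\delta)$ into $\gamma=2\kappa_{\delta}+\frac{1}{m}$, so that $\gamma\in\O(\delta+\frac{1}{m})$, and plug this into the static term to obtain $\Otilde(\V\sqrt{L\Delta_1(\delta+\nicefrac{1}{m})/T})$.

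The argument is pure bookkeeping, so there is no substantial obstacle; the only point requiring care is making the word ``asymptotic'' precise---namely, pinning down the mild regime $T=\Omega(L\Delta_1/(\V^2\gamma))$ in which the $1/T$ term is genuinely lower order, and being explicit that the $\abs{\badrounds}=0$ case is governed separately by \Cref{thm:nonconvex-static}. One should also sanity-check that no factor of $\abs{\badrounds}$ or $m$ is inadvertently swept into the poly-log suppression, which it is not: the only source of such factors is $C$, whose dependence on $m$ is logarithmic and independent of $\abs{\badrounds}$.
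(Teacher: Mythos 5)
Your proposal is correct and is exactly the intended derivation: the paper treats \Cref{cor:nonconvex} as immediate from \Cref{thm:nonconvex}, obtained by absorbing the $C$, $C^2$, and $\log T$ factors into $\Otilde(\cdot)$, dropping the lower-order $1/T$ term, equating the dynamic and static terms to get $\abs{\badrounds}\in\Otilde(\sqrt{T/\gamma})$, and substituting $\gamma=2\kappa_{\delta}+\nicefrac{1}{m}\in\O(\delta+\nicefrac{1}{m})$ when $\kappa_{\delta}\in\O(\delta)$. Your extra care about the regime $T=\Omega(L\Delta_1/(\V^2\gamma))$ and the $\abs{\badrounds}=0$ case is fine but not needed beyond what the paper's bookkeeping already implies.
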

\textit{Proof Sketch.} Following the methodology of \Cref{thm:nonconvex-static}, we bound $\frac{1}{T}\sum_{t=1}^{T}{\E\!\norm{\nabla f(x_t)}^2}$ as in \Cref{eq:proof_sketch_nonconvex}. The bias and variance bounds for $g_t$ are detailed in \Cref{lem:bias-var-mlmc-general}. We highlight key differences from the static setting: in static rounds, the bias is similarly bounded with an additional lower-order term, reflecting instances where $\Ecal_t$ does not hold. In dynamic rounds, $\Ecal_t$ limits the expected distance between aggregated gradients at consecutive levels as follows:
\begin{align*}
    \E_{t-1}\!\sbrac{\lVert\widehat{g}_t^{j} - \widehat{g}_t^{j-1}\rVert^2\cdot\mathbbm{1}_{\Ecal_t(j)}} &= \\ &\hspace{-10em}\E_{t-1}\sbrac{\lVert{\widehat{g}_t^{j} - \widehat{g}_t^{j-1}}\rVert^2 | \Ecal_t(j)}\prob(\Ecal_t(j))
    \in \Otilde\brac{\frac{\V^2\gamma}{2^{j}}}\; ,
\end{align*}
where $\Ecal_t(j)$ is defined as in \Cref{eq:Ecal} given $J_t\!=\!j$. Taking expectation w.r.t $J_t$ leads to 
\[
    \text{MSE}_t\coloneqq\E_{t-1}\!\norm{g_t - \nabla f(x_t)}^2\in\Otilde(\V^2\gamma)\; .
\]
Consequently, the bias and variance are bounded as $\Otilde(\V\sqrt{\gamma})$ and $\Otilde(\V^2\gamma)$, respectively.  
Substituting these bounds and setting $\eta$ completes the proof. \QEDW

\paragraph{When worker-momentum fails. } 
As detailed in \Cref{app:momentum_breaks}, the worker-momentum approach may fail when Byzantine workers change identities. By meticulously crafting a momentum-tailored \emph{dynamic attack} that leverages the momentum parameter and exploits its diminishing effect on past gradients, we can induce sufficient drift (i.e., bias) across all momentums simultaneously. This strategy requires only $\O(\sqrt{T})$ rounds of identity switches, which our method can withstand, as shown in \Cref{cor:nonconvex}. It remains an intriguing open question whether the worker-momentum method can be augmented with a mechanism similar to our fail-safe filter to handle identity changes.

\section{Optimality and Adaptivity}\label{sec:adaptivity}
While we have demonstrated convergence for \Cref{alg:method-new} when employing a general $(\delta, \kappa_{\delta})$-robust aggregator, this class of aggregators does not ensure optimal convergence under the bounded noise assumption. This limitation arises because the most effective aggregator features $\kappa_{\delta}\in\O(\delta)$~\cite{allouah2023fixing}, suggesting that the Byzantine-related error term scales with $\sqrt{\delta/T}$ at best. Conversely, the optimal scaling under this noise assumption is proportional to $\delta/\sqrt{T}$~\cite{alistarh2018byzantine,allen2020byzantine}.

In this section, we introduce the \emph{Median-Filtered Mean} (MFM) aggregator. Although it does not meet the $(\delta, \kappa_{\delta})$-robustness criteria, it facilitates near-optimal convergence rates. Additionally, by employing an adaptive learning rate, we eliminate the need for prior knowledge of the smoothness parameter and the fraction of Byzantine workers, which is typically necessary to determine the learning rate.


\paragraph{Median-Filtered Mean. } Consider vectors $g_1,\ldots,g_m$, and a threshold parameter $\T$. Our proposed aggregation method, outlined in \Cref{alg:mfm}, computes the mean of vectors within $\T$-proximity of a representative median vector, $g_{\text{med}}$. This median is chosen to ensure that the majority of messages fall within a $\T/2$ radius of it. If no such median vector exists, i.e., if there is no vector around which at least half of the other messages lie within a $\T/2$ radius, the algorithm defaults to outputting the zero vector. We note that a similar mechanism for gradient estimation was previously used by~\citet{alistarh2018byzantine,allen2020byzantine}.

In \Cref{subapp:mfm_is_not_delta_kappa_robust}, we demonstrate that the MFM aggregator does not satisfy the $(\delta,\kappa_{\delta})$-robustness criteria. Yet, by appropriately setting the threshold parameter $\T$, we achieve a superior asymptotic bound on the distance between aggregated and true gradients in static rounds. This leads to an improved convergence rate, as we detail later. What follows is an informal statement of \Cref{lem:core_lemma_mfm}.
\begin{lemma}[Informal]
    Consider the setting of \Cref{lem:robust_agg_is_lmgo}, replacing \Cref{assump:bounded-variance} with \Cref{assump:bounded-noise}, and let $\widehat{g}^{\batchsize}=\text{MFM}(\widebar{g}_{1}^{\batchsize}, \ldots, \widebar{g}_{m}^{\batchsize}; \T^{\batchsize})$ be the output of \Cref{alg:mfm} with $\T^{\batchsize}\in\tilde{\Theta}(\V/\sqrt{N})$. Then, with high probability, 
    \[
        \norm{\widehat{g}^{\batchsize} - \nabla f(x)}^2 \in\Otilde\brac{\frac{\V^2}{\batchsize}\brac{\delta^2 + \frac{1}{m}}}\; .
    \]
\end{lemma}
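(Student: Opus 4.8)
\textit{Proof plan.} The strategy is to condition on a single high-probability concentration event and then analyze \Cref{alg:mfm} deterministically on that event; throughout, we are in the setting of \Cref{lem:robust_agg_is_lmgo} with \Cref{assump:bounded-variance} replaced by \Cref{assump:bounded-noise}, so $\overline{g}_i^{\batchsize}$ is an honest mini-batch mean for $i\in\G$ and arbitrary otherwise.

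\textbf{Concentration event.} Under \Cref{assump:bounded-noise}, for each honest $i\in\G$ the vector $\overline{g}_i^{\batchsize}-\nabla f(x)=\frac{1}{N}\sum_{n=1}^N(\nabla F(x;\xi_i^n)-\nabla f(x))$ is an average of $N$ independent mean-zero terms of norm at most $\V$, so $\E\|\overline{g}_i^{\batchsize}-\nabla f(x)\|^2\le\V^2/N$; a dimension-free martingale concentration inequality for vector-valued sums (of Azuma/Hayes type) then gives $\prob(\|\overline{g}_i^{\batchsize}-\nabla f(x)\|>r)\le 1/(16m^2T)$ for some $r\asymp\V\sqrt{\log(m^2T)/N}$, and the constant in $C\coloneqq\sqrt{8\log(16m^2T)}$ can be taken so that this $r$ obeys $r\le\T^{\batchsize}/4$ for the choice $\T^{\batchsize}=2C\V/\sqrt{N}$. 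Applying the same inequality to the pooled honest mean $\overline{g}_{\G}\coloneqq\frac{1}{|\G|N}\sum_{i\in\G}\sum_{n=1}^N\nabla F(x;\xi_i^n)$, which averages $|\G|N\ge mN/2$ such terms, yields $\|\overline{g}_{\G}-\nabla f(x)\|\le\Otilde(\V/\sqrt{mN})$. A union bound over the $m$ honest workers and the pooled mean (and, when the lemma is invoked inside \Cref{alg:method-new}, over all rounds and the $\O(\log T)$ MLMC levels, which the factor $m^2T$ inside $C$ is sized to absorb) defines the event $\mathcal{H}$, of probability at least $1-\O(1/m)$.

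\textbf{Deterministic analysis on $\mathcal{H}$.} Since $r\le\T^{\batchsize}/4$, the $(1-\delta)m>m/2$ honest messages are pairwise within $2r\le\T^{\batchsize}/2$, so each honest message is a vector around which at least half of all messages lie within $\T^{\batchsize}/2$; hence \Cref{alg:mfm} selects some median $g_{\mathrm{med}}$ rather than outputting the zero vector. The $\ge m/2$ messages within $\T^{\batchsize}/2$ of $g_{\mathrm{med}}$ must include an honest one, so $\|g_{\mathrm{med}}-\nabla f(x)\|\le\T^{\batchsize}/2+r$. Consequently every honest worker survives the $\T^{\batchsize}$-filter, $\|\overline{g}_i^{\batchsize}-g_{\mathrm{med}}\|\le r+\T^{\batchsize}/2+r\le\T^{\batchsize}$, and every surviving Byzantine worker $i$ satisfies $\|\overline{g}_i^{\batchsize}-\nabla f(x)\|\le\T^{\batchsize}+\T^{\batchsize}/2+r=\Otilde(\V/\sqrt{N})$. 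Let $S$ be the surviving set; then $\G\subseteq S$, $|S\setminus\G|\le\delta m$, $|S|\ge(1-\delta)m\ge m/2$, and $\widehat{g}^{\batchsize}=\frac{1}{|S|}\sum_{i\in S}\overline{g}_i^{\batchsize}$. Centering each summand,
\[
\widehat{g}^{\batchsize}-\nabla f(x)=\frac{|\G|}{|S|}\big(\overline{g}_{\G}-\nabla f(x)\big)+\frac{1}{|S|}\sum_{i\in S\setminus\G}\big(\overline{g}_i^{\batchsize}-\nabla f(x)\big),
\]
where the first term has norm at most $\|\overline{g}_{\G}-\nabla f(x)\|=\Otilde(\V/\sqrt{mN})$ and the second has norm at most $\frac{\delta m}{(1-\delta)m}\cdot\Otilde(\V/\sqrt{N})\le\Otilde(\delta\V/\sqrt{N})$ (using $\delta<1/2$). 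The triangle inequality together with $(a+b)^2\le 2a^2+2b^2$ then gives $\|\widehat{g}^{\batchsize}-\nabla f(x)\|^2\le\Otilde\!\big(\frac{\V^2}{N}(\delta^2+\frac{1}{m})\big)$, as claimed.

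\textbf{Main obstacle.} The crux is the calibration between the honest concentration radius $r$ and the threshold $\T^{\batchsize}$: $\T^{\batchsize}$ must be a large enough multiple of $r$ (roughly $\T^{\batchsize}\ge 4r$) for the honest messages to form a cluster dense enough that each is itself a valid median and all of them survive the filter, yet the error extracted above is proportional to $\T^{\batchsize}$ and so cannot exceed $\Otilde(\V/\sqrt{N})$. Reconciling these two constraints requires the sharp dimension-free vector concentration with constants matched to $C=\sqrt{8\log(16m^2T)}$, and the union bound must be dimensioned to cover every invocation of the lemma within \Cref{alg:method-new}. A secondary point is verifying that the zero-output branch of \Cref{alg:mfm} is never triggered on $\mathcal{H}$, since otherwise the error would contain an irreducible $\|\nabla f(x)\|^2$ term.
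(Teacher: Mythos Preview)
Your proposal is correct and follows essentially the same approach as the paper's proof of \Cref{lem:core_lemma_mfm}: condition on per-honest-worker concentration at radius $r\le\T^{\batchsize}/4$, deduce that $\M\neq\emptyset$, $\G\subseteq\widehat{\G}$, and surviving Byzantines are within $\O(\T^{\batchsize})$ of $\nabla f(x)$, then split $\widehat{g}^{\batchsize}-\nabla f(x)$ into the honest pooled mean (contributing $\Otilde(\V/\sqrt{mN})$) and the surviving Byzantine residual (contributing $\Otilde(\delta\V/\sqrt{N})$). The only structural difference is that you fold the pooled-mean concentration into a single event $\mathcal{H}$ via one union bound, whereas the paper keeps the per-worker event $\B$ and the pooled-mean event $\widebar{\C}$ separate and combines them through an auxiliary three-event lemma (\Cref{lem:3events}); your packaging is slightly more direct but otherwise equivalent.
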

\vspace{-0.8em}
Conversely, a $(\delta,\kappa_{\delta})$-robust aggregator yields a worse asymptotic bound of
$\Otilde\!\brac{\frac{\V^2}{\batchsize}(\kappa_{\delta} + \frac{1}{m})}$; see \Cref{lem:core_lemma_general}. 
\begin{algorithm}[t]
\caption{Median-Filtered Mean (MFM)}\label{alg:mfm}
\begin{algorithmic}
    \STATE {\bfseries Input:} Vectors $g_{1},\ldots,g_{m}$, threshold $\T$.
    \STATE $\M\gets\cbrac{i\in\sbrac{m} : \abs{\cbrac{j\in\sbrac{m} : \norm{g_{j} - g_i} \leq \nicefrac{\T}{2}}}> \nicefrac{m}{2}}$
    \IF{$\M\neq\emptyset$}
        \STATE $g_{\text{med}}\gets g_i$ for some $i\in\M$
        \STATE $\widehat{\G}\gets \cbrac{i\in\sbrac{m} : \norm{g_i - g_{\text{med}}}\leq\T}$
        \STATE $\widehat{g}\gets \frac{1}{\abs{\widehat{\G}}}\sum_{i\in\widehat{\G}}{g_i}$
    \ELSE
        \STATE $\widehat{g}\gets 0$
    \ENDIF
    \STATE {\bfseries Return:} $\widehat{g}$
\end{algorithmic}
\end{algorithm}

\paragraph{Adaptive learning rate. } For some $\eta_0>0$, we consider the following version of the AdaGrad-Norm learning rate~\cite{levy2018online,ward2020adagrad,faw2022power}:
\begin{equation}\label{eq:adagrad}
    \eta_t = \frac{\eta_0}{\sqrt{\sum_{s=1}^{t}{\norm{g_s}^2}}}\; .
\end{equation}

It is well-known that AdaGrad-Norm adapts to the gradient's variance and the objective's smoothness, achieving the same asymptotic convergence rates as if these parameters were known in advance~\cite{kavis2022high,attia2023sgd}. Unlike the specifically tuned learning rate in \Cref{thm:nonconvex}, this adaptive learning rate does not incorporate $\V$, $L$, or $\delta$. Yet, our method still requires knowledge of the noise level $\V$ to set the MFM threshold. While it may initially seem that $\delta$ is also necessary for configuring the parameter $c_{\Ecal}$ within the event $\Ecal_t$, we can adjust $c_{\Ecal}$ to be independent of $\delta$ by trivially bounding it. Thus, our method effectively adapts to both the smoothness $L$ and the fraction of Byzantine workers $\delta$.

We now present the convergence result for \Cref{alg:method-new} when employing the MFM aggregator (\textcolor{purple}{\textbf{Option 2}}) alongside the AdaGrad-Norm learning rate. For ease of analysis, we consider problems with bounded objectives, such as neural networks with bounded output activations, e.g., sigmoid or softmax (cf. \Cref{app:dynamic_adaptive} for a detailed analysis).

\begin{restatable}{theorem}{nonconvexadaptive}\label{thm:nonconvex-adaptive}
    Suppose \Cref{assump:bounded-noise} holds and $f$ is bounded by $M$ (i.e., $\max_{x}{\abs{f(x)}}\leq M$). Considering \Cref{alg:method-new} with \textcolor{purple}{\textbf{Option 2}} and the AdaGrad-Norm learning rate as given in \Cref{eq:adagrad}, define $\zeta\coloneqq \frac{2M}{\eta_0} + \eta_0 L$. Then,
    \begin{align*}
        \frac{1}{T}\sum_{t=1}^{T}{\E\norm{\nabla f(x_t)}^2}\in\Otilde\brac{\zeta\V\sqrt{\frac{\tilde{\gamma}}{T}} + \frac{\zeta^2}{T} + \V^2\frac{\abs{\badrounds}}{T}}\; ,
    \end{align*}
    where $\tilde{\gamma}\coloneqq 32\delta^2 + \frac{1}{m}$.
\end{restatable}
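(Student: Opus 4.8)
\textbf{Proof plan for \Cref{thm:nonconvex-adaptive}.}

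The plan is to combine the AdaGrad-Norm descent analysis with the bias-variance control of the MLMC-plus-fail-safe estimator $g_t$ under the MFM aggregator. First, I would record the per-worker concentration facts that make the MFM analysis work: under \Cref{assump:bounded-noise}, for any fixed query $x_t$ and any batch size $N=2^j$, each honest worker's average $\widebar{g}_{t,i}^{j}$ concentrates around $\nabla f(x_t)$ within $\Otilde(\V/\sqrt{N})$ with probability at least $1 - 1/(16m^2T)$ (this is where the coefficient $C=\sqrt{8\log(16m^2T)}$ comes from, via a standard bounded-difference / Azuma argument, union-bounded over workers and levels). On the event that all honest workers concentrate, the informal MFM lemma (i.e.\ \Cref{lem:core_lemma_mfm}) gives $\norm{\widehat{g}_t^{j} - \nabla f(x_t)}^2 \in \Otilde(\V^2(\delta^2 + 1/m)/2^{j}) = \Otilde(\V^2\tilde\gamma/2^{j})$ in a \emph{static} round; I would use this together with \Cref{lem:mlmc}-style telescoping to bound the bias and variance of $g_t$ in static rounds, and the fail-safe event $\Ecal_t$ (with $c_\Ecal = 6\sqrt2$) to bound them in dynamic rounds. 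The upshot — the analogue of \Cref{lem:bias-var-mlmc-general} for \textcolor{purple}{\textbf{Option 2}} — is $\norm{b_t}\in\Otilde(\V\sqrt{\tilde\gamma/T})$ plus a lower-order term from the rare failure of concentration/$\Ecal_t$, $\E_{t-1}\norm{g_t - \E_{t-1} g_t}^2 \in \Otilde(\V^2\tilde\gamma)$ in static rounds, and $\mathrm{MSE}_t \in \Otilde(\V^2\tilde\gamma)$ always, with the extra contribution in dynamic rounds being $\Otilde(\V^2)$ per dynamic round (tracked by $\mathbbm{1}_{t\in\badrounds}$).

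Next I would run the AdaGrad-Norm analysis. Starting from $L$-smoothness and $x_{t+1} = \Pi_\K(x_t - \eta_t g_t)$ with $\eta_t = \eta_0/\sqrt{\sum_{s\le t}\norm{g_s}^2}$, the standard argument (e.g.\ \citealp{ward2020adagrad,faw2022power}) yields, after summing over $t$ and using $\max_x\abs{f(x)}\le M$ to bound $f(x_1)-f^*\le 2M$,
\begin{equation*}
    \sum_{t=1}^T \eta_t \langle \nabla f(x_t), \E_{t-1} g_t\rangle \lesssim M + \eta_0 L\,\E\!\sqrt{\textstyle\sum_{t=1}^T \norm{g_t}^2} + (\text{variance / bias cross terms}).
\end{equation*}
Writing $\E_{t-1} g_t = \nabla f(x_t) + b_t$, the left side produces $\sum_t \eta_t\norm{\nabla f(x_t)}^2$ minus a cross term $\sum_t \eta_t\langle\nabla f(x_t), b_t\rangle$, which I would control by Young's inequality, absorbing half of $\sum_t\eta_t\norm{\nabla f(x_t)}^2$ and leaving $\sum_t \eta_t\norm{b_t}^2 \lesssim \eta_0 \V^2\tilde\gamma\log T \cdot (\text{something})/\sqrt{T}\cdot\sqrt{T}$-type terms. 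The denominator $\sqrt{\sum_{s\le t}\norm{g_s}^2}$ must be related back to $\norm{\nabla f(x_t)}$: since $\norm{g_t}^2 \le 2\norm{g_t - \E_{t-1}g_t}^2 + 4\norm{\nabla f(x_t)}^2 + 4\norm{b_t}^2$ and the MLMC estimator can have large $\norm{g_t}$ with small probability (the $2^{J_t}$ factor), I would take expectations and use $\E\norm{g_t}^2 = \mathrm{MSE}_t + \norm{\nabla f(x_t)}^2 \in \Otilde(\V^2\tilde\gamma) + \norm{\nabla f(x_t)}^2$, plus the dynamic-round surcharge $\Otilde(\V^2)\mathbbm{1}_{t\in\badrounds}$. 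Plugging $\sum_t\E\norm{g_t}^2 \lesssim \Otilde(T\V^2\tilde\gamma) + \sum_t\E\norm{\nabla f(x_t)}^2 + \V^2\abs{\badrounds}$ into the AdaGrad bound, solving the resulting self-bounding inequality for $\frac1T\sum_t\E\norm{\nabla f(x_t)}^2$, and recalling $\zeta = 2M/\eta_0 + \eta_0 L$, gives exactly $\Otilde(\zeta\V\sqrt{\tilde\gamma/T} + \zeta^2/T + \V^2\abs{\badrounds}/T)$.

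The main obstacle, I expect, is the interplay between the \emph{adaptive} stepsize and the \emph{heavy-tailed} MLMC estimator: AdaGrad analyses typically want $\norm{g_t}$ to be well-behaved, but here $\norm{g_t}$ carries a $2^{J_t}$ multiplier that is only controlled in expectation (and only because $\Ecal_t$ caps $\norm{\widehat g_t^{J_t} - \widehat g_t^{J_t-1}}$ by $\Otilde(\V/\sqrt{2^{J_t}})$, making $2^{J_t}\norm{\widehat g_t^{J_t}-\widehat g_t^{J_t-1}} \le \Otilde(\V\sqrt{2^{J_t}})$, whose square has expectation $\Otilde(\V^2\log T)$ over $J_t\sim\mathrm{Geom}(1/2)$ truncated at $\floor{\log T}$). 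Making the self-bounding argument rigorous therefore requires either an expectation-only version of the AdaGrad descent lemma or a careful high-probability bound on $\sqrt{\sum_s\norm{g_s}^2}$; I would follow the template of \citet{attia2023sgd}/\citet{faw2022power} for the former. A secondary technicality is bookkeeping the low-probability ``bad'' events (a worker failing to concentrate, or $\Ecal_t$ failing in a static round): each contributes at most $\Otilde(\V^2)\cdot\frac{1}{16m^2T}\cdot(\text{worst-case }\norm{g_t}^2)$, and since $\norm{g_t}$ on $\Ecal_t$ is at most $\Otilde(\V\sqrt{2^{J_t}}) \le \Otilde(\V\sqrt{T})$, this is a genuinely lower-order $\Otilde(\V^2/m^2)$ contribution — but one must verify the worst case is indeed capped, which is precisely the role of the fail-safe filter and the truncation $J_t\le\floor{\log T}$.
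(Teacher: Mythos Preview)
Your plan is essentially the paper's: derive a self-bounding inequality for $G_T^2 \coloneqq \sum_t \E\norm{\nabla f(x_t)}^2$ via the AdaGrad-Norm descent lemma (\Cref{lem:nonconvex-adagrad}), plug in bias/variance bounds for the MLMC-plus-fail-safe estimator under MFM (\Cref{lem:bias-var-mlmc-mfm}), and solve. One point, however, is wrong. You assert that on $\Ecal_t$ the worst-case $\norm{g_t}$ is capped at $\Otilde(\V\sqrt{T})$ by the fail-safe filter and the truncation $J_t\le\lfloor\log T\rfloor$. But $\Ecal_t$ caps only the \emph{difference} $\lVert\widehat{g}_t^{J_t}-\widehat{g}_t^{J_t-1}\rVert$, not the base term $\widehat{g}_t^{0}$; and MFM outputs $0$ whenever $\M=\emptyset$, in which case $\widehat{g}_t^{0}-\nabla f(x_t) = -\nabla f(x_t)$, with $\norm{\nabla f(x_t)}$ unbounded a priori in the unconstrained nonconvex setting. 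This is precisely the distinction between item~1 of \Cref{lem:core_lemma_general} (a $\V$-only deterministic bound for $(\delta,\kappa_\delta)$-robust rules) and item~1 of \Cref{lem:core_lemma_mfm} (which carries an extra $\norm{\nabla f(x)}^2$).

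The fix dovetails with the self-bounding you already intend: carry the $\norm{\nabla f(x_t)}$-dependent residuals explicitly through the aggregated-gradient MSE (\Cref{cor:mse_bound_mfm} has a $\norm{\nabla f(x)}^2/(8mT)$ term) and thence through the MLMC bias/variance (\Cref{lem:bias-var-mlmc-mfm}, whose static-round variance bound is $14\tilde{C}^2\V^2\tilde\gamma\log T + \log T\cdot\norm{\nabla f(x_t)}^2$, not merely $\Otilde(\V^2\tilde\gamma)$). These produce additional $G_T^2$ terms with small coefficients inside $\E V_{1:T}^2$ and $\E S_T^2$, which the self-bounding step (\Cref{lem:simple_cases_lemma}) then absorbs into the $\zeta^2/T$ term. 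Without this tracking, your claim $\mathrm{MSE}_t\in\Otilde(\V^2\tilde\gamma)$ is incomplete and the ``lower-order $\Otilde(\V^2/m^2)$'' accounting you sketch for the rare events does not close.
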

In contrast to \Cref{thm:nonconvex}, the third term, associated with the number of dynamic rounds, lacks a factor of $\tilde{\gamma}$ due to our adjustment of the parameter $c_{\Ecal}$ to $\O(1)$ instead of $\O(\sqrt{\tilde{\gamma}})$. Had we utilized the latter, as in \textcolor{purple}{\textbf{Option 1}}, a similar bound could be achieved, but it would necessitate prior knowledge of $\delta$. 
Consequently, the absence of this $\tilde{\gamma}$ factor restricts the number of dynamic rounds we can withstand without compromising convergence to $\abs{\badrounds}\in\Otilde(\sqrt{\tilde{\gamma}T})$, a more restrictive bound compared to \Cref{cor:nonconvex}, as detailed in \Cref{cor:nonconvex-optimal}. With this more restrictive number of dynamic rounds, we achieve a (near-)optimal convergence rate of $\Otilde\brac{\V\sqrt{(\delta^2 + \nicefrac{1}{m})/T}}$. This observation raises a compelling open question: Does adaptivity inherently lead to decreased robustness against Byzantine identity changes?


\textit{Proof Sketch. } Our proof mirrors the approach used in \Cref{thm:nonconvex}, with two key differences. First, using the AdaGrad-Norm learning rate leads to
$G_T^2\coloneqq\sum_{t=1}^{T}{\norm{\nabla f(x_t)}^2}$ exhibiting a `self-boundness' property, in contrast to the non-adaptive, learning rate-dependent bound in \Cref{eq:proof_sketch_nonconvex}. \Cref{lem:nonconvex-adagrad} formalizes this property, indicating:
\begin{equation*}
    \E G_{T}^2 \leq 2\zeta\!\left(\sqrt{\sum_{t=1}^{T}{\E V_t^2}} + \sqrt{2\E S_T^2} + \sqrt{2\E G_T^2}\right) + \E S_T^2\; ,
\end{equation*}
where $S_T^2\coloneqq\sum_{t=1}^{T}{\norm{b_t}^2}$. Secondly, adjusting $c_{\Ecal}$ to $\O(1)$ results in slightly different bias and variance bounds in dynamic rounds, lacking $\tilde{\gamma}$ factors. Specifically, we have:\footnote{Here, we ignore the low-probability event where the MFM aggregator outputs zero. See \Cref{lem:bias-var-mlmc-mfm} for a formal statement.}
\begin{equation*}
    \norm{b_t}\!\in\!\begin{cases}
        \Otilde(\V), \!&t\in\badrounds \\
        \Otilde\big(\V\sqrt{\frac{\tilde{\gamma}}{T}}\big), \!&t\in\goodrounds
    \end{cases}, \hspace{0.3em}V_t^2\!\in\!\begin{cases}
        \Otilde(\V^2), &t\in\badrounds \\
        \Otilde(\V^2\gamma), &t\in\goodrounds
    \end{cases}.
\end{equation*}
Applying these bounds, solving for $\E G_T^2$, and dividing by $T$ yields the final bound. \QEDW

\section{Experiments}\label{sec:experiments}

In this section, we provide numerical experiments to evaluate our approach. In our experiments, we aim to demonstrate: \textbf{(1)} the trade-off between an algorithm's history window size dependence and its susceptibility to dynamic Byzantine changes; and \textbf{(2)} the benefit of our MLMC-based method compared to the prominent worker-momentum approach \cite{karimireddy2021learning,farhadkhani2022byzantine,allouah2023fixing}. To this end, we study two types of simulated dynamic identity-switching strategies:
\begin{enumerate}
    \item \textbf{Periodic($K$): } Once in every $K$ rounds, a new subset (of size $\delta m$) of Byzantine workers is sampled uniformly at random. Between any two such samplings, all worker identities remain fixed. A lower value of $K$ corresponds to a higher rate of identity switches, which implies a stronger dynamic attack.
    \item \textbf{Bernoulli($p, D, \delta_{\max}$): } For each worker, we sample $X\sim\text{Ber}(p)$ independently across workers and iterations. If $X=1$, then the worker becomes Byzantine for a fixed duration of $D$ iterations, up to a maximum of $\delta_{\max}$-fraction of Byzantine workers per iteration.
\end{enumerate}

Note that for the \textbf{Periodic} strategy, the number of Byzantine workers in each iteration remains the same, whereas for the \textbf{Bernoulli} strategy, it changes throughout training.


We study image classification on the MNIST~\cite{lecun1998gradient} and CIFAR-10~\cite{krizhevsky2009learning} datasets using convolutional neural networks (CNNs) with $2$ and $4$ layers, respectively, as in \citet{allouah2023fixing}. Additional training details are deferred to \Cref{app:experiments} for brevity. We benchmark our method against worker-momentum using momentum parameter $\beta\in\cbrac{0.9, 0.99}$, as well as against vanilla SGD, which corresponds to $\beta=0$. In our experiments, we did not use the fail-safe MLMC filter, opting instead for \Cref{alg:bro_mlmc}. Since the MLMC estimator typically requires multiple gradient computations per update, to ensure a fair comparison, we present all results based on an equivalent total number of gradient computations. We ran all experiments with $5$ random seeds and report their mean and standard deviation.

\begin{figure}
    \centering
    \includegraphics[trim={0.4cm 0.1cm 0.1cm 0.4cm},clip,width=\linewidth]{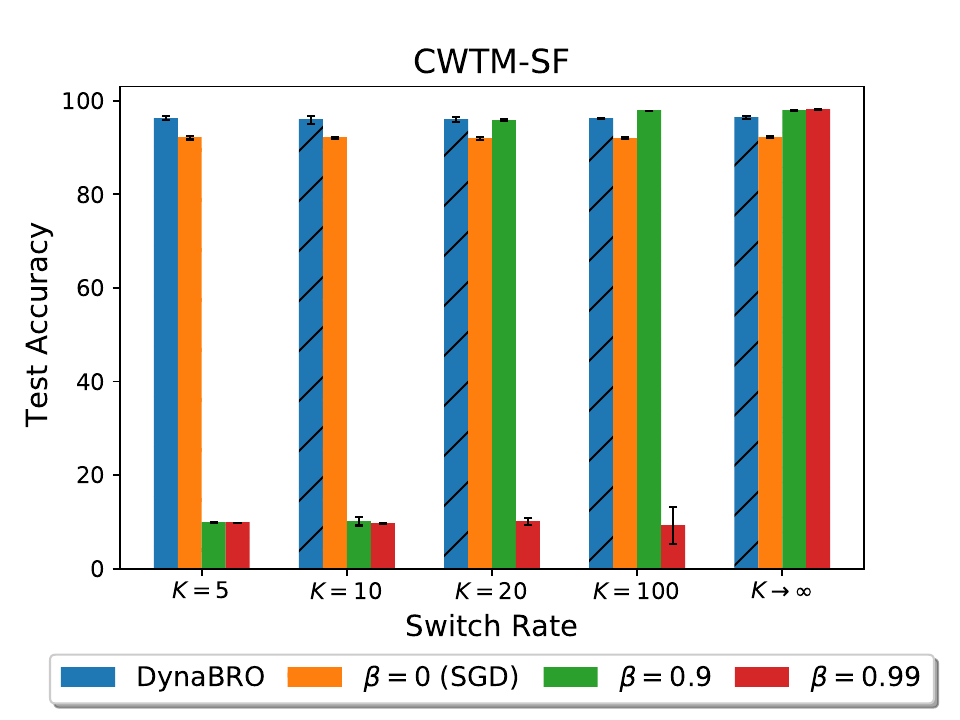}
    \vspace{-1em}
    \caption{Final test accuracy on MNIST under the \textbf{Periodic($K$)} identity-switching strategy for different values of $K$. Byzantine workers implement the SF attack and the server employs CWTM.}
    \vspace{-1em}
\label{fig:mnist_sf_cwtm}
\end{figure}
\paragraph{MNIST. } We first consider the MNIST dataset under the \textbf{Periodic($K$)} switching strategy for different values of $K\in\cbrac{5, 10, 20, 100}$ and $K\rightarrow\infty$, corresponding to the static setting where the set of Byzantine workers is fixed. In this setup, we consider $m=17$ workers, of which $\delta m=8$ are Byzantine. 
In \Cref{fig:mnist_sf_cwtm}, we visualize the final test accuracy when the Byzantine workers implement the sign-flip attack (SF,~\citealp{allen2020byzantine}) and the server uses the coordinate-wise trimmed mean aggregator (CWTM,~\citealp{yin2018byzantine}). As observed, the performance of our approach remains stable across different values of $K$. On the other hand, for worker-momentum, performance declines as $K$ decreases, with higher momentum values experiencing more significant effects. For example, momentum with $\beta=0.99$ fails when $K=100$, whereas with $\beta=0.9$ it performs well even when $K=20$. These results are expected, as momentum with parameter $\beta$ effectively averages the last $\frac{1}{1-\beta}$ iterations, leading us to anticipate this switching rate as its `break-off' point. In \Cref{subapp:mnist_exp}, we provide the results of an analogous experiment with a different pair of attack and aggregator, demonstrating a similar trend.

\paragraph{CIFAR-10. } Next, we consider CIFAR-10 classification with $m=25$ workers under the \textbf{Bernoulli($p, D, \delta_{\max}$)} switching strategy. We investigate three switching configurations: \textbf{(1)} $p=0.01, D=10$; \textbf{(2)} $p=0.01, D=50$; and \textbf{(3)} $p=0.05, D=10$. For all configurations, we restrict the maximum fraction of Byzantine workers in any single iteration to $\delta_{\max}=0.72$ (equivalent to $18$ out of $25$ workers), indicating that the fraction of Byzantine workers in any given iteration may exceed $0.5$. In \Cref{subapp:cifar_exp}, we present similar results for $\delta_{\max}=0.48$. The Byzantine workers employ the inner-product manipulation attack (IPM,~\citealp{xie2020fall}), while the server utilizes CWMed. We compare our method against vanilla SGD and momentum with parameter $\beta=0.9$. In \Cref{fig:cifar10_ipm_cwmed}, we display the test accuracy and histograms of the fraction of Byzantine workers throughout training. Surprisingly, for the first configuration, which exhibits a relatively low number of Byzantine workers in any iteration, momentum with $\beta=0.9$ slightly outperforms our method. However, in the other configurations, which have a larger number of Byzantine workers per iteration and a non-negligible number of iterations with $\delta>0.5$, our method significantly outperforms both SGD and momentum.

\begin{figure}[t]
    \centering
    \includegraphics[trim={0.4cm 0.1cm 0.1cm 0.4cm},clip,width=\linewidth]{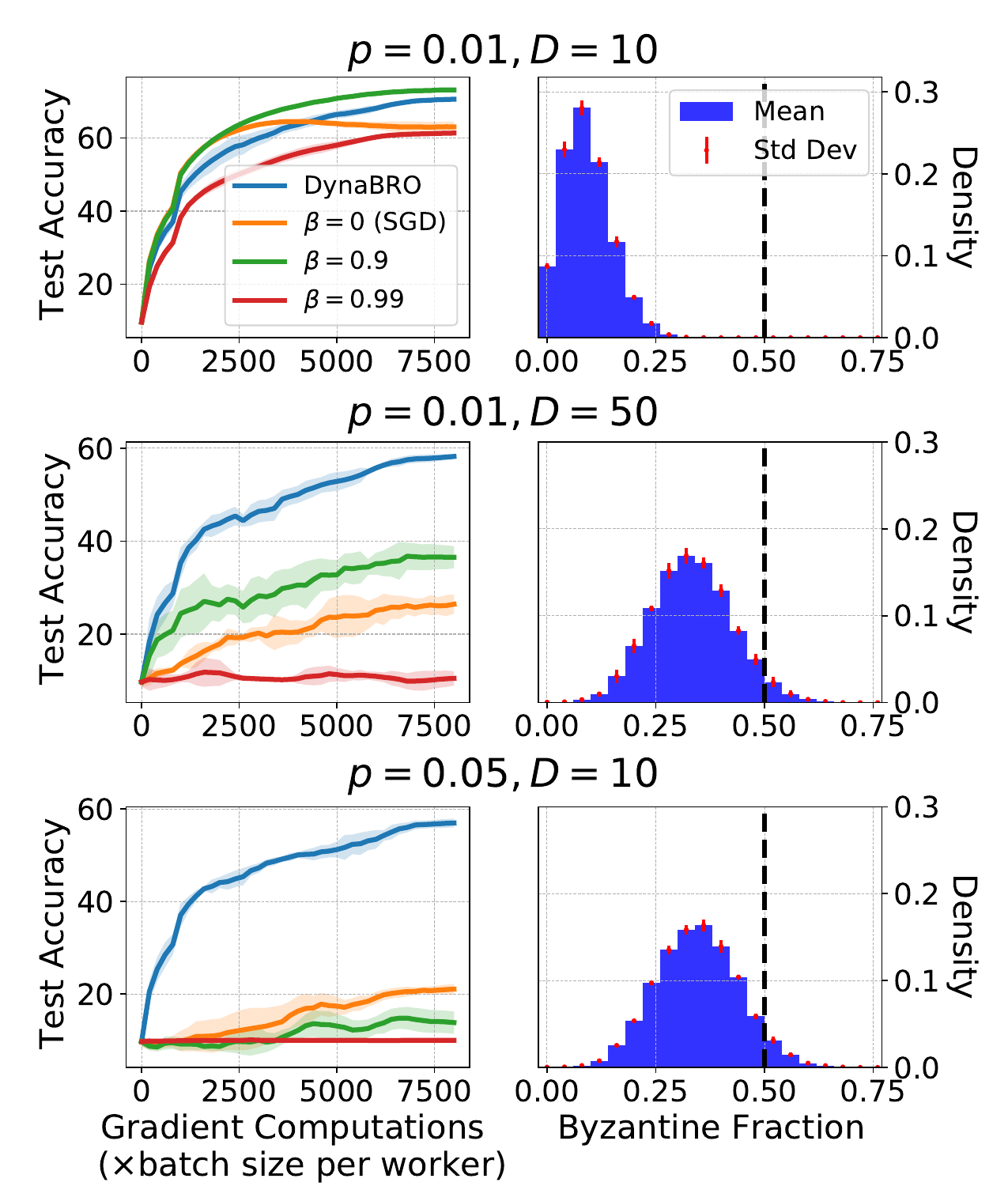}
    \vspace{-1.3em}
    \caption{Test accuracy and histogram of the fraction of Byzantine workers over time on CIFAR-10 under the \textbf{Bernoulli($p, D, \delta_{\max}$)} identity-switching strategy for different values of $p$ and $D$. Byzantine workers employ the IPM attack and the server uses CWMed.}
    \vspace{-1em}
    \label{fig:cifar10_ipm_cwmed}
\end{figure}
\section{Conclusion and Future Work}\label{sec:conclusion}
In this work, we introduced \methodName{}, a novel approach for Byzantine-robust learning in dynamic settings. We tackled the challenge of Byzantine behavior alterations, demonstrating that our method withstands a substantial number of Byzantine identity changes while achieving the asymptotic convergence rate of the static setting. A key innovation is the use of an MLMC gradient estimation technique and its integration with a fail-safe filter, which enhances robustness against dynamic Byzantine strategies. Coupled with an adaptive learning rate, our approach further alleviates the necessity for prior knowledge of the smoothness parameter and the fraction of Byzantine workers. 


Several important avenues for future research emerge from our study. First, we study the homogeneous case, where all workers minimize the same objective. Extending our analysis to heterogeneous datasets is an exciting and challenging direction since a direct application of our approach results in suboptimal bounds for this setting. Another direction includes exploring simultaneous adaptivity to the noise level, the smoothness, and the fraction of Byzantine workers; this presents a complex challenge as we are unaware of any optimal aggregation rule that is agnostic to both the noise level and the fraction of Byzantine workers. 

\section*{Acknowledgments}
This research was partially supported by Israel PBC-VATAT, by the Technion Artificial Intelligent Hub (Tech.AI) and by the Israel Science Foundation (grant No. 447/20).
\section*{Impact Statement}
This paper introduces advancements in Byzantine-robust learning. While inherent societal implications concerning security and reliability in distributed ML are acknowledged, we do not feel they must be specifically highlighted here.

\bibliography{refs}

\begin{thebibliography}{54}
\providecommand{\natexlab}[1]{#1}
\providecommand{\url}[1]{\texttt{#1}}
\expandafter\ifx\csname urlstyle\endcsname\relax
  \providecommand{\doi}[1]{doi: #1}\else
  \providecommand{\doi}{doi: \begingroup \urlstyle{rm}\Url}\fi

\bibitem[Ajalloeian \& Stich(2020)Ajalloeian and Stich]{ajalloeian2020convergence}
Ajalloeian, A. and Stich, S.~U.
\newblock {On the convergence of SGD with biased gradients}.
\newblock \emph{arXiv preprint arXiv:2008.00051}, 2020.

\bibitem[Alistarh et~al.(2018)Alistarh, Allen-Zhu, and Li]{alistarh2018byzantine}
Alistarh, D., Allen-Zhu, Z., and Li, J.
\newblock {Byzantine stochastic gradient descent}.
\newblock \emph{Advances in Neural Information Processing Systems}, 31, 2018.

\bibitem[Allen-Zhu et~al.(2020)Allen-Zhu, Ebrahimian, Li, and Alistarh]{allen2020byzantine}
Allen-Zhu, Z., Ebrahimian, F., Li, J., and Alistarh, D.
\newblock {Byzantine-resilient non-convex stochastic gradient descent}.
\newblock \emph{arXiv preprint arXiv:2012.14368}, 2020.

\bibitem[Allouah et~al.(2023)Allouah, Farhadkhani, Guerraoui, Gupta, Pinot, and Stephan]{allouah2023fixing}
Allouah, Y., Farhadkhani, S., Guerraoui, R., Gupta, N., Pinot, R., and Stephan, J.
\newblock {Fixing by mixing: A recipe for optimal byzantine ml under heterogeneity}.
\newblock In \emph{International Conference on Artificial Intelligence and Statistics}, pp.\  1232--1300. PMLR, 2023.

\bibitem[Asi et~al.(2021)Asi, Carmon, Jambulapati, Jin, and Sidford]{asi2021stochastic}
Asi, H., Carmon, Y., Jambulapati, A., Jin, Y., and Sidford, A.
\newblock {Stochastic bias-reduced gradient methods}.
\newblock \emph{Advances in Neural Information Processing Systems}, 34:\penalty0 10810--10822, 2021.

\bibitem[Atre et~al.(2021)Atre, Jha, and Rao]{atre2021distributed}
Atre, M., Jha, B., and Rao, A.
\newblock {Distributed deep learning using volunteer computing-like paradigm}.
\newblock In \emph{2021 IEEE International Parallel and Distributed Processing Symposium Workshops (IPDPSW)}, pp.\  933--942. IEEE, 2021.

\bibitem[Attia \& Koren(2023)Attia and Koren]{attia2023sgd}
Attia, A. and Koren, T.
\newblock {SGD with AdaGrad Stepsizes: Full Adaptivity with High Probability to Unknown Parameters, Unbounded Gradients and Affine Variance}.
\newblock \emph{arXiv preprint arXiv:2302.08783}, 2023.

\bibitem[Auer et~al.(2002)Auer, Cesa-Bianchi, and Gentile]{auer2002adaptive}
Auer, P., Cesa-Bianchi, N., and Gentile, C.
\newblock {Adaptive and self-confident on-line learning algorithms}.
\newblock \emph{Journal of Computer and System Sciences}, 64\penalty0 (1):\penalty0 48--75, 2002.

\bibitem[Baruch et~al.(2019)Baruch, Baruch, and Goldberg]{baruch2019little}
Baruch, G., Baruch, M., and Goldberg, Y.
\newblock {A little is enough: Circumventing defenses for distributed learning}.
\newblock \emph{Advances in Neural Information Processing Systems}, 32, 2019.

\bibitem[Beznosikov et~al.(2023)Beznosikov, Samsonov, Sheshukova, Gasnikov, Naumov, and Moulines]{beznosikov2023first}
Beznosikov, A., Samsonov, S., Sheshukova, M., Gasnikov, A., Naumov, A., and Moulines, E.
\newblock {First Order Methods with Markovian Noise: from Acceleration to Variational Inequalities}.
\newblock \emph{arXiv preprint arXiv:2305.15938}, 2023.

\bibitem[Blanchard et~al.(2017)Blanchard, El~Mhamdi, Guerraoui, and Stainer]{blanchard2017machine}
Blanchard, P., El~Mhamdi, E.~M., Guerraoui, R., and Stainer, J.
\newblock {Machine learning with adversaries: Byzantine tolerant gradient descent}.
\newblock \emph{Advances in neural information processing systems}, 30, 2017.

\bibitem[Bonawitz et~al.(2019)Bonawitz, Eichner, Grieskamp, Huba, Ingerman, Ivanov, Kiddon, Kone{\v{c}}n{\`y}, Mazzocchi, McMahan, et~al.]{bonawitz2019towards}
Bonawitz, K., Eichner, H., Grieskamp, W., Huba, D., Ingerman, A., Ivanov, V., Kiddon, C., Kone{\v{c}}n{\`y}, J., Mazzocchi, S., McMahan, B., et~al.
\newblock {Towards federated learning at scale: System design}.
\newblock \emph{Proceedings of machine learning and systems}, 1:\penalty0 374--388, 2019.

\bibitem[Brown et~al.(2020)Brown, Mann, Ryder, Subbiah, Kaplan, Dhariwal, Neelakantan, Shyam, Sastry, Askell, et~al.]{brown2020language}
Brown, T., Mann, B., Ryder, N., Subbiah, M., Kaplan, J.~D., Dhariwal, P., Neelakantan, A., Shyam, P., Sastry, G., Askell, A., et~al.
\newblock {Language models are few-shot learners}.
\newblock \emph{Advances in neural information processing systems}, 33:\penalty0 1877--1901, 2020.

\bibitem[Chen et~al.(2017)Chen, Su, and Xu]{chen2017distributed}
Chen, Y., Su, L., and Xu, J.
\newblock {Distributed statistical machine learning in adversarial settings: Byzantine gradient descent}.
\newblock \emph{Proceedings of the ACM on Measurement and Analysis of Computing Systems}, 1\penalty0 (2):\penalty0 1--25, 2017.

\bibitem[Data \& Diggavi(2021)Data and Diggavi]{data2021byzantine}
Data, D. and Diggavi, S.
\newblock {Byzantine-resilient high-dimensional SGD with local iterations on heterogeneous data}.
\newblock In \emph{International Conference on Machine Learning}, pp.\  2478--2488. PMLR, 2021.

\bibitem[Diskin et~al.(2021)Diskin, Bukhtiyarov, Ryabinin, Saulnier, Sinitsin, Popov, Pyrkin, Kashirin, Borzunov, Villanova~del Moral, et~al.]{diskin2021distributed}
Diskin, M., Bukhtiyarov, A., Ryabinin, M., Saulnier, L., Sinitsin, A., Popov, D., Pyrkin, D.~V., Kashirin, M., Borzunov, A., Villanova~del Moral, A., et~al.
\newblock {Distributed deep learning in open collaborations}.
\newblock \emph{Advances in Neural Information Processing Systems}, 34:\penalty0 7879--7897, 2021.

\bibitem[Dorfman \& Levy(2022)Dorfman and Levy]{dorfman2022adapting}
Dorfman, R. and Levy, K.~Y.
\newblock {Adapting to mixing time in stochastic optimization with markovian data}.
\newblock In \emph{International Conference on Machine Learning}, pp.\  5429--5446. PMLR, 2022.

\bibitem[El-Mhamdi et~al.(2020)El-Mhamdi, Guerraoui, and Rouault]{el2020distributed}
El-Mhamdi, E.-M., Guerraoui, R., and Rouault, S.
\newblock {Distributed momentum for byzantine-resilient learning}.
\newblock \emph{arXiv preprint arXiv:2003.00010}, 2020.

\bibitem[Farhadkhani et~al.(2022)Farhadkhani, Guerraoui, Gupta, Pinot, and Stephan]{farhadkhani2022byzantine}
Farhadkhani, S., Guerraoui, R., Gupta, N., Pinot, R., and Stephan, J.
\newblock {Byzantine machine learning made easy by resilient averaging of momentums}.
\newblock In \emph{International Conference on Machine Learning}, pp.\  6246--6283. PMLR, 2022.

\bibitem[Faw et~al.(2022)Faw, Tziotis, Caramanis, Mokhtari, Shakkottai, and Ward]{faw2022power}
Faw, M., Tziotis, I., Caramanis, C., Mokhtari, A., Shakkottai, S., and Ward, R.
\newblock {The power of adaptivity in sgd: Self-tuning step sizes with unbounded gradients and affine variance}.
\newblock In \emph{Conference on Learning Theory}, pp.\  313--355. PMLR, 2022.

\bibitem[Feng et~al.(2014)Feng, Xu, and Mannor]{feng2014distributed}
Feng, J., Xu, H., and Mannor, S.
\newblock {Distributed robust learning}.
\newblock \emph{arXiv preprint arXiv:1409.5937}, 2014.

\bibitem[Giles(2015)]{giles2015multilevel}
Giles, M.~B.
\newblock {Multilevel monte carlo methods}.
\newblock \emph{Acta numerica}, 24:\penalty0 259--328, 2015.

\bibitem[Gorbunov et~al.(2022)Gorbunov, Horv{\'a}th, Richt{\'a}rik, and Gidel]{gorbunov2022variance}
Gorbunov, E., Horv{\'a}th, S., Richt{\'a}rik, P., and Gidel, G.
\newblock {Variance Reduction is an Antidote to Byzantines: Better Rates, Weaker Assumptions and Communication Compression as a Cherry on the Top}.
\newblock \emph{arXiv preprint arXiv:2206.00529}, 2022.

\bibitem[Guerraoui et~al.(2018)Guerraoui, Rouault, et~al.]{guerraoui2018hidden}
Guerraoui, R., Rouault, S., et~al.
\newblock {The hidden vulnerability of distributed learning in byzantium}.
\newblock In \emph{International Conference on Machine Learning}, pp.\  3521--3530. PMLR, 2018.

\bibitem[Guerraoui et~al.(2023)Guerraoui, Gupta, and Pinot]{guerraoui2023byzantine}
Guerraoui, R., Gupta, N., and Pinot, R.
\newblock {Byzantine machine learning: A primer}.
\newblock \emph{ACM Computing Surveys}, 2023.

\bibitem[Hu et~al.(2021)Hu, Chen, and He]{hu2021bias}
Hu, Y., Chen, X., and He, N.
\newblock {On the bias-variance-cost tradeoff of stochastic optimization}.
\newblock \emph{Advances in Neural Information Processing Systems}, 34:\penalty0 22119--22131, 2021.

\bibitem[Huang et~al.(2020)Huang, Geiping, Fowl, Taylor, and Goldstein]{huang2020metapoison}
Huang, W.~R., Geiping, J., Fowl, L., Taylor, G., and Goldstein, T.
\newblock {Metapoison: Practical general-purpose clean-label data poisoning}.
\newblock \emph{Advances in Neural Information Processing Systems}, 33:\penalty0 12080--12091, 2020.

\bibitem[Kairouz et~al.(2021)Kairouz, McMahan, Avent, Bellet, Bennis, Bhagoji, Bonawitz, Charles, Cormode, Cummings, et~al.]{kairouz2021advances}
Kairouz, P., McMahan, H.~B., Avent, B., Bellet, A., Bennis, M., Bhagoji, A.~N., Bonawitz, K., Charles, Z., Cormode, G., Cummings, R., et~al.
\newblock {Advances and open problems in federated learning}.
\newblock \emph{Foundations and Trends{\textregistered} in Machine Learning}, 14\penalty0 (1--2):\penalty0 1--210, 2021.

\bibitem[Karimireddy et~al.(2021)Karimireddy, He, and Jaggi]{karimireddy2021learning}
Karimireddy, S.~P., He, L., and Jaggi, M.
\newblock {Learning from history for byzantine robust optimization}.
\newblock In \emph{International Conference on Machine Learning}, pp.\  5311--5319. PMLR, 2021.

\bibitem[Karimireddy et~al.(2022)Karimireddy, He, and Jaggi]{karimireddy2022byzantinerobust}
Karimireddy, S.~P., He, L., and Jaggi, M.
\newblock Byzantine-robust learning on heterogeneous datasets via bucketing.
\newblock In \emph{International Conference on Learning Representations}, 2022.
\newblock URL \url{https://openreview.net/forum?id=jXKKDEi5vJt}.

\bibitem[Kavis et~al.(2022)Kavis, Levy, and Cevher]{kavis2022high}
Kavis, A., Levy, K., and Cevher, V.
\newblock {High probability bounds for a class of nonconvex algorithms with AdaGrad stepsize}.
\newblock In \emph{10th International Conference on Learning Representations (ICLR)}, 2022.

\bibitem[Kijsipongse et~al.(2018)Kijsipongse, Piyatumrong, and U-ruekolan]{kijsipongse2018hybrid}
Kijsipongse, E., Piyatumrong, A., and U-ruekolan, S.
\newblock {A hybrid GPU cluster and volunteer computing platform for scalable deep learning}.
\newblock \emph{The Journal of Supercomputing}, 74:\penalty0 3236--3263, 2018.

\bibitem[Krizhevsky et~al.(2009)Krizhevsky, Hinton, et~al.]{krizhevsky2009learning}
Krizhevsky, A., Hinton, G., et~al.
\newblock {Learning multiple layers of features from tiny images}.
\newblock 2009.

\bibitem[Lamport et~al.(2019)Lamport, Shostak, and Pease]{lamport2019byzantine}
Lamport, L., Shostak, R., and Pease, M.
\newblock {The Byzantine generals problem}.
\newblock pp.\  203--226. 2019.

\bibitem[LeCun et~al.(1998)LeCun, Bottou, Bengio, and Haffner]{lecun1998gradient}
LeCun, Y., Bottou, L., Bengio, Y., and Haffner, P.
\newblock {Gradient-based learning applied to document recognition}.
\newblock \emph{Proceedings of the IEEE}, 86\penalty0 (11):\penalty0 2278--2324, 1998.

\bibitem[Levy et~al.(2020)Levy, Carmon, Duchi, and Sidford]{levy2020large}
Levy, D., Carmon, Y., Duchi, J.~C., and Sidford, A.
\newblock {Large-scale methods for distributionally robust optimization}.
\newblock \emph{Advances in Neural Information Processing Systems}, 33:\penalty0 8847--8860, 2020.

\bibitem[Levy(2017)]{levy2017online}
Levy, K.
\newblock {Online to offline conversions, universality and adaptive minibatch sizes}.
\newblock \emph{Advances in Neural Information Processing Systems}, 30, 2017.

\bibitem[Levy et~al.(2018)Levy, Yurtsever, and Cevher]{levy2018online}
Levy, K.~Y., Yurtsever, A., and Cevher, V.
\newblock {Online adaptive methods, universality and acceleration}.
\newblock \emph{Advances in neural information processing systems}, 31, 2018.

\bibitem[Malinovsky et~al.(2023)Malinovsky, Richt{\'a}rik, Horv{\'a}th, and Gorbunov]{malinovsky2023byzantine}
Malinovsky, G., Richt{\'a}rik, P., Horv{\'a}th, S., and Gorbunov, E.
\newblock {Byzantine Robustness and Partial Participation Can Be Achieved Simultaneously: Just Clip Gradient Differences}.
\newblock \emph{arXiv preprint arXiv:2311.14127}, 2023.

\bibitem[McMahan \& Streeter(2010)McMahan and Streeter]{mcmahan2010adaptive}
McMahan, H.~B. and Streeter, M.
\newblock {Adaptive bound optimization for online convex optimization}.
\newblock \emph{arXiv preprint arXiv:1002.4908}, 2010.

\bibitem[Mohri \& Yang(2016)Mohri and Yang]{mohri2016accelerating}
Mohri, M. and Yang, S.
\newblock {Accelerating online convex optimization via adaptive prediction}.
\newblock In \emph{Artificial Intelligence and Statistics}, pp.\  848--856. PMLR, 2016.

\bibitem[Pillutla et~al.(2022)Pillutla, Kakade, and Harchaoui]{pillutla2022robust}
Pillutla, K., Kakade, S.~M., and Harchaoui, Z.
\newblock {Robust aggregation for federated learning}.
\newblock \emph{IEEE Transactions on Signal Processing}, 70:\penalty0 1142--1154, 2022.

\bibitem[Pinelis(1994)]{pinelis1994optimum}
Pinelis, I.
\newblock {Optimum bounds for the distributions of martingales in Banach spaces}.
\newblock \emph{The Annals of Probability}, pp.\  1679--1706, 1994.

\bibitem[Ryabinin \& Gusev(2020)Ryabinin and Gusev]{ryabinin2020towards}
Ryabinin, M. and Gusev, A.
\newblock {Towards crowdsourced training of large neural networks using decentralized mixture-of-experts}.
\newblock \emph{Advances in Neural Information Processing Systems}, 33:\penalty0 3659--3672, 2020.

\bibitem[Schwarzschild et~al.(2021)Schwarzschild, Goldblum, Gupta, Dickerson, and Goldstein]{schwarzschild2021just}
Schwarzschild, A., Goldblum, M., Gupta, A., Dickerson, J.~P., and Goldstein, T.
\newblock {Just how toxic is data poisoning? a unified benchmark for backdoor and data poisoning attacks}.
\newblock In \emph{International Conference on Machine Learning}, pp.\  9389--9398. PMLR, 2021.

\bibitem[Shi \& Cornish(2021)Shi and Cornish]{shi2021multilevel}
Shi, Y. and Cornish, R.
\newblock {On multilevel Monte Carlo unbiased gradient estimation for deep latent variable models}.
\newblock In \emph{International Conference on Artificial Intelligence and Statistics}, pp.\  3925--3933. PMLR, 2021.

\bibitem[Su \& Vaidya(2016)Su and Vaidya]{su2016fault}
Su, L. and Vaidya, N.~H.
\newblock {Fault-tolerant multi-agent optimization: optimal iterative distributed algorithms}.
\newblock In \emph{Proceedings of the 2016 ACM symposium on principles of distributed computing}, pp.\  425--434, 2016.

\bibitem[Suttle et~al.(2023)Suttle, Bedi, Patel, Sadler, Koppel, and Manocha]{suttle2023beyond}
Suttle, W.~A., Bedi, A., Patel, B., Sadler, B.~M., Koppel, A., and Manocha, D.
\newblock {Beyond exponentially fast mixing in average-reward reinforcement learning via multi-level Monte Carlo actor-critic}.
\newblock In \emph{International Conference on Machine Learning}, pp.\  33240--33267. PMLR, 2023.

\bibitem[Verbraeken et~al.(2020)Verbraeken, Wolting, Katzy, Kloppenburg, Verbelen, and Rellermeyer]{verbraeken2020survey}
Verbraeken, J., Wolting, M., Katzy, J., Kloppenburg, J., Verbelen, T., and Rellermeyer, J.~S.
\newblock {A survey on distributed machine learning}.
\newblock \emph{Acm computing surveys (csur)}, 53\penalty0 (2):\penalty0 1--33, 2020.

\bibitem[Ward et~al.(2020)Ward, Wu, and Bottou]{ward2020adagrad}
Ward, R., Wu, X., and Bottou, L.
\newblock {Adagrad stepsizes: Sharp convergence over nonconvex landscapes}.
\newblock \emph{The Journal of Machine Learning Research}, 21\penalty0 (1):\penalty0 9047--9076, 2020.

\bibitem[Wu et~al.(2020)Wu, Ling, Chen, and Giannakis]{wu2020federated}
Wu, Z., Ling, Q., Chen, T., and Giannakis, G.~B.
\newblock {Federated variance-reduced stochastic gradient descent with robustness to byzantine attacks}.
\newblock \emph{IEEE Transactions on Signal Processing}, 68:\penalty0 4583--4596, 2020.

\bibitem[Xie et~al.(2020)Xie, Koyejo, and Gupta]{xie2020fall}
Xie, C., Koyejo, O., and Gupta, I.
\newblock {Fall of empires: Breaking byzantine-tolerant sgd by inner product manipulation}.
\newblock In \emph{Uncertainty in Artificial Intelligence}, pp.\  261--270. PMLR, 2020.

\bibitem[Yin et~al.(2018)Yin, Chen, Kannan, and Bartlett]{yin2018byzantine}
Yin, D., Chen, Y., Kannan, R., and Bartlett, P.
\newblock {Byzantine-robust distributed learning: Towards optimal statistical rates}.
\newblock In \emph{International Conference on Machine Learning}, pp.\  5650--5659. PMLR, 2018.

\bibitem[Zhu \& Ling(2021)Zhu and Ling]{zhu2021broadcast}
Zhu, H. and Ling, Q.
\newblock {Broadcast: Reducing both stochastic and compression noise to robustify communication-efficient federated learning}.
\newblock \emph{arXiv preprint arXiv:2104.06685}, 2021.

\end{thebibliography}
\bibliographystyle{icml2024}

\newpage
\appendix
\onecolumn

\section*{Appendix Outline}
We provide a concise outline of the appendix structure as follows:

\paragraph{\Cref{app:biased-SGD}. } General analysis of SGD with biased gradients for convex and non-convex functions.
\paragraph{\Cref{app:mlmc_bias_reduction}. } Proof of properties for the MLMC estimator (\Cref{lem:mlmc}).
\paragraph{\Cref{app:static-analysis}. } Analysis of the MLMC approach in the static Byzantine setting for convex and non-convex objectives.
\paragraph{\Cref{app:dynamic_general}. } Analysis of the dynamic Byzantine setting where identities change over time.


\paragraph{\Cref{app:momentum_breaks}. } Analyzing how the worker-momentum method fails in scenarios involving Byzantine ID relabeling.
\paragraph{\Cref{app:mfm_prop}. } Properties of the MFM aggregator, introduced to guarantee optimal convergence rates.
\paragraph{\Cref{app:biased-adagrad}. } Results mirroring \Cref{app:biased-SGD} using AdaGrad-Norm as the learning rate.
\paragraph{\Cref{app:dynamic_adaptive}. } Convergence guarantees for \methodName{} with MFM and AdaGrad-Norm -- improved rates and adaptivity.
\paragraph{\Cref{app:technical_lemmata}. } Technical lemmata.
\paragraph{\Cref{app:experiments}. } Experimental setup and training details.
\section{SGD with Biased Gradients}\label{app:biased-SGD}
Consider the SGD update rule, defined for some initial iterate $x_1\in\reals^d$ and a fixed learning rate $\eta>0$ as,
\begin{equation}\label{eq:sgd}
    x_{t+1} = \proj{\K}{x_t - \eta g_t}\; ,
\end{equation}
where $g_t$ is an estimator of $\nabla f(x_t)$ with bias $b_t \coloneqq \E[g_t - \nabla f(x_t) | x_t]$ and variance $V_t^2 \coloneqq \E[\norm{g_t - \E g_t}^2 | x_t]$.

The following lemmas establish bounds on the optimality gap and sum of squared gradients norm for SGD with (possibly) biased gradients, when applied to convex and non-convex functions, respectively. All our convergence results rely on these lemmas; to be precise, we derive bounds on the bias and variance of the relevant gradient estimator and plug them into our results in a black-box fashion.

\begin{lemma}[Convex SGD]\label{lem:convex-sgd}
    Assume $f$ is convex. Consider \Cref{eq:sgd} with $\eta\leq \frac{1}{2L}$. If the domain $\K$ is bounded with diameter $D$ (\Cref{assump:bounded-domain}), then
    \[
        \E f(\widebar{x}_T) - f(x^*) \leq \frac{D^2}{2T\eta} + \frac{\eta}{T}\sum_{t=1}^{T}{ \E V_t^2} + \frac{D}{T}\sum_{t=1}^{T}{\E\norm{b_t}}\; ,
    \]
    where $\widebar{x}_T\coloneqq \frac{1}{T}\sum_{t=2}^{T+1}{x_t}$.
\end{lemma}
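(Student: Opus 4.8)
The plan is to prove the convex SGD bound by the standard one-step analysis for projected SGD with a biased stochastic oracle, then average over $t$. First I would fix a round $t$ and use non-expansiveness of the projection onto the convex set $\K$ together with optimality $x^*\in\K$ to write $\norm{x_{t+1}-x^*}^2 = \norm{\proj{\K}{x_t-\eta g_t}-\proj{\K}{x^*}}^2 \leq \norm{x_t - \eta g_t - x^*}^2 = \norm{x_t-x^*}^2 - 2\eta\,g_t^\top(x_t-x^*) + \eta^2\norm{g_t}^2$. Taking conditional expectation given $x_t$ and writing $g_t = \nabla f(x_t) + b_t + (g_t - \E[g_t\mid x_t])$, the inner product term splits into $\nabla f(x_t)^\top(x_t-x^*)$, which I bound below by $f(x_t)-f(x^*)$ via convexity, plus a cross term $b_t^\top(x_t-x^*)$ handled by Cauchy--Schwarz and the diameter bound $\norm{x_t-x^*}\leq D$ (\Cref{assump:bounded-domain}), giving $\abs{b_t^\top(x_t-x^*)}\leq D\norm{b_t}$. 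The noise term vanishes in conditional expectation since $\E[g_t - \E[g_t\mid x_t]\mid x_t]=0$.

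Next I would control the quadratic term $\E[\norm{g_t}^2\mid x_t]$. The natural decomposition is $\E\norm{g_t}^2 = \norm{\E g_t}^2 + V_t^2 = \norm{\nabla f(x_t)+b_t}^2 + V_t^2$; to make the $\norm{\nabla f(x_t)}^2$ piece absorbable I would invoke $L$-smoothness in the form $\norm{\nabla f(x_t)}^2 \leq 2L(f(x_t)-f^*)$ (this is the standard consequence of smoothness plus $f\geq f^*$; it is used implicitly throughout the paper's biased-SGD machinery). Combined with $\eta\leq\frac{1}{2L}$, the term $\eta^2 L\cdot 2L(f(x_t)-f^*)\leq \eta(f(x_t)-f^*)$ can be absorbed into half of the $2\eta(f(x_t)-f(x^*))$ gain from the convexity step, leaving a net negative coefficient $-\eta(f(x_t)-f(x^*))$. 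The remaining pieces from $\norm{\nabla f(x_t)+b_t}^2$ — namely the cross term and $\norm{b_t}^2$ — are lower-order and can be merged into the $D\norm{b_t}$ bias term (using $\eta\leq 1/(2L)$ and $D$-boundedness to keep them dominated). This yields a recursion of the shape $\E\norm{x_{t+1}-x^*}^2 \leq \E\norm{x_t-x^*}^2 - \eta\,\E(f(x_t)-f(x^*)) + 2\eta^2 \E V_t^2 + 2\eta D\,\E\norm{b_t}$ (up to the exact constants, which I would not grind through here).

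Finally I would telescope this recursion from $t=1$ to $T$: the left side telescopes to $\E\norm{x_{T+1}-x^*}^2 - \norm{x_1-x^*}^2 \geq -D^2$, so $\eta\sum_{t=1}^{T}\E(f(x_t)-f(x^*)) \leq D^2 + 2\eta^2\sum_t \E V_t^2 + 2\eta D\sum_t \E\norm{b_t}$. Dividing by $\eta T$ and applying Jensen's inequality to $\widebar{x}_T = \frac{1}{T}\sum_{t=2}^{T+1}x_t$ (note the index shift, which matches summing $f(x_t)-f(x^*)$ over the iterates actually produced by the updates) gives $\E f(\widebar{x}_T)-f(x^*) \leq \frac{D^2}{T\eta} + 2\eta\cdot\frac{1}{T}\sum_t\E V_t^2 + 2D\cdot\frac{1}{T}\sum_t\E\norm{b_t}$, which matches the claimed bound up to the numerical constants (the paper's statement has $\frac{D^2}{2T\eta}$, $\eta$, and $D$ as coefficients, so a slightly tighter bookkeeping — e.g. keeping the full $2\eta(f(x_t)-f(x^*))$ and absorbing only into part of it, or using $\eta\le 1/(2L)$ more carefully — recovers the exact constants). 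The main obstacle is the careful constant-tracking in the smoothness absorption step: one must ensure the $\eta^2\norm{g_t}^2$ term is fully dominated so that the coefficient of $f(x_t)-f(x^*)$ stays nonnegative after telescoping, and that the bias contributions from $\norm{\nabla f(x_t)+b_t}^2$ are genuinely controlled by the $D\norm{b_t}$ term rather than introducing a spurious $\norm{b_t}^2/\eta$ dependence — this is where the bounded-domain assumption is essential and where a naive decomposition would give a worse bound.
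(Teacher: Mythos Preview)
Your approach differs from the paper's and has a genuine gap. The step where you invoke $\norm{\nabla f(x_t)}^2 \leq 2L(f(x_t) - f^*)$ requires $f^*$ to be the \emph{unconstrained} infimum of $f$ (equivalently, $\nabla f(x^*) = 0$); this is \Cref{lem:self_boundness}, and the paper uses it only under that explicit extra hypothesis (cf.\ \Cref{thm:convex-adaptive}). \Cref{lem:convex-sgd} makes no such assumption: $x^*$ is merely a minimizer over the bounded domain $\K$, so $\nabla f(x^*)$ need not vanish and the self-bounding inequality can fail. Without it, your $\eta^2\norm{g_t}^2$ term contains an unabsorbed $\eta^2\norm{\nabla f(x_t)}^2$. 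A secondary issue is the claim that $\eta^2\norm{b_t}^2$ ``can be merged into the $D\norm{b_t}$ bias term'': that would require $\eta\norm{b_t}\leq D$, which is not given, so at best you would produce an extra $\frac{\eta}{T}\sum_t\E\norm{b_t}^2$ term rather than the stated bound. Finally, your recursion controls $\sum_{t=1}^T \E(f(x_t)-f(x^*))$, not $\sum_{t=2}^{T+1}$; contrary to your remark, the index shift in $\widebar{x}_T$ does not fall out of your argument.

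The paper avoids all of this by a different decomposition. It applies the projected-SGD three-point inequality (\Cref{lem:classical_psgd}) to keep $g_t^\top(x_t - x_{t+1}) - \frac{1}{2\eta}\norm{x_t - x_{t+1}}^2$ explicit, and simultaneously uses smoothness to \emph{lower}-bound the left-hand side via $f(x_t) \geq f(x_{t+1}) - \nabla f(x_t)^\top(x_{t+1}-x_t) - \frac{L}{2}\norm{x_t - x_{t+1}}^2$. After combining, the gradient terms reassemble into $(g_t - \nabla f(x_t))^\top(x_t - x_{t+1})$ minus a $(\frac{1}{2\eta}-\frac{L}{2})\norm{x_t - x_{t+1}}^2$ quadratic; writing $g_t - \nabla f(x_t) = (g_t - \E g_t) + b_t$ and applying Young's inequality to the noise part against the quadratic yields exactly $\eta V_t^2$, while the bias part combines with $-b_t^\top(x_t - x^*)$ to give $-b_t^\top(x_{t+1} - x^*)\leq D\norm{b_t}$. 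This never bounds $\norm{\nabla f(x_t)}^2$, works for any constrained minimizer $x^*\in\K$, and naturally produces the shifted sum over $t=2,\ldots,T+1$.
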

\begin{proof}
    By the convexity of $f$, the gradient inequality implies that
    \begin{equation}\label{eq:grad_ineq_decomp}
        \sum_{t\in\sbrac{T}}{\brac{\E f(x_t) - f(x^*)}} \leq \sum_{t\in\sbrac{T}}{\E[\nablat^\top(x_t - x^*)]} = \sum_{t\in\sbrac{T}}{\E[g_t^{\top}(x_t - x^*)]} - \sum_{t\in\sbrac{T}}{\E[b_t^\top(x_t - x^*)]}\; .
    \end{equation}
    Focusing on the first term in the R.H.S, by applying \Cref{lem:classical_psgd} with $x=x^*$, we have
    \begin{align}\label{eq:bound_on_regret_convex}
        \sum_{t\in\sbrac{T}}{\E[g_t^\top(x_t - x^*)]} &\leq \frac{1}{2\eta}\sum_{t\in\sbrac{T}}{\brac{\E\!\norm{x_t - x^*}^2 - \E\!\norm{x_{t+1} - x^*}^2}} + \sum_{t\in\sbrac{T}}{\brac{\E[g_t^\top(x_t - x_{t+1})] - \frac{1}{2\eta}\E\!\norm{x_t - x_{t+1}}^2}} \nonumber \\ &\leq \frac{D^2}{2\eta} + \sum_{t\in\sbrac{T}}{\brac{\E[g_t^\top(x_t - x_{t+1})] - \frac{1}{2\eta}\E\!\norm{x_t - x_{t+1}}^2}}\; ,
    \end{align}
    where the last inequality follows from a telescoping sum and $\norm{x_1 - x^*}^2\leq D^2$. On the other hand, by the smoothness of $f$, we can bound the L.H.S as follows:
    \begin{equation}\label{eq:lower_bound_on_regret}
        \sum_{t\in\sbrac{T}}{\brac{\E f(x_t) - f(x^*)}} \geq \sum_{t\in\sbrac{T}}{\brac{\E f(x_{t+1}) - f(x^*) - \E[\nablat^\top(x_{t+1} - x_{t})] - \frac{L}{2}\E\!\norm{x_t - x_{t+1}}^2}}\; .
    \end{equation}
    Plugging \Cref{eq:bound_on_regret_convex,eq:lower_bound_on_regret} back into \Cref{eq:grad_ineq_decomp} yields:
    \begin{align*}
        \sum_{t\in\sbrac{T}}{\brac{\E f(x_{t+1}) - f(x^*)}} &\leq \\ &\hspace{-1cm}\frac{D^2}{2\eta} + \sum_{t\in\sbrac{T}}{\brac{\E[(g_t - \nablat)^\top(x_t - x_{t+1})] - \brac{\frac{1}{2\eta} - \frac{L}{2}}\E\!\norm{x_t - x_{t+1}}^2}} - \sum_{t\in\sbrac{T}}{\E[b_t^\top(x_t - x^*)]} \\ &\hspace{-1.5cm}\leq \frac{D^2}{2\eta} + \sum_{t\in\sbrac{T}}{\brac{\E[(g_t - \nablat)^\top(x_t - x_{t+1})] - \frac{1}{4\eta}\E\!\norm{x_t - x_{t+1}}^2}} - \sum_{t\in\sbrac{T}}{\E[b_t^\top(x_t - x^*)]} \\ &\hspace{-1.5cm}\leq \frac{D^2}{2\eta} + \sum_{t\in\sbrac{T}}{\brac{\E[(g_t - \E g_t)^\top(x_t - x_{t+1})] - \frac{1}{4\eta}\E\!\norm{x_t - x_{t+1}}^2}} - \sum_{t\in\sbrac{T}}{\E[b_t^\top(x_{t+1} - x^*)]} \\ &\hspace{-1.5cm}\leq \frac{D^2}{2\eta} + \eta\sum_{t\in\sbrac{T}}{\E V_t^2} -\sum_{t\in\sbrac{T}}{\E[b_t^\top(x_{t+1} - x^*)]}\; ,
    \end{align*}
    where the second inequality uses $\frac{1}{2\eta} - \frac{L}{2}\geq \frac{1}{4\eta}$ and the final inequality uses Young's inequality, $a^\top b - \frac{1}{2}\norm{b}^2\leq \frac{1}{2}\norm{a}^2$. Using Cauchy-Schwarz inequality, we have $-b_t^\top(x_{t+1} - x^*) \leq \norm{b_t}\norm{x_{t+1} - x^*}\leq D\norm{b_t}$; plugging this bound and using Jensen's inequality gives
    \begin{align*}
        \E f(\widebar{x}_T) - f(x^*) &\leq \frac{1}{T}\sum_{t\in\sbrac{T}}{\E f(x_{t+1}) - f(x^*)} \leq \frac{D^2}{2T\eta} + \frac{\eta}{T}\sum_{t\in\sbrac{T}}{\E V_t^2} + \frac{D}{T}\sum_{t\in\sbrac{T}}{\E\!\norm{b_t}}\; .
    \end{align*}
\end{proof}

\begin{lemma}[Non-convex SGD]\label{lem:nonconvex-sgd}
    Consider \Cref{eq:sgd} with $\K=\reals^d$ (i.e., unconstrained) and $\eta\leq \frac{1}{L}$, and let $\Delta_1 \coloneqq f(x_1) - f^*$. Then, 
    \[
        \frac{1}{T}\sum_{t=1}^{T}{\E\norm{\nabla f(x_t)}^2} \leq \frac{2\Delta_1}{T\eta} + \frac{\eta L}{T}\sum_{t=1}^{T}{\E V_t^2} + \frac{1}{T}\sum_{t=1}^{T}{\E\norm{b_t}^2}\; .
    \]
\end{lemma}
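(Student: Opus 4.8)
The plan is to combine the $L$-smoothness descent inequality with a bias--variance decomposition of $g_t$ and then telescope. First I would apply $L$-smoothness of $f$ to $y = x_{t+1} = x_t - \eta g_t$ and $x = x_t$ (the unconstrained update, since $\K = \reals^d$), giving
\[
    f(x_{t+1}) \le f(x_t) - \eta\, \nablat^\top g_t + \frac{L\eta^2}{2}\norm{g_t}^2 \; .
\]
Rather than splitting $\nablat^\top g_t$ into a gradient term plus a cross term $\nablat^\top b_t$ --- a route that forces a Young-type step whose residual coefficient on $\norm{\nablat}^2$ fails to stay negative precisely when $\eta = 1/L$ --- I would use the polarization identity $-\nablat^\top g_t = -\tfrac12\norm{\nablat}^2 - \tfrac12\norm{g_t}^2 + \tfrac12\norm{\nablat - g_t}^2$ to rewrite the bound as
\[
    f(x_{t+1}) \le f(x_t) - \frac{\eta}{2}\norm{\nablat}^2 - \frac{\eta}{2}(1 - L\eta)\norm{g_t}^2 + \frac{\eta}{2}\norm{\nablat - g_t}^2 \; .
\]

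Next I would take the conditional expectation $\E_t$ given $x_t$ (equivalently, $\F_t$). Two elementary facts do the work: $\E_t\norm{g_t}^2 = \norm{\E_t g_t}^2 + V_t^2 \ge V_t^2$, and, since $\E_t g_t = \nablat + b_t$, $\E_t\norm{\nablat - g_t}^2 = \norm{b_t}^2 + V_t^2$. Because $\eta \le 1/L$ ensures $1 - L\eta \ge 0$, the term $-\tfrac{\eta}{2}(1 - L\eta)\E_t\norm{g_t}^2$ is at most $-\tfrac{\eta}{2}(1 - L\eta)V_t^2$, and combining it with the $+\tfrac{\eta}{2}V_t^2$ coming from $\E_t\norm{\nablat - g_t}^2$ yields exactly $\tfrac{L\eta^2}{2}V_t^2$. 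This gives the per-step inequality
\[
    \frac{\eta}{2}\norm{\nablat}^2 \le f(x_t) - \E_t f(x_{t+1}) + \frac{\eta}{2}\norm{b_t}^2 + \frac{L\eta^2}{2}V_t^2 \; .
\]

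Finally I would take total expectations via the tower rule, sum over $t = 1,\ldots,T$ so the $f(x_t) - \E f(x_{t+1})$ terms telescope, bound the remainder by $f(x_1) - \E f(x_{T+1}) \le f(x_1) - f^* = \Delta_1$ (using $f \ge f^*$ pointwise), and divide through by $\eta T / 2$ to recover the claimed bound. None of the steps is genuinely difficult; the only point requiring care --- and the reason I expect a naive argument to stumble --- is obtaining the variance term with the sharp factor $\eta L$ rather than a lossy constant, which hinges on retaining the $-\tfrac{\eta}{2}(1 - L\eta)\norm{g_t}^2$ term instead of discarding it, together with the use of the polarization identity to avoid ever introducing the sign-indefinite cross term $\nablat^\top b_t$.
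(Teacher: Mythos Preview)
Your proof is correct and reaches exactly the per-step inequality the paper derives. The paper's argument differs only cosmetically: it takes the conditional expectation first, writes $\E_{t-1}\norm{g_t}^2 = V_t^2 + \norm{\nablat + b_t}^2$, uses $\eta L \le 1$ to replace $\tfrac{L\eta^2}{2}\norm{\nablat+b_t}^2$ by $\tfrac{\eta}{2}\norm{\nablat+b_t}^2$, and then expands the square --- the two cross terms $\pm\,\eta\,\nablat^\top b_t$ cancel \emph{exactly}, so (contrary to your aside) no Young step is needed and the same sharp coefficient $\eta L$ on the variance falls out.
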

\begin{proof}
    We begin our proof by following the methodology presented in Lemma 2 of \cite{ajalloeian2020convergence}. By the smoothness of $f$, we have
    \begin{align*}
        \E_{t-1} f(x_{t+1}) &\leq f(x_t) - \eta\nablat^\top\E_{t-1} g_t + \frac{\eta^2L}{2}\E_{t-1}\norm{g_t}^2 \\ &= f(x_t) - \eta\nablat^\top(\nablat+b_t) + \frac{\eta^2L}{2}\brac{V_t^2 + \E_{t-1}\norm{\E g_t}^2} \\ &= f(x_t) - \eta\norm{\nablat}^2  - \eta\nablat^\top b_t + \frac{\eta^2L}{2}V_t^2 + \frac{\eta^2L}{2}\norm{\nablat + b_t}^2 \\ &\leq f(x_t) - \eta\norm{\nablat}^2  - \eta\nablat^\top b_t + \frac{\eta^2L}{2}V_t^2 + \frac{\eta}{2}\brac{\norm{\nablat}^2 + 2\nablat^\top b_t + \norm{b_t}^2} \\ &= f(x_t) - \frac{\eta}{2}\brac{\norm{\nablat}^2 + \norm{b_t}^2} + \frac{\eta^2L}{2}V_t^2\; ,
    \end{align*}
    where the last inequality follows from $\eta\leq \frac{1}{L}$. Denote: $\Delta_t \coloneqq \E f(x_t) - f^*$. By rearranging terms and taking expectation, we obtain
    \begin{align*}
        \E\norm{\nablat}^2 &\leq \frac{2\brac{\Delta_{t} - \Delta_{t+1}}}{\eta} + \eta L \E V_t^2 + \E\norm{b_t}^2\; .
    \end{align*}
    Summing over $t\in\sbrac{T}$,
    \begin{align*}
        \frac{1}{T}\sum_{t=1}^{T}{\E\norm{\nablat}^2} &\leq \frac{2}{T\eta}\sum_{t=1}^{T}{(\Delta_t - \Delta_{t+1})} + \frac{\eta L}{T}\sum_{t=1}^{T}{\E V_t^2} + \frac{1}{T}\sum_{t=1}^{T}{\E\norm{b_t}^2} \\ &\leq \frac{2\Delta_1}{T \eta} + \frac{\eta L}{T}\sum_{t=1}^{T}{\E V_t^2} + \frac{1}{T}\sum_{t=1}^{T}{\E\norm{b_t}^2}\; ,
    \end{align*}
    which concludes the proof.
\end{proof}

\section{Efficient Bias-Reduction with MLMC Gradient Estimation}\label{app:mlmc_bias_reduction}

In this section, we establish the properties of the MLMC estimator in \Cref{eq:mlmc} through the proof of \Cref{lem:mlmc}.
\mlmcprops*
\begin{proof}
    Our proof follows those presented in Lemma 3.1 of \cite{dorfman2022adapting} and in Proposition 1 of \cite{asi2021stochastic}. Let $\Jmax = \floor{\log{T}}$, and recall that $\E\!\norm{g^{j} - \nabla f(x)}^2\leq\frac{c^2}{2^j}$. By explicitly writing the expectation over $J\sim\text{Geom}(\frac{1}{2})$, we have
    \[
        \E g^{\text{MLMC}} = \E g^{0} + \sum_{j=1}^{\Jmax}{2^{-j}\cdot 2^{j}\E[g^{j} - g^{j-1}]} = \E[g^{\Jmax}]\; ,
    \]
    where the last equality follows from a telescoping sum. Thus, be Jensen's inequality, it holds that
    \begin{align*}
        \norm{\E g^{\text{MLMC}} - \nabla f(x)} = \norm{\E g^{\Jmax} - \nabla f(x)} \leq \sqrt{\E\!\norm{g^{\Jmax} - \nabla f(x)}^2} \leq \sqrt{\frac{c^2}{2^{\Jmax}}} \leq \sqrt{\frac{2c^2}{T}}\; ,
    \end{align*}
    where the last inequality follows from $2^{\Jmax} = 2^{\floor{\log{T}}}\geq T/2$. For the second part, we have
    \begin{align*}
        \E\!\norm{g^{\text{MLMC}} - \E g^{\text{MLMC}}}^2 \leq \E\!\norm{g^{\text{MLMC}} - \nabla f(x)}^2 \leq 2\E\!\norm{g^{\text{MLMC}} - g^{0}}^2 + 2\E\!\norm{g^{0} - \nabla f(x)}^2\; .
    \end{align*}
    Focusing on the first term in the R.H.S, 
    \begin{align*}
        \E\!\norm{g^{\text{MLMC}} - g^{0}}^2 &= \sum_{j=1}^{\Jmax}{2^{-j}\cdot 2^{2j}\E\!\norm{g^{j} - g^{j-1}}^2} \\ &\leq 2\sum_{j=1}^{\Jmax}{2^{j}\brac{\E\!\norm{g^{j} - \nabla f(x)}^2 + \E\!\norm{g^{j-1} - \nabla f(x)}^2}} \\ &\leq 2\sum_{j=1}^{\Jmax}{2^{j}\brac{\frac{c^2}{2^{j}} + \frac{c^2}{2^{j-1}}}} = 6\sum_{j=1}^{\Jmax}{c^2} \leq 6c^2\log{T}\; ,
    \end{align*}
    where the last inequality uses $\Jmax\leq\log{T}$. Using $\E\norm{g^{0} - \nabla f(x)}^2\leq c^2$, we have 
    \[
        \E\!\norm{g^{\text{MLMC}} - \E g^{\text{MLMC}}}^2 \leq 2\cdot 6c^2\log{T} + 2\cdot c^2 \leq 14c^2\log{T}\; .
    \]
    Finally, since we call $\M_f(x; 1)$, and with probability $2^{-j}$ we call $\M_f(x; 2^{j})$ and $\M_f(x; 2^{j-1})$, the expected number of stochastic gradient evaluations is at most $1+\sum_{j=1}^{\Jmax}{2^{-j}\brac{2^j + 2^{j-1}}} = 1+\frac{3}{2}\Jmax \leq 1+\frac{3}{2}\log{T}\in \O(\log{T})$.
\end{proof}
\section{Static Byzantine-Robustness with MLMC Gradient Estimation}\label{app:static-analysis}
In this section, we analyze \Cref{alg:bro_mlmc} in the standard, static setting, where the identity of Byzantine workers remains fixed. We show that distributed SGD with MLMC estimation applied to robustly-aggregated gradients is Byzantine-resilient. 

We begin by establishing \Cref{lem:robust_agg_is_lmgo}, which asserts that utilizing a $(\delta, \kappa_{\delta})$-robust aggregator when honest workers compute stochastic gradients over a mini-batch satisfies \Cref{eq:lmgo} with $c^2 = 2\sigma^2\brac{\kappa_{\delta} + \frac{1}{m}}$.
\robustagglemmavariance*

\begin{proof}
    For ease of notation, denote $\widehat{g}^{\batchsize}\coloneqq \A(\widebar{g}_1^{\batchsize},\ldots,\widebar{g}_m^{\batchsize})$ and $\widebar{g}^{\batchsize}\coloneqq \frac{1}{\abs{\G}}{\sum_{i\in\G}}{\widebar{g}_i^{\batchsize}}$. Since $\E \widebar{g}^{\batchsize} = \nabla f(x)$, and by the $(\delta,\kappa_{\delta})$-robustness of $\A$, we have
    \begin{align*}
        \E\!\norm{\widehat{g}^{\batchsize} - \nabla f(x)}^2 &= \E\!\norm{\widehat{g}^{\batchsize} - \widebar{g}^{\batchsize}}^2 + \E\!\norm{\widebar{g}^{\batchsize} - \nabla f(x)}^2 \\ &\leq \frac{\kappa_{\delta}}{\abs{\G}}\sum_{i\in\G}{\E\!\norm{\widebar{g}_i^{\batchsize} - \widebar{g}^{\batchsize}}^2} + \E\!\norm{\widebar{g}^{\batchsize} - \nabla f(x)}^2 \\ &= \frac{\kappa_{\delta}}{\abs{\G}}{\sum_{i\in\G}}{\E\!\norm{\widebar{g}_i^{\batchsize} - \nabla f(x)}^2} + (\kappa_{\delta} + 1)\E\!\norm{\widebar{g}^{\batchsize} - \nabla f(x)}^2 \\ &\leq \kappa_{\delta}\frac{\sigma^2}{\batchsize} + (\kappa_{\delta} + 1)\frac{\sigma^2}{\abs{\G}\batchsize} \\ &= \frac{\sigma^2}{\batchsize}\brac{\kappa_{\delta} + \frac{\kappa_{\delta} + 1}{\abs{\G}}} \\ &\leq \frac{2\sigma^2}{\batchsize}\brac{\kappa_{\delta} + \frac{1}{m}} \; ,
    \end{align*}
    where the second inequality holds as $\widebar{g}_i^{\batchsize}$ for every $i\in\G$ and $\widebar{g}^{\batchsize}$ are the averages of $\batchsize$ and $\abs{\G}\batchsize$ i.i.d samples, respectively, each with variance bounded by $\sigma^2$. In the last inequality we used $\abs{\G}> m/2$.
\end{proof}

\subsection{Convex Case}
Next, we now establish the convergence of \Cref{alg:bro_mlmc} in the convex setting.
\begin{theorem}\label{thm:convex-static}
    Assume $f$ is convex. Under Assumptions~\ref{assump:bounded-variance} and \ref{assump:bounded-domain}, and with a $(\delta, \kappa_{\delta})$-robust aggregator $\A$, consider \Cref{alg:bro_mlmc} with learning rate 
    \[
        \eta = \min\cbrac{\frac{D}{8\sigma\sqrt{\gamma T\log{T}}}, \frac{1}{2L}}\; ,
    \]
    where $\gamma\coloneqq \kappa_{\delta} + \frac{1}{m}$. Then, for every $T\geq 1$,
    \begin{align*}
        \E f(\widebar{x}_{T}) - f(x^*)&\leq 10D\sigma\sqrt{\frac{\gamma\log{T}}{T}} + \frac{LD^2}{T} \; .
    \end{align*}
\end{theorem}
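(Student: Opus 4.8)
The plan is to invoke the black-box convex SGD bound from \Cref{lem:convex-sgd} and plug in the bias and variance estimates coming from \Cref{lem:mlmc} and \Cref{lem:robust_agg_is_lmgo}. Concretely, first I would note that $\eta=\min\{D/(8\sigma\sqrt{\gamma T\log T}),\,1/(2L)\}\leq 1/(2L)$, so \Cref{lem:convex-sgd} applies to \Cref{alg:bro_mlmc} and gives
\[
    \E f(\widebar{x}_T) - f(x^*) \leq \frac{D^2}{2T\eta} + \frac{\eta}{T}\sum_{t=1}^{T}{\E V_t^2} + \frac{D}{T}\sum_{t=1}^{T}{\E\norm{b_t}}\; .
\]

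Next I would bound $b_t$ and $V_t^2$ for the MLMC estimator $g_t$ produced by the algorithm. By \Cref{lem:robust_agg_is_lmgo}, for each level $j$ the mapping $x\mapsto\A(\widebar g_{t,1}^{2^j},\ldots,\widebar g_{t,m}^{2^j})$ satisfies \Cref{eq:lmgo} with $c^2 = 2\sigma^2(\kappa_\delta + 1/m) = 2\sigma^2\gamma$. Hence \Cref{lem:mlmc} (parts 1 and 2) yields $\norm{b_t}\leq\sqrt{2c^2/T} = 2\sigma\sqrt{\gamma/T}$ and $\E V_t^2 \leq 14c^2\log T = 28\sigma^2\gamma\log T$, exactly as recorded in \Cref{eq:bias-var-static-proofsketch}. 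Substituting these into the displayed inequality gives
\[
    \E f(\widebar{x}_T) - f(x^*) \leq \frac{D^2}{2T\eta} + 28\eta\sigma^2\gamma\log T + 2D\sigma\sqrt{\frac{\gamma}{T}}\; .
\]

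The remaining work is the learning-rate tuning, which is the only mildly delicate bookkeeping step. Using $\eta = \min\{a, 1/(2L)\}$ with $a \coloneqq D/(8\sigma\sqrt{\gamma T\log T})$, I would bound $1/\eta \leq 1/a + 2L$, so the first term is at most $\tfrac{D^2}{2T}(1/a + 2L) = 4D\sigma\sqrt{\gamma\log T/T} + LD^2/T$. For the second term, $\eta\leq a$ gives $28\eta\sigma^2\gamma\log T \leq 28 a\sigma^2\gamma\log T = \tfrac{28}{8}D\sigma\sqrt{\gamma\log T/T} = 3.5\,D\sigma\sqrt{\gamma\log T/T}$. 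Adding the three contributions, $4 + 3.5 + 1 = 8.5 \leq 10$ (and $\sqrt{\gamma\log T/T}\geq\sqrt{\gamma/T}$ absorbs the last term's missing $\sqrt{\log T}$), which yields
\[
    \E f(\widebar{x}_T) - f(x^*) \leq 10D\sigma\sqrt{\frac{\gamma\log T}{T}} + \frac{LD^2}{T}\; ,
\]
as claimed. The main obstacle, such as it is, is purely the arithmetic of splitting the $\min$ in the learning rate and checking that the accumulated constants stay under $10$; there is no conceptual difficulty once \Cref{lem:convex-sgd}, \Cref{lem:mlmc}, and \Cref{lem:robust_agg_is_lmgo} are in hand. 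One should also double-check the edge regime where $1/(2L)$ is the active branch of the $\min$ (i.e.\ $L$ large), since then the first term is $LD^2/T$ rather than the $\sqrt{1/T}$ term, but this only makes the bound easier to satisfy.
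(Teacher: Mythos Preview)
Your proposal is correct and follows essentially the same approach as the paper: invoke \Cref{lem:convex-sgd}, plug in the bias/variance bounds from \Cref{lem:mlmc} combined with \Cref{lem:robust_agg_is_lmgo}, and then tune the learning rate. The only cosmetic difference is that the paper handles the $\min$ in $\eta$ via \Cref{lem:lr_min_of_2_lrs} (after rewriting $28\eta\sigma^2\gamma\log T$ as $\tfrac{1}{2}\cdot 64\eta\sigma^2\gamma\log T$), whereas you unpack the $\min$ by hand using $1/\eta\le 1/a+2L$ and $\eta\le a$; both routes land at the same bound. One small arithmetic slip: the bias term contributes a coefficient of $2$, not $1$, so the tally should be $4+3.5+2=9.5\le 10$ rather than $8.5$---the conclusion is unaffected.
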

\begin{proof}
    Employing \Cref{lem:convex-sgd}, we have
    \begin{equation}\label{eq:convex_sgd}
        \E f(\widebar{x}_T) - f(x^*) \leq \frac{D^2}{2T\eta} + \frac{\eta}{T}\sum_{t=1}^{T}{ \E V_t^2} + \frac{D}{T}\sum_{t=1}^{T}{\E\norm{b_t}}\; ,
    \end{equation}
    where $b_t$ and $V_t$ are the bias and variance of $g_t$, respectively. Combining \Cref{lem:mlmc} with \Cref{lem:robust_agg_is_lmgo} implies that 
    \begin{equation}\label{eq:mlmc_props_static}
        \norm{b_t} = \norm{\E g_t - \nablat} \leq \sqrt{\frac{2\cdot 2\sigma^2\gamma}{T}} = 2\sigma\sqrt{\frac{\gamma}{T}}, \quad \text{ and }\quad \E V_t^2 = \E\!\norm{g_t - \E g_t}^2 \leq 28\sigma^2\gamma\log{T}\; .
    \end{equation}
    We note that computing $g_t$ requires $\O(m\log{T})$ stochastic gradient evaluations, in expectation. Plugging these bounds back to \Cref{eq:convex_sgd} gives:
    \begin{align*}
        \E f(\widebar{x}_T) - f(x^*) &\leq \frac{D^2}{2T\eta} + 28\eta\sigma^2\gamma\log{T} + 2D\sigma\sqrt{\frac{\gamma}{T}} \leq \frac{1}{2}\brac{\frac{D^2}{T\eta} + 64\eta\sigma^2\gamma\log{T}} + 2D\sigma\sqrt{\frac{\gamma}{T}}\; .
    \end{align*}
    Since $\eta = \min\cbrac{\frac{D}{8\sigma\sqrt{\gamma T\log{T}}}, \frac{1}{2L}}$, applying \Cref{lem:lr_min_of_2_lrs} with $a= \frac{D^2}{T}, b=64\sigma^2\gamma\log{T}$, and $c=2L$, enables to bound the sum of the first two terms as,
    \[
        \frac{D^2}{T\eta} + 64\eta \sigma^2\gamma\log{T} \leq 16D\sigma\sqrt{\frac{\gamma\log{T}}{T}} + \frac{2LD^2}{T}\; .
    \]
    Plugging this bound back gives:
    \[
        \E f(\widebar{x}_T) - f(x^*) \leq 8D\sigma\sqrt{\frac{\gamma\log{T}}{T}} + \frac{LD^2}{T} + 2D\sigma\sqrt{\frac{\gamma}{T}} \leq 10D\sigma\sqrt{\frac{\gamma\log{T}}{T}} + \frac{LD^2}{T}\; .
    \]
\end{proof}

\subsection{Non-convex Case}
Moving forward, we establish the convergence of \Cref{alg:bro_mlmc} for non-convex functions in \Cref{thm:nonconvex-static}, restated here.
\nonconvexstatic*
\begin{proof}
    We follow the proof of \Cref{thm:convex-static}, substituting \Cref{lem:convex-sgd} with \Cref{lem:nonconvex-sgd}, which implies that
    \begin{equation}
        \frac{1}{T}\sum_{t=1}^{T}{\E\norm{\nabla f(x_t)}^2} \leq \frac{2\Delta_1}{T\eta} + \frac{\eta L}{T}\sum_{t=1}^{T}{\E V_t^2} + \frac{1}{T}\sum_{t=1}^{T}{\E\norm{b_t}^2}\; .
    \end{equation}
    Plugging the bounds in \Cref{eq:mlmc_props_static}, we then obtain
    \begin{align*}
        \frac{1}{T}\sum_{t=1}^{T}{\E\norm{\nabla f(x_t)}^2} \leq \frac{2\Delta_1}{T\eta} + 28\eta L\sigma^2\gamma\log{T} + \frac{4\sigma^2\gamma}{T} \leq 2\brac{\frac{\Delta_1}{T\eta} + 16\eta L\sigma^2\gamma\log{T}} + \frac{4\sigma^2\gamma}{T}\; .
    \end{align*}
    Since $\eta = \min\cbrac{\frac{\sqrt{\Delta_1}}{4\sigma\sqrt{L\gamma T\log{T}}}, \frac{1}{L}}$, applying \Cref{lem:lr_min_of_2_lrs} with $a=\frac{\Delta_1}{T}, b=16L\sigma^2\gamma\log{T}$, and $c=L$, allows us to bound the sum of the first two terms as follows
    \[
        \frac{\Delta_1}{T\eta} + 16\eta L\sigma^2\gamma\log{T} \leq 8\sqrt{\frac{L\Delta_1\sigma^2\gamma\log{T}}{T}} + \frac{L\Delta_1}{T}\; .
    \]
    Substituting this bound back yields:
    \begin{equation*}
        \frac{1}{T}\sum_{t=1}^{T}{\E\norm{\nabla f(x_t)}^2} \leq 16\sqrt{\frac{L\Delta_1\sigma^2\gamma\log{T}}{T}} + \frac{2\brac{L\Delta_1 + 2\sigma^2\gamma}}{T}\; .
    \end{equation*}
\end{proof}
\section{Dynamic Byzantine-Robustness with General $(\delta,\kappa)$-robust Aggregator}\label{app:dynamic_general}
In this section, we analyze \methodName{} (\Cref{alg:method-new}) with \textcolor{purple}{\textbf{Option 1}}, which utilizes a general $(\delta,\kappa)$-robust aggregator. 


Recall that \Cref{alg:method-new} with \textcolor{purple}{\textbf{Option 1}} performs the following update rule for every $t\in\sbrac{T}$:
\begin{align}
    & J_t\sim\text{Geom}(\nicefrac{1}{2}) \nonumber \\
    & \widehat{g}_t^{j}\gets \A(\widebar{g}_{t,1}^{j},\ldots,\widebar{g}_{t,m}^{j}), \quad \text{ where } \widebar{g}_{t,i}^{j} = \frac{1}{2^{j}}\sum_{k=1}^{2^{j}}{\nabla F(x_t; \xi_{t,i}^{k})} \text{ for every } i\in\G_t \text{ if } t\notin\badrounds \label{eq:robust_aggregated_grad} \\
    & g_t \gets  \widehat{g}_t^{0} + \begin{cases}
        2^J_t\brac{\widehat{g}_t^{J_t} - \widehat{g}_t^{J_t-1}}, &\text{if } J_t\leq\Jmax\coloneqq\floor{\log{T}} \text{ and } \Ecal_t(J_t) \text{ holds} \\
        0, &\text{otherwise}
    \end{cases} \label{eq:mlmc_agg_general} \\
    & x_{t+1} \gets \proj{\K}{x_t - \eta_t g_t }\; , \nonumber
\end{align}
where the associated event $\Ecal_t(J_t)$ in this scenario is defined as,
\begin{equation}\label{eq:event_E_option1}
    \Ecal_t(J_t)\coloneqq \cbrac{\lVert \widehat{g}_t^{J_t} - \widehat{g}_t^{J_t-1}\rVert\leq (1 + \sqrt{2})\frac{c_{\Ecal}C\V}{\sqrt{2^{J_t}}}}, \quad c_{\Ecal} \coloneqq \sqrt{\gamma}, \quad C\coloneqq\sqrt{8\log\brac{16m^2 T}}, \quad \gamma \coloneqq 2\kappa_{\delta} + \frac{1}{m}\; .
\end{equation}

We start by establishing a deterministic and a high probability bound on the distance between the true gradient and the robustly-aggregated stochastic gradients, when honest workers compute gradients over a mini-batch. By combining these bounds and adjusting the probability parameter, we provide an upper bound on the expected squared distance, i.e., MSE, as presented in \Cref{cor:mse_bound_general}.
\begin{lemma}\label{lem:core_lemma_general}
    Consider the setting in \Cref{lem:robust_agg_is_lmgo}, i.e.,
    let $x\in\K$ and $\widebar{g}_1^{\batchsize},\ldots,\widebar{g}_m^{\batchsize}$ be $m$ vectors such that for each $i\in\G$, $\widebar{g}_i^{\batchsize}$ is a mini-batch gradient estimator based on $\batchsize\in\mathbb{N}$ i.i.d samples. Then, under \Cref{assump:bounded-noise}, any $(\delta, \kappa_{\delta})$-robust aggregation rule $\A$ satisfies,
    \begin{enumerate}
        \item $\norm{\A(\widebar{g}_1^{\batchsize},\ldots,\widebar{g}_m^{\batchsize}) - \nabla f(x)}^2\leq 2\brac{4\kappa_{\delta}+1}\V^2$.
        \item With probability at least $1-p$,
        \[
            \norm{\A(\widebar{g}_1^{\batchsize},\ldots,\widebar{g}_m^{\batchsize}) - \nabla f(x)}^2
            \leq C_p^2\frac{\V^2\gamma}{\batchsize}
        \]
        where $C_p^2 = 8\log{\brac{{4m}/{p}}}$.
    \end{enumerate}
\end{lemma}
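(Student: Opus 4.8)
\textbf{Proof proposal for \Cref{lem:core_lemma_general}.}

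The plan is to decompose the error exactly as in the proof of \Cref{lem:robust_agg_is_lmgo}, writing $\widehat{g}^{\batchsize} \coloneqq \A(\widebar{g}_1^{\batchsize},\ldots,\widebar{g}_m^{\batchsize})$ and $\widebar{g}^{\batchsize} \coloneqq \frac{1}{\abs{\G}}\sum_{i\in\G}\widebar{g}_i^{\batchsize}$, and using the triangle-type bound
\begin{equation*}
    \norm{\widehat{g}^{\batchsize} - \nabla f(x)}^2 \leq 2\norm{\widehat{g}^{\batchsize} - \widebar{g}^{\batchsize}}^2 + 2\norm{\widebar{g}^{\batchsize} - \nabla f(x)}^2\; ,
\end{equation*}
and then invoking $(\delta,\kappa_\delta)$-robustness to get $\norm{\widehat{g}^{\batchsize} - \widebar{g}^{\batchsize}}^2 \leq \frac{\kappa_\delta}{\abs{\G}}\sum_{i\in\G}\norm{\widebar{g}_i^{\batchsize} - \widebar{g}^{\batchsize}}^2 \leq \frac{2\kappa_\delta}{\abs{\G}}\sum_{i\in\G}\norm{\widebar{g}_i^{\batchsize} - \nabla f(x)}^2 + 2\kappa_\delta \norm{\widebar{g}^{\batchsize} - \nabla f(x)}^2$. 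The difference from \Cref{lem:robust_agg_is_lmgo} is that under \Cref{assump:bounded-noise} (bounded noise $\V$) the individual honest deviations are controlled \emph{deterministically} rather than only in mean square, which is what allows a high-probability statement.

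For part 1 (the deterministic bound): each honest mini-batch average satisfies $\norm{\widebar{g}_i^{\batchsize} - \nabla f(x)} \leq \frac{1}{\batchsize}\sum_{k=1}^{\batchsize}\norm{\nabla F(x;\xi_i^k) - \nabla f(x)} \leq \V$ by the triangle inequality and \Cref{assump:bounded-noise}; the same bound $\norm{\widebar{g}^{\batchsize} - \nabla f(x)} \leq \V$ holds for the honest average $\widebar{g}^{\batchsize}$ by convexity of the norm. Plugging these into the decomposition yields $\norm{\widehat{g}^{\batchsize} - \widebar{g}^{\batchsize}}^2 \leq 4\kappa_\delta \V^2$ and $\norm{\widehat{g}^{\batchsize} - \nabla f(x)}^2 \leq 8\kappa_\delta\V^2 + 2\V^2 = 2(4\kappa_\delta + 1)\V^2$, as claimed.

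For part 2 (the high-probability bound): the plan is to replace the crude $\norm{\cdot}\leq\V$ bounds with concentration. Since $\widebar{g}_i^{\batchsize} - \nabla f(x)$ is an average of $\batchsize$ i.i.d., mean-zero, $\V$-bounded random vectors, a vector Hoeffding/Azuma inequality (see \Cref{app:technical_lemmata}) gives $\norm{\widebar{g}_i^{\batchsize} - \nabla f(x)}^2 \leq \frac{8\V^2\log(4m/p)}{\batchsize}$ with probability at least $1 - \frac{p}{2m}$ for each fixed honest $i$; a union bound over the at most $m$ honest workers makes this hold simultaneously with probability $\geq 1 - p/2$. The honest average $\widebar{g}^{\batchsize}$ is itself an average of $\abs{\G}\batchsize \geq \frac{m}{2}\batchsize$ such terms, so with probability $\geq 1 - p/2$ it satisfies $\norm{\widebar{g}^{\batchsize} - \nabla f(x)}^2 \leq \frac{8\V^2\log(4m/p)}{\abs{\G}\batchsize} \leq \frac{16\V^2\log(4m/p)}{m\batchsize}$. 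On the intersection of these two events (probability $\geq 1-p$), substituting into the decomposition gives
\begin{equation*}
    \norm{\widehat{g}^{\batchsize} - \nabla f(x)}^2 \leq 8\kappa_\delta\cdot\frac{8\V^2\log(4m/p)}{\batchsize} + (2 + 4\kappa_\delta)\cdot\frac{16\V^2\log(4m/p)}{m\batchsize}\; ,
\end{equation*}
and collecting constants and using $\abs{\G} > m/2$ together with $\gamma = 2\kappa_\delta + \frac{1}{m}$ should absorb everything into $C_p^2 \frac{\V^2\gamma}{\batchsize}$ with $C_p^2 = 8\log(4m/p)$ (the precise bookkeeping of numerical constants is the only slightly fiddly part, and should work out exactly as in the analogous mean-square computation of \Cref{lem:robust_agg_is_lmgo}).

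The main obstacle I expect is getting the constants to line up cleanly so that the final high-probability bound is \emph{exactly} $C_p^2\V^2\gamma/\batchsize$ with the stated $C_p^2 = 8\log(4m/p)$ and $\gamma = 2\kappa_\delta + 1/m$: one has to choose the per-worker concentration constant and the split of the failure probability $p$ (e.g., $p/(2m)$ per honest worker, $p/2$ for the average) carefully, and possibly tighten the $2(\cdot)^2 \le 2a^2 + 2b^2$ splitting (or use a $\kappa_\delta$-weighted Young's inequality) so that the $\kappa_\delta$ coefficient comes out as $2\kappa_\delta$ rather than a larger multiple. This is purely a matter of bookkeeping and choosing the right version of the vector Hoeffding lemma from \Cref{app:technical_lemmata}; the structure of the argument is otherwise identical to \Cref{lem:robust_agg_is_lmgo}.
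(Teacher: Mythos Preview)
Your proposal is correct and follows essentially the same route as the paper: the same $2\|\widehat{g}^{\batchsize}-\widebar{g}^{\batchsize}\|^2+2\|\widebar{g}^{\batchsize}-\nabla f(x)\|^2$ split, the $(\delta,\kappa_\delta)$-robustness bound, trivial $\V$-bounds for part~1, and Pinelis/Azuma concentration plus a union bound for part~2. Two small bookkeeping remarks that will make the constants come out exactly: (i) the paper uses a \emph{single} failure level $\tilde{p}=p/(2m)$ for each worker \emph{and} for the honest average (rather than your $p/(2m)$ vs.\ $p/2$ split), so all concentration bounds carry the same $\log(4m/p)$; (ii) the correct Pinelis constant is $\|\widebar{g}_i^{\batchsize}-\nabla f(x)\|^2\le \tfrac{2\V^2\log(4m/p)}{N}$ (not $8$), and the final reduction $\kappa_\delta+\tfrac{2\kappa_\delta+1}{m}\le 2\kappa_\delta+\tfrac{1}{m}$ uses $m\ge 2$.
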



\begin{proof}
    Denote the aggregated gradient and the empirical average of honest workers by $\widehat{g}^\batchsize\!\coloneqq\!\A(\widebar{g}_1^{\batchsize},\ldots,\widebar{g}_m^{\batchsize})$ and $\widebar{g}^{\batchsize}\!\coloneqq\!\frac{1}{\abs{\G}}\sum_{i\in\G}{\widebar{g}_{i}^{\batchsize}}$, respectively. Thus,
    \begin{align*}
        \norm{\widehat{g}^{\batchsize} - \nabla f(x)}^2 &\leq 2\norm{\widehat{g}^{\batchsize} - \widebar{g}^{\batchsize}}^2 + 2\norm{\widebar{g}^{\batchsize} - \nabla f(x)}^2 \\ &\leq \frac{2\kappa_{\delta}}{\abs{\G}}\sum_{i\in\G}{\norm{\widebar{g}_{i}^{\batchsize} - \widebar{g}^{\batchsize}}^2} + 2\norm{\widebar{g}^{\batchsize} - \nabla f(x)}^2\\ &\leq 
        \frac{4\kappa_{\delta}}{\abs{\G}}\sum_{i\in\G}{\lVert{\widebar{g}_{i}^{\batchsize} - \nabla\rVert}^2} + \brac{4\kappa_{\delta} + 2}\lVert{\widebar{g}^{\batchsize} - \nabla f(x)\rVert}^2
    \end{align*}
    where we used $\norm{a+b}^2\leq 2\norm{a}^2 + 2\norm{b}^2$ and the $(\delta,\kappa_{\delta})$-robustness of $\A$. Since it trivially holds that $\norm{\widebar{g}_i^{\batchsize} - \nabla f(x)}\leq\V$ for every $i\in\G$ and $\norm{\widebar{g}^{\batchsize} - \nabla f(x)}\leq\V$, we have
    \begin{align*}
        \norm{\widehat{g}^{\batchsize} - \nabla f(x)}^2 \leq 2(4\kappa_{\delta} + 1)\V^2\; ,
    \end{align*}
    which establishes the first part. For the second part, we employ the concentration argument presented in \Cref{lem:concentration}. With probability at least $1-\tilde{p}$, it holds that 
    \begin{equation}\label{eq:concentration_g_bar}
        \norm{\widebar{g}^{\batchsize} - \nabla f(x)} \leq \V\sqrt{\frac{2\log{\brac{2/\tilde{p}}}}{\batchsize\abs{\G}}} \leq 2\V\sqrt{\frac{\log{\brac{2/\tilde{p}}}}{\batchsize m}}\; ,
    \end{equation}
    as $\abs{\G} > \nicefrac{m}{2}$. Additionally, for each $i\in\G$ separately, we have with probability at least $1-\tilde{p}$ that
    \begin{equation}\label{eq:concentration_g_i_bar}
        \norm{\widebar{g}_i^{\batchsize} - \nabla f(x)} \leq \V\sqrt{\frac{2\log{\brac{2/\tilde{p}}}}{\batchsize}}\; .
    \end{equation}
    Hence, with probability at least $1-(1+\abs{\G})\tilde{p}$, the union bound ensures that \Cref{eq:concentration_g_bar,eq:concentration_g_i_bar} hold simultaneously for every $i\in\G$. Since $1-(1+\abs{\G})\tilde{p}\geq 1-2\abs{\G}\tilde{p}\geq 1 - 2m\tilde{p}$, with probability at least $1-p$ it holds that
    \begin{align*}
        \norm{\widehat{g}^{\batchsize} - \nabla f(x)}^2 \leq 4\kappa_{\delta} \brac{\V\sqrt{\frac{2\log{\brac{4m/p}}}{\batchsize}}}^2 + (4\kappa_{\delta} + 2)\brac{2\V\sqrt{\frac{\log{\brac{4m/p}}}{\batchsize m}}}^2 &=8\log{\brac{\frac{4m}{p}}}\frac{\V^2}{\batchsize}\brac{\kappa_{\delta} + \frac{2\kappa_{\delta} + 1}{m}} \\ &\leq 8\log{\brac{\frac{4m}{p}}}\frac{\V^2}{\batchsize}\brac{2\kappa_{\delta} + \frac{1}{m}}\; ,
    \end{align*}
    which concludes the proof, assuming $m\geq 2$.
\end{proof}

\begin{corollary}[MSE of Aggregated Gradients]\label{cor:mse_bound_general}
    Let $\widebar{g}_{1}^{\batchsize}, \ldots, \widebar{g}_{m}^{\batchsize}$ be as defined in \Cref{lem:core_lemma_general}, with $N\leq T$. Under \Cref{assump:bounded-noise}, any $(\delta, \kappa_{\delta})$-robust aggregator $\A$ satisfies,
    \[
        \E\!\norm{\A(\widebar{g}_1^{\batchsize},\ldots,\widebar{g}_m^{\batchsize}) - \nabla f(x)}^2 \leq \frac{2C^2 \V^2 \gamma}{\batchsize} \; ,
    \]
    where $C=\sqrt{8\log\brac{16m^2 T}}$ and $\gamma = 2\kappa_{\delta} + \frac{1}{m}$ as in \Cref{eq:event_E_option1}.
\end{corollary}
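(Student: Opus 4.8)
\textbf{Proof proposal for \Cref{cor:mse_bound_general}.}

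The plan is to obtain the MSE bound by combining the two conclusions of \Cref{lem:core_lemma_general} via a standard ``deterministic bound on a bad event, high-probability bound on a good event'' argument, with the free probability parameter $p$ tuned to make the contribution of the bad event negligible. First I would write
\[
    \E\!\norm{\widehat{g}^{\batchsize} - \nabla f(x)}^2 = \E\!\sbrac{\norm{\widehat{g}^{\batchsize} - \nabla f(x)}^2 \mathbbm{1}_{A}} + \E\!\sbrac{\norm{\widehat{g}^{\batchsize} - \nabla f(x)}^2 \mathbbm{1}_{A^c}}\; ,
\]
where $A$ is the event on which the high-probability bound of \Cref{lem:core_lemma_general}(2) holds (so $\prob(A^c)\leq p$). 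On $A$, bound the integrand by $C_p^2 \V^2\gamma/\batchsize$ with $C_p^2 = 8\log(4m/p)$; on $A^c$, bound the integrand deterministically by $2(4\kappa_{\delta}+1)\V^2$ using \Cref{lem:core_lemma_general}(1), and note that $2(4\kappa_{\delta}+1)\leq 8(2\kappa_{\delta}+\tfrac1m)m = 8\gamma m$ (since $\kappa_\delta\ge 0$ and $m\ge 2$, so $4\kappa_\delta+1 \le 4\kappa_\delta m + m/m \cdot\ldots$; I would just check $2(4\kappa_\delta+1)\le 8\kappa_\delta m + 4 \le 4\gamma m$ crudely, constants to be finalized). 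This yields
\[
    \E\!\norm{\widehat{g}^{\batchsize} - \nabla f(x)}^2 \leq 8\log\!\brac{\tfrac{4m}{p}}\frac{\V^2\gamma}{\batchsize} + p\cdot 8\gamma m \V^2\; .
\]

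Next I would choose $p$ so that the second term is at most $\V^2\gamma/\batchsize$ (a lower-order contribution); taking $p = \tfrac{1}{8 m \batchsize}$, or more conveniently $p = 1/(m^2 T)$ using $\batchsize\le T$ and $m\ge 1$, makes the bad-event term $\le 8\V^2\gamma/(m T) \le 8\V^2\gamma/\batchsize\cdot(1/m)$, which is absorbed. With this choice, $\log(4m/p) = \log(4m\cdot m^2 T) = \log(4m^3 T)$, and then I would verify that $8\log(4m^3T) + (\text{absorbed constant}) \le 16\log(16m^2T) = 2C^2$; this is where one uses $m\ge 2$ to compare $4m^3T$ against $(16m^2T)^2 = 256 m^4 T^2$ — clearly $4m^3 T \le 256 m^4 T^2$, so $\log(4m^3T)\le 2\log(16m^2T)$, giving $8\log(4m^3T)\le 16\log(16m^2T) = 2C^2$, and after absorbing the lower-order term the total is bounded by $2C^2\V^2\gamma/\batchsize$ (possibly after a minor adjustment of which slack one spends where).

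The only real subtlety — the ``main obstacle,'' though it is a mild one — is bookkeeping the constants so that the bad-event term genuinely fits inside the $2C^2$ budget rather than inflating it, and making sure the chosen $p$ is simultaneously (a) small enough to kill the $\gamma m \V^2$ deterministic term and (b) not so small that $\log(4m/p)$ blows past $2\log(16m^2T)$. The choice $p = 1/(m^2T)$ threads this needle because it is polynomially small in $m$ and $T$ (so its logarithm stays within a factor $2$ of $C^2$'s logarithm) yet the prefactor $\gamma m \V^2 / (m^2 T) = \gamma\V^2/(mT)$ is already $O(1/\batchsize)$ up to the $1/m$ factor. Everything else — the decomposition, the union bound already done inside \Cref{lem:core_lemma_general}, and the use of $\batchsize\le T$ — is routine.
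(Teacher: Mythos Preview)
Your proposal is correct and takes essentially the same approach as the paper: split the expectation over a good event (where the high-probability bound of \Cref{lem:core_lemma_general}(2) holds) and its complement (where the deterministic bound of \Cref{lem:core_lemma_general}(1) applies), then tune $p$. The paper's choice $p=\tfrac{1}{4mT}$ is slightly cleaner than your $p=1/(m^2T)$ because it makes $C_p^2=8\log(4m/p)=8\log(16m^2T)=C^2$ on the nose, and it bounds the bad-event term via $\tfrac{4\kappa_\delta+1}{2m}\le\gamma$ directly (rather than your cruder $2(4\kappa_\delta+1)\le 8\gamma m$), which avoids the extra logarithm-comparison step you flagged as the main bookkeeping obstacle.
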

\begin{proof}
    By choosing $p=\frac{1}{4mT}$, item $2$ of \Cref{lem:core_lemma_general} implies that with probability at least $1-\frac{1}{4mT}$,
    \[
        \norm{\A(\widebar{g}_1^{\batchsize},\ldots,\widebar{g}_m^{\batchsize}) - \nabla f(x)}^2 \leq \frac{C^2 \V^2 \gamma}{\batchsize}\; .
    \]
    In addition, by item $1$ of \Cref{lem:core_lemma_general}, we have $\norm{\A(\widebar{g}_1^{\batchsize},\ldots,\widebar{g}_m^{\batchsize}) - \nabla f(x)}^2 \leq 2(4\kappa_{\delta} + 1)\V^2$, deterministically. Combining these results, by the law of total expectation, we get
    \[
        \E\!\norm{\A(\widebar{g}_1^{\batchsize},\ldots,\widebar{g}_m^{\batchsize}) - \nabla f(x)}^2 \leq \frac{C^2 \V^2 \gamma}{\batchsize} + 2(4\kappa_{\delta} + 1)\V^2\cdot\frac{1}{4mT} \leq \frac{C^2 \V^2 \gamma}{\batchsize} + \frac{\V^2\gamma}{T} \leq \frac{2C^2\V^2\gamma}{\batchsize}\; , 
    \]
    where the second inequality follows from $\frac{4\kappa_{\delta} + 1}{2m}\leq \gamma$, and the last inequality from $C^2\geq 1$ and $N\leq T$.
\end{proof}

Before we establish bounds on the bias and variance of the MLMC gradient estimator defined in \Cref{eq:mlmc_agg_general}, we show that $\Ecal_t$ is satisfied with high probability.
\begin{lemma}\label{lem:Ecal_hp_option1}
    Consider $\Ecal(J_t)$ defined in \Cref{eq:event_E_option1}. For every $t\in\goodrounds$ and $j=0,\ldots,\Jmax$, we have 
    \[
        \prob_{t-1}(\Ecal_t(j))\geq 1 - \frac{1}{2mT}\; ,    
    \]
    where the randomness is w.r.t the stochastic gradient samples.
\end{lemma}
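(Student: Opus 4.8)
The event $\Ecal_t(j)$ concerns the distance $\lVert \widehat{g}_t^{j} - \widehat{g}_t^{j-1}\rVert$, and since $t\in\goodrounds$, the honest set is consistent across all gradient computations within the round, so $\widehat{g}_t^{j}$ and $\widehat{g}_t^{j-1}$ are exactly the $(\delta,\kappa_{\delta})$-robustly aggregated mini-batch gradients of batch sizes $2^{j}$ and $2^{j-1}$ respectively (as in \Cref{eq:robust_aggregated_grad}). The plan is to bound each term's deviation from $\nabla f(x_t)$ using the high-probability bound of item~2 of \Cref{lem:core_lemma_general}, then combine via the triangle inequality to control their difference. Concretely, I would first invoke \Cref{lem:core_lemma_general} (item 2) at level $j$ with batch size $N=2^{j}$ and at level $j-1$ with batch size $N=2^{j-1}$, in each case choosing the failure probability to be $p = \frac{1}{4mT}$ (matching the choice in \Cref{cor:mse_bound_general}), so that $C_p^2 = 8\log(4m/p) = 8\log(16m^2T) = C^2$. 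This gives, with probability at least $1 - \frac{1}{4mT}$ for each level,
\[
    \norm{\widehat{g}_t^{j} - \nabla f(x_t)} \leq \frac{C\V\sqrt{\gamma}}{\sqrt{2^{j}}}, \qquad \norm{\widehat{g}_t^{j-1} - \nabla f(x_t)} \leq \frac{C\V\sqrt{\gamma}}{\sqrt{2^{j-1}}} = \frac{\sqrt{2}\,C\V\sqrt{\gamma}}{\sqrt{2^{j}}}.
\]

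Next I would take a union bound over the two events (for levels $j$ and $j-1$), which holds with probability at least $1 - \frac{2}{4mT} = 1 - \frac{1}{2mT}$, and apply the triangle inequality:
\[
    \norm{\widehat{g}_t^{j} - \widehat{g}_t^{j-1}} \leq \norm{\widehat{g}_t^{j} - \nabla f(x_t)} + \norm{\widehat{g}_t^{j-1} - \nabla f(x_t)} \leq (1+\sqrt{2})\frac{C\V\sqrt{\gamma}}{\sqrt{2^{j}}} = (1+\sqrt{2})\frac{c_{\Ecal}C\V}{\sqrt{2^{j}}},
\]
where the last equality uses $c_{\Ecal} = \sqrt{\gamma}$ as set in \Cref{eq:event_E_option1}. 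This is precisely the defining inequality of $\Ecal_t(j)$, so on this joint high-probability event $\Ecal_t(j)$ holds, giving $\prob_{t-1}(\Ecal_t(j)) \geq 1 - \frac{1}{2mT}$. The edge cases $j=0$ and $j=1$ are handled the same way (for $j=0$ the "difference" term in the MLMC formula does not arise, but the lemma statement still asserts the bound for $j=0,\ldots,\Jmax$; in that case $\widehat{g}_t^{j-1}$ should be read consistently with the algorithm, or the claim is vacuous/trivial and can be noted as such).

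\textbf{Main obstacle.} The only real subtlety is bookkeeping the probability budget: one must verify that applying \Cref{lem:core_lemma_general} at both levels with $p=\frac{1}{4mT}$ and union-bounding yields exactly $1-\frac{1}{2mT}$, and that the constant $C_p$ coming out of the lemma (namely $C_p^2 = 8\log(4m/p)$) matches the universal constant $C^2 = 8\log(16m^2T)$ used in the definition of $\Ecal_t$ — i.e., that $4m/p = 4m\cdot 4mT = 16m^2T$. A secondary point is confirming that $\widehat{g}_t^{j}$ in a static round genuinely falls under the hypotheses of \Cref{lem:core_lemma_general}: since $t\in\goodrounds$, all $2^{J_t}$ per-worker gradient computations use the same honest set $\G_t$, so each honest $\widebar{g}_{t,i}^{j}$ is an average of $2^{j}$ i.i.d.\ samples of $\nabla F(x_t;\cdot)$, exactly as required. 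Everything else is a direct substitution, so I expect the proof to be short.
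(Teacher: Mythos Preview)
Your proposal is correct and essentially identical to the paper's proof: both apply item~2 of \Cref{lem:core_lemma_general} at levels $j$ and $j-1$ with $p=\frac{1}{4mT}$ (so that $C_p=C$), union-bound the two failure events, and use the triangle inequality to recover the defining inequality of $\Ecal_t(j)$. Your observation about the $j=0$ edge case is apt; the paper's proof implicitly treats only $j\geq 1$ (which is all that is ever used, since $J_t\geq 1$).
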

\begin{proof}
    By item 2 of \Cref{lem:core_lemma_general}, we have 
    \[
        \prob_{t-1}\brac{\lVert \widehat{g}_t^{j} - \nabla_t\rVert \leq \frac{c_{\Ecal}C\V}{\sqrt{2^j}}} = \prob_{t-1}\brac{\lVert \widehat{g}_t^{j} - \nabla_t\rVert \leq C\V\sqrt{\frac{\gamma}{2^{j}}}} \geq 1-\frac{1}{4mT}, \quad \forall j=0,\ldots,\Jmax\; . 
    \]
    This bound, in conjunction with the union bound, allows us to bound $\prob_{t-1}(\Ecal_t(j)^c)$ as,
    \begin{align*}
        \prob_{t-1}(\Ecal_t(j)^c) &= \prob_{t-1}\brac{\lVert \widehat{g}_t^{j} - \widehat{g}_t^{j-1}\rVert> (1 + \sqrt{2})\frac{c_{\Ecal}C\V}{\sqrt{2^j}}} \\ &\leq\prob_{t-1}\brac{\cbrac{\lVert \widehat{g}_t^{j} - \nabla_t\rVert> \frac{c_{\Ecal}C\V}{\sqrt{2^j}}}\bigcup \cbrac{\lVert \widehat{g}_t^{j-1} - \nabla_t \rVert> \frac{c_{\Ecal}C\V}{\sqrt{2^{j-1}}}}} \\ &\leq \prob_{t-1}\brac{\lVert \widehat{g}_t^{j} - \nabla_t\rVert> \frac{c_{\Ecal}C\V}{\sqrt{2^j}}} + \prob_{t-1}\brac{\lVert \widehat{g}_t^{j-1} - \nabla_t\rVert> \frac{c_{\Ecal}C\V}{\sqrt{2^{j-1}}}} \\ &\leq \frac{1}{4mT} + \frac{1}{4mT} = \frac{1}{2mT}\; .
    \end{align*}
\end{proof}

Moving forward, we now provide bounds on the bias and variance estimator in \Cref{eq:mlmc_agg_general}. In \Cref{lem:bias-var-mlmc-general}, we establish that the bias in static rounds ($t\in\goodrounds$) is proportionate to $\V\sqrt{\gamma/T}$, whereas in bad rounds it is near-constant; the variance is also near-constant for every $t\in\sbrac{T}$. 

\begin{lemma}[MLMC Bias and Variance]\label{lem:bias-var-mlmc-general}
    Consider $g_t$ defined as in \Cref{eq:mlmc_agg_general}. Then,
    \begin{enumerate}
        \item The bias $b_t\coloneqq \E_{t-1} g_t - \nabla_t$ is bounded as 
        \begin{align*}
            \norm{b_t} \leq \begin{cases}
                2C\V\sqrt{\frac{\gamma}{T}} + 2\V\sqrt{\frac{\gamma}{m}}\frac{\log{T}}{T}, &t\in\goodrounds \\ 
                4C\V\sqrt{\gamma\log{T}}, &t\in\badrounds
            \end{cases}\; .
        \end{align*}
        \item The variance $V_t^2\coloneqq \E_{t-1}\norm{g_t - \E_{t-1} g_t}^2$ is bounded as 
        \begin{align*}
            V_t^2 \leq 16C^2 \V^2\gamma\log{T}, \quad\forall t\in\sbrac{T}\; .
        \end{align*}
    \end{enumerate}
\end{lemma}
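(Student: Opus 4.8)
\textit{Proof proposal.} The plan is to split the MLMC gradient as $g_t = \widehat g_t^{0} + \big(g_t - \widehat g_t^{0}\big)$ and control the two pieces separately, in both static and dynamic rounds. The key observation is that $\widehat g_t^{0}=\A(g_{t,1}^{1},\ldots,g_{t,m}^{1})$ depends only on the $k{=}1$ gradient computations, whose honest set is $\G_t^{1}$ with $\abs{\G_t^{1}}=(1-\delta)m$; hence, \emph{in every round}, $\widehat g_t^{0}$ is a $(\delta,\kappa_\delta)$-robust aggregate of i.i.d.\ honest mini-batch-of-one estimators, so \Cref{cor:mse_bound_general} applies with $N=1$ and gives $\E_{t-1}\norm{\widehat g_t^{0}-\nabla_t}^{2}\le 2C^{2}\V^{2}\gamma$. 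At levels $j\ge 1$ the within-round averages $\widebar g_{t,i}^{j}$ may be contaminated in dynamic rounds, so no analogous bound on $\widehat g_t^{j}$ is available there --- and this is precisely the gap the fail-safe event $\Ecal_t$ is designed to plug.

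For the variance I would start from $V_t^{2}\le \E_{t-1}\norm{g_t-\nabla_t}^{2}$ (the variance is at most the MSE about the fixed point $\nabla_t$), write $g_t-\nabla_t=(\widehat g_t^{0}-\nabla_t)+\mathbbm{1}_{\{J_t\le\Jmax,\,\Ecal_t(J_t)\}}\,2^{J_t}(\widehat g_t^{J_t}-\widehat g_t^{J_t-1})$, and apply $\norm{a+b}^{2}\le 2\norm{a}^{2}+2\norm{b}^{2}$. The first term contributes $4C^{2}\V^{2}\gamma$ by the observation above. For the second, on $\Ecal_t(j)$ the definition of the event forces $\norm{\widehat g_t^{j}-\widehat g_t^{j-1}}^{2}\le (1+\sqrt 2)^{2}c_{\Ecal}^{2}C^{2}\V^{2}/2^{j}$, hence $2^{2j}\norm{\widehat g_t^{j}-\widehat g_t^{j-1}}^{2}\mathbbm{1}_{\Ecal_t(j)}\le 2^{j}(1+\sqrt 2)^{2}c_{\Ecal}^{2}C^{2}\V^{2}$; taking expectation over $J_t\sim\mathrm{Geom}(1/2)$, the weight $2^{-j}$ cancels this $2^{j}$, leaving a sum of $\Jmax\le\log T$ equal terms. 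With $c_\Ecal^{2}=\gamma$ this yields $\E_{t-1}\norm{g_t-\nabla_t}^{2}\le 4C^{2}\V^{2}\gamma + 2(1+\sqrt 2)^{2}C^{2}\V^{2}\gamma\log T\le 16C^{2}\V^{2}\gamma\log T$. This bound holds in static \emph{and} dynamic rounds, since the argument never invoked concentration for $\widehat g_t^{j}$ with $j\ge 1$ --- only the deterministic consequence of $\Ecal_t$.

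For the bias in static rounds $t\in\goodrounds$ I would use the MLMC telescoping identity (as in the proof of \Cref{lem:mlmc}) after rewriting the filter indicator as $\mathbbm{1}_{\Ecal_t(j)}=1-\mathbbm{1}_{\Ecal_t(j)^{c}}$, obtaining
\[
    \E_{t-1}g_t = \E_{t-1}\widehat g_t^{\Jmax} - \sum_{j=1}^{\Jmax}\E_{t-1}\!\sbrac{(\widehat g_t^{j}-\widehat g_t^{j-1})\,\mathbbm{1}_{\Ecal_t(j)^{c}}}\; .
\]
The first term is within $\sqrt{\E_{t-1}\norm{\widehat g_t^{\Jmax}-\nabla_t}^{2}}\le 2C\V\sqrt{\gamma/T}$ of $\nabla_t$, by Jensen's inequality and \Cref{cor:mse_bound_general} (legitimate since $t\in\goodrounds$ and $2^{\Jmax}\ge T/2$). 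For the residual sum I would bound $\norm{\widehat g_t^{j}-\widehat g_t^{j-1}}\le 2\sqrt{2(4\kappa_\delta+1)}\,\V\le 4\sqrt{\gamma m}\,\V$ deterministically via item 1 of \Cref{lem:core_lemma_general} (using $2(4\kappa_\delta+1)\le 4\gamma m$ since $m\ge 1$), and $\prob_{t-1}(\Ecal_t(j)^{c})\le 1/(2mT)$ via \Cref{lem:Ecal_hp_option1}; multiplying and summing the resulting $2\V\sqrt{\gamma/m}/T$ over the at most $\log T$ levels produces exactly the claimed lower-order term $2\V\sqrt{\gamma/m}\,\log T/T$.

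For dynamic rounds $t\in\badrounds$ the telescoping is worthless because $\widehat g_t^{\Jmax}$ carries no MSE guarantee; instead I would use the cruder split $\norm{b_t}\le \norm{\E_{t-1}\widehat g_t^{0}-\nabla_t}+\E_{t-1}\norm{g_t-\widehat g_t^{0}}$. The first term is $\le\sqrt 2\,C\V\sqrt\gamma$ by the opening observation. For the second, the correction is non-zero only on $\Ecal_t(J_t)$, where $2^{J_t}\norm{\widehat g_t^{J_t}-\widehat g_t^{J_t-1}}\le (1+\sqrt 2)c_\Ecal C\V\sqrt{2^{J_t}}$; taking expectation over $J_t$ turns this into the convergent geometric sum $(1+\sqrt 2)c_\Ecal C\V\sum_{j\ge 1}2^{-j/2}$, i.e.\ a quantity of order $C\V\sqrt\gamma$, which is dominated by $4C\V\sqrt{\gamma\log T}$ (for $T$ beyond a small absolute constant). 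I expect the main obstacle to be keeping these scales straight: one must consistently exploit that level $0$ is safe in \emph{all} rounds while levels $j\ge 1$ are safe only conditionally on $\Ecal_t$, and handle the interaction of $\mathbbm{1}_{\Ecal_t}$ with the MLMC telescoping --- which enters as the small-probability complement $\Ecal_t(j)^{c}$ in static rounds and as a hard deterministic cap (rather than a typical event) in dynamic rounds.
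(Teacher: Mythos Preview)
Your proposal is correct and follows essentially the same approach as the paper for the variance bound and the static-round bias (the paper also telescopes via $\mathbbm{1}_{\Ecal_t(j)}=1-\mathbbm{1}_{\Ecal_t(j)^c}$, invokes \Cref{cor:mse_bound_general} at level $\Jmax$, and controls the residual via item~1 of \Cref{lem:core_lemma_general} together with $\prob_{t-1}(\Ecal_t(j)^c)\le 1/(2mT)$). The one place you diverge is the dynamic-round bias: the paper simply applies Jensen's inequality to the MSE bound $\E_{t-1}\norm{g_t-\nabla_t}^2\le 16C^2\V^2\gamma\log T$ already established in the variance step, giving $\norm{b_t}\le 4C\V\sqrt{\gamma\log T}$ in one line, whereas you re-decompose into $\norm{\E_{t-1}\widehat g_t^0-\nabla_t}+\E_{t-1}\norm{g_t-\widehat g_t^0}$ and sum a geometric series. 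Your route is valid (indeed it yields the sharper $\O(C\V\sqrt\gamma)$ before you relax it), but it is more work than needed and introduces a mild constraint on $T$ that the paper's one-liner avoids.
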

\begin{proof}
    Our proof closely follows and builds upon the strategy employed in \Cref{lem:mlmc}. We begin by bounding the variance,
    \begin{align}\label{eq:mlmc_var_explicit}
        V_t^2 \coloneqq \E_{t-1}\norm{g_t - \E_{t-1} g_t}^2 \leq \E_{t-1}\norm{g_t - \nabla_t}^2 &= \sum_{j=1}^{\infty}{2^{-j}\E_{t-1}\norm{\widehat{g}_t^{0} + 2^{j}\brac{\widehat{g}_t^{j} - \widehat{g}_t^{j-1}}\mathbbm{1}_{\cbrac{j\leq\Jmax}\cap\Ecal_t(j)} - \nabla_t}^2} \nonumber \\ &\leq 2\sum_{j=1}^{\infty}{2^{-j}\E_{t-1}\norm{\widehat{g}_t^{0} - \nabla_t}^2} + 2\sum_{j=1}^{\Jmax}{2^{j}\E_{t-1}\sbrac{\lVert{\widehat{g}_t^{j} - \widehat{g}_t^{j-1}}\rVert^2\mathbbm{1}_{\Ecal_t(j)}}} \nonumber \\ &= 2\E_{t-1}\norm{\widehat{g}_t^{0} - \nabla_t}^2 + 2\sum_{j=1}^{\Jmax}{2^{j}\underbrace{\E_{t-1}\sbrac{\lVert{\widehat{g}_t^{j} - \widehat{g}_t^{j-1}}\rVert^2\mathbbm{1}_{\Ecal_t(j)}}}_{=(\dag)}}\; ,
    \end{align}
    where the last equality holds as $\sum_{j=1}^{\infty}{2^{-j}}=1$. Focusing on $\brac{\dag}$, by the law of total expectation, we have that
    \begin{align*}
        \E_{t-1}\sbrac{\lVert{\widehat{g}_t^{j} - \widehat{g}_t^{j-1}}\rVert^2\mathbbm{1}_{\Ecal_t(j)}} = \E_{t-1}\sbrac{\lVert{\widehat{g}_t^{j} - \widehat{g}_t^{j-1}}\rVert^2 | \Ecal_t(j)}\underbrace{\prob(\Ecal_t(j))}_{\leq 1} \leq \frac{(1+\sqrt{2})^2C^2 \V^2 \gamma}{2^{j}}\leq \frac{6C^2\V^2\gamma}{2^{j}} \; ,
    \end{align*}
    where the first inequality follows from the bound of $\lVert \widehat{g}_t^{j} - \widehat{g}_t^{j-1}\rVert$ under the event $\Ecal_t(j)$ (see \cref{eq:event_E_option1}). Furthermore, by \Cref{cor:mse_bound_general}, we can bound $\E_{t-1}\lVert \widehat{g}_t^{0}-\nabla_t \rVert^2\leq 2C^2 \V^2\gamma$. Substituting these bounds back into \Cref{eq:mlmc_var_explicit} finally gives:
    \begin{align*}
        V_t^2 \leq \E_{t-1}\norm{g_t - \nabla_t}^2 &\leq 4C^2\V^2\gamma + 2\sum_{j=1}^{\Jmax}{2^j\cdot\frac{6C^2\V^2\gamma}{2^{j}}} \leq 4C^2\V^2\gamma + 12C^2\V^2\gamma\Jmax \leq 16C^2\V^2\gamma\log{T}\; ,
    \end{align*}
    where the last inequality follows from $\Jmax\leq\log{T}$. 
    
    Proceeding to bound the bias, for every $t\in\sbrac{T}$, we have by Jensen's inequality,
    \begin{align*}
        \norm{b_t} \leq \sqrt{\E_{t-1}\norm{g_t - \nabla_t}^2} \leq 4C\V\sqrt{\gamma\log{T}}\; .
    \end{align*}
    However, for $t\in\goodrounds$ the Byzantine workers are fixed, and a tighter bound can be derived. Taking expectation w.r.t $J_t$ gives:
    \begin{equation}\label{eq:mlmc_exp}
        \E_{t-1}[g_t] = \sum_{j=1}^{\infty}{2^{-j}\cdot \E_{t-1}\sbrac{\widehat{g}_t^{0} + 2^{j}\brac{\widehat{g}_t^{j} - \widehat{g}_t^{j-1}}\mathbbm{1}_{\cbrac{j\leq\Jmax}\cap\Ecal_t(j)}}} = \E_{t-1}[\widehat{g}_t^{0}] + \sum_{j=1}^{\Jmax}{\E_{t-1}\sbrac{\brac{\widehat{g}_t^{j} - \widehat{g}_t^{j-1}}\mathbbm{1}_{\Ecal_t(j)}}}\; .
    \end{equation}
    Utilizing \Cref{lem:expectation_indicator}, we can express each term in the sum as,
    \[
        \E_{t-1}\sbrac{\brac{\widehat{g}_t^{j} - \widehat{g}_t^{j-1}}\mathbbm{1}_{\Ecal_t(j)}} = \E_{t-1}\sbrac{\widehat{g}_t^{j} - \widehat{g}_t^{j-1}} - \E_{t-1}\sbrac{\widehat{g}_t^{j} - \widehat{g}_t^{j-1} | \Ecal_t(j)^c}\prob_{t-1}(\Ecal_t(j)^{c})\; .
    \]
    Denote the last term in the R.H.S by $z_t^{j}\coloneqq\!\E_{t-1}\!\sbrac{\widehat{g}_t^{j} - \widehat{g}_t^{j-1} | \Ecal_t(j)^c}\prob_{t-1}(\Ecal_t(j)^{c})$. Plugging this back into \Cref{eq:mlmc_exp}, we obtain:
    \begin{align}
        \E_{t-1}[g_t] &= \E_{t-1}[\widehat{g}_t^{0}] + \sum_{j=1}^{\Jmax}{\brac{\E_{t-1}[\widehat{g}_t^{j} - \widehat{g}_t^{j-1}] - z_t^{j}}} = \E_{t-1}[\widehat{g}_t^{\Jmax}] + y_t \nonumber \; ,
    \end{align}
    where $y_t \coloneqq -\sum_{j=1}^{\Jmax}{z_t^{j}}$. Thus, by the triangle inequality and Jensen's inequality, it holds that
    \begin{align}
        \norm{b_t} = \norm{\E_{t-1}[g_t - \nabla_t]} = \lVert{\E_{t-1}[\widehat{g}_t^{\Jmax} + y_t - \nabla_t]}\rVert  &\leq \lVert \E_{t-1}[\widehat{g}_t^{\Jmax} - \nabla_t]\rVert + \norm{\E_{t-1}y_t} \label{eq:bias-mlmc-good-rounds} \\ &\leq \sqrt{\E_{t-1}\lVert{\widehat{g}_t^{\Jmax} - \nabla_t\rVert}^2} + \E_{t-1}\norm{y_t} \nonumber\\ &\leq 2C\V\sqrt{\frac{\gamma}{T}} + \E_{t-1}\norm{y_t}\; , \label{eq:bias-mlmc-general-good-rounds} 
    \end{align}
    where the last inequality follows from \Cref{cor:mse_bound_general} and $2^{\Jmax}\geq T/2$. Our objective now is to bound $\norm{y_t}$; note that for every $j=1,\ldots,\Jmax$, we can bound $z_t^{j}$ using Jensen's inequality as follows:
    \begin{align*}
        \lVert{z_t^{j}\rVert} &= \norm{\E_{t-1}\sbrac{\widehat{g}_t^{j} - \widehat{g}_t^{j-1} | \Ecal_t(j)^c}\prob_{t-1}(\Ecal_t(j)^{c})} \leq \E_{t-1}\sbrac{\lVert \widehat{g}_t^{j} - \widehat{g}_t^{j-1} \rVert | \Ecal_t(j)^c}\prob_{t-1}(\Ecal_t(j)^c)\; .
    \end{align*}
    By item 1 of \Cref{lem:core_lemma_general}, for every $j=0,\ldots,\Jmax$ it holds that $\lVert{\widehat{g}_t^{j} - \nabla_t\rVert}\leq \V\sqrt{2(4\kappa_{\delta} + 1)}$, which implies that 
    \[
        \lVert{ \widehat{g}_t^{j} - \widehat{g}_t^{j-1}\rVert}\leq \lVert{\widehat{g}_t^{j} - \nabla_t\rVert} + \lVert{\widehat{g}_t^{j-1} - \nabla_t\rVert}\leq 2\V\sqrt{2(4\kappa_{\delta} + 1)}\; . 
    \]
    In addition, we have by \Cref{lem:Ecal_hp_option1} that $\prob_{t-1}(\Ecal_t(j)^c)\leq \frac{1}{2mT}$. Thus, we obtain:
    \[
        \lVert z_t^{j}\rVert \leq 2\V\sqrt{2(4\kappa_{\delta} + 1)}\cdot\frac{1}{2mT} \leq \frac{2\V}{T}\sqrt{\frac{\gamma}{m}}\; ,
    \]
    where we used $\frac{4\kappa_{\delta}+1}{m}\leq 2\gamma$. This in turn implies, by the triangle inequality, the following bound on $y_t$:
    \[
        \norm{y_t} \leq \sum_{j=1}^{\Jmax}{\lVert z_t^{j}\rVert} \leq \frac{2\V}{T}\sqrt{\frac{\gamma}{m}} \Jmax \leq 2\V\sqrt{\frac{\gamma}{m}}\frac{\log{T}}{T} \; ,
    \]
    where we used $\Jmax\leq\log{T}$. Plugging this bound back into \Cref{eq:bias-mlmc-general-good-rounds} yields:
    \begin{align*}
        \norm{b_t} \leq 2C\V\sqrt{\frac{\gamma}{T}} + 2\V\sqrt{\frac{\gamma}{m}}\frac{\log{T}}{T}, \quad\forall t\in\goodrounds\; .
    \end{align*}
\end{proof}

Using the established bounds on bias and variance, we derive convergence guarantees for the convex and non-convex cases.

\subsection{Convex Case}\label{subapp:convex-dynamic}
The following result establishes the convergence of our approach in the convex setting.
\begin{theorem}
   Assume $f$ is convex. Under Assumptions \ref{assump:bounded-noise} and \ref{assump:bounded-domain}, and with a $(\delta, \kappa_{\delta})$-robust aggregator $\A$, consider \Cref{alg:method-new} with \textcolor{purple}{\textbf{Option 1}} and a fixed learning rate given by
    \[
        \eta_t = \eta \coloneqq \min\cbrac{\frac{D}{6C\V\sqrt{\gamma T\log{T}}}, \frac{1}{2L}}\; .
    \]
    Then, for $\widebar{x}_T \coloneqq \frac{1}{T}\sum_{t=2}^{T+1}{x_t}$, the following holds:
    \begin{align*}
        \E f(\widebar{x}_T) - f(x^*) &\leq 9CD\V\sqrt{\frac{\gamma\log{T}}{T}} + \frac{LD^2}{T} +4CD\V\sqrt{\gamma\log{T}}\frac{\abs{\badrounds}}{T}\; .
    \end{align*}
\end{theorem}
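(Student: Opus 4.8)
\emph{Proof proposal.} The plan is to read the bound directly off the generic convex SGD analysis of \Cref{lem:convex-sgd}, feeding in the bias and variance estimates for the MLMC gradient $g_t$ supplied by \Cref{lem:bias-var-mlmc-general}. Since the prescribed $\eta$ satisfies $\eta\le\frac{1}{2L}$, \Cref{lem:convex-sgd} applied to the update $x_{t+1}=\proj{\K}{x_t-\eta g_t}$ gives $\E f(\widebar{x}_T)-f(x^*)\le \frac{D^2}{2T\eta}+\frac{\eta}{T}\sum_{t=1}^T \E V_t^2+\frac{D}{T}\sum_{t=1}^T\E\norm{b_t}$, where $b_t$ and $V_t^2$ are the bias and variance of $g_t$. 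From \Cref{lem:bias-var-mlmc-general} we have $\E V_t^2\le 16C^2\V^2\gamma\log T$ for every $t$, so the variance term is at most $16\eta C^2\V^2\gamma\log T$; moreover $\norm{b_t}\le 2C\V\sqrt{\gamma/T}+2\V\sqrt{\gamma/m}\,\tfrac{\log T}{T}$ on static rounds $t\in\goodrounds$ and $\norm{b_t}\le 4C\V\sqrt{\gamma\log T}$ on dynamic rounds $t\in\badrounds$.

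I would first dispose of the dynamic rounds: since there are $\abs{\badrounds}$ of them, $\frac{D}{T}\sum_{t\in\badrounds}\E\norm{b_t}\le 4CD\V\sqrt{\gamma\log T}\,\frac{\abs{\badrounds}}{T}$, which is exactly the last term of the claim. Next I would merge the $\frac{D^2}{2T\eta}$ term with the variance term: using $16\le 18$, write $\frac{D^2}{2T\eta}+16\eta C^2\V^2\gamma\log T\le \tfrac12\!\left(\frac{D^2}{T\eta}+36\eta C^2\V^2\gamma\log T\right)$ and invoke \Cref{lem:lr_min_of_2_lrs} with $a=\frac{D^2}{T}$, $b=36C^2\V^2\gamma\log T$, $c=2L$; one checks that $\sqrt{a/b}=\frac{D}{6C\V\sqrt{\gamma T\log T}}$ and $1/c=\frac{1}{2L}$, so $\eta=\min\{\sqrt{a/b},1/c\}$ is precisely the learning rate in the statement, and the lemma yields $\frac{D^2}{2T\eta}+16\eta C^2\V^2\gamma\log T\le 6CD\V\sqrt{\gamma\log T/T}+\frac{LD^2}{T}$.

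It then remains to bound the static-round bias by $3CD\V\sqrt{\gamma\log T/T}$. Summing the per-round bound over $\goodrounds$ (at most $T$ terms) and multiplying by $D/T$ gives at most $2CD\V\sqrt{\gamma/T}+2D\V\sqrt{\gamma/m}\,\tfrac{\log T}{T}$; using $1/m\le 1$, $\log T\ge 1$, $\log T\le T$ (so $\tfrac{\log T}{T}=\sqrt{\tfrac{\log T}{T}}\cdot\sqrt{\tfrac{\log T}{T}}\le\sqrt{\tfrac{\log T}{T}}$), and $C\ge 2$ (since $C^2=8\log(16m^2T)\ge 8\log 16>4$), this is at most $2CD\V\sqrt{\gamma\log T/T}+CD\V\sqrt{\gamma\log T/T}=3CD\V\sqrt{\gamma\log T/T}$. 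Adding the three contributions produces $9CD\V\sqrt{\gamma\log T/T}+\frac{LD^2}{T}+4CD\V\sqrt{\gamma\log T}\,\frac{\abs{\badrounds}}{T}$, as asserted. All the conceptual content is already in \Cref{lem:bias-var-mlmc-general} — the fail-safe filter is what keeps the per-round bias near-constant in dynamic rounds and the variance near-constant overall — so the only delicate point here is constant bookkeeping: splitting the coefficient $16$ against $36$ so that \Cref{lem:lr_min_of_2_lrs} fires with the exact stated $\eta$, and chaining the elementary inequalities so that the leading constant lands on $9$ rather than something looser.
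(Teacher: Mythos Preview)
Your proposal is correct and follows essentially the same route as the paper: invoke \Cref{lem:convex-sgd}, insert the bias and variance bounds from \Cref{lem:bias-var-mlmc-general}, apply \Cref{lem:lr_min_of_2_lrs} with $a=D^2/T$, $b=36C^2\V^2\gamma\log T$, $c=2L$, and then tidy the constants. The only cosmetic difference is how you absorb the lower-order static-bias term $2D\V\sqrt{\gamma/m}\,\tfrac{\log T}{T}$: you use $1/m\le 1$, $\log T/T\le\sqrt{\log T/T}$, and $C\ge 2$, whereas the paper uses the sharper $C\ge 2\sqrt{\log T}$ directly; both routes land on the same coefficient $9$.
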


\begin{proof}
    By \Cref{lem:convex-sgd}, we have
    \begin{equation*}
        \E f(\widebar{x}_T) - f(x^*) \leq \frac{D^2}{2T\eta} + \frac{\eta}{T}\sum_{t=1}^{T}{ \E V_t^2} + \frac{D}{T}\sum_{t=1}^{T}{\E\norm{b_t}}\; .
    \end{equation*}
    Substituting the established bounds on $\norm{b_t}$ and $V_t^2$ from \Cref{lem:bias-var-mlmc-general}, 
    \begin{align*}
        \E f(\widebar{x}_T) - f(x^*) &\leq \frac{D^2}{2T\eta} + 16\eta C^2\V^2\gamma\log{T} + \frac{D}{T}\brac{\sum_{t\in\badrounds}{\E\norm{b_t}} + \sum_{t\in\goodrounds}{\E\norm{b_t}}} \\ &\leq \frac{1}{2}\brac{\frac{D^2}{T\eta} + 36\eta C^2 \V^2\gamma\log{T}} \!+\! \frac{D}{T}\brac{4C\V\sqrt{\gamma\log{T}}\abs{\badrounds} \!+\! \brac{2C\V\sqrt{\frac{\gamma}{T}} \!+\! 2\V\sqrt{\frac{\gamma}{m}}\frac{\log{T}}{T}}(T - \abs{\badrounds})} \\ &\leq \frac{1}{2}\brac{\frac{D^2}{T\eta} + 36\eta C^2 \V^2\gamma\log{T}} + 4CD\V\sqrt{\gamma\log{T}}\frac{\abs{\badrounds}}{T} + 2CD\V\sqrt{\frac{\gamma}{T}} + 2D\V\sqrt{\frac{\gamma}{m}}\frac{\log{T}}{T}\; .
    \end{align*}
    Since $\eta = \min\cbrac{\frac{D}{6C\V\sqrt{\gamma T\log{T}}}, \frac{1}{2L}}$, applying \Cref{lem:lr_min_of_2_lrs} with $a= D^2/T, b=36C^2\V^2\gamma\log{T}$, and $c=2L$, allows us to bound the sum of the first two terms as follows
    \[
        \frac{D^2}{T\eta} + 36\eta C^2 \V^2\gamma\log{T} \leq 12CD\V\sqrt{\frac{\gamma\log{T}}{T}} + \frac{2LD^2}{T}\; .
    \]
    Plugging this bound back gives:
    \begin{align*}
        \E f(\widebar{x}_T) - f(x^*) &\leq 6CD\V\sqrt{\frac{\gamma\log{T}}{T}} + \frac{LD^2}{T} + 4CD\V\sqrt{\gamma\log{T}}\frac{\abs{\badrounds}}{T} + 2CD\V\sqrt{\frac{\gamma}{T}} + 2D\V\sqrt{\frac{\gamma}{m}}\frac{\log{T}}{T} \\ &\leq 8CD\V\sqrt{\frac{\gamma\log{T}}{T}} + \frac{LD^2}{T} + 4CD\V\sqrt{\gamma\log{T}}\frac{\abs{\badrounds}}{T} + CD\V\frac{\sqrt{\gamma\log{T}}}{\sqrt{m}T} \\ &\leq 9CD\V\sqrt{\frac{\gamma\log{T}}{T}} + \frac{LD^2}{T} + 4CD\V\sqrt{\gamma\log{T}}\frac{\abs{\badrounds}}{T}\; ,
    \end{align*}
    where in the second inequality we used $C=2\sqrt{2\log\brac{16m^2 T}}\geq 2\sqrt{\log{T}}$ to bound the last term.
\end{proof}

This theorem implies the following observation.
\begin{corollary}\label{cor:convex}
    If $\abs{\badrounds}\in\O(\sqrt{T})$, the first term dominates the convergence rate, which is $\Otilde(D\V\sqrt{\nicefrac{\gamma}{T}})$. Specifically, for $\kappa_{\delta}\in\O(\delta)$ this rate is given by $\Otilde\big(D\V\sqrt{\brac{\delta + \nicefrac{1}{m}}/T}\big)$.
\end{corollary}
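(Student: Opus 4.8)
The plan is to read the claim directly off the convergence bound just established for \Cref{alg:method-new} with \textcolor{purple}{\textbf{Option 1}} in the convex case, namely
\[
    \E f(\widebar{x}_T) - f(x^*) \leq 9CD\V\sqrt{\frac{\gamma\log T}{T}} + \frac{LD^2}{T} + 4CD\V\sqrt{\gamma\log T}\,\frac{\abs{\badrounds}}{T}\; ,
\]
with $C = \sqrt{8\log(16m^2 T)}$. First I would note that $C$ is poly-logarithmic in $m$ and $T$, so both $C$ and the explicit $\log T$ factors are swept into the $\Otilde(\cdot)$ notation; after this absorption the leading term reads $\Otilde\!\brac{D\V\sqrt{\gamma/T}}$.

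Next I would dispose of the remaining two terms. The term $LD^2/T$ decays at rate $1/T$, which is $o(1/\sqrt{T})$, hence asymptotically negligible compared with the leading term and irrelevant to the stated rate. For the third term, substituting the hypothesis $\abs{\badrounds}\in\O(\sqrt{T})$ gives
\[
    4CD\V\sqrt{\gamma\log T}\,\frac{\abs{\badrounds}}{T} \in \O\!\brac{CD\V\sqrt{\gamma\log T}\,\frac{\sqrt{T}}{T}} = \O\!\brac{CD\V\sqrt{\frac{\gamma\log T}{T}}} = \Otilde\!\brac{D\V\sqrt{\frac{\gamma}{T}}}\; ,
\]
so it is of the same order as the leading term (in fact bounded by a constant multiple of it once the hypothesis is in force). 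Consequently the entire right-hand side is $\Otilde\!\brac{D\V\sqrt{\gamma/T}}$, with the $1/\sqrt{T}$-scaled terms dominating, which is the first assertion of the corollary.

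Finally, for the specialization I would simply recall that $\gamma \coloneqq 2\kappa_\delta + \tfrac{1}{m}$, so the assumption $\kappa_\delta\in\O(\delta)$ yields $\gamma\in\O(\delta + \tfrac{1}{m})$; substituting into $\Otilde\!\brac{D\V\sqrt{\gamma/T}}$ gives $\Otilde\!\big(D\V\sqrt{(\delta + \nicefrac{1}{m})/T}\big)$, as claimed. There is no genuine obstacle here — the corollary is essentially a one-line consequence of the theorem — the only point requiring care is verifying that every suppressed factor (the concentration coefficient $C$, the $\log T$ factors, and the ratio $\abs{\badrounds}/\sqrt{T}$ under the hypothesis) is indeed poly-logarithmic or bounded, so that the $\Otilde$ bookkeeping is legitimate and the comparison of term magnitudes is valid.
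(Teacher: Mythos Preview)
Your proposal is correct and mirrors the paper's approach: the corollary is stated there simply as an immediate observation from the preceding theorem, with no separate proof, and your argument is exactly the natural one-line unpacking of that observation (absorb $C$ and $\log T$ into $\Otilde$, note $LD^2/T$ is lower order, plug $\abs{\badrounds}\in\O(\sqrt{T})$ into the third term, and substitute $\gamma=2\kappa_\delta+\tfrac{1}{m}$).
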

\subsection{Non-convex Case}\label{subapp:nonconvex-dynamic}
Having established the proof for the convex case, we move on to proving convergence in the non-convex scenario. For ease of reference, \Cref{thm:nonconvex} is restated here.
\begin{customthm}{4.1}
    Under \Cref{assump:bounded-noise}, and with a $(\delta, \kappa_{\delta})$-robust aggregator $\A$, consider \Cref{alg:method-new} with \textcolor{purple}{\textbf{Option 1}} and a fixed learning rate given by
    \[
        \eta_t = \eta \coloneqq \min\cbrac{
        \frac{\sqrt{\Delta_1}}{3C\V\sqrt{L\gamma T\log{T}}}, \frac{1}{L}}\; .
    \]
    where $\gamma\coloneqq 2\kappa_{\delta} + \frac{1}{m}$. Then, the following holds:
    \begin{align*}
        \frac{1}{T}\sum_{t=1}^{T}{\E\!\norm{\nabla_t}^2}&\leq 12C\V\sqrt{\frac{L\Delta_1\gamma\log{T}}{T}} + \frac{2L\Delta_1 + 9C^2\V^2\gamma}{T} + 16C^2\V^2\gamma\log{T}\frac{\abs{\badrounds}}{T}\; .
    \end{align*}
\end{customthm}

\begin{proof}
    Utilizing \Cref{lem:nonconvex-sgd}, we get:
    \begin{equation}\label{eq:nonconvex-sgd-general}
        \frac{1}{T}\sum_{t=1}^{T}{\E\norm{\nabla f(x_t)}^2} \leq \frac{2\Delta_1}{T\eta} + \frac{\eta L}{T}\sum_{t=1}^{T}{\E V_t^2} + \frac{1}{T}\underbrace{\sum_{t=1}^{T}{\E\norm{b_t}^2}}_{=(\star)}\; .
    \end{equation}
    \paragraph{Bounding $(\star)$: } Using the bound on the bias in item 1 of \Cref{lem:bias-var-mlmc-general}, we can bound
    \begin{align*}
        \sum_{t=1}^{T}{\E\norm{b_t}^2} = \sum_{t\in\badrounds}{\E\norm{b_t}^2} + \sum_{t\notin\badrounds}{\E\norm{b_t}^2} &\leq 16C^2 \V^2\gamma\log{T}\abs{\badrounds} + \brac{2C\V\sqrt{\frac{\gamma}{T}} + 2\V\sqrt{\frac{\gamma}{m}}\frac{\log{T}}{T}}^2(T-\abs{\badrounds}) \\ &\leq 16C^2 \V^2\gamma\log{T}\abs{\badrounds} + 2T\brac{\frac{4C^2\V^2\gamma}{T} + \frac{4\V^2\gamma\log^2{T}}{mT^2}} \\ &\leq 16C^2 \V^2\gamma\log{T}\abs{\badrounds} + 8C^2\V^2\gamma + \frac{C^2\V^2\gamma\log{T}}{mT} \\ &\leq C^2\V^2\gamma\brac{16\abs{\badrounds}\log{T} + 9}\; ,
    \end{align*}
    where the second inequality follows from $(a+b)^2\leq 2a^2 + 2b^2$ and $T-\abs{\badrounds}\leq T$; the third inequality uses $8\log{T}\leq C^2$; and the last inequality follows from $\log{T}\leq mT$.
    
    Substituting the bound on $(\star)$ and the variance bound from \Cref{lem:bias-var-mlmc-general} back into \Cref{eq:nonconvex-sgd-general} yields:
    \begin{align*}
        \frac{1}{T}\sum_{t=1}^{T}{\E\norm{\nablat}^2} &\leq \frac{2\Delta_1}{T\eta} + 16\eta LC^2 \V^2 \gamma\log{T} + \frac{C^2\V^2\gamma\brac{16\abs{\badrounds}\log{T} + 9}}{T} \\ &\leq 2\brac{\frac{\Delta_1}{T\eta} + 9\eta LC^2 \V^2 \gamma\log{T}} + \frac{C^2\V^2\gamma\brac{16\abs{\badrounds}\log{T} + 9}}{T}\; .
    \end{align*}
    Utilizing \Cref{lem:lr_min_of_2_lrs} with $a= \Delta_1/T, b=9LC^2\V^2\gamma\log{T}$, and $c=L$ enables to bound the sum of the first two terms as
    \[
        \frac{\Delta_1}{T\eta} + 9\eta L C^2\V^2\gamma\log{T} \leq 6C\V\sqrt{\frac{L\Delta_1\gamma\log{T}}{T}} + \frac{L\Delta_1}{T}\; .
    \]
    Plugging this bound, we get:
    \begin{align*}
        \frac{1}{T}\sum_{t=1}^{T}{\E\norm{\nablat}^2} &\leq 12C\V\sqrt{\frac{L\Delta_1\gamma\log{T}}{T}} + \frac{2L\Delta_1}{T} + \frac{C^2\V^2\gamma\brac{16\abs{\badrounds}\log{T} + 9}}{T} \\ &= 12C\V\sqrt{\frac{L\Delta_1\gamma\log{T}}{T}} + \frac{2L\Delta_1 + 9C^2\V^2\gamma}{T} + 16C^2\V^2\gamma\frac{\abs{\badrounds}\log{T}}{T} \; .
    \end{align*}
\end{proof}
\section{When Worker-Momentum Fails}\label{app:momentum_breaks}
Next, we take a detour, to show how the worker-momentum approach may fail in the presence of Byzantine identity changes. To this end, we introduce a Byzantine identity \emph{switching strategy}, which utilizes the momentum recursion to ensure that all workers suffer from a sufficient bias. For simplicity, we consider a setting with $m=3$ workers\footnote{For general $m$, we can divide the workers into $3$ groups and apply our switching strategy to these groups.}, of which only a single worker is Byzantine in each round. 

For some round $t$, consider the following momentum update rule with parameter $\beta\in[0, 1)$ for worker $i$,
\begin{equation*}\label{eq:worker_momentum}
    \tilde{m}_{t,i} = \beta \tilde{m}_{t-1, i} + (1-\beta) \tilde{g}_{t,i}\; .
\end{equation*}

As mentioned, this update rule effectively averages the last $1/\alpha$ gradients, where $\alpha \coloneqq 1-\beta$. 
Thus, the $1/\alpha$ rounds following a Byzantine-to-honest identity switch still heavily depend on the Byzantine behavior. Intuitively, this `healing phase' of $1/\alpha$ rounds is the time required for the worker to produce informative honest updates. Our attack leverages this property to perform an identity switch once in every $1/3\alpha$, to maintain all workers under the Byzantine effect, i.e., to prevent workers from completely `healing' from the attack. Recall that existing approaches to Byzantine-resilient strategy suggests choosing $\alpha\approx 1/\sqrt{T}$ to establish theoretical guarantees (see, e.g.,~\citealp{karimireddy2021learning,allouah2023fixing}). Thus, henceforth, we will assume $\alpha\leq 1/6$ and, for the sake of simplicity, that $1/3\alpha$ is an integer.

We divide the $T$ training rounds into epochs (i.e., windows) of size $1/\alpha\approx\sqrt{T}$. Within these epochs, we perform an identity switch once in every $1/3\alpha$ rounds, periodically, implying $3$ identity switches per-epoch and $3\alpha T\in\O(\sqrt{T})$ switches overall. Concretely, denoting by $\tilde{g}_{t,i}$ the gradient used by worker $i\in\cbrac{1,2,3}$ at time $t$ to perform momentum update, we consider the following attack strategy:
\[
    \tilde{g}_{t,i} = g_{t,i} + v_t\cdot \mathbbm{1}\cbrac{t\hspace{-0.5em}\mod \frac{1}{\alpha}\in\sbrac{\frac{i-1}{3\alpha}+1, \frac{i}{3\alpha}}}\; ,
\]
where $g_{t,i}$ is an honest stochastic gradient and $v_t$ is an attack vector to be defined later. Note that under this attack strategy, there is indeed only a single Byzantine machine at a time in all rounds. 

Denote by $\tilde{m}_{t,i}\coloneqq m_{t,i} + b_{t,i}$ the momentum used by worker $i$ in round $t$, where $m_{t,i}=(1-\alpha) m_{t-1,i} + \alpha g_{t,i}$ is the honest momentum (without Byzantine attack) and $b_{t,i}$ is the bias introduced by our attack. We want to find a recursion for $b_{t,i}$ to characterize the dynamics of the deviation from the honest momentum protocol. By plugging $\tilde{m}_{t-1,i}$ and $\tilde{g}_{t,i}$, we get:
\begin{align*}
    \tilde{m}_{t,i} &= (1-\alpha) \brac{m_{t-1,i} + b_{t-1,i}} + \alpha\brac{g_{t,i} + v_t\cdot \mathbbm{1}\cbrac{t\hspace{-0.5em}\mod \frac{1}{\alpha}\in\sbrac{\frac{i-1}{3\alpha}+1, \frac{i}{3\alpha}}}} \\ &= m_{t,i} + (1 - \alpha) b_{t-1,i} + \alpha v_t\cdot \mathbbm{1}\cbrac{t\hspace{-0.5em}\mod \frac{1}{\alpha}\in\sbrac{\frac{i-1}{3\alpha}+1, \frac{i}{3\alpha}}}\; .
\end{align*}
\begin{figure}[t]
    \centering
\includegraphics[width=0.8\linewidth]{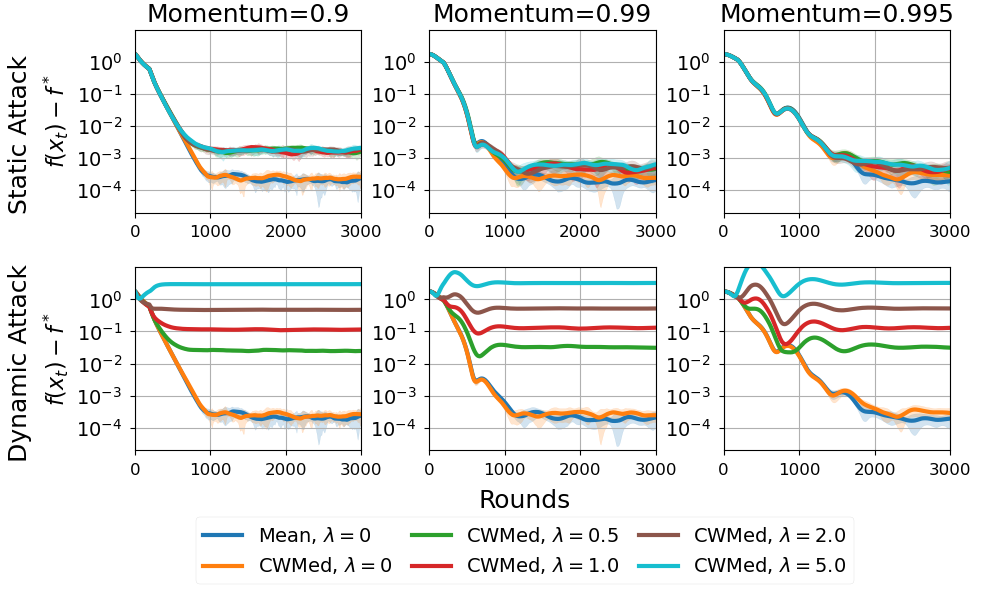}
    \vspace{-3mm}
    \caption{Optimality gap ($f(x_t)-f^*$) under static (\textbf{top}) and dynamic (\textbf{bottom}) attacks across various momentum parameters and for different attack strengths ($\lambda=0, 0.5, 1, 2, 5$). The average and 95\% confidence interval are presented over $20$ random seeds.}
    \label{fig:err}
\end{figure}
This implies the following recursion for the bias of worker $i$,
\begin{equation*}
    b_{t,i} = (1 - \alpha) b_{t-1,i} + \alpha v_t\cdot \mathbbm{1}\cbrac{t\hspace{-0.5em}\mod \frac{1}{\alpha}\in\sbrac{\frac{i-1}{3\alpha}+1, \frac{i}{3\alpha}}}\; .
\end{equation*}
Let $v\in\reals^d$ be some fixed vector. By carefully choosing $v_t$, as we describe next, we ensure that $b_{t,i}=v$ for all rounds $t$ under which worker $i$ is Byzantine. Note that $b_{1,1} = v_{1}$, and $b_{1,2}=b_{1,3}=0$. We distinguish between the first epoch and the following ones. For the first epoch, i.e., $t\in\sbrac{1/\alpha}$, we choose:
\begin{align*}
    v_{t} = v\cdot\begin{cases}
        1/\alpha, &t\in\cbrac{\frac{1}{3\alpha}+1, \frac{2}{3\alpha}+1} \\ 1, &\text{otherwise}.
    \end{cases}
\end{align*}
For the subsequent epochs, i.e., $t\in\sbrac{\frac{1}{\alpha}+1, T}$, we choose:
\begin{align*}
    v_{t} = v\cdot\begin{cases}
        \brac{1 - (1-\alpha)^{2/3\alpha}}/\alpha, &t\hspace{-0.5em}\mod \frac{1}{\alpha} = 1 \\ 1, &\text{otherwise}.
    \end{cases}
\end{align*}
The following lemma establishes that all worker momentums are sufficiently biased under this attack.

\begin{lemma}
    Given the above attack, for any $t$ starting from the second epoch, the following holds for $i=1,2,3$:
    \begin{align*}
        \tilde{m}_{t,i} = m_{t,i} + \theta_{t,i} v\; ,
    \end{align*}
    where $\theta_{t,i}\in\sbrac{\theta_{\min}, 1}$ for each $i$ and for all $t>1/\alpha$. Here, $\theta_{\min}\coloneqq \brac{\frac{5}{6}}^{4}> 0.48$.
\end{lemma}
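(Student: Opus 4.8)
The plan is to analyze the bias recursion $b_{t,i} = (1-\alpha)b_{t-1,i} + \alpha v_t \cdot \mathbbm{1}\{\cdots\}$ separately for each worker $i$, tracking $\theta_{t,i}$ where $b_{t,i} = \theta_{t,i} v$. Since all the injected vectors are scalar multiples of the fixed direction $v$, the recursion is effectively one-dimensional, and the entire argument reduces to bookkeeping on the scalar coefficients $\theta_{t,i}$. First I would fix attention on a single worker, say $i=1$, and a single epoch starting at some time $t_0 \equiv 1 \pmod{1/\alpha}$, with $t_0 > 1/\alpha$. Worker $1$ is Byzantine during rounds $t_0, \ldots, t_0 + \frac{1}{3\alpha} - 1$ (the slot $\bmod\,\tfrac1\alpha \in [1, \tfrac{1}{3\alpha}]$), and honest for the remaining $\frac{2}{3\alpha}$ rounds of the epoch.

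The key steps, in order: \textbf{(1)} Establish the \emph{inductive invariant} that at the start of each epoch (from the second onward), $\theta_{t_0, i}$ equals $1$ for the worker whose Byzantine slot begins at $t_0$ and has a specific known value for the other two — concretely, the attack's choice of $v_{t}$ at the epoch boundary, namely the multiplier $(1-(1-\alpha)^{2/3\alpha})/\alpha$, is precisely engineered so that after one step the coefficient returns to $1$ for the newly-Byzantine worker. I would verify the base case at $t = 1/\alpha + 1$ using the explicit first-epoch formulas for $v_t$ and the initial conditions $b_{1,1} = v$, $b_{1,2}=b_{1,3}=0$, propagated through the first epoch. \textbf{(2)} During a worker's Byzantine phase, since $v_t = v$ (the ``otherwise'' branch) and the indicator is on, the recursion is $\theta_{t,i} = (1-\alpha)\theta_{t-1,i} + \alpha$, which has fixed point $1$; starting from $\theta = 1$ it stays exactly $1$. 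So throughout each worker's Byzantine rounds, $\theta_{t,i} = 1$ — this gives the upper endpoint and the claim ``$b_{t,i} = v$ for all rounds under which worker $i$ is Byzantine.'' \textbf{(3)} During the worker's honest phase (indicator off), the recursion is pure decay: $\theta_{t,i} = (1-\alpha)\theta_{t-1,i}$, so after $\ell$ honest rounds the coefficient is $(1-\alpha)^\ell \theta_{\text{start}}$. The worst case within an epoch is at the end of the longest honest stretch. By the cyclic structure, each worker is honest for a contiguous block of at most $\frac{2}{3\alpha}$ rounds (possibly split across the epoch boundary, but the decay composes multiplicatively so only the total length $\frac{2}{3\alpha}$ matters before the next Byzantine reset). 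Hence $\theta_{t,i} \geq (1-\alpha)^{2/3\alpha}$ at its lowest; a short computation using $1-\alpha \geq (5/6)^{1/\alpha \cdot \alpha}$-type bounds, or more directly the elementary inequality $(1-\alpha)^{1/\alpha} \geq (5/6)^6$ for $\alpha \leq 1/6$, gives $(1-\alpha)^{2/3\alpha} = \big((1-\alpha)^{1/\alpha}\big)^{2/3} \geq \big((5/6)^6\big)^{2/3} = (5/6)^4 = \theta_{\min}$.

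The main obstacle I anticipate is \textbf{step (1)}: getting the epoch-boundary coefficient exactly right. The subtlety is that at the moment a worker's Byzantine phase ends and before it restarts one epoch later, the coefficient decays for $\frac{2}{3\alpha}$ rounds down to $(1-\alpha)^{2/3\alpha}$, and then the single boosted injection $v_t = v\cdot(1-(1-\alpha)^{2/3\alpha})/\alpha$ must bring it back to exactly $1$: indeed $(1-\alpha)\cdot(1-\alpha)^{2/3\alpha}$ — wait, one has to be careful about whether the decay is $\frac{2}{3\alpha}$ or $\frac{2}{3\alpha}+1$ steps and whether the boost lands on the first or last round of the worker's honest/Byzantine boundary — matching the algebra $(1-\alpha)^{k}\cdot 1 + \alpha \cdot \frac{1-(1-\alpha)^{2/3\alpha}}{\alpha} = (1-\alpha)^k + 1 - (1-\alpha)^{2/3\alpha}$, which equals $1$ only when $k = 2/3\alpha$. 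So I would carefully index the rounds to confirm each worker's honest stretch is exactly $\frac{2}{3\alpha}$ rounds before the boosted re-injection, making the invariant close exactly. Everything else — the decay estimates, the $(5/6)^4$ bound, the base case over the first epoch — is routine once the indexing is pinned down. The final statement then follows: $\theta_{t,i} \in [\theta_{\min}, 1]$ for all $t > 1/\alpha$ and $i = 1,2,3$, with $\theta_{t,i}$ playing the role of the claimed coefficient.
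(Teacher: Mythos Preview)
Your proposal is correct and follows essentially the same approach as the paper: reduce to the scalar coefficient $\theta_{t,i}$, observe that it is held at the fixed point $1$ during a worker's Byzantine phase and decays as $(1-\alpha)^\ell$ during its honest phase of length at most $2/3\alpha$, then lower-bound $(1-\alpha)^{2/3\alpha}\ge (5/6)^4$ using $\alpha\le 1/6$, and finally note that workers $2$ and $3$ are cyclic shifts of worker $1$. Your treatment is in fact more careful than the paper's brief argument---particularly your attention to the epoch-boundary reset (your step~(1)) and the off-by-one indexing of the honest stretch, which the paper glosses over.
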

\begin{proof}
    Let us examine the bias dynamics of the first worker under the described attack.

    In the first third of each epoch, namely, for $t\hspace{-0.5em}\mod \frac{1}{\alpha}\in\sbrac{\frac{1}{3\alpha}+1, \frac{1}{3\alpha}}$, we have a fixed bias of $b_{t,1}=v$. For the remaining two thirds, we have an exponential bias decay, where $b_{t,1}=\theta_{t,1}v$ for $\theta_{t,1}\coloneqq(1-\alpha)^{\brac{t-\frac{1}{3\alpha}}\hspace{-0.5em}\mod\hspace{-0.2em}\frac{1}{\alpha}}$. Since $\beta^{t}$ is decreasing with $t$ for $\beta<1$, the coefficients sequence $\theta_{t,i}$ obtains its minimum at the end of each epoch, when $t\hspace{-0.5em}\mod\hspace{-0.2em}\frac{1}{\alpha}=0$, given by $(1-\alpha)^{2/3\alpha}$. For $\alpha\leq 1/6$, it holds that $(1-\alpha)^{2/3\alpha}\geq \brac{\frac{5}{6}}^{4} \coloneqq \theta_{\min}$. Since $\theta_{t,2}$ and $\theta_{t,3}$ are simply a shift of $\theta_{t,1}$ by $1/3\alpha$ and $2/3\alpha$ rounds, respectively, they satisfy the same bounds. 
\end{proof}

Given the above bias statement, any robust aggregation rule has no ability to infer an unbiased momentum path, and it would arbitrarily fail as $v$ is unbounded.

\begin{figure}[t]
    \centering    \includegraphics[width=0.8\linewidth]{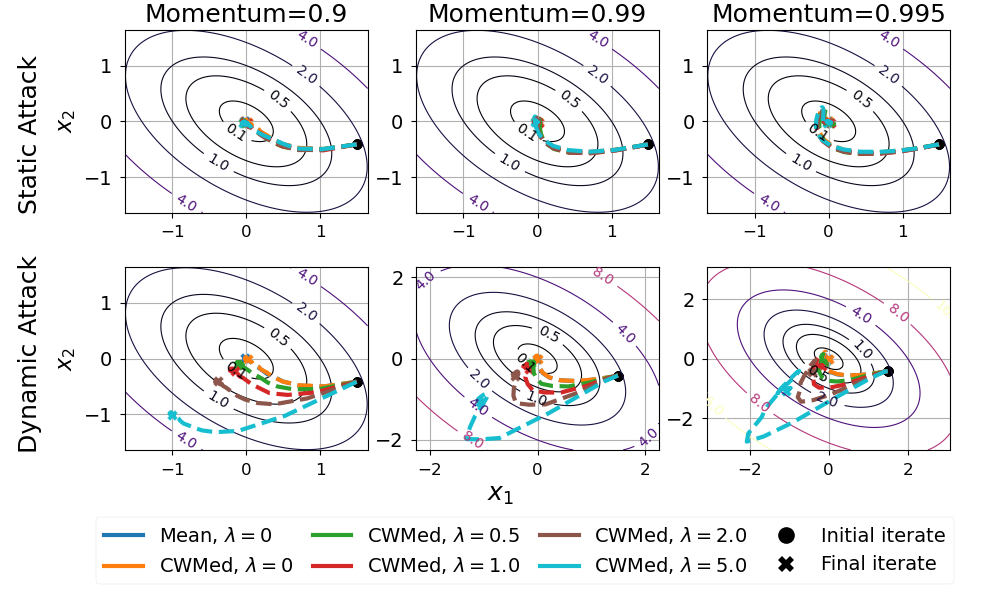}
    \vspace{-3mm}
    \caption{Optimization trajectories under static (\textbf{top}) and dynamic (\textbf{bottom}) attacks for a range of momentum parameters and for different attack strengths ($\lambda=0, 0.5, 1, 2, 5$). Note that under the dynamic attack, the algorithm converges to a sub-optimal solution.}
    \label{fig:opt_path}
\end{figure}

We provide an empirical evidence 
to demonstrate our observation using a simple 2D quadratic example. Consider the function $f(x) = \frac{1}{2}x^T A x$ with $x\in\reals^{2}$ and $A$ is the matrix $\begin{bmatrix}
    2 & 1 \\
    1 & 2 
\end{bmatrix}$. In our attack setup, each worker ($i=1,2,3$) employs momentum-SGD. The honest gradient oracle for each worker is defined as $g_{t,i}=\nabla_t + n_{t,i}$, where $n_{t,i}\overset{i.i.d}{\sim}\mathcal{N}(0,\sigma^2 I)$ with $\sigma=0.5$. We set the attack vector to $v=\lambda\cdot \begin{bmatrix}
    1 & 1
\end{bmatrix}^\top$, and examine various values of $\lambda\in\cbrac{0, 0.5, 1, 2, 5}$ ($\lambda=0$ corresponds to the Byzantine-free setting). At the server level, we process the worker-momentums using either simple averaging (\textsf{Mean}) or coordinate-wise median (\textsf{CWMed}). The aggregated momentum $\widehat{m}_{t}$ is then used in the update rule: $x_{t+1} = x_{t} - \eta \widehat{m}_t$ with a learning rate $\eta = 5\cdot 10^{-3}$ over $T=3000$ rounds. We experiment with various momentum parameters 
$\beta$ from $\cbrac{0.9, 0.99, 0.995}$, corresponding to $\alpha$ values of $\cbrac{0.1. 0.01, 0.005}$, and repeat each experiment with $20$ different random seeds. Note that these values correspond to $\abs{\badrounds}= 999, 90, 45$ rounds with Byzantine identity changes. Additionally, we include a 'static attack' scenario where only the first worker is consistently Byzantine, using $\tilde{g}_{t,1} = g_{t,1} + v$ throughout all rounds.



In \Cref{fig:err}, we illustrate the optimality gap during the training process. Notably, in the presence of a dynamic attack (where $\lambda > 0$), we observe that the error plateaus at a sub-optimal level for all values of the momentum parameter. Furthermore, there is a clear trend that shows an increase in the final error magnitude in direct proportion to the strength of the attack (as $\lambda$ increases). This trend is distinct from what we observe under a static attack, where such a correlation between the attack strength and final error is not apparent.

Correspondingly, in \Cref{fig:opt_path}, we present a representative example showcasing the optimization paths under the influence of the static and dynamic attacks, with various momentum parameters. The trajectories visibly diverge towards sub-optimal points under dynamic attacks, with the divergence growing as the attack strength, i.e., $\lambda$, is increased. In contrast, the static attack scenarios reveal paths that remain relatively stable despite changes in attack strength. This visual illustration underscores the possible failure of the worker-momentum approach under dynamic Byzantine attack.




\section{Properties of the MFM Aggregator}\label{app:mfm_prop}
In this section, we establish the properties of the MFM aggregator, crucial for our analysis of \Cref{sec:adaptivity}. Additionally, in \Cref{subapp:mfm_is_not_delta_kappa_robust}, we demonstrate that MFM does not meet the $(\delta, \kappa)$-robustness criteria. 

We assume the gradient noise is bounded (\Cref{assump:bounded-noise}) and consider the MFM aggregator with inputs $\widebar{g}_1^{\batchsize}, \ldots, \widebar{g}_m^{\batchsize}$ as in \Cref{lem:robust_agg_is_lmgo} and threshold parameter set to $\T_p^{\batchsize}= 2C_p\V/\sqrt{N}$, where $C_p\coloneqq\sqrt{8\log\!\brac{2m/p}}$.

Initially, we introduce the following event, under which we derive valuable insights regarding \Cref{alg:mfm}.
\begin{equation}\label{eq:eventB}
    \B = \cbrac{\forall i\in\G : \norm{\widebar{g}_i^{\batchsize} - \nabla f(x)} \leq \frac{\T_p^{\batchsize}}{4} = \V\sqrt{\frac{2\log{\brac{2m/p}}}{\batchsize}}}\; .
\end{equation}

Here, we analyze \Cref{alg:mfm} assuming the honest workers are fixed when computing mini-batches. 

First, we show that $\B$ is satisfied with high probability.
\begin{lemma}\label{lem:B_has_high_probability}
    For every $x\in\reals$, it holds that $\prob(\B)\geq 1-p$.
\end{lemma}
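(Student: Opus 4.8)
The plan is to reduce the claim to a per-worker concentration bound followed by a union bound over the honest workers. Fix $x$. For each honest worker $i\in\G$, the estimator $\widebar{g}_i^{N}=\frac{1}{N}\sum_{n=1}^{N}\nabla F(x;\xi_i^n)$ is an average of $N$ i.i.d.\ samples, and by \Cref{assump:bounded-noise} the centered summands $\nabla F(x;\xi_i^n)-\nabla f(x)$ are zero-mean with norm at most $\V$. Hence $\widebar{g}_i^{N}-\nabla f(x)$ is an average of $N$ i.i.d.\ bounded zero-mean vectors, which is exactly the situation handled by the concentration argument already used in \Cref{lem:core_lemma_general} (see \Cref{lem:concentration} and \eqref{eq:concentration_g_i_bar}).

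First I would invoke that vector-valued concentration bound: for any $\tilde{p}\in(0,1)$, with probability at least $1-\tilde{p}$,
\[
    \norm{\widebar{g}_i^{N}-\nabla f(x)}\leq \V\sqrt{\frac{2\log(2/\tilde{p})}{N}}\; .
\]
Choosing $\tilde{p}=p/m$ makes the right-hand side equal to $\V\sqrt{\tfrac{2\log(2m/p)}{N}}=\T_p^{N}/4$, so each fixed honest worker $i$ satisfies the event appearing in \eqref{eq:eventB} with probability at least $1-p/m$.

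Next I would apply a union bound over the $\abs{\G}\leq m$ honest workers: $\B$ fails only if some $i\in\G$ violates its individual bound, an event of probability at most $\abs{\G}\cdot p/m\leq p$. Therefore $\prob(\B)\geq 1-p$, which is the claim.

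I do not anticipate a genuine obstacle here; the only point needing care is to appeal to a concentration inequality valid for $\reals^d$-valued (equivalently, Hilbert-space-valued) bounded random vectors — the same one underlying \Cref{lem:core_lemma_general} — rather than a scalar Hoeffding bound. Once that tool is in place, the statement follows from a one-line union bound, and the constant bookkeeping (the factor $2m/p$ inside the logarithm matching $\T_p^{N}/4$) is immediate from the choice $\tilde{p}=p/m$.
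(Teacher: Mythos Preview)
Your proposal is correct and follows essentially the same approach as the paper: invoke the vector-valued concentration bound of \Cref{lem:concentration} for each honest worker with $\tilde{p}=p/m$, then union-bound over $\abs{\G}\leq m$ workers. The constant bookkeeping matching $\T_p^{N}/4$ is exactly as in the paper's argument.
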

\begin{proof}
    Since for every $i\in\G$, $\widebar{g}_i^{\batchsize}=\frac{1}{\batchsize}\sum_{n=1}^{\batchsize}{\nabla F(x; \xi_{i}^{n})}$, where $\norm{\nabla F(x; \xi_i^{n}) - \nabla f(x)}\leq \V$, utilizing \Cref{lem:concentration} implies that with probability at least $1-\tilde{p}$, we have $\norm{\widebar{g}_i^{\batchsize} - \nabla f(x)} \leq \V\sqrt{\frac{2\log\brac{2/\tilde{p}}}{\batchsize}}$. Employing the union bound and using $\abs{\G}\leq m$ establishes the result.
\end{proof}

Next, we establish some results assuming $\B$ holds.

\begin{lemma}\label{lem:mfm_helper}
    Under the event $\B$, the following holds:
    \begin{enumerate}
        \item The set $\M$ is not empty.
        \item $\G\subseteq\widehat{\G}$.
        \item For every $i\in\widehat{\G}$, we have $\norm{\widebar{g}_i^{\batchsize} - \nabla f(x)}\leq 2\T_p^{\batchsize}$.
    \end{enumerate}
\end{lemma}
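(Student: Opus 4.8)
All three claims follow from repeated use of the triangle inequality together with one pigeonhole observation: any two subsets of $\sbrac{m}$ each of size strictly greater than $m/2$ must intersect. Throughout I work under the event $\B$, so that $\norm{\widebar{g}_i^{\batchsize} - \nabla f(x)}\leq\T_p^{\batchsize}/4$ for every $i\in\G$, and I recall $\abs{\G}>m/2$ since $\delta<1/2$.

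For the first claim, I would fix an arbitrary honest worker $i\in\G$ and show $i\in\M$. For any other honest $j\in\G$, the triangle inequality gives $\norm{\widebar{g}_j^{\batchsize} - \widebar{g}_i^{\batchsize}}\leq\norm{\widebar{g}_j^{\batchsize} - \nabla f(x)} + \norm{\widebar{g}_i^{\batchsize} - \nabla f(x)}\leq\T_p^{\batchsize}/4 + \T_p^{\batchsize}/4 = \T_p^{\batchsize}/2$. Hence $\G\subseteq\cbrac{j\in\sbrac{m}:\norm{\widebar{g}_j^{\batchsize}-\widebar{g}_i^{\batchsize}}\leq\T_p^{\batchsize}/2}$, and since $\abs{\G}>m/2$ the defining condition of $\M$ is met, so $i\in\M$ and in particular $\M\neq\emptyset$.

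For the second claim, the key step is to control $\norm{g_{\text{med}} - \nabla f(x)}$. By construction $g_{\text{med}}=\widebar{g}_{i_0}^{\batchsize}$ for some $i_0\in\M$, so the set $S\coloneqq\cbrac{j\in\sbrac{m}:\norm{\widebar{g}_j^{\batchsize}-g_{\text{med}}}\leq\T_p^{\batchsize}/2}$ has size $>m/2$. Since $\abs{\G}>m/2$ as well, the pigeonhole observation yields some $j_0\in S\cap\G$, and then $\norm{g_{\text{med}} - \nabla f(x)}\leq\norm{g_{\text{med}}-\widebar{g}_{j_0}^{\batchsize}} + \norm{\widebar{g}_{j_0}^{\batchsize}-\nabla f(x)}\leq\T_p^{\batchsize}/2 + \T_p^{\batchsize}/4 = 3\T_p^{\batchsize}/4$. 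Consequently, for any $i\in\G$ we get $\norm{\widebar{g}_i^{\batchsize} - g_{\text{med}}}\leq\norm{\widebar{g}_i^{\batchsize}-\nabla f(x)} + \norm{\nabla f(x) - g_{\text{med}}}\leq\T_p^{\batchsize}/4 + 3\T_p^{\batchsize}/4 = \T_p^{\batchsize}$, which is exactly the membership condition for $\widehat{\G}$; hence $\G\subseteq\widehat{\G}$.

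The third claim is then immediate: for $i\in\widehat{\G}$ we have $\norm{\widebar{g}_i^{\batchsize} - g_{\text{med}}}\leq\T_p^{\batchsize}$ by definition, and combining with the bound $\norm{g_{\text{med}} - \nabla f(x)}\leq 3\T_p^{\batchsize}/4$ established above gives $\norm{\widebar{g}_i^{\batchsize} - \nabla f(x)}\leq\T_p^{\batchsize} + 3\T_p^{\batchsize}/4 = 7\T_p^{\batchsize}/4\leq 2\T_p^{\batchsize}$. There is no real obstacle here beyond bookkeeping; the only genuine idea is the majority-intersection argument used to locate an honest worker near $g_{\text{med}}$, and one should just be careful that the constant $3\T_p^{\batchsize}/4$ is propagated consistently so that the final bound is $\leq 2\T_p^{\batchsize}$ (with room to spare).
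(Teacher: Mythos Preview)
Your proposal is correct and follows essentially the same approach as the paper: both arguments use the triangle inequality together with the fact that $\abs{\G}>m/2$ to obtain $\norm{g_{\text{med}}-\nabla f(x)}\leq 3\T_p^{\batchsize}/4$, after which parts 2 and 3 follow identically. The only cosmetic difference is that the paper phrases the bound on $\norm{g_{\text{med}}-\nabla f(x)}$ as a proof by contradiction, whereas you give the equivalent direct pigeonhole argument that the $\T_p^{\batchsize}/2$-neighborhood of $g_{\text{med}}$ must intersect $\G$.
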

\begin{proof}
    Following the proof of Claim 3.4 in \citet{alistarh2018byzantine}: under the event $\B$, for every $i,j\in\G$, we have by the triangle inequality that $\norm{\widebar{g}_i^{\batchsize} - \widebar{g}_j^{\batchsize}}\leq \norm{\widebar{g}_i^{\batchsize} - \nabla f(x)} + \norm{\widebar{g}_j^{\batchsize} - \nabla f(x)}\leq \nicefrac{\T_p^{\batchsize}}{2}$. Since $\delta<\nicefrac{1}{2}$, every $i\in\G$ is also in $\M$, namely, $\G\subseteq\M$, which concludes the first part. 
    
    For the second and third parts, we first show that $\norm{g_{\text{med}} - \nabla f(x)}\leq \nicefrac{3\T_p^{\batchsize}}{4}$; assuming $\norm{g_{\text{med}} - \nabla f(x)} > \nicefrac{3\T_p^{\batchsize}}{4}$, we get by the triangle inequality that $\norm{g_{\text{med}} - \widebar{g}_{i}^{\batchsize}} > \nicefrac{\T_p^{\batchsize}}{2}$ for every $i\in\G$, thus contradicting the definition of $g_{\text{med}}$ (chosen from $\M$) as $\abs{\G}>\nicefrac{m}{2}$. 

    To prove the second part, fix some $i\in\G$. By the triangle inequality: $\norm{\widebar{g}_i^{\batchsize} - g_{\text{med}}} \leq \norm{\widebar{g}_i^{\batchsize} - \nabla f(x)} + \norm{g_{\text{med}} - \nabla f(x)} \leq \nicefrac{\T_p^{\batchsize}}{4} + \nicefrac{3\T_p^{\batchsize}}{4} = \T_p^{\batchsize}$, where the bound on $\norm{\widebar{g}_i^{\batchsize} - \nabla f(x)}$ follows from the definition of $\B$. This bound implies that $i\in\widehat{\G}$ as well, concluding the second part.
    
    Finally, note that for any $i\in\widehat{\G}$, the triangle inequality implies $\norm{\widebar{g}_i^{\batchsize} - \nabla f(x)} \leq \norm{\widebar{g}_i^{\batchsize} - g_{\text{med}}} + \norm{g_{\text{med}} - \nabla f(x)} \leq \T_p^{\batchsize} + \nicefrac{3\T_p^{\batchsize}}{4} \leq 2\T_p^{\batchsize}$, where $\norm{\widebar{g}_i^{\batchsize} - g_{\text{med}}}\leq \T_p^{\batchsize}$ holds for any $i\in\widehat{\G}$ as established in the previous part.
\end{proof}

With these insights, we now prove \Cref{lem:core_lemma_mfm}, which, similarly to \Cref{lem:core_lemma_general}, provides deterministic and high-probability bounds on the aggregation error.
\begin{lemma}\label{lem:core_lemma_mfm}
    Consider the setting in \Cref{lem:core_lemma_general} and let $\widehat{g}^{\batchsize}\gets\text{MFM}(\widebar{g}_{1}^{\batchsize}, \ldots, \widebar{g}_{m}^{\batchsize}; \T_p^{\batchsize})$ be the output of \Cref{alg:mfm} with $\T_p^{\batchsize}\coloneqq 2C_p\V/\sqrt{N}$, where $C_p\coloneqq \sqrt{8\log\!\brac{2m/p}}$. Then,
    \begin{enumerate}
        \item $\norm{\widehat{g}^{\batchsize} - \nabla f(x)}^2\leq \frac{9}{2}(\T_p^{\batchsize})^2 + 2\V^2 +  \norm{\nabla f(x)}^2 \leq  20C_p^2\V^2 + \norm{\nabla f(x)}^2$.
        \item With probability at least $1-p$,
        \[
            \norm{\widehat{g}^{\batchsize} - \nabla f(x)}^2 \leq \frac{2C_p^2\V^2}{\batchsize}\brac{\frac{1}{m} + 128\delta^2}\; .
        \]
    \end{enumerate}
\end{lemma}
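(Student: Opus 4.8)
The plan is to argue by a case split on whether the candidate set $\M$ computed by \Cref{alg:mfm} is empty, and in the non-empty case to combine a ``two majorities must intersect'' argument with the \emph{deterministic} boundedness of honest mini-batch gradients under \Cref{assump:bounded-noise}. Almost all the structural work is already available: \Cref{lem:mfm_helper} pins down $\M$, $\widehat{\G}$, and $g_{\text{med}}$ on the event $\B$ of \Cref{eq:eventB}, while \Cref{lem:B_has_high_probability} gives $\prob(\B)\geq 1-p$ for the chosen threshold $\T_p^{\batchsize}=2C_p\V/\sqrt{\batchsize}$. So item~1 is a deterministic worst-case bound that must also cover the pathological $\M=\emptyset$ branch, and item~2 is a refinement that holds on $\B$ together with a fresh concentration estimate.

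For item~1, if $\M=\emptyset$ then \Cref{alg:mfm} outputs $\widehat{g}^{\batchsize}=0$, so the error is exactly $\norm{\nabla f(x)}^2$. If $\M\neq\emptyset$, pick $i^*$ with $g_{\text{med}}=\widebar{g}_{i^*}^{\batchsize}$; by definition of $\M$ more than $m/2$ indices $j$ satisfy $\norm{\widebar{g}_j^{\batchsize}-g_{\text{med}}}\leq\T_p^{\batchsize}/2$, and since $\abs{\G}=(1-\delta)m>m/2$ these two majorities intersect, producing an honest index $i_0\in\G$ with $\norm{\widebar{g}_{i_0}^{\batchsize}-g_{\text{med}}}\leq\T_p^{\batchsize}/2$. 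As $\widebar{g}_{i_0}^{\batchsize}$ is an average of $\batchsize$ samples each within $\V$ of $\nabla f(x)$, the triangle inequality gives $\norm{g_{\text{med}}-\nabla f(x)}\leq\T_p^{\batchsize}/2+\V$, hence $\norm{\widebar{g}_i^{\batchsize}-\nabla f(x)}\leq\tfrac32\T_p^{\batchsize}+\V$ for every $i\in\widehat{\G}$ (each is within $\T_p^{\batchsize}$ of $g_{\text{med}}$). Averaging over $\widehat{\G}$ and using $\norm{a+b}^2\leq 2\norm{a}^2+2\norm{b}^2$ yields $\norm{\widehat{g}^{\batchsize}-\nabla f(x)}^2\leq\tfrac92(\T_p^{\batchsize})^2+2\V^2$; combining both cases and plugging $(\T_p^{\batchsize})^2=4C_p^2\V^2/\batchsize\leq 4C_p^2\V^2$ together with $2\V^2\leq 2C_p^2\V^2$ (as $C_p^2\geq 1$) gives $20C_p^2\V^2+\norm{\nabla f(x)}^2$.

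For item~2, I would condition on $\B$ together with a Hoeffding-type event $\C=\{\norm{\widebar{g}^{\batchsize}-\nabla f(x)}\leq\V\sqrt{2\log(2/\tilde p)/(\batchsize\abs{\G})}\}$ for the honest empirical mean $\widebar{g}^{\batchsize}\coloneqq\frac{1}{\abs{\G}}\sum_{i\in\G}\widebar{g}_i^{\batchsize}$, which follows from \Cref{lem:concentration} since $\widebar{g}^{\batchsize}$ is an average of $\batchsize\abs{\G}$ i.i.d.\ bounded-noise samples; tuning $\tilde p$ so that $\B\cap\C$ holds with probability at least $1-p$ only perturbs the constant inside $C_p$ by an $\O(1)$ factor, absorbed into the final constant. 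On $\B$, \Cref{lem:mfm_helper} gives $\M\neq\emptyset$, $\G\subseteq\widehat{\G}$, and $\norm{\widebar{g}_i^{\batchsize}-\nabla f(x)}\leq 2\T_p^{\batchsize}$ for all $i\in\widehat{\G}$. Using $\G\subseteq\widehat{\G}$ I split $\widehat{g}^{\batchsize}-\nabla f(x)=\tfrac{\abs{\G}}{\abs{\widehat{\G}}}(\widebar{g}^{\batchsize}-\nabla f(x))+\tfrac{1}{\abs{\widehat{\G}}}\sum_{i\in\widehat{\G}\setminus\G}(\widebar{g}_i^{\batchsize}-\nabla f(x))$: the first term is controlled by $\norm{\widebar{g}^{\batchsize}-\nabla f(x)}\in\Otilde(\V/\sqrt{\batchsize m})$ on $\C$ (this is the source of the $1/m$), and for the second term $\abs{\widehat{\G}\setminus\G}/\abs{\widehat{\G}}=1-\abs{\G}/\abs{\widehat{\G}}\leq 1-\abs{\G}/m=\delta$ (as $\abs{\widehat{\G}}\leq m$) with each summand of norm $\leq 2\T_p^{\batchsize}$, contributing $\O(\delta\T_p^{\batchsize})$. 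Squaring via $\norm{a+b}^2\leq 2\norm{a}^2+2\norm{b}^2$ and substituting $(\T_p^{\batchsize})^2=4C_p^2\V^2/\batchsize$ gives a bound of the form $\tfrac{2C_p^2\V^2}{\batchsize}\bigl(\tfrac1m+\O(\delta^2)\bigr)$, where the coarse book-keeping in the probability budget inflates the $\O(\delta^2)$ to the stated $128\delta^2$.

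The main obstacle is item~2, and specifically obtaining the $\tfrac1m$ factor: it cannot be extracted from the per-worker bounds that define $\B$ (those only yield $\norm{\widebar{g}^{\batchsize}-\nabla f(x)}\leq\T_p^{\batchsize}/4$, with no $m$-dependence), so a genuinely separate concentration bound on the full honest average $\widebar{g}^{\batchsize}$ is required, and one must carefully allocate the failure probability between $\B$ and $\C$ -- the extra slack in the $\delta^2$ constant is the price of this. A secondary point to get right is that the clean decomposition $\widehat{\G}=\G\sqcup(\widehat{\G}\setminus\G)$ is legitimate only because $\G\subseteq\widehat{\G}$ on $\B$ (exactly \Cref{lem:mfm_helper}, item~2), and that at most a $\delta$-fraction of $\widehat{\G}$ consists of surviving Byzantine workers, which is what converts the crude $2\T_p^{\batchsize}$ per-worker bound into the $\delta^2$-scaled term.
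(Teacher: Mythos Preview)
Your proposal is correct and follows essentially the same route as the paper: the deterministic item~1 via the majority-intersection argument giving $\norm{g_{\text{med}}-\nabla f(x)}\leq\T_p^{\batchsize}/2+\V$, and item~2 via the decomposition $\widehat{g}^{\batchsize}-\nabla f(x)=\tfrac{\abs{\G}}{\abs{\widehat{\G}}}(\widebar{g}^{\batchsize}-\nabla f(x))+\tfrac{1}{\abs{\widehat{\G}}}\sum_{i\in\widehat{\G}\setminus\G}(\widebar{g}_i^{\batchsize}-\nabla f(x))$ on the event $\B$, combined with a separate concentration bound on $\widebar{g}^{\batchsize}$ to extract the $1/m$. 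The only cosmetic differences are that the paper bounds $\abs{\widehat{\G}\setminus\G}/\abs{\widehat{\G}}\leq 2\delta$ via $\abs{\widehat{\G}}\geq m/2$ (whereas your $\abs{\widehat{\G}}\leq m$ argument gives the tighter $\delta$), and it handles the probability budget by proving the bound with probability $1-2p$ and then substituting $p\mapsto p/2$ (invoking \Cref{lem:3events}), rather than pre-allocating $\tilde p$ as you do.
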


\begin{proof}
Denote: $\nabla\coloneqq \nabla f(x)$. To prove the first part, we consider two cases: when $\M$ is either empty or non-empty. If $\M=\emptyset$, then $\widehat{g}^{\batchsize} = 0$, leading to $\norm{\widehat{g}^{\batchsize} - \nabla}^2 = \norm{\nabla}^2$. In the case where $\M$ is non-empty, the triangle inequality implies that
\[
    \norm{\widehat{g}^{\batchsize} - \nabla} = \norm{\frac{1}{\abs{\widehat{\G}}}\sum_{i\in\widehat{\G}}{\brac{\widebar{g}_i^{\batchsize} - \nabla}}} \leq \frac{1}{\abs{\widehat{\G}}}\sum_{i\in\widehat{\G}}{\norm{g_i^{\batchsize} - \nabla}}\; .
\]
Since $g_{\text{med}} = \widebar{g}_i^{\batchsize}$ for some $i\in\M$, by the definition of $\M$, there are more than $\nicefrac{m}{2}$ machines whose distance from $g_{\text{med}}$ is bounded by $\nicefrac{\T_p^{\batchsize}}{2}$. Since there are at most $\delta m<\nicefrac{m}{2}$ Byzantine workers, it implies that at least one of the above workers is good; denote one such worker by $\ell\in\G$. Thus, for every $i\in\widehat{\G}$ we can bound,
\begin{align*}
    \norm{\widebar{g}_i^{\batchsize} - \nabla} &\leq \underbrace{\norm{\widebar{g}_i^{\batchsize} - g_{\text{med}}}}_{\leq\T_p^{\batchsize}} + \underbrace{\norm{g_{\text{med}} - \widebar{g}_{\ell}^{\batchsize}}}_{\leq\nicefrac{\T_p^{\batchsize}}{2}} + \underbrace{\norm{\widebar{g}_{\ell}^{\batchsize} - \nabla}}_{\leq\V} \leq \frac{3\T_p^{\batchsize}}{2} + \V\; ,
\end{align*}
where the first bound is by the definition of $\widehat{\G}$, the second bound is a consequence of the chosen $\ell$ (and the definition of $g_{\text{med}}$), and the final bound is trivial for any good worker. Since this bound holds for any $i\in\widehat{\G}$, it also holds for $\widehat{g}^{\batchsize}$. Thus, using $(a+b)^2\leq 2a^2 + 2b^2$, we have
\begin{align*}
    \norm{\widehat{g}^{\batchsize} - \nabla}^2 &\leq 2\brac{\frac{9}{4}(\T_p^{\batchsize})^2 + \V^2} = \frac{9}{2}(\T_p^{\batchsize})^2 + 2\V^2\; .
\end{align*}
Overall, in any case, we have that
\begin{align*}
    \norm{\widehat{g}^{\batchsize} - \nabla}^2 \leq \max\cbrac{\norm{\nabla}^2, \frac{9}{2}(\T_p^{\batchsize})^2 + 2\V^2} &\leq \frac{9}{2}(\T_p^{\batchsize})^2 + 2\V^2 + \norm{\nabla}^2 \\ &= \frac{9}{2}\cdot\frac{4C_p^2\V^2}{\batchsize} + 2\V^2 + \norm{\nabla}^2 \\ &\leq 20C_p^2\V^2 + \norm{\nabla}^2\; ,
\end{align*}
where we used $C_p^2\geq 1$. This concludes the first part. 

For the second part, we denote the average of honest workers by $\widebar{g}^{\batchsize} \coloneqq \frac{1}{\abs{\G}}\sum_{i\in\G}{\widebar{g}_i^{\batchsize}}$ and define the following events:
\begin{align*}
    &\widehat{\C}\coloneqq\cbrac{\norm{\widehat{g}^{\batchsize} - \nabla}\leq 2\V\sqrt{\frac{\log{\brac{2/p}}}{m \batchsize}} + 4\delta\T_p^{\batchsize}}, \\
    &\widebar{\C}\coloneqq \cbrac{\norm{\widebar{g}^{\batchsize} - \nabla}\leq 2\V\sqrt{\frac{\log{\brac{2/p}}}{m \batchsize}}}\; .
\end{align*}
Our objective is to establish that $\widehat{\C}$ holds with probability at least $1-2p$. Recall that under the event $\B$ (\Cref{eq:eventB}), the set $\M$ is non-empty (item 1 of \Cref{lem:mfm_helper}), implying $\widehat{g}^{\batchsize} = \frac{1}{\abs{\widehat{\G}}}\sum_{i\in\widehat{\G}}{\widebar{g}_i^{\batchsize}}$, and $\G\subseteq\widehat{\G}$. Hence, under the event $\B$, we have that 
\begin{align}\label{eq:mfm_hp_helper}
    \norm{\widehat{g}^{\batchsize} - \nabla} = \norm{\frac{1}{\abs{\widehat{\G}}}\sum_{i\in\widehat{\G}}{({\widebar{g}_i^{\batchsize}} - \nabla})} &= \norm{\frac{1}{\abs{\widehat\G}}\sum_{i\in\G}{(\widebar{g}_i^{\batchsize} - \nabla)} + \frac{1}{\abs{\widehat{\G}}}\sum_{i\in\widehat{\G}\setminus\G}{(\widebar{g}_i^{\batchsize} - \nabla)}} \nonumber \\ &\leq \frac{\abs{\G}}{\abs{\widehat{\G}}}\norm{\widebar{g}^{\batchsize} - \nabla} + \frac{1}{\abs{\widehat{\G}}}\sum_{i\in\widehat{\G}\setminus\G}{\norm{\widebar{g}_i^{\batchsize} - \nabla}} \nonumber\\ &\leq \norm{\widebar{g}^{\batchsize} - \nabla} + \frac{2}{m}\abs{\widehat{\G}\setminus\G}\cdot 2\T_p^{\batchsize} \nonumber \\ &\leq \norm{\widebar{g}^{\batchsize} - \nabla} + 4\delta\T_p^{\batchsize}\; ,
\end{align}

where the first inequality follows from the triangle inequality and the fact that $\sum_{i\in\G}{(\widebar{g}_i^{\batchsize} - \nabla)} = \abs{\G}(\widebar{g}^{\batchsize} - \nabla)$; the second inequality is
due to $\abs{\widehat{\G}} \geq \abs{\G} \geq\nicefrac{m}{2}$ and item 3 of \Cref{lem:mfm_helper}, namely, $\lVert{\widebar{g}_i^{\batchsize} - \nabla\rVert}\leq 2\T_p^{\batchsize}$ for all $i\in\widehat{\G}$; and the last inequality follows from $\abs{\widehat{\G}\setminus\G}\leq \abs{\sbrac{m}\setminus\G}=\delta m$. 

Based on \Cref{eq:mfm_hp_helper}, we infer that, under the event $\B$, if $\widebar{\C}$ occurs, then so does $\widehat{\C}$, implying $\prob(\widehat{\C}|\B)\geq \prob(\widebar{\C}|\B)$. Furthermore, we have by \Cref{lem:B_has_high_probability} that $\prob(\B) \geq 1-p > 0$. Combining these properties and utilizing \Cref{lem:3events} yields: 
\begin{equation}\label{eq:bound_prob_C}
    \prob(\widehat{\C}) \geq\prob(\widebar{\C}) - \prob(\B^c) \geq \prob(\widebar{\C}) - p\; .
\end{equation}
By \Cref{lem:concentration}, it holds with probability at least $1-p$ that 
\[
    \norm{\widebar{g}^{\batchsize} - \nabla} = \frac{1}{\abs{\G}\batchsize}\norm{\sum_{i\in\G}{\sum_{n\in\sbrac{\batchsize}}{\brac{\nabla F(x; \xi_{i}^{n}) - \nabla}}}} \leq \V\sqrt{\frac{2\log{\brac{2/p}}}{\abs{\G}\batchsize}} \leq 2\V\sqrt{\frac{\log{\brac{2/p}}}{m\batchsize}}\; .
\]
In other words, $\prob(\widebar{\C})\geq 1-p$, indicating, as per \Cref{eq:bound_prob_C}, that with a probability of at least $1-2p$,
\begin{align*}
    \norm{\widehat{g}^{\batchsize} - \nabla} &\leq 2\V\sqrt{\frac{\log{\brac{2/p}}}{m \batchsize}} + 4\delta\T_p^{\batchsize} \!=\! 2\V\sqrt{\frac{\log{\brac{2/p}}}{m \batchsize}} + 16\delta\V\sqrt{\frac{2\log\brac{2m/p}}{\batchsize}} \!=\! 2\V\sqrt{\frac{\log\brac{2m/p}}{\batchsize}}\brac{\frac{1}{\sqrt{m}} + 8\sqrt{2}\delta}\;.
\end{align*}
This result finally implies that with probability at least $1-p$,
\begin{align*}
    \norm{\widehat{g}^{\batchsize} - \nabla}^2 &\leq \frac{4\V^2\log\brac{4m/p}}{\batchsize}\brac{\frac{1}{\sqrt{m}} + 8\sqrt{2}\delta}^2 \leq \frac{8\V^2\log\brac{4m/p}}{\batchsize}\brac{\frac{1}{m} + 128\delta^2}\leq \frac{2C_p^2\V^2}{\batchsize}\brac{\frac{1}{m} + 128\delta^2}\; ,
\end{align*}
where we used $(a+b)^2\leq 2(a^2+b^2)$ and $8\log\brac{4m/p}\leq 16\log\brac{4m/p} = 2C_p^2$.
\end{proof}
Combining the bounds established in \Cref{lem:core_lemma_mfm}, we can derive an upper bound on the expected (squared) aggregation error, mirroring \Cref{cor:mse_bound_general}. 

\begin{corollary}[MSE of Aggregated Gradients]\label{cor:mse_bound_mfm}
    Consider the setting in \Cref{lem:core_lemma_general}, let $\widehat{g}^{\batchsize}\gets\text{MFM}(\widebar{g}_{1}^{\batchsize}, \ldots, \widebar{g}_{m}^{\batchsize}; \T^{\batchsize})$ be the output of \Cref{alg:mfm} with $\T^{\batchsize}\coloneqq 2C\V/\sqrt{N} = 4\V\sqrt{2\log{\brac{16m^2 T}}/\batchsize}$, and assume that $N\leq T$. Then,
    \begin{equation*}
        \E\!\norm{\widehat{g}^{\batchsize} - \nabla f(x)}^2 \leq \frac{\tilde{C}^2\V^2\tilde{\gamma}}{\batchsize}+ \frac{\norm{\nabla f(x)}^2}{8mT}\; ,
    \end{equation*}
    where $\tilde{C}^2\coloneqq 8C^2 = 64\log\!\brac{16m^2 T}$ and $\tilde{\gamma}\coloneqq \frac{1}{m} + 32\delta^2$.
\end{corollary}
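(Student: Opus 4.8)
The plan is to combine the two bounds from \Cref{lem:core_lemma_mfm} via the law of total expectation, exactly as in the proof of \Cref{cor:mse_bound_general}, but being careful about the extra $\norm{\nabla f(x)}^2$ term that appears because MFM may output the zero vector. First I would fix the probability parameter $p=\frac{1}{4mT}$ in \Cref{lem:core_lemma_mfm}, so that the threshold $\T_p^{\batchsize}=2C_p\V/\sqrt{N}$ becomes $\T^{\batchsize}=2C\V/\sqrt{N}$ with $C=C_p=\sqrt{8\log(2m\cdot 4mT)}=\sqrt{8\log(8m^2T)}\le\sqrt{8\log(16m^2T)}$; I would either absorb this into the definition $C\coloneqq\sqrt{8\log(16m^2T)}$ (noting $8m^2T\le 16m^2T$, so the high-probability bound only improves) or just track it. Item 2 then gives that with probability at least $1-\frac{1}{4mT}$,
\[
    \norm{\widehat{g}^{\batchsize}-\nabla f(x)}^2\le \frac{2C^2\V^2}{\batchsize}\Big(\frac{1}{m}+128\delta^2\Big)\; .
\]
I would compare $\frac{1}{m}+128\delta^2$ with $4\tilde\gamma=4(\frac{1}{m}+32\delta^2)=\frac{4}{m}+128\delta^2$, so this quantity is at most $4\tilde\gamma$, giving the `good-event' bound $\frac{2C^2\V^2}{N}\cdot 4\tilde\gamma\le\frac{8C^2\V^2\tilde\gamma}{N}=\frac{\tilde C^2\V^2\tilde\gamma}{N}$ with $\tilde C^2=8C^2$ — matching the first term in the statement, which is reassuring.

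Next I would invoke item 1 of \Cref{lem:core_lemma_mfm} for the complementary event: deterministically, $\norm{\widehat{g}^{\batchsize}-\nabla f(x)}^2\le 20C_p^2\V^2+\norm{\nabla f(x)}^2$. Splitting the constant and gradient-dependent parts, the law of total expectation yields
\[
    \E\norm{\widehat{g}^{\batchsize}-\nabla f(x)}^2\le \frac{\tilde C^2\V^2\tilde\gamma}{N}+\big(20C^2\V^2+\norm{\nabla f(x)}^2\big)\cdot\frac{1}{4mT}\; .
\]
The term $\norm{\nabla f(x)}^2/(4mT)$ is larger than the $\norm{\nabla f(x)}^2/(8mT)$ target, so I would need to shave a factor of $2$; the clean fix is to take $p=\frac{1}{8mT}$ instead, which only changes $C$ to $\sqrt{8\log(16m^2T)}$ exactly (since $2m/p=16m^2T$) — this is presumably the actual reason the paper defines $C$ with the constant $16$. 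With $p=\frac{1}{8mT}$ the deterministic leftover contributes $\le(20C^2\V^2+\norm{\nabla f(x)}^2)/(8mT)$, and the gradient part is exactly $\norm{\nabla f(x)}^2/(8mT)$ as claimed.

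It remains to absorb the stray constant term $20C^2\V^2/(8mT)=\frac{5C^2\V^2}{2mT}$ into the main term $\frac{\tilde C^2\V^2\tilde\gamma}{N}=\frac{8C^2\V^2\tilde\gamma}{N}$. Since $\tilde\gamma\ge\frac{1}{m}$ and $N\le T$, we have $\frac{8C^2\V^2\tilde\gamma}{N}\ge\frac{8C^2\V^2}{mT}\ge\frac{5C^2\V^2}{2mT}$, so the constant term is dominated — but this doubles the coefficient, so more precisely I would keep a little slack: choosing the good-event bound as $\frac{2C^2\V^2\cdot 4\tilde\gamma}{N}$ and noting the leftover is at most $\frac{8C^2\V^2\tilde\gamma}{N}$ would overshoot $\tilde C^2=64\log(16m^2T)=8C^2$. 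The genuinely tight route is to observe that $\frac{1}{m}+128\delta^2\le 4\tilde\gamma$ is wasteful when $\delta$ is small; instead I would bound the good-event term by $\frac{2C^2\V^2}{N}(\frac1m+128\delta^2)$ directly and the constant leftover $\frac{20C^2\V^2}{8mT}\le\frac{20C^2\V^2}{8mN}=\frac{5C^2\V^2}{2mN}$, then sum: $\frac{2C^2\V^2}{N}\big(\frac1m+128\delta^2\big)+\frac{5C^2\V^2}{2mN}\le\frac{C^2\V^2}{N}\big(\frac{9}{2m}+256\delta^2\big)\le \frac{C^2\V^2}{N}\cdot\frac{9}{2}\big(\frac1m+64\delta^2\big)$, and since $\tilde\gamma=\frac1m+32\delta^2\ge\frac12(\frac1m+64\delta^2)$ this is $\le 9\cdot\frac{C^2\V^2\tilde\gamma}{N}\le \frac{64C^2\V^2\tilde\gamma}{N}=\frac{\tilde C^2\V^2\tilde\gamma}{N}$, with comfortable room. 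The main obstacle is purely this constant-chasing — getting the $20C^2$ deterministic slack and the $\tilde\gamma$-vs-$(\frac1m+128\delta^2)$ mismatch to fit under $\tilde C^2=64\log(16m^2T)$ while keeping the gradient term at exactly $\frac{1}{8mT}$; there is no conceptual difficulty, since both ingredient lemmas are already proved.
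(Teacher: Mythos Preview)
Your approach is essentially identical to the paper's: set $p=\tfrac{1}{8mT}$ so that $C_p=C$, combine items 1 and 2 of \Cref{lem:core_lemma_mfm} via the law of total expectation, and use $N\le T$ to fold the constant leftover $\tfrac{5C^2\V^2}{2mT}$ into the main term, arriving at $\tfrac{C^2\V^2}{N}\big(\tfrac{9}{2m}+256\delta^2\big)+\tfrac{\norm{\nabla f(x)}^2}{8mT}$.

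The only slip is in the very last step. You read $\tilde C^2=64\log(16m^2T)$ as $\tilde C^2=64C^2$, but in fact $C^2=8\log(16m^2T)$, so $\tilde C^2=8C^2$. Your detour through $\tfrac{9}{2}\big(\tfrac1m+64\delta^2\big)\le 9\tilde\gamma$ therefore overshoots the target (you need $8\tilde\gamma$, not $9\tilde\gamma$, and there is no ``comfortable room''). The fix is to skip the detour and bound directly: $\tfrac{9}{2m}+256\delta^2\le \tfrac{8}{m}+256\delta^2=8\big(\tfrac1m+32\delta^2\big)=8\tilde\gamma$, which gives exactly $\tfrac{8C^2\V^2\tilde\gamma}{N}=\tfrac{\tilde C^2\V^2\tilde\gamma}{N}$. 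This is precisely what the paper does.
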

\begin{proof}
    Utilizing the previous lemma, we have with probability at least $1 - \frac{1}{8mT}$,
    \[
        \norm{\widehat{g}^{\batchsize} - \nabla f(x)}^2 \leq \frac{2C^2\V^2}{\batchsize}\brac{\frac{1}{m} + 128\delta^2}\; .
    \]
    Additionally, we always have that 
    \begin{align*}
        \norm{\widehat{g}^{\batchsize} - \nabla f(x)}^2 \leq 20C^2\V^2 + \norm{\nabla f(x)}^2\; .
    \end{align*}
    Thus, by the law of total expectation, we get
    \begin{align*}
        \E\!\norm{\widehat{g}^{\batchsize} - \nabla f(x)}^2 &\leq \frac{2C^2\V^2}{\batchsize}\brac{\frac{1}{m} + 128\delta^2} + \brac{20C^2\V^2 + \norm{\nabla f(x)}^2}\cdot\frac{1}{8mT} \\ &= \frac{2C^2\V^2}{\batchsize}\brac{\frac{1}{m} + 128\delta^2} +\frac{5C^2 \V^2}{2mT} + \frac{\norm{\nabla f(x)}^2}{8mT} \\ &= \frac{C^2\V^2}{\batchsize}\brac{\frac{9}{2m} + 256\delta^2} + \frac{\norm{\nabla f(x)}^2}{8mT} \\ &\leq \frac{8C^2\V^2}{\batchsize}\brac{\frac{1}{m} + 32\delta^2} + \frac{\norm{\nabla f(x)}^2}{8mT} \\ &= \frac{\tilde{C}^2 \V^2\tilde{\gamma}}{N} + \frac{\norm{\nabla f(x)}^2}{8mT}\; ,
    \end{align*}
    where the second inequality follows from the assumption that $N\leq T$.
\end{proof}

The bound in \Cref{cor:mse_bound_mfm} closely resembles that in \Cref{cor:mse_bound_general}, differing by an additional factor of $\frac{\norm{\nabla f(x)}^2}{8mT}$. This variation stems from the specific structure of the MFM aggregator, specifically due to the rare event where the set $\M$ is empty, leading to an output of $\widehat{g}=0$.

\subsection{MFM is not $(\delta, \kappa)$-robust}\label{subapp:mfm_is_not_delta_kappa_robust}
Consider \Cref{alg:mfm} with a threshold $\T>0$, and let $x\in\reals^d$. Suppose every honest worker $i\in\G$ provides the true gradient, $g_i = \nabla f(x)$, while every Byzantine worker submits $g_i = \nabla f(x) + \frac{3}{4}\T v$, for some $v\in\reals^d$ with $\norm{v}=1$. In this case, $\M$ is not empty as $\M=\G$. Since all vectors are within $\frac{3}{4}\T$ of each other, $\widehat{\G}$ contains all workers. Consequently, the aggregated gradient, $\widehat{g}$, is given by $\nabla f(x) + \frac{3}{4}\T\delta v$. Denoting the average of honest workers by $\widebar{g}=\frac{1}{\abs{\G}}\sum_{i\in\G}{g_i} = \nabla f(x)$, the above implies a nonzero aggregation error $\norm{\widehat{g} - \widebar{g}}$, while the `variance' among honest workers, $\frac{1}{\abs{\G}}\sum_{i\in\G}{\norm{g_i - \widebar{g}}^2}$, remains zero. This scenario fails to satisfy \Cref{def:byz-robustness}. 
\section{AdaGrad with Biased Gradients}\label{app:biased-adagrad}
Consider the AdaGrad-Norm~\citep{levy2017online, ward2020adagrad,faw2022power} (also known as AdaSGD,~\citealt{attia2023sgd}) update rule, defined for some parameter $\eta_0>0$ as follows:
\begin{equation}\label{eq:adagrad-norm}
    x_{t+1} = \proj{\K}{x_t - \eta_t g_t}, \quad \eta_t = \frac{\eta_0}{\sqrt{\sum_{s=1}^{t}{\norm{g_s}^2}}}\; , \tag{AdaGrad-Norm}
\end{equation}
where $g_t$ has bias $b_t\coloneqq \E[g_t - \nablat | x_t]$ and variance $V_t^2\coloneqq\E[\norm{g_t - \E g_t}^2|x_t]$.

In \Cref{subapp:convex-adagrad,subapp:nonconvex-adagrad}, convergence bounds for \eqref{eq:adagrad-norm} are deduced for convex and non-convex objectives, respectively.

\subsection{Convex Analysis}\label{subapp:convex-adagrad}
We commence with a lemma that establishes a second-order bound on the linearized regret of \eqref{eq:adagrad-norm}, essential in our convex analysis. The proof is included for completeness.

\begin{lemma}[\citealp{levy2017online}, Theorem 1.1]\label{lem:convex-adagrad-helper}
    Suppose \Cref{assump:bounded-domain} holds, i.e., the domain $\K$ is bounded with diameter $D\coloneqq \max_{x,y\in\K}{\norm{x-y}}$ and consider \eqref{eq:adagrad-norm}. Then, for every $u\in\K$, the iterates $x_1,\ldots,x_T$ satisfy:
    \[
        \sum_{t=1}^{T}{g_t^\top(x_t - u)} \leq \brac{\frac{D^2}{2\eta_0} + \eta_0}\sqrt{\sum_{t=1}^{T}{\norm{g_{t}}^2}}\; .
    \]
\end{lemma}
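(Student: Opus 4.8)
The plan is to follow the classical AdaGrad-Norm regret analysis. First I would exploit the non-expansiveness of the Euclidean projection onto the convex set $\K$: since $x_{t+1} = \proj{\K}{x_t - \eta_t g_t}$ and $u\in\K$, we have $\norm{x_{t+1} - u}^2 \le \norm{x_t - \eta_t g_t - u}^2 = \norm{x_t-u}^2 - 2\eta_t g_t^\top(x_t-u) + \eta_t^2\norm{g_t}^2$. Rearranging and dividing by $2\eta_t$ gives the per-step inequality
\[
    g_t^\top(x_t - u) \le \frac{1}{2\eta_t}\brac{\norm{x_t - u}^2 - \norm{x_{t+1}-u}^2} + \frac{\eta_t}{2}\norm{g_t}^2\; .
\]

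Next I would sum over $t\in\sbrac{T}$ and handle the two resulting sums separately. For the telescoping-type sum $\sum_{t}\frac{1}{2\eta_t}\brac{\norm{x_t-u}^2 - \norm{x_{t+1}-u}^2}$, the crucial observation is that $1/\eta_t = \eta_0^{-1}\sqrt{\sum_{s\le t}\norm{g_s}^2}$ is non-decreasing in $t$; applying summation by parts together with the diameter bound $\norm{x_t-u}^2\le D^2$ for every $t$ (this is where \Cref{assump:bounded-domain} enters), all intermediate coefficients are non-negative, the boundary term $-\frac{1}{2\eta_T}\norm{x_{T+1}-u}^2$ is dropped, and the sum collapses to $\frac{D^2}{2\eta_T} = \frac{D^2}{2\eta_0}\sqrt{\sum_{t=1}^{T}\norm{g_t}^2}$. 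For the second sum, I would invoke the elementary inequality $\sum_{t=1}^{T}\frac{a_t}{\sqrt{\sum_{s=1}^{t}a_s}}\le 2\sqrt{\sum_{t=1}^{T}a_t}$, valid for nonnegative reals (a standard fact, presumably among the technical lemmata), with $a_t=\norm{g_t}^2$, so that $\sum_t\frac{\eta_t}{2}\norm{g_t}^2 = \frac{\eta_0}{2}\sum_t\frac{\norm{g_t}^2}{\sqrt{\sum_{s\le t}\norm{g_s}^2}}\le \eta_0\sqrt{\sum_{t=1}^{T}\norm{g_t}^2}$.

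Adding the two bounds yields exactly $\brac{\frac{D^2}{2\eta_0}+\eta_0}\sqrt{\sum_{t=1}^{T}\norm{g_t}^2}$, as claimed. I do not expect a genuinely difficult step here, since this is a textbook argument; the part requiring the most care is the summation-by-parts manipulation with the adaptive, data-dependent step sizes, ensuring the monotonicity of $1/\eta_t$ is used in the right direction and the negative boundary term is correctly discarded. A minor edge case to dispatch is the degenerate situation where $g_s=0$ for all $s\le t$, which makes $\eta_t$ ill-defined; this is resolved by the usual convention that such terms contribute nothing, or equivalently by assuming $g_1\neq 0$ without loss of generality.
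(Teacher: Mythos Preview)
Your proposal is correct and follows essentially the same argument as the paper's proof: projection non-expansiveness gives the per-step inequality, summation by parts with the monotonicity of $1/\eta_t$ and the diameter bound collapses the first sum to $\frac{D^2}{2\eta_T}$, and the paper's \Cref{lem:sum_sqrt_lemma} (the inequality you anticipated) handles the second sum.
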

\begin{proof}
    For every $u\in\K$, we have that
    \begin{align*}
        \norm{x_{t+1} - u}^2 &\leq \norm{x_{t} - u}^2 - 2\eta_t g_t^\top (x_t - u) + \eta^2\norm{g_t}^2\; .
    \end{align*}
    Rearranging terms, we get:
    \begin{align*}
        g_t^\top(x_t - u) &\leq \frac{\norm{x_t - u}^2 - \norm{x_{t+1} - u}^2}{2\eta_t} + \frac{\eta_t}{2}\norm{g_t}^2\; .
    \end{align*}
    Summing over $t\in\sbrac{T}$, we then obtain:
    \begin{align*}
        \sum_{t=1}^{T}{g_t^\top(x_t - u)} &\leq \frac{\norm{x_1 -u}^2}{2\eta_1} + \sum_{t=2}^{T}{\frac{\norm{x_t - u}^2}{2}\brac{\frac{1}{\eta_t} - \frac{1}{\eta_{t-1}}}} + \frac{1}{2}\sum_{t=1}^{T}{\eta_t \norm{g_t}^2} \\ &\leq \frac{D^2}{2}\brac{\frac{1}{\eta_1} + \sum_{t=2}^{T}{\brac{\frac{1}{\eta_t} - \frac{1}{\eta_{t-1}}}}} + \frac{\eta_0}{2}\sum_{t=1}^{T}{\frac{\norm{g_t}^2}{\sqrt{\sum_{s=1}^{t}{\norm{g_{s}}^2}}}} \\ &= \frac{D^2}{2\eta_T} + \frac{\eta_0}{2}\sum_{t=1}^{T}{\frac{\norm{g_t}^2}{\sqrt{\sum_{s=1}^{t}{\norm{g_s}^2}}}} \\ &\leq \frac{D^2}{2\eta_0}\sqrt{\sum_{t=1}^{T}{\norm{g_t}^2}} + \eta_0\sqrt{\sum_{t=1}^{T}{\norm{g_t}^2}} \\ &=\brac{\frac{D^2}{2\eta_0} + \eta_0}\sqrt{\sum_{t=1}^{T}{\norm{g_{t}}^2}}\; ,
    \end{align*}
    where the second inequality uses $\norm{x_t - u}^2\leq D^2$ for every $t\in\sbrac{T}$ and $\eta_t\leq \eta_{t-1}$, and the final inequality stems from \Cref{lem:sum_sqrt_lemma}.
\end{proof}

We now establish a regret bound for AdaGrad-Norm.
\begin{lemma}\label{lem:convex-adagrad}
    Assume $f$ is convex. Under \Cref{assump:bounded-domain}, the iterates of \eqref{eq:adagrad-norm} satisfy:
    \begin{equation*}
        \E\mathcal{R}_T \leq D\sqrt{2\sum_{t\in\sbrac{T}}{\E V_t^2}} + 2D\sqrt{\sum_{t\in\sbrac{T}}{\E\!\norm{b_t}^2}} + D\sum_{t\in\sbrac{T}}{\E\!\norm{b_t}} + 2D\sqrt{\sum_{t\in\sbrac{T}}{\E\!\norm{\nablat}^2}}\; ,
    \end{equation*}
    where $\mathcal{R}_T\coloneqq\sum_{t\in\sbrac{T}}{\brac{f(x_t) - f(x^*)}}$.
\end{lemma}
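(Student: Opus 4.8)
The plan is to pass from the true regret to a \emph{linearized} regret via convexity and then feed the deterministic AdaGrad-Norm bound of \Cref{lem:convex-adagrad-helper} the realized gradients $g_t$, after splitting each $g_t$ into its conditional-mean-zero noise, its bias, and the true gradient. Writing $\E_t[\cdot]\coloneqq\E[\cdot\mid x_t]$, so that $\E_t g_t = \nablat + b_t$, the gradient inequality for convex $f$ gives $\mathcal{R}_T\le\sum_{t\in\sbrac{T}}\nablat^\top(x_t-x^*)$, and using $\nablat = g_t - (g_t-\E_t g_t) - b_t$ this becomes
\[
\mathcal{R}_T \;\le\; \sum_{t\in\sbrac{T}} g_t^\top(x_t-x^*) \;-\; \sum_{t\in\sbrac{T}}(g_t-\E_t g_t)^\top(x_t-x^*) \;-\; \sum_{t\in\sbrac{T}} b_t^\top(x_t-x^*).
\]

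Next I would take expectations and dispatch the three sums separately. The middle sum is a sum of martingale differences: $x_t$ is fixed given the randomness before round $t$, so $\E_t[(g_t-\E_t g_t)^\top(x_t-x^*)] = (x_t-x^*)^\top\E_t[g_t-\E_t g_t] = 0$, and the tower property makes its expectation vanish. For the last sum, Cauchy--Schwarz together with \Cref{assump:bounded-domain} gives $-b_t^\top(x_t-x^*)\le\norm{b_t}\norm{x_t-x^*}\le D\norm{b_t}$, contributing $D\sum_{t\in\sbrac{T}}\E\norm{b_t}$. For the first sum, \Cref{lem:convex-adagrad-helper} with $u=x^*$ bounds it pathwise by $\brac{\tfrac{D^2}{2\eta_0}+\eta_0}\sqrt{\sum_{t\in\sbrac{T}}\norm{g_t}^2}$. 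Collecting these, $\E\mathcal{R}_T \le \brac{\tfrac{D^2}{2\eta_0}+\eta_0}\,\E\sqrt{\sum_{t}\norm{g_t}^2} + D\sum_{t}\E\norm{b_t}$.

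What remains is to control $\E\sqrt{\sum_t\norm{g_t}^2}$, and this is the only delicate point, since the expectation cannot be moved through the square root termwise. I would bound $\norm{g_t}^2 \le 2\norm{\nablat+b_t}^2 + 2\norm{g_t-\E_t g_t}^2 \le 4\norm{\nablat}^2 + 4\norm{b_t}^2 + 2\norm{g_t-\E_t g_t}^2$, then use subadditivity of $a\mapsto\sqrt{a}$ over the three resulting sums to obtain $\sqrt{\sum_t\norm{g_t}^2} \le 2\sqrt{\sum_t\norm{\nablat}^2} + 2\sqrt{\sum_t\norm{b_t}^2} + \sqrt{2}\,\sqrt{\sum_t\norm{g_t-\E_t g_t}^2}$, and finally apply Jensen's inequality ($\E\sqrt{X}\le\sqrt{\E X}$) to each term together with the tower identity $\E\norm{g_t-\E_t g_t}^2 = \E V_t^2$. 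This yields $\E\sqrt{\sum_t\norm{g_t}^2} \le 2\sqrt{\sum_t\E\norm{\nablat}^2} + 2\sqrt{\sum_t\E\norm{b_t}^2} + \sqrt{2\sum_t\E V_t^2}$; substituting into the previous display gives the claimed bound, the factors $2,2,\sqrt{2}$ coming from the $\norm{g_t}^2$ decomposition and the leading constant from \Cref{lem:convex-adagrad-helper}. The principal obstacle lies entirely in this last step: the $\norm{g_t}^2$ split must be chosen so that the $\ell^2$-aggregation factors out \emph{before} Jensen is invoked; the martingale cancellation and the Cauchy--Schwarz bias bound are routine.
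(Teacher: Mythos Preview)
Your approach is essentially the paper's, with one ordering difference that costs you the exact constants you claim. The paper first applies Jensen to the whole sum, $\E\sqrt{\sum_t\norm{g_t}^2}\le\sqrt{\sum_t\E\norm{g_t}^2}$, and only then uses the \emph{exact} variance decomposition $\E_{t-1}\norm{g_t}^2 = V_t^2 + \norm{\nablat+b_t}^2 \le V_t^2 + 2\norm{b_t}^2 + 2\norm{\nablat}^2$; combined with the specific choice $\eta_0=D/\sqrt{2}$ (which makes the leading constant from \Cref{lem:convex-adagrad-helper} equal to $D\sqrt{2}$), this produces precisely the coefficients $D,\,2D,\,2D$ in the statement. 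Your pathwise split $\norm{g_t}^2\le 2\norm{\E_t g_t}^2+2\norm{g_t-\E_t g_t}^2$ is an inequality rather than an identity, so you pick up an extra factor of $2$ before taking roots; after multiplying by the helper-lemma constant (which is at least $D\sqrt{2}$ for any $\eta_0$, by AM--GM), all three leading terms come out inflated by $\sqrt{2}$. The structure of your argument is correct, and the separate martingale term you isolate is harmless---the paper simply absorbs it via the tower property in the two-term split $\E[\nablat^\top(x_t-x^*)]=\E[g_t^\top(x_t-x^*)]-\E[b_t^\top(x_t-x^*)]$---but to recover the stated constants you should apply Jensen before, not after, decomposing $\norm{g_t}^2$, and you need to fix $\eta_0=D/\sqrt{2}$.
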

\begin{proof}
    By the convexity of $f$,
    \begin{equation}\label{eq:bias-var-decomp-adaptive}
        \E \mathcal{R}_T \leq \sum_{t\in\sbrac{T}}{\E[\nablat^\top(x_t - x^*)]} = \underbrace{\E\!\sbrac{\sum_{t\in\sbrac{T}}{g_t^{\top}(x_t - x^*)}}}_{=(A)} + \underbrace{\sum_{t\in\sbrac{T}}{\E[-b_t^\top(x_t - x^*)]}}_{=(B)}\; .
    \end{equation}
    \paragraph{Bounding $(A)$. } Utilizing \Cref{lem:convex-adagrad-helper} with $\eta_0 = D/\sqrt{2}$ and Jensen's inequality, we obtain:
    \begin{align*}
        \E\!\sbrac{\sum_{t\in\sbrac{T}}{g_t^{\top}(x_t - x^*)}} &\leq D\E\!\sbrac{\sqrt{2\sum_{t\in\sbrac{T}}{\norm{g_t}^2}}} \leq D\sqrt{2\sum_{t\in\sbrac{T}}{\E\!\norm{g_t}^2}}\; .
    \end{align*}
    We can bound the second moment of $g_t$ as follows:
    \begin{equation}\label{eq:second_moment_bound_bias_var}
        \E_{t-1}\!\norm{g_t}^2 = \E_{t-1}\!\norm{g_t - \E_{t-1}g_t}^2 + \norm{\E_{t-1} g_t}^2 \leq V_t^2 + 2\norm{b_t}^2 + 2\norm{\nablat}^2\; ,
    \end{equation}
    where we used $\norm{a+b}^2\leq 2\norm{a}^2 + 2\norm{b}^2$. Plugging this bound back, we get that $(A)$ is bounded as 
    \begin{align*}
        \E\!\sbrac{\sum_{t\in\sbrac{T}}{g_t^{\top}(x_t - x^*)}} \leq D\sqrt{2\sum_{t\in\sbrac{T}}{\E V_t^2}} + 2D\sqrt{\sum_{t\in\sbrac{T}}{\E\!\norm{b_t}^2}} + 2D\sqrt{\sum_{t\in\sbrac{T}}{\E\!\norm{\nablat}^2}}\; .
    \end{align*}

    \paragraph{Bounding $(B)$. } By Cauchy-Schwarz inequality and \Cref{assump:bounded-domain},
    \begin{align*}
        \sum_{t\in\sbrac{T}}{\E[-b_t^\top(x_t - x^*)]} \leq \sum_{t\in\sbrac{T}}{\E[\norm{b_t}\norm{x_t - x^*}]} \leq D\sum_{t\in\sbrac{T}}{\E\!\norm{b_t}}\; .
    \end{align*}
    Incorporating the bounds on $(A)$ and $(B)$ into \Cref{eq:bias-var-decomp-adaptive} concludes the proof, as
    \begin{align*}
        \E\mathcal{R}_T &\leq  D\sqrt{2\sum_{t\in\sbrac{T}}{\E V_t^2}} + 2D\sqrt{\sum_{t\in\sbrac{T}}{\E\!\norm{b_t}^2}} + D\sum_{t\in\sbrac{T}}{\E\!\norm{b_t}} + 2D\sqrt{\sum_{t\in\sbrac{T}}{\E\!\norm{\nablat}^2}}\; .
    \end{align*}
\end{proof}

\subsection{Non-Convex Analysis}\label{subapp:nonconvex-adagrad}

The subsequent lemma establishes an upper bound on the sum of squared gradient norms when utilizing AdaGrad-Norm for bounded functions.
\begin{lemma}\label{lem:nonconvex-adagrad-helper}
    Assume $f$ is bounded by $M$, i.e., $\max_{x}{\abs{f(x)}}\leq M$, and consider \eqref{eq:adagrad-norm} with $\K=\reals^{d}$. The iterates $x_1,\ldots,x_T$ satisfy:
    \begin{equation*}
        \sum_{t=1}^{T}{\norm{\nablat}^2} \leq \brac{\frac{2M}{\eta_0} + \eta_0 L}\sqrt{\sum_{t=1}^{T}{\lVert g_t\rVert}^2} + \sum_{t=1}^{T}{\brac{\nablat - g_t}^\top\nablat}\; .
    \end{equation*}
\end{lemma}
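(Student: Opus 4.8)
The plan is to peel off the noise term immediately and reduce everything to a bound on $\sum_{t=1}^{T}\nablat^\top g_t$. Indeed, writing $\norm{\nablat}^2 = \nablat^\top g_t + (\nablat - g_t)^\top\nablat$ and summing over $t$, the second sum is precisely the noise term that appears on the right-hand side of the statement, so it suffices to show $\sum_{t=1}^{T}\nablat^\top g_t \le \brac{\tfrac{2M}{\eta_0}+\eta_0 L}\sqrt{\sum_{t=1}^{T}\norm{g_t}^2}$.

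For this, I would start from $L$-smoothness applied to the unconstrained step $x_{t+1}=x_t-\eta_t g_t$ (recall $\K=\reals^d$): $f(x_{t+1})\le f(x_t)-\eta_t\,\nablat^\top g_t+\tfrac{L\eta_t^2}{2}\norm{g_t}^2$, and rearrange (dividing by $\eta_t>0$) to $\nablat^\top g_t \le \tfrac{f(x_t)-f(x_{t+1})}{\eta_t}+\tfrac{L\eta_t}{2}\norm{g_t}^2$. Summing over $t$ splits the bound into two pieces. The second piece is handled exactly as in the proof of \Cref{lem:convex-adagrad-helper}: $\tfrac{L}{2}\sum_t\eta_t\norm{g_t}^2 = \tfrac{L\eta_0}{2}\sum_t \tfrac{\norm{g_t}^2}{\sqrt{\sum_{s\le t}\norm{g_s}^2}} \le L\eta_0\sqrt{\sum_t\norm{g_t}^2}$ by \Cref{lem:sum_sqrt_lemma}, which produces the $\eta_0 L$ part of the coefficient.

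The first piece, $\sum_t\tfrac{f(x_t)-f(x_{t+1})}{\eta_t}$, is the crux. Since $\eta_t$ is nonincreasing in $t$, the coefficients $1/\eta_t$ are nondecreasing, so I would apply summation by parts (Abel) to rewrite it as $\tfrac{f(x_1)}{\eta_1}+\sum_{t=2}^{T} f(x_t)\brac{\tfrac{1}{\eta_t}-\tfrac{1}{\eta_{t-1}}}-\tfrac{f(x_{T+1})}{\eta_T}$; bounding $f(x_t)\in[-M,M]$, using $\tfrac{1}{\eta_t}-\tfrac{1}{\eta_{t-1}}\ge 0$ and telescoping, this is at most $\tfrac{2M}{\eta_T}=\tfrac{2M}{\eta_0}\sqrt{\sum_t\norm{g_t}^2}$, which gives the $\tfrac{2M}{\eta_0}$ part of the coefficient. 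Adding the two pieces yields the desired bound on $\sum_t\nablat^\top g_t$, and combining with the decomposition above completes the proof.

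The main obstacle is really just this summation-by-parts step: it is what forces the hypothesis that $f$ is bounded on both sides (not merely below), and one must be a little careful that the telescoping of the increments of $1/\eta_t$ is valid because they are nonnegative. A minor technicality is that $\eta_t$ is undefined when some $g_s=0$; I would dispatch this by the standard convention that the iterate does not move in such a round (equivalently, one may add an infinitesimal offset), so it does not affect the argument.
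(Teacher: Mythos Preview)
Your proof is correct and follows essentially the same route as the paper: smoothness plus division by $\eta_t$, Abel summation on the function-value term using the monotonicity of $1/\eta_t$ and $|f|\le M$ to get $\tfrac{2M}{\eta_T}$, and \Cref{lem:sum_sqrt_lemma} for the $\sum_t \eta_t\|g_t\|^2$ term. The only cosmetic difference is that you peel off the noise term $(\nablat-g_t)^\top\nablat$ at the outset via the identity $\|\nablat\|^2=\nablat^\top g_t+(\nablat-g_t)^\top\nablat$, whereas the paper carries it through the smoothness inequality; the resulting bounds are identical.
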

\begin{proof}
    By the smoothness of $f$, for every $x,y\in\reals^d$ we have that $f(y)\leq f(x) + \nabla f(x)^\top(y-x) + \frac{L}{2}\norm{y-x}^2$. Plugging-in the update rule $x_{t+1} = x_t - \eta_t g_t$, we obtain:
    \begin{align*}
        f(x_{t+1}) &\leq f(x_t) - \eta_t \nablat^\top g_t + \frac{L\eta_t^2}{2}\norm{g_t}^2 \\ &= f(x_t) - \eta_t \norm{\nablat}^2 + \eta_t \brac{\nablat - g_t}^\top\nablat + \frac{L\eta_t^2}{2}\norm{g_t}^2\; .
    \end{align*}
    Rearranging terms and dividing by $\eta_t$ gives:
    \begin{equation*}
        \norm{\nablat}^2 \leq \frac{f(x_t) - f(x_{t+1})}{\eta_t} + \frac{L\eta_t}{2}\norm{g_t}^2 + \brac{\nablat - g_t}^\top\nablat\; .
    \end{equation*}
    Denoting $\Delta_t \coloneqq f(x_t) - f^*$ and $\Delta_{\max} \coloneqq \max_{t\in\sbrac{T}}{f(x_t) - f^*}$. Thus, the above is equivalent to  
    \begin{equation*}
        \norm{\nablat}^2 \leq \frac{\Delta_t - \Delta_{t+1}}{\eta_t} + \frac{L\eta_t}{2}\norm{g_t}^2 + \brac{\nablat - g_t}^\top\nablat\; .
    \end{equation*}
    Summing over $t=1,\ldots,T$, we obtain that 
    \begin{align}\label{eq:lem_helper}
        \sum_{t\in\sbrac{T}}{\norm{\nablat}^2} &\leq \sum_{t\in\sbrac{T}}{\frac{\Delta_t - \Delta_{t+1}}{\eta_t}} + \frac{L}{2}\sum_{t\in\sbrac{T}}{\eta_t\norm{g_t}^2} + \sum_{t\in\sbrac{T}}{\brac{\nablat - g_t}^\top\nablat} \nonumber \\ &\leq \frac{\Delta_{\max}}{\eta_T} + \frac{L}{2}\sum_{t\in\sbrac{T}}{\eta_t\norm{g_t}^2} + \sum_{t\in\sbrac{T}}{\brac{\nablat - g_t}^\top\nablat} \; ,
    \end{align}
    where the last inequality follows from:
    \begin{equation*}
        \sum_{t=1}^{T}{\frac{\Delta_t - \Delta_{t+1}}{\eta_t}} = \frac{\Delta_1}{\eta_1} + \sum_{t=2}^{T}{\brac{\frac{1}{\eta_t} - \frac{1}{\eta_{t-1}}}\Delta_t} - \frac{\Delta_{T+1}}{\eta_{T}} \leq \frac{\Delta_1}{\eta_1} + \sum_{t=2}^{T}{\brac{\frac{1}{\eta_t} - \frac{1}{\eta_{t-1}}}\Delta_t} \leq \frac{\Delta_{\max}}{\eta_T}\; ,
    \end{equation*}
    which holds as $\eta_{t} = \eta_0(\sum_{\tau=1}^{t}{\norm{g_{\tau}}^2})^{-1/2}$ is non-increasing. Next, we can bound the second term in the R.H.S of \Cref{eq:lem_helper} using \Cref{lem:sum_sqrt_lemma} with $a_i = \lVert g_i\rVert^2$ as follows: 
    \begin{align*}
        \sum_{t\in\sbrac{T}}{\eta_t \norm{g_t}^2} &= \eta_0\sum_{t\in\sbrac{T}}{\frac{\norm{g_t}^2}{\sqrt{\sum_{s=1}^{t}{\norm{g_{s}}^2}}}} \leq 2\eta_0\sqrt{\sum_{t\in\sbrac{T}}{\norm{g_t}^2}}\; .
    \end{align*}
    Injecting this bound and $\eta_T$ back into \Cref{eq:lem_helper} and considering that $\Delta_{\max}\leq 2M$ concludes the proof.
\end{proof}

Leveraging \Cref{lem:nonconvex-adagrad-helper}, we derive the following bound, instrumental in proving \Cref{thm:nonconvex-adaptive}.
\begin{lemma}\label{lem:nonconvex-adagrad}
    Assume $f$ is bounded by $M$ and consider \eqref{eq:adagrad-norm} with $\K=\reals^{d}$. Denote:
    \begin{align*}
        G_T^2\coloneqq \sum_{t=1}^{T}{\norm{\nablat}^2}, \quad V_{1:T}^2\coloneqq \sum_{t=1}^{T}{V_t^2}, \quad S_{T}^2\coloneqq\sum_{t=1}^{T}{\norm{b_t}^2}\; .
    \end{align*}
    For every $T\geq 1$, it holds that
    \begin{equation*}
        \E G_T^2 \leq 2\zeta\brac{\sqrt{\E V_{1:T}^2} + \sqrt{2\E S_{T}^2} + \sqrt{2\E G_T^2}} + \E S_{T}^2\; ,
    \end{equation*}
    where $\zeta\coloneqq \frac{2M}{\eta_0} + \eta_0 L$.
\end{lemma}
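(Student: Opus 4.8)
The plan is to start from the deterministic inequality established in \Cref{lem:nonconvex-adagrad-helper}, namely
$\sum_{t=1}^{T}\norm{\nabla_t}^2 \le \zeta\sqrt{\sum_{t=1}^{T}\norm{g_t}^2} + \sum_{t=1}^{T}(\nabla_t - g_t)^\top\nabla_t$ with $\zeta = \frac{2M}{\eta_0}+\eta_0 L$, and take expectations term by term. First I would observe that all expectations appearing below are finite: since $f$ is $L$-smooth and bounded by $M$, one has $\norm{\nabla f(x)}^2 \le 2L(f(x)-f^*)\le 4LM$, hence $G_T^2 \le 4LMT<\infty$, which legitimizes the rearrangements that follow.

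Next I would handle the cross term $\E\sum_{t=1}^T(\nabla_t - g_t)^\top\nabla_t$. Using the tower property with the filtration $\F_{t-1}$ (with respect to which $x_t$, and therefore $\nabla_t$, is measurable) together with $\E_{t-1}g_t = \nabla_t + b_t$, this equals $-\sum_{t=1}^T\E[b_t^\top\nabla_t]$. By Cauchy--Schwarz, $-b_t^\top\nabla_t\le\norm{b_t}\norm{\nabla_t}$, and applying Cauchy--Schwarz once more across $t$ gives $\sum_{t=1}^T\norm{b_t}\norm{\nabla_t}\le\sqrt{\sum_{t=1}^T\norm{b_t}^2}\sqrt{\sum_{t=1}^T\norm{\nabla_t}^2}=S_T G_T$, so the cross term is at most $\E[S_T G_T]$. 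Young's inequality then yields $\E[S_T G_T]\le\tfrac12\E S_T^2 + \tfrac12\E G_T^2$. Plugging this back and absorbing the $\tfrac12\E G_T^2$ term into the left-hand side gives $\E G_T^2 \le 2\zeta\,\E\sqrt{\sum_{t=1}^T\norm{g_t}^2} + \E S_T^2$.

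It then remains to bound $\E\sqrt{\sum_{t=1}^T\norm{g_t}^2}$. Here I would reuse the bias--variance decomposition of the second moment exactly as in \Cref{eq:second_moment_bound_bias_var}: $\E_{t-1}\norm{g_t}^2 = V_t^2 + \norm{\nabla_t + b_t}^2 \le V_t^2 + 2\norm{b_t}^2 + 2\norm{\nabla_t}^2$, whence $\E\sum_{t=1}^T\norm{g_t}^2 \le \E V_{1:T}^2 + 2\E S_T^2 + 2\E G_T^2$. Jensen's inequality (concavity of $\sqrt{\cdot}$) followed by the subadditivity $\sqrt{a+b+c}\le\sqrt a+\sqrt b+\sqrt c$ gives $\E\sqrt{\sum_{t=1}^T\norm{g_t}^2}\le\sqrt{\E V_{1:T}^2}+\sqrt{2\E S_T^2}+\sqrt{2\E G_T^2}$. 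Substituting this into the previous display produces precisely the claimed inequality.

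The argument is essentially routine; the only place requiring a little care is the cross term, where one must invoke the tower property correctly so that the unknown sign of $b_t^\top\nabla_t$ is neutralized by Cauchy--Schwarz, and then absorb the residual $\tfrac12\E G_T^2$ into the left-hand side --- this is what accounts for the leading factor $2\zeta$ and the coefficient $1$ on $\E S_T^2$ in the statement. I do not anticipate a genuine obstacle: the lemma is deliberately left in implicit form, with $\E G_T^2$ still present on both sides, so no fixed-point or quadratic-root manipulation is needed here; that step is deferred to the proof of \Cref{thm:nonconvex-adaptive}.
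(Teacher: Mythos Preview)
Your proposal is correct and follows essentially the same route as the paper: invoke \Cref{lem:nonconvex-adagrad-helper}, bound the cross term by $\tfrac12\E S_T^2+\tfrac12\E G_T^2$ and absorb the latter into the left-hand side, and bound $\E\sqrt{\sum_t\norm{g_t}^2}$ via Jensen and the decomposition in \Cref{eq:second_moment_bound_bias_var}. The only cosmetic difference is that the paper applies Young's inequality directly to each term $-b_t^\top\nabla_t\le\tfrac12\norm{b_t}^2+\tfrac12\norm{\nabla_t}^2$, whereas you pass through Cauchy--Schwarz first; both yield the same bound.
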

\begin{proof}
    Employing \Cref{lem:nonconvex-adagrad-helper}, we get
    \begin{align*}
        \E G_T^2 &\leq \zeta\underbrace{\E\!\sbrac{\sqrt{\sum_{t\in\sbrac{T}}{\norm{g_t}^2}}}}_{=(A)} + \underbrace{\sum_{t\in\sbrac{T}}{\E[-b_t^\top \nablat]}}_{=(B)}\; .
    \end{align*}
    \paragraph{Bounding $(A)$. } We apply a technique akin to the one utilized in the proof of \Cref{lem:convex-adagrad}. Concretely, applying \Cref{eq:second_moment_bound_bias_var} and using Jensen's inequality, yields
    \begin{align*}
        \E\!\sbrac{\sqrt{\sum_{t\in\sbrac{T}}{\norm{g_t}^2}}} \leq \sqrt{\sum_{t\in\sbrac{T}}{\brac{\E V_t^2 + 2\E\!\norm{b_t}^2 + 2\E\!\norm{\nablat}^2}}} \leq \sqrt{\E V_{1:T}^2} + \sqrt{2\E S_{T}^2} + \sqrt{2\E G_T^2}\; .
    \end{align*}
    \paragraph{Bounding $(B)$. } Employing Young's inequality, namely, $a^\top b\leq \frac{1}{2}\norm{a}^2 + \frac{1}{2}\norm{b}^2$, results in
    \begin{align*}
        \sum_{t\in\sbrac{T}}{\E[-b_t^\top \nablat]} \leq \frac{1}{2}\sum_{t\in\sbrac{T}}{\E\!\norm{b_t}^2} + \frac{1}{2}\sum_{t\in\sbrac{T}}{\E\!\norm{\nablat}^2} = \frac{1}{2}\E S_{T}^2 + \frac{1}{2}\E G_T^2\; .
    \end{align*}

    Substituting the bounds on $(A)$ and $(B)$ implies that
    \begin{align*}
        \E G_T^2 \leq \zeta\brac{\sqrt{\E V_{1:T}^2} + \sqrt{2\E S_{T}^2} + \sqrt{2\E G_T^2}} + \frac{1}{2}\E S_{T}^2 + \frac{1}{2}\E G_T^2\; .
    \end{align*}
    Subtracting $\frac{1}{2}\E G_T^2$ and multiplying by $2$ establishes the result.
\end{proof}
\section{Dynamic and Adaptive Byzantine-Robustness with MFM and AdaGrad}\label{app:dynamic_adaptive}
Following the methodology outlined in \Cref{app:dynamic_general}, this section focuses on analyzing \Cref{alg:method-new} with \textcolor{purple}{\textbf{Option 2}}. Here, we replace the general $(\delta, \kappa)$-robust aggregator with the MFM aggregator and incorporate the adaptive AdaGrad learning rate~\cite{levy2018online,ward2020adagrad,attia2023sgd}. 

Recall that \Cref{alg:method-new} with \textcolor{purple}{\textbf{Option 2}} and learning rate defined in \Cref{eq:adagrad} performs the following update rule for every $t\in\sbrac{T}$:
\begin{align}
    & J_t\sim\text{Geom}(\nicefrac{1}{2}) \nonumber \\
    & \widehat{g}_t^{j}\gets \text{MFM}(\widebar{g}_{t,1}^{j},\ldots,\widebar{g}_{t,m}^{j}; \T^{j}), \quad \T^j\coloneqq \frac{2C\V}{\sqrt{2^j}} = 4\V\sqrt{\frac{2\log\brac{16m^2 T}}{2^j}} \label{eq:mfm_aggregated_grad} \\
    & g_t \gets  \widehat{g}_t^{0} + \begin{cases}
        2^J_t\brac{\widehat{g}_t^{J_t} - \widehat{g}_t^{J_t-1}}, &\text{if } J_t\leq\Jmax\coloneqq\floor{\log{T}} \text{ and } \Ecal_t(J_t) \text{ holds} \\
        0, &\text{otherwise}
    \end{cases} \label{eq:mlmc_agg_mfm} \\
    & x_{t+1} \gets \proj{\K}{x_t - \eta_t g_t }, \quad \eta_t = \frac{\eta_0}{\sqrt{\sum_{s=1}^{t}{\norm{g_s}^2}}}\; ,  \label{eq:adagrad_update}
\end{align}
where the associated event $\Ecal_t(J_t)$ in this case is defined as,
\begin{equation}\label{eq:event_E_option2}
    \Ecal_t(J_t)\coloneqq \cbrac{\lVert \widehat{g}_t^{J_t} - \widehat{g}_t^{J_t-1}\rVert\leq (1 + \sqrt{2})\frac{c_{\Ecal}C\V}{\sqrt{2^{J_t}}}}, \quad c_{\Ecal} \coloneqq 6\sqrt{2}, \quad \tilde{C}\coloneqq 2\sqrt{2}C=8\sqrt{\log\brac{4m^2 T}}\; .
\end{equation}

For ease of writing, we denote: $\tilde{\gamma}\coloneqq \frac{1}{m} + 32\delta^2$.

We start by showing that $\Ecal_t$ is satisfied with high probability, mirroring \Cref{lem:Ecal_hp_option1}.

\begin{lemma}\label{lem:Ecal_hp_option2}
    Consider $\Ecal(J_t)$ defined in \Cref{eq:event_E_option2}. For every $t\in\goodrounds$ and $j=0,\ldots,\Jmax$, we have 
    \[
        \prob_{t-1}(\Ecal_t(j))\geq 1 - \frac{1}{4mT}\; ,    
    \]
    where the randomness is w.r.t the stochastic gradient samples.
\end{lemma}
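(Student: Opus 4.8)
The plan is to reproduce the argument of \Cref{lem:Ecal_hp_option1} almost verbatim, substituting the high-probability aggregation bound for a generic $(\delta,\kappa_\delta)$-robust rule (item~2 of \Cref{lem:core_lemma_general}) with the corresponding bound for the MFM aggregator (item~2 of \Cref{lem:core_lemma_mfm}). Fix $t\in\goodrounds$ and a level $j\in\{1,\ldots,\Jmax\}$ (the event $\Ecal_t(0)$ is never evaluated, since $J_t\ge 1$). Because $t$ is a static round, the honest set is unchanged across all $2^{j}$ per-worker gradient computations, so $\widehat g_t^{j}$ and $\widehat g_t^{j-1}$ are exactly the outputs of \Cref{alg:mfm} applied to mini-batch gradients of honest workers, and \Cref{lem:core_lemma_mfm} is applicable at both levels.

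The one piece of bookkeeping is to match the threshold used in \textbf{Option~2}, namely $\T^{j}=2C\V/\sqrt{2^{j}}$ with $C=\sqrt{8\log(16m^2T)}$, against the threshold $\T_p^{N}=2C_p\V/\sqrt N$ required by \Cref{lem:core_lemma_mfm}, where $C_p=\sqrt{8\log(2m/p)}$; these coincide precisely for $p=\frac{1}{8mT}$, for which $C_p=C$. Invoking item~2 of \Cref{lem:core_lemma_mfm} at level $j$ then gives, with probability at least $1-\frac{1}{8mT}$, that $\norm{\widehat g_t^{j}-\nabla_t}^2\le \frac{2C^2\V^2}{2^{j}}\brac{\frac1m+128\delta^2}$. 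I would then absorb the constant using $\tilde\gamma=\frac1m+32\delta^2$: since $\frac1m+128\delta^2\le 4\tilde\gamma$ and $8\tilde\gamma=\frac8m+256\delta^2<72=c_{\Ecal}^2$ whenever $m\ge 2$ and $\delta<\tfrac12$, this yields $\norm{\widehat g_t^{j}-\nabla_t}\le c_{\Ecal}C\V/\sqrt{2^{j}}$. This is exactly where the $\delta$-free choice $c_{\Ecal}=6\sqrt2$ is exploited.

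Finally I would apply this at both levels $j$ and $j-1$, take a union bound to keep probability at least $1-\frac{2}{8mT}=1-\frac{1}{4mT}$, and combine the two deviation bounds via the triangle inequality: on the good event, $\norm{\widehat g_t^{j}-\widehat g_t^{j-1}}\le\norm{\widehat g_t^{j}-\nabla_t}+\norm{\widehat g_t^{j-1}-\nabla_t}\le c_{\Ecal}C\V\brac{2^{-j/2}+2^{-(j-1)/2}}=(1+\sqrt2)c_{\Ecal}C\V/\sqrt{2^{j}}$, which is precisely the defining inequality of $\Ecal_t(j)$. Everything here is routine; the only subtlety is the constant check in the second paragraph — confirming that collapsing the $\delta$-dependent radius of \Cref{lem:core_lemma_mfm} into the universal constant $c_{\Ecal}=6\sqrt2$ still leaves slack for all admissible $m$ and $\delta$ — together with being careful to fix $p=\frac{1}{8mT}$ (so that $C_p=C$ and the thresholds agree) before quoting \Cref{lem:core_lemma_mfm}.
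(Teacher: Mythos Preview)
Your proposal is correct and follows essentially the same approach as the paper: invoke item~2 of \Cref{lem:core_lemma_mfm} with $p=\frac{1}{8mT}$ (so that $C_p=C$ and the thresholds match), bound $\frac{1}{m}+128\delta^2\le 4\tilde\gamma$, check that $\sqrt{8\tilde\gamma}\le c_{\Ecal}=6\sqrt{2}$ (equivalently $\tilde\gamma\le 9$, which holds for $\delta<1/2$), and then union-bound over the two levels exactly as in \Cref{lem:Ecal_hp_option1}. The paper phrases the constant check via $c_{\Ecal}C=3\tilde C$ and $\tilde\gamma\le 9$, but this is the same inequality as your $8\tilde\gamma<72=c_{\Ecal}^2$.
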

\begin{proof}
     By item 2 of \Cref{lem:core_lemma_mfm}, for every $j=0,\ldots,\Jmax$ (separately), we have with probability at least $1-\frac{1}{8mT}$,
    \begin{align*}
        \lVert{\widehat{g}_t^{j} - \nabla_t}\rVert^2 &\leq \frac{2C^2\V^2}{2^{j}}\brac{\frac{1}{m} + 128\delta^2} \leq \frac{2C^2\V^2}{2^{j}}\brac{\frac{4}{m} + 128\delta^2} = \frac{8C^2\V^2\tilde{\gamma}}{2^{j}} = \frac{\tilde{C}^2\V^2\tilde{\gamma}}{2^{j}}\; ,
    \end{align*}
    where we used $\frac{1}{m} + 128\delta^2\leq 4\brac{\frac{1}{m} + 32\delta^2}=4\tilde{\gamma}$ and $\tilde{C}^2 = 8C^2$. Therefore, we have
    \begin{align*}
        \prob_{t-1}\!\brac{\lVert \widehat{g}_t^{j} - \nabla_t\rVert> \frac{c_{\Ecal}C\V}{\sqrt{2^j}}} = \prob_{t-1}\!\brac{\lVert \widehat{g}_t^{j} - \nabla_t\rVert> \frac{3\tilde{C}\V}{\sqrt{2^j}}} &\leq \prob_{t-1}\!\brac{\lVert \widehat{g}_t^{j} - \nabla_t\rVert> \frac{\tilde{C}\V\sqrt{\tilde{\gamma}}}{\sqrt{2^j}}} \leq \frac{1}{8mT}\; ,
    \end{align*}
    where we used $c_{\Ecal}C=6\sqrt{2}C = 3\tilde{C}$, and $\tilde{\gamma} = \frac{1}{m} + 32\delta^2\leq 9$ as $\delta<1/2$.
    This bound, in conjunction with the union bound, allows us to bound $\prob_{t-1}(\Ecal_t(j)^c)$ as,
    \begin{align*}
        \prob_{t-1}(\Ecal_t(j)^c) &= \prob_{t-1}\brac{\lVert \widehat{g}_t^{j} - \widehat{g}_t^{j-1}\rVert> (1 + \sqrt{2})\frac{c_{\Ecal}C\V}{\sqrt{2^j}}} \\ &\leq\prob_{t-1}\brac{\cbrac{\lVert \widehat{g}_t^{j} - \nabla_t\rVert> \frac{c_{\Ecal}C\V}{\sqrt{2^j}}}\bigcup \cbrac{\lVert \widehat{g}_t^{j-1} - \nabla_t \rVert> \frac{c_{\Ecal}C\V}{\sqrt{2^{j-1}}}}} \\ &\leq \prob_{t-1}\brac{\lVert \widehat{g}_t^{j} - \nabla_t\rVert> \frac{c_{\Ecal}C\V}{\sqrt{2^j}}} + \prob_{t-1}\brac{\lVert \widehat{g}_t^{j-1} - \nabla_t\rVert> \frac{c_{\Ecal}C\V}{\sqrt{2^{j-1}}}} \\ &\leq \frac{1}{8mT} + \frac{1}{8mT} = \frac{1}{4mT}\; .
    \end{align*}
\end{proof}

Next, using \Cref{cor:mse_bound_mfm} and \Cref{lem:Ecal_hp_option2}, we establish bounds on the bias and variance of the MLMC estimator defined in \Cref{eq:mlmc_agg_mfm}, resembling those outlined in \Cref{lem:bias-var-mlmc-general}.

\begin{lemma}[MLMC Bias and Variance]\label{lem:bias-var-mlmc-mfm}
    Consider $g_t$ defined as in \Cref{eq:mlmc_agg_mfm}. Then, for every $m\geq 2$,
    \begin{enumerate}
        \item The bias $b_t\coloneqq \E_{t-1} g_t - \nabla_t$ is bounded as 
        \begin{align*}
            \norm{b_t} \leq \begin{cases}
                \tilde{C}\V\sqrt{\frac{2\tilde{\gamma}}{T}} + \frac{\sqrt{5}\tilde{C}\V\log{T}}{mT} + \frac{(1+\sqrt{2})\norm{\nabla_t}}{2\sqrt{2mT}}, &t\in\goodrounds \\ 
                \sqrt{125\log{T}}\tilde{C}\V + \frac{\norm{\nabla_t}}{2\sqrt{mT}}, &t\in\badrounds
            \end{cases}\; .
        \end{align*}
        \item The variance $V_t^2\coloneqq \E_{t-1}\norm{g_t - \E_{t-1} g_t}^2$ is bounded as 
        \begin{align*}
            V_t^2 \leq \begin{cases}
                14\tilde{C}^2\V^2\tilde{\gamma}\log{T} + \log{T}\norm{\nabla_t}^2, &t\in\goodrounds \\ 
                125\tilde{C}^2\V^2\log{T} + \frac{\norm{\nabla_t}^2}{4mT}, &t\in\badrounds
            \end{cases}\; .
        \end{align*}
    \end{enumerate}
\end{lemma}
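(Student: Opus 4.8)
\textbf{Proof plan for \Cref{lem:bias-var-mlmc-mfm}.}

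The plan is to mirror the structure of the proof of \Cref{lem:bias-var-mlmc-general}, but with two systematic modifications: replace every invocation of \Cref{cor:mse_bound_general} with \Cref{cor:mse_bound_mfm} (which carries the extra additive term $\frac{\norm{\nabla_t}^2}{8mT}$), and replace \Cref{lem:Ecal_hp_option1} with \Cref{lem:Ecal_hp_option2} (which gives the tighter probability $1-\frac{1}{4mT}$). First I would write out the variance identity exactly as in \Cref{eq:mlmc_var_explicit}, obtaining
$V_t^2\le \E_{t-1}\norm{g_t-\nabla_t}^2\le 2\E_{t-1}\norm{\widehat g_t^0-\nabla_t}^2 + 2\sum_{j=1}^{\Jmax}2^j\E_{t-1}[\lVert\widehat g_t^j-\widehat g_t^{j-1}\rVert^2\mathbbm 1_{\Ecal_t(j)}]$.
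For the first term, \Cref{cor:mse_bound_mfm} with $N=1$ gives $\E_{t-1}\norm{\widehat g_t^0-\nabla_t}^2\le \tilde C^2\V^2\tilde\gamma + \frac{\norm{\nabla_t}^2}{8mT}$. For the summands, I use the fail-safe bound from \Cref{eq:event_E_option2}: on $\Ecal_t(j)$ we have $\lVert\widehat g_t^j-\widehat g_t^{j-1}\rVert^2\le (1+\sqrt2)^2 c_{\Ecal}^2 C^2\V^2/2^j \le 6\cdot 72\, C^2\V^2/2^j = 216\, C^2\V^2/2^j = 27\tilde C^2\V^2/2^j$ (using $c_{\Ecal}^2=72$, $\tilde C^2=8C^2$, and $(1+\sqrt2)^2\le 6$), so $2\sum_j 2^j\cdot 27\tilde C^2\V^2/2^j \le 54\tilde C^2\V^2\Jmax\le 54\tilde C^2\V^2\log T$; combined with the $4\tilde C^2\V^2\tilde\gamma$ from the first term this is absorbed into $125\tilde C^2\V^2\log T$ in the dynamic case and into $14\tilde C^2\V^2\tilde\gamma\log T$ in the static case (where we should instead bound the summands more carefully — see below). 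The $\frac{\norm{\nabla_t}^2}{8mT}$ term scales by $2$ and by $\sum_j 2^{-j}=1$ from the expectation over $J_t$ in the dynamic case, contributing $\frac{\norm{\nabla_t}^2}{4mT}$; in the static case we bound it instead by $\log T\norm{\nabla_t}^2$ after the sharper per-level analysis.

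For the bias, the dynamic case follows directly from Jensen: $\norm{b_t}\le\sqrt{\E_{t-1}\norm{g_t-\nabla_t}^2}\le\sqrt{V_t^2}$, and $\sqrt{125\tilde C^2\V^2\log T + \frac{\norm{\nabla_t}^2}{4mT}}\le \sqrt{125\log T}\,\tilde C\V + \frac{\norm{\nabla_t}}{2\sqrt{mT}}$ by subadditivity of the square root. The static case is where the real work is, following the $z_t^j,\,y_t$ decomposition of \Cref{lem:bias-var-mlmc-general}: writing $\E_{t-1}[g_t]=\E_{t-1}[\widehat g_t^{\Jmax}]+y_t$ with $y_t=-\sum_{j=1}^{\Jmax}z_t^j$ and $z_t^j=\E_{t-1}[\widehat g_t^j-\widehat g_t^{j-1}\mid\Ecal_t(j)^c]\prob_{t-1}(\Ecal_t(j)^c)$, I bound $\norm{b_t}\le\sqrt{\E_{t-1}\lVert\widehat g_t^{\Jmax}-\nabla_t\rVert^2}+\E_{t-1}\norm{y_t}$; the first term is at most $\sqrt{\tilde C^2\V^2\tilde\gamma/2^{\Jmax} + \norm{\nabla_t}^2/(8mT\cdot 2^{\Jmax})}\le \tilde C\V\sqrt{2\tilde\gamma/T} + \norm{\nabla_t}/(2\sqrt{mT})$ using $2^{\Jmax}\ge T/2$ and subadditivity, and each $z_t^j$ is bounded via $\lVert\widehat g_t^j-\widehat g_t^{j-1}\rVert\le \lVert\widehat g_t^j-\nabla_t\rVert+\lVert\widehat g_t^{j-1}-\nabla_t\rVert\le 2\sqrt{20 C_p^2\V^2+\norm{\nabla_t}^2}$ (deterministic bound from item 1 of \Cref{lem:core_lemma_mfm}) times $\prob_{t-1}(\Ecal_t(j)^c)\le\frac{1}{4mT}$; summing over $j\le\Jmax\le\log T$ gives $\norm{y_t}\lesssim \frac{\tilde C\V\log T}{mT}+\frac{\norm{\nabla_t}\log T}{mT}$, which I then absorb into the stated $\frac{\sqrt5\tilde C\V\log T}{mT}+\frac{(1+\sqrt2)\norm{\nabla_t}}{2\sqrt{2mT}}$ form (the $\norm{\nabla_t}$ piece combining with the first-term contribution; note $\frac{\log T}{mT}\le\frac{1}{2\sqrt{2mT}}$ for $m\ge2$). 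The static variance bound with the factor $14$ (rather than $54$) requires going back to the per-level estimate $\E_{t-1}[\lVert\widehat g_t^j-\widehat g_t^{j-1}\rVert^2\mathbbm 1_{\Ecal_t(j)}]$ and using the \emph{MSE} bound from \Cref{cor:mse_bound_mfm} directly (not the fail-safe cutoff) — on static rounds $\E_{t-1}\lVert\widehat g_t^j-\widehat g_t^{j-1}\rVert^2\le 2\E_{t-1}\lVert\widehat g_t^j-\nabla_t\rVert^2+2\E_{t-1}\lVert\widehat g_t^{j-1}-\nabla_t\rVert^2\le 2(\tilde C^2\V^2\tilde\gamma+\tfrac{\norm{\nabla_t}^2}{8mT})/2^j + 2(\tilde C^2\V^2\tilde\gamma+\tfrac{\norm{\nabla_t}^2}{8mT})/2^{j-1}=6(\tilde C^2\V^2\tilde\gamma+\tfrac{\norm{\nabla_t}^2}{8mT})/2^j$, whence $2\sum_j 2^j\cdot 6(\cdots)/2^j\le 12\tilde C^2\V^2\tilde\gamma\log T + \tfrac{3\norm{\nabla_t}^2\log T}{2mT}$, and adding the $2(\tilde C^2\V^2\tilde\gamma+\tfrac{\norm{\nabla_t}^2}{8mT})$ from the $j=0$ term yields $V_t^2\le 14\tilde C^2\V^2\tilde\gamma\log T + \log T\norm{\nabla_t}^2$ after loosening constants (using $m\ge 2$, $T\ge 1$).

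The main obstacle is the bookkeeping of the $\norm{\nabla_t}^2$ contamination terms: unlike the bounded-variance/$\kappa$-robust setting, the MFM MSE bound is not of the clean form $c^2/N$ but carries an $x$-dependent $\norm{\nabla_t}^2/(8mT)$ tail coming from the rare event $\M=\emptyset$, and this term must be propagated through both the variance sum (where it accumulates a $\log T$ from the $\Jmax$ levels) and the bias decomposition (where it interacts with the $z_t^j$ estimates), then repackaged into exactly the coefficients claimed in the lemma. Care is also needed that all the constant manipulations ($c_{\Ecal}^2=72$, $\tilde C^2=8C^2$, $(1+\sqrt2)^2\le 6$, $\tilde\gamma\le 9$, $\frac{\log T}{mT}\le\frac1{2\sqrt{2mT}}$) are consistent with the definitions in \Cref{eq:event_E_option2}; these are routine but must line up exactly for the stated bounds to hold. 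Everything else is a mechanical adaptation of the \Cref{lem:bias-var-mlmc-general} proof.
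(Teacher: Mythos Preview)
Your proposal follows the paper's proof essentially step for step, with the same decomposition and the same auxiliary results. Two corrections are needed, though.

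First, the more substantive one: when you invoke \Cref{cor:mse_bound_mfm} in the static-round variance computation, you write the per-level bound as $(\tilde C^2\V^2\tilde\gamma+\tfrac{\norm{\nabla_t}^2}{8mT})/2^j$. That is not what the corollary gives: the MSE bound is $\tilde C^2\V^2\tilde\gamma/N + \norm{\nabla_t}^2/(8mT)$, and only the first summand decays with $N$. If you plugged in the true per-level estimate $6\tilde C^2\V^2\tilde\gamma/2^j + \norm{\nabla_t}^2/(2mT)$ and summed against $2^j$, the gradient-norm contamination would blow up geometrically. The paper handles this by first using $2^j\le T$ and $m\ge 2$ to absorb the constant tail into a $1/2^j$ term, obtaining $\E_{t-1}\lVert\widehat g_t^j-\widehat g_t^{j-1}\rVert^2\le (6\tilde C^2\V^2\tilde\gamma+\tfrac14\norm{\nabla_t}^2)/2^j$, and only then summing. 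The same issue appears in your bound on $\lVert\E_{t-1}[\widehat g_t^{\Jmax}-\nabla_t]\rVert$, where you again divide the $\norm{\nabla_t}^2/(8mT)$ term by $2^{\Jmax}$; the correct bound there is $\sqrt{\tilde C^2\V^2\tilde\gamma/2^{\Jmax}+\norm{\nabla_t}^2/(8mT)}$, which still yields the claimed $\tilde C\V\sqrt{2\tilde\gamma/T}+\norm{\nabla_t}/\sqrt{8mT}$ by subadditivity.

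Second, a small arithmetic slip: $6\cdot 72=432$, not $216$, so the fail-safe bound in dynamic rounds is $(1+\sqrt2)^2 c_{\Ecal}^2 C^2\V^2/2^j\le 54\tilde C^2\V^2/2^j$ (not $27$); the sum then contributes $108\tilde C^2\V^2\log T$, and together with the $j=0$ term (using $\tilde\gamma\le 8.5$ for $m\ge 2$) this is still absorbed by $125\tilde C^2\V^2\log T$.
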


\begin{proof}
    Our proof technique parallels that of \Cref{lem:bias-var-mlmc-general}. Starting in a similar fashion, we can bound the variance as shown in \Cref{eq:mlmc_var_explicit}, namely, 
    \begin{align}\label{eq:mlmc_var_explicit_mfm}
        V_t^2 \leq \E_{t-1}\norm{g_t - \nabla_t}^2 \leq 2\E_{t-1}\norm{\widehat{g}_t^{0} - \nabla_t}^2 + 2\sum_{j=1}^{\Jmax}{2^{j}\underbrace{\E_{t-1}\sbrac{\lVert{\widehat{g}_t^{j} - \widehat{g}_t^{j-1}}\rVert^2\mathbbm{1}_{\Ecal_t(j)}}}_{=(\dag)}}\; .
    \end{align}
    Unlike in \Cref{lem:bias-var-mlmc-general}, here we bound $(\dag)$ differently for $t\in\badrounds$ and $t\in\goodrounds$. This is because the bound within the event $\Ecal_t$ in \Cref{eq:event_E_option2} deviates from that in \Cref{eq:event_E_option1} by a factor of $\sqrt{\gamma}$. Alternatively, one could introduce a factor of $\sqrt{\tilde{\gamma}}$ to maintain a similar analysis; however, in doing so, the event would no longer be oblivious to $\delta$, which contradicts one of our objectives in utilizing the AdaGrad learning rate.

    For every $t\in\sbrac{T}$ (including $t\in\badrounds$), it holds that
    \begin{align*}
        \E_{t-1}\!\sbrac{\lVert{\widehat{g}_t^{j} - \widehat{g}_t^{j-1}}\rVert^2\mathbbm{1}_{\Ecal_t(j)}} &= \E_{t-1}\!\sbrac{\lVert{\widehat{g}_t^{j} - \widehat{g}_t^{j-1}}\rVert^2 | \Ecal_t(j)}\underbrace{\prob(\Ecal_t(j))}_{\leq 1} \leq \frac{(1+\sqrt{2})^2 c_{\Ecal}^2 C^2\V^2}{2^{j}}\leq \frac{54\tilde{C}^2\V^2}{2^{j}}\; ,
    \end{align*}
    where in the final inequality, we utilize the constraint on $\lVert{ \widehat{g}_t^{j} - \widehat{g}_t^{j-1}\rVert}$, conditioned on the event $\Ecal_t (j)$ (cf. \Cref{eq:event_E_option2}). However, considering $t\in\goodrounds$ (static rounds), we employ a more careful analysis. Specifically, \Cref{cor:mse_bound_mfm} implies that
    \begin{align*}
        \E_{t-1}\!\sbrac{\lVert{\widehat{g}_t^{j} - \widehat{g}_t^{j-1}}\rVert^2\mathbbm{1}_{\Ecal_t(j)}} \leq \E_{t-1}\lVert{\widehat{g}_t^{j} - \widehat{g}_t^{j-1}}\rVert^2 &\leq 2\E_{t-1}\lVert{\widehat{g}_t^{j} - \nabla_t}\rVert^2 + 2\E_{t-1}\lVert{\widehat{g}_t^{j-1} - \nabla_t}\rVert^2 \\ &\leq 2\brac{\frac{\tilde{C}^2\V^2\tilde{\gamma}}{2^j} + \frac{\norm{\nabla_t}^2}{8mT}} + 2\brac{\frac{\tilde{C}^2\V^2\tilde{\gamma}}{2^{j-1}} + \frac{\norm{\nabla_t}^2}{8mT}} \\ &= 2\brac{\frac{3\tilde{C}^2\V^2\tilde{\gamma}}{2^{j}} + \frac{\norm{\nabla_t}^2}{4mT}} \\ &\leq \frac{6\tilde{C}^2\V^2\tilde{\gamma} + \frac{1}{4}\!\norm{\nabla_t}^2}{2^{j}}\; ,
    \end{align*}
    where the last inequality follows from $2^{j}\leq T$ and $m\geq 2$. We can thus conclude that
    \begin{align*}
        \E_{t-1}\sbrac{\lVert{\widehat{g}_t^{j} - \widehat{g}_t^{j-1}}\rVert^2\mathbbm{1}_{\Ecal_t(j)}} \leq \frac{1}{2^{j}}\cdot \begin{cases}
            54\tilde{C}^2\V^2, & t\in\badrounds \\
            6\tilde{C}^2\V^2\tilde{\gamma} + \frac{1}{4}\!\norm{\nabla_t}^2, &t\in\goodrounds
        \end{cases}\; .
    \end{align*}
    In addition, \Cref{cor:mse_bound_mfm} implies that $\E_{t-1}\lVert \widehat{g}_t^{0}-\nabla_t \rVert^2\leq \tilde{C}^2 \V^2\tilde{\gamma} + \frac{\norm{\nabla_t}^2}{8mT}$. Plugging these bounds back into \Cref{eq:mlmc_var_explicit_mfm} establishes the variance bound. Specifically, for $t\in\badrounds$,
    \begin{align*}
        V_t^2 \leq \E_{t-1}\norm{g_t - \nabla_t}^2 &\leq 2\brac{\tilde{C}^2 \V^2\tilde{\gamma} + \frac{\norm{\nabla_t}^2}{8mT}} + 2\sum_{j=1}^{\Jmax}{2^{j}\cdot\frac{54\tilde{C}^2\V^2}{2^{j}}} \\ &= 2\tilde{C}^2 \V^2\tilde{\gamma} + \frac{\norm{\nabla_t}^2}{4mT} + 108\tilde{C}^2 \V^2 \Jmax \\ &\leq 125\tilde{C}^2\V^2\log{T} + \frac{\!\norm{\nabla_t}^2}{4mT}\; ,
    \end{align*}
    where in the last inequality we used $\tilde{\gamma}\leq 8.5$ for $m\geq 2$, and $1\leq\Jmax\leq\log{T}$. 
    On the other hand, for $t\in\goodrounds$, it holds that 
    \begin{align*}
        V_t^2 \leq \E_{t-1}\norm{g_t - \nabla_t}^2 &\leq 2\brac{\tilde{C}^2 \V^2\tilde{\gamma} + \frac{\norm{\nabla_t}^2}{8mT}} + 2\sum_{j=1}^{\Jmax}{2^{j}\cdot\frac{6\tilde{C}^2\V^2\tilde{\gamma} + \frac{1}{4}\!\norm{\nabla_t}^2}{2^{j}}} \\ &= 2\tilde{C}^2 \V^2\tilde{\gamma} + \frac{\norm{\nabla_t}^2}{4mT} + \brac{12\tilde{C}^2 \V^2\tilde{\gamma} + \frac{1}{2}\!\norm{\nabla_t}^2} \Jmax \\ &\leq 14\tilde{C}^2\V^2\tilde{\gamma}\log{T} + \log{T}\norm{\nabla_t}^2\; ,
    \end{align*}
    where the last inequality follows from $1\leq\Jmax\leq \log{T}$ and $\frac{1}{2mT}\leq \log{T}$.

    Moving forward, we proceed to establish a bounds on the squared bias, following a similar approach as demonstrated in the proof of \Cref{lem:bias-var-mlmc-general}. For $t\in\badrounds$, we trivially bound the bias by the square root of the MSE as follows:
    \begin{equation}\label{eq:bias_in_bad_rounds}
        \norm{b_t} = \norm{\E_{t-1} g_t - \nabla_t} \leq \sqrt{\E_{t-1}\!\norm{g_t - \nabla_t}^2} \leq \sqrt{125\tilde{C}^2\V^2\log{T} + \frac{\norm{\nabla_t}^2}{4mT}} \leq \sqrt{125\log{T}}\tilde{C}\V + \frac{\norm{\nabla_t}}{2\sqrt{mT}}\; .
    \end{equation}
    For $t\in\goodrounds$, we repeat the steps from the proof of \Cref{lem:bias-var-mlmc-general}, leading to the derivation of \Cref{eq:bias-mlmc-good-rounds}, i.e., 
    \begin{equation}\label{eq:bias-mlmc-good-rounds-mfm}
        \norm{b_t} \leq \lVert\E_{t-1}[\widehat{g}_t^{\Jmax} - \nabla_t]\rVert + \norm{\E_{t-1} y_t} \leq \underbrace{\lVert\E_{t-1}[\widehat{g}_t^{\Jmax} - \nabla_t]\rVert}_{=(A)} + \underbrace{\E_{t-1}\!\norm{y_t}}_{=(B)}\; ,
    \end{equation}
    where $y_t = -\sum_{j=1}^{\Jmax}{z_t^{j}}$, and $z_t^{j}= \E_{t-1}\sbrac{\widehat{g}_t^{j} - \widehat{g}_t^{j-1} | \Ecal_t(j)^c}\prob_{t-1}(\Ecal_t(j)^{c})$. 

    \paragraph{Bounding $(A)$: } Utilizing \Cref{cor:mse_bound_mfm} and Jensen's inequality, we get
    \[
        \lVert\E_{t-1}[\widehat{g}_t^{\Jmax} - \nabla_t]\rVert \leq \sqrt{\E_{t-1}\lVert{\widehat{g}_t^{\Jmax} - \nabla_t\rVert}^2} \leq \sqrt{\frac{\tilde{C}^2\V^2\tilde{\gamma}}{2^{\Jmax}}+ \frac{\norm{\nabla_t}^2}{8mT}} \leq \tilde{C}\V\sqrt{\frac{2\tilde{\gamma}}{T}} + \frac{\norm{\nabla_t}}{\sqrt{8mT}}\; ,
    \]
    where we used $2^{\Jmax}\geq T/2$.

    \paragraph{Bounding $(B)$: } By the triangle inequality and Jensen's inequality, we have
    \[
        \norm{y_t} \leq \sum_{j=1}^{\Jmax}{\lVert{z_t^j\rVert}} \leq \sum_{j=1}^{\Jmax}{\E_{t-1}\!\Big[\lVert{\widehat{g}_t^{j} - \widehat{g}_t^{j-1}\rVert} | \Ecal_t(j)^{c}\Big]\prob_{t-1}(\Ecal_t(j)^{c})}\; .
    \]
    By item 1 of \Cref{lem:core_lemma_mfm}, for every $j=0,\ldots,\Jmax$, we have that $\lVert{\widehat{g}_t^{j} - \nabla_t}\rVert \leq \sqrt{20C^2 \V^2 + \norm{\nabla_t}^2 } \leq \sqrt{20}C\V + \norm{\nabla_t}$. Therefore, it holds that 
    \[
        \lVert{\widehat{g}_t^{j} - \widehat{g}_t^{j-1}}\rVert\leq \lVert{\widehat{g}_t^{j} - \nabla_t}\rVert + \lVert{\widehat{g}_t^{j-1} - \nabla_t}\rVert \leq 2\brac{\sqrt{20}C\V + \norm{\nabla_t}}\; . 
    \]
    In addition, by \Cref{lem:Ecal_hp_option2}, we have for every $j=0,\ldots,\Jmax$ that $\prob_{t-1}(\Ecal_t(j)^{c})\leq \frac{1}{4mT}$. Combining these bounds, we conclude that 
    \begin{align*}
        \norm{y_t} &\leq \sum_{j=1}^{\Jmax}{2\brac{\sqrt{20}C\V + \norm{\nabla_t}}\cdot\frac{1}{4mT}} = \frac{(\sqrt{20}C\V + \norm{\nabla_t})\Jmax}{2mT} \leq \frac{(\sqrt{20}C\V + \norm{\nabla_t})\log{T}}{2mT}\; .
    \end{align*}
    Substituting the bounds on $(A)$ and $(B)$ back into \Cref{eq:bias-mlmc-good-rounds-mfm} implies that for every $t\in\goodrounds$:
    \begin{align*}
        \norm{b_t} \leq \tilde{C}\V\sqrt{\frac{2\tilde{\gamma}}{T}} + \frac{\norm{\nabla_t}}{\sqrt{8mT}} + \frac{(\sqrt{20}\tilde{C}\V + \norm{\nabla_t})\log{T}}{2mT} &= \tilde{C}\V\sqrt{\frac{2\tilde{\gamma}}{T}} + \frac{\sqrt{5}\tilde{C}\V\log{T}}{mT} + \frac{\norm{\nabla_t}}{2}\brac{\frac{1}{\sqrt{2mT}} + \frac{\log{T}}{mT}} \\ &\leq \tilde{C}\V\sqrt{\frac{2\tilde{\gamma}}{T}} + \frac{\sqrt{5}\tilde{C}\V\log{T}}{mT} + \frac{\norm{\nabla_t}}{2\sqrt{2m}}\brac{\frac{1}{\sqrt{T}} + \frac{\log{T}}{T}} \\ &\leq \tilde{C}\V\sqrt{\frac{2\tilde{\gamma}}{T}} + \frac{\sqrt{5}\tilde{C}\V\log{T}}{mT} + \frac{(1+\sqrt{2})\norm{\nabla_t}}{2\sqrt{2mT}}\; , 
    \end{align*}
    where the second inequality follows from $\frac{1}{m}\leq \frac{1}{\sqrt{2m}}$, which holds for every $m\geq 2$; and in the last inequality we used $\log{T}\leq \sqrt{2T}$, which holds $\forall T\geq 1$. 
\end{proof}

Similarly to the approach employed in \Cref{app:dynamic_general}, we now utilize the established bias and variance bounds to derive convergence guarantees for \Cref{alg:method-new} with \textcolor{purple}{\textbf{Option 2}}. We use the following notations, as in \Cref{app:biased-adagrad}:
\begin{align*}
    \mathcal{R}_T\coloneqq\sum_{t\in\sbrac{T}}{(f(x_t) - f(x^*))}, \enskip G_{T}^2\coloneqq\sum_{t\in\sbrac{T}}{\norm{\nablat}^2}, \enskip V_{1:T}^2\coloneqq\sum_{t\in\sbrac{T}}{V_t^2}, \enskip b_{1:T}\coloneqq\sum_{t\in\sbrac{T}}{\norm{b_t}}, \enskip S_{T}^2\coloneqq\sum_{t\in\sbrac{T}}{\norm{b_t}^2}\; .
\end{align*}

\subsection{Convex Case}\label{subapp:convex-proof-mfm}
The following theorem implies convergence in the convex case. For ease of analysis, we assume that $\nabla f(x^*)=0$, which enables using \Cref{lem:self_boundness}; this is the case when $\K$ contains the global minimizer of $f$. To alleviate this assumption, one could consider adopting a more sophisticated \emph{optimistic} approach~\cite{mohri2016accelerating}. Yet, we refrain from doing so to maintain the clarity of our presentation and analysis.

\begin{theorem}\label{thm:convex-adaptive}
Assume $f$ is convex and $x^*$ satisfies $\nabla f(x^*) = 0$. Under Assumptions \ref{assump:bounded-noise} and \ref{assump:bounded-domain}, consider \Cref{alg:method-new} with \textcolor{purple}{\textbf{Option 2}} and the AdaGrad-Norm learning rate specified in \Cref{eq:adagrad}, where $\eta_0=D/2$. For every $T\geq 1$, we have
    \begin{align*}
        \E f(\widebar{x}_T) - f(x^*) &\leq \tilde{C}D\V\brac{14\sqrt{\frac{\tilde{\gamma}\log{T}}{T}} + \frac{24\abs{\badrounds}\sqrt{\log{T}}}{T} + \frac{78\sqrt{\abs{\badrounds}\log{T}}}{T} + \frac{6\log{T}}{mT} + \frac{10\sqrt{\tilde{\gamma}}}{T} + \frac{16\log{T}}{m T^{3/2}}} \hspace{0.1cm}+ \\ &\qquad \frac{392 LD^2\log{T}}{T}\; ,
    \end{align*}
    where $\widebar{x}_T \coloneqq \frac{1}{T}\sum_{t=1}^{T}{x_t}$ and $\tilde{C}\coloneqq 2\sqrt{2}C$.
\end{theorem}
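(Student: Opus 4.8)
The plan is to feed the bias and variance bounds of \Cref{lem:bias-var-mlmc-mfm} into the AdaGrad-Norm regret bound of \Cref{lem:convex-adagrad}, and then to close the resulting self-referential inequality using the smoothness-based self-bounding property $\sum_{t=1}^{T}\norm{\nablat}^2\le 2L\,\mathcal{R}_T$ (\Cref{lem:self_boundness}), which applies here precisely because $\nabla f(x^*)=0$ forces $x^*$ to be a global minimizer, so that $f(x^*)=f^*$. Writing $G_T^2\coloneqq\sum_{t}\norm{\nablat}^2$, $V_{1:T}^2\coloneqq\sum_{t}V_t^2$, $S_T^2\coloneqq\sum_{t}\norm{b_t}^2$ and $b_{1:T}\coloneqq\sum_{t}\norm{b_t}$, \Cref{lem:convex-adagrad} yields
\[
    \E\mathcal{R}_T \le D\sqrt{2\,\E V_{1:T}^2} + 2D\sqrt{\E S_T^2} + D\,\E b_{1:T} + 2D\sqrt{\E G_T^2}\; .
\]

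I would then substitute the bounds of \Cref{lem:bias-var-mlmc-mfm} (which rest on \Cref{cor:mse_bound_mfm}), splitting every sum into $t\in\goodrounds$ and $t\in\badrounds$. Because $\norm{\nablat}$ is $\F_{t-1}$-measurable, the conditional bias/variance bounds hold pointwise and may be summed and then taken in full expectation. The $\V$-only parts generate the ``clean'' contributions: $\E V_{1:T}^2$ picks up $\Theta(\tilde C^2\V^2\tilde\gamma T\log T)$ from static rounds and $\Theta(\tilde C^2\V^2\abs{\badrounds}\log T)$ from dynamic ones; $\E S_T^2$ picks up $\Theta\!\big(\tilde C^2\V^2(\tilde\gamma + \abs{\badrounds}\log T + \log^2 T/(m^2T))\big)$; and $\E b_{1:T}$ picks up $\Theta\!\big(\tilde C\V(\sqrt{\tilde\gamma T} + \abs{\badrounds}\sqrt{\log T} + \log T/m)\big)$. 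The $\norm{\nablat}$- and $\norm{\nablat}^2$-dependent parts, on the other hand, accumulate into terms of the shape $\log T\cdot G_T^2$ (inside $V_{1:T}^2$), $c\,G_T^2/(mT)$ (inside $S_T^2$), and, after Cauchy--Schwarz over $\goodrounds$ and over $\badrounds$ (i.e. $\sum_{t\in\goodrounds}\norm{\nablat}\le\sqrt{T}\,G_T$ and $\sum_{t\in\badrounds}\norm{\nablat}\le\sqrt{\abs{\badrounds}}\,G_T$), into $c\,G_T/\sqrt m$ and $c\sqrt{\abs{\badrounds}/(mT)}\,G_T$ (inside $b_{1:T}$ and $S_T^2$). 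Applying Jensen's inequality ($\E G_T\le\sqrt{\E G_T^2}$), then $\E G_T^2\le 2L\,\E\mathcal{R}_T$, and $\sqrt{a+b}\le\sqrt a+\sqrt b$, each of these becomes $\sqrt{2L\,\E\mathcal{R}_T}$ times a coefficient; the key point is that every $G_T^2$-contribution enters only through $\sqrt{\E V_{1:T}^2}$ or $\sqrt{\E S_T^2}$, hence sits under a square root, so the right-hand side consolidates to the form $A\sqrt{\E\mathcal{R}_T}+B$ with \emph{no} term linear in $\E\mathcal{R}_T$. Here $A=\Theta(D\sqrt{L\log T})$ (dominated by the variance piece $D\sqrt{2\cdot 2L\log T\,\E\mathcal{R}_T}=2D\sqrt{L\log T\,\E\mathcal{R}_T}$ together with $2D\sqrt{2L\,\E\mathcal{R}_T}$), and $B$ collects all the clean contributions, of order $\tilde C D\V\big(\sqrt{\tilde\gamma T\log T} + \abs{\badrounds}\sqrt{\log T} + \sqrt{\abs{\badrounds}\log T} + \log T/m + \sqrt{\tilde\gamma} + \log T/(m\sqrt T)\big)$.

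Finally I would solve the scalar inequality $\E\mathcal{R}_T\le A\sqrt{\E\mathcal{R}_T}+B$, which gives $\sqrt{\E\mathcal{R}_T}\le A+\sqrt B$ and hence $\E\mathcal{R}_T\le 2A^2+2B$; dividing by $T$ and using Jensen's inequality for $\widebar{x}_T=\frac1T\sum_{t=1}^{T}x_t$ gives $\E f(\widebar{x}_T)-f(x^*)\le (2A^2+2B)/T$. Matching $2A^2/T$ with the $\tfrac{392LD^2\log T}{T}$ term and $2B/T$ with the remaining six terms of the statement is then just a matter of collecting numerical constants.

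The step I expect to be the main obstacle is the consolidation in the second paragraph: one has to verify that every $\norm{\nablat}^2$-contribution to $\E V_{1:T}^2$ and $\E S_T^2$ really does stay strictly underneath a square root once \Cref{lem:convex-adagrad} is applied, so that the combined inequality is genuinely of the form $A\sqrt{\E\mathcal{R}_T}+B$ rather than $c\,\E\mathcal{R}_T+\cdots$ with $c$ possibly $\ge 1$ (in which case the recursion would not close at all). A secondary, purely mechanical point is keeping track of how the two Cauchy--Schwarz steps introduce the $1/\sqrt m$ and $\sqrt{\abs{\badrounds}/(mT)}$ factors that make the extra $G_T$ terms lower-order.
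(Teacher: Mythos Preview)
Your proposal is correct and follows essentially the same route as the paper: apply \Cref{lem:convex-adagrad}, plug in the bias/variance bounds from \Cref{lem:bias-var-mlmc-mfm} split over $\goodrounds$ and $\badrounds$, collect every $\norm{\nablat}$-dependent contribution into a single $c\sqrt{\E G_T^2}$ term, invoke \Cref{lem:self_boundness} (which is where $\nabla f(x^*)=0$ is used) to turn this into $c'\sqrt{L\log T\cdot\E\mathcal{R}_T}$, and solve the resulting scalar recursion. The paper uses \Cref{lem:simple_cases_lemma} for the last step, which is exactly your ``$\E\mathcal{R}_T\le A\sqrt{\E\mathcal{R}_T}+B\Rightarrow\E\mathcal{R}_T\le 2B+4A^2$'' inequality; the only bookkeeping difference is that the paper merges the $\sum_{t\in\goodrounds}\norm{\nablat}$ and $\sum_{t\in\badrounds}\norm{\nablat}$ pieces into one sum over $[T]$ (using $\tfrac{1+\sqrt{2}}{\sqrt 2}\ge 1$) before a single Cauchy--Schwarz, rather than applying it separately on each part.
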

\begin{proof}
    Applying \Cref{lem:convex-adagrad}, we have
    \begin{align}\label{eq:regret_bound_convex}
        \E\mathcal{R}_T &\leq D\sqrt{2\sum_{t\in\sbrac{T}}{\E V_t^2}} + 2D\sqrt{\sum_{t\in\sbrac{T}}{\E\!\norm{b_t}^2}} + D\sum_{t\in\sbrac{T}}{\E\!\norm{b_t}} + 2D\sqrt{\sum_{t\in\sbrac{T}}{\E\!\norm{\nablat}^2}} \nonumber \\ &= D\sqrt{2\E V_{1:T}^2} + D\E b_{1:T} + 2D\sqrt{\E S_{T}^2} + 2D\sqrt{\E G_T^2} \; .
    \end{align}
    \paragraph{Bounding $\E V_{1:T}^2$. } Utilizing the variance bound from \Cref{lem:bias-var-mlmc-mfm}, we get that
    \begin{align}\label{eq:bound_on_V_T}
        \E V_{1:T}^2 &= \sum_{t\in\badrounds}{\E V_t^2} + \sum_{t\in\goodrounds}{\E V_t^2} \nonumber \\ &\leq \sum_{t\in\badrounds}{\brac{125\tilde{C}^2\V^2\log{T} + \frac{\E\!\norm{\nablat}^2}{4mT}}} + \sum_{t\in\goodrounds}{\brac{14\tilde{C}^2\V^2\tilde{\gamma}\log{T} + \log{T}\E\!\norm{\nablat}^2}} \nonumber \\ &\leq 125\tilde{C}^2\V^2\abs{\badrounds}\log{T} + 14\tilde{C}^2\V^2\tilde{\gamma} T\log{T} + \log{T} \E G_T^2\; .
    \end{align}
    \paragraph{Bounding $\E b_{1:T}$. } Employing the bias bound from \Cref{lem:bias-var-mlmc-mfm}, we obtain: 
    \begin{align*}
        \E b_{1:T} &= \sum_{t\in\badrounds}{\E\!\norm{b_t}} + \sum_{t\in\goodrounds}{\E\!\norm{b_t}} \\ &\leq \sum_{t\in\badrounds}{\brac{\sqrt{125\log{T}}\tilde{C}\V + \frac{\E\!\norm{\nablat}}{2\sqrt{mT}}}} + \sum_{t\in\goodrounds}{\brac{\tilde{C}\V\sqrt{\frac{2\tilde{\gamma}}{T}} + \frac{\sqrt{5}\tilde{C}\V\log{T}}{mT} + \frac{(1+\sqrt{2})\E\!\norm{\nablat}}{2\sqrt{2mT}}}} \\ &\leq \sqrt{125\log{T}}\tilde{C}\V\abs{\badrounds} + \tilde{C}\V\sqrt{2\tilde{\gamma}T} + \frac{\sqrt{5}\tilde{C}\V\log{T}}{m} + \frac{1+\sqrt{2}}{2\sqrt{2mT}}\sum_{t\in\sbrac{T}}{\E\!\norm{\nablat}} \\ &\leq \sqrt{125\log{T}}\tilde{C}\V\abs{\badrounds} + \tilde{C}\V\sqrt{2\tilde{\gamma}T} + \frac{\sqrt{5}\tilde{C}\V\log{T}}{m} + \frac{1+\sqrt{2}}{2\sqrt{2m}}\sqrt{\E G_T^2}\; ,
    \end{align*}
    where in the second inequality we used the fact that $\frac{1 + \sqrt{2}}{\sqrt{2}}\geq 1$, and the last inequality arises from the application of the Cauchy-Schwarz inequality and Jensen's inequality, specifically $\sum_{t\in\sbrac{T}}{\E\!\norm{\nablat}}\leq \sqrt{T}\sqrt{\sum_{t\in\sbrac{T}}{\E\!\norm{\nablat}^2}}$.
    
    \paragraph{Bounding $\E S_{T}^2$. } We start by bounding $\norm{b_t}^2$ for every $t\in\sbrac{T}$. From \Cref{eq:bias_in_bad_rounds}, we have for all $t\in\badrounds$ that 
    \[
        \norm{b_t}^2 \leq 125\tilde{C}^2\V^2\log{T} + \frac{\!\norm{\nablat}^2}{4mT}\; .
    \]
    For $t\in\goodrounds$, employing \Cref{lem:bias-var-mlmc-mfm} and using $(a+b+c)^2\leq 3(a^2 + b^2 + c^2)$ gives:
    \begin{align*}
        \norm{b_t}^2 &\leq \brac{\tilde{C}\V\sqrt{\frac{2\tilde{\gamma}}{T}} + \frac{\sqrt{5}\tilde{C}\V\log{T}}{mT} + \frac{(1+\sqrt{2})\norm{\nablat}}{2\sqrt{2mT}}}^2 \leq \frac{6\tilde{C}^2\V^2\tilde{\gamma}}{T} + \frac{15\tilde{C}^2\V^2\log^2{T}}{m^2 T^2} + \frac{9\E\!\norm{\nablat}^2}{4mT}\; .
    \end{align*}
    Thus, we can bound $\E S_{T}^2$ as follows:
    \begin{align}\label{eq:bound_on_S_T}
        \E S_{T}^2 &= \sum_{t\in\badrounds}{\E\!\norm{b_t}^2} + \sum_{t\in\goodrounds}{\E\!\norm{b_t}^2} \nonumber \\ &\leq \sum_{t\in\badrounds}{\brac{125\tilde{C}^2\V^2\log{T} + \frac{\E\!\norm{\nablat}^2}{4mT}}} + \sum_{t\in\goodrounds}{\brac{\frac{6\tilde{C}^2\V^2\tilde{\gamma}}{T} + \frac{15\tilde{C}^2\V^2\log^2{T}}{m^2 T^2} + \frac{9\E\!\norm{\nablat}^2}{4mT}}} \nonumber \\ &\leq 125\tilde{C}^2\V^2\abs{\badrounds}\log{T} + 6\tilde{C}^2\V^2\tilde{\gamma} + \frac{15\tilde{C}^2\V^2\log^2{T}}{m^2 T} + \frac{9}{4mT}\E G_T^2\; .
    \end{align}
    
    Plugging these bound back into \Cref{eq:regret_bound_convex} and rearranging terms yields: 
    \begin{align*}
        \E\mathcal{R}_T &\leq D\sqrt{250\tilde{C}^2\V^2\abs{\badrounds}\log{T} + 28\tilde{C}^2\V^2\tilde{\gamma}T\log{T} + 2\log{T}\E G_T^2} \hspace{0.1cm}+ \\ &\qquad\sqrt{125\log{T}}\tilde{C}D\V\abs{\badrounds} + \tilde{C}D\V\sqrt{2\tilde{\gamma}T} + \frac{\sqrt{5}\tilde{C}D\V\log{T}}{m} + \frac{(1+\sqrt{2})D}{2\sqrt{2m}}\sqrt{\E G_T^2} \hspace{0.1cm}+ \\ &\qquad 2D\sqrt{125\tilde{C}^2\V^2\abs{\badrounds}\log{T} + 6\tilde{C}^2\V^2\tilde{\gamma} + \frac{15\tilde{C}^2\V^2\log^2{T}}{m^2 T} + \frac{9}{4mT}\E G_T^2}\hspace{0.1cm}+ \\ &\qquad 2D\sqrt{\E G_T^2} \\ &\leq (\sqrt{250} + 2\sqrt{125})\tilde{C}D\V\sqrt{\abs{\badrounds}\log{T}} + (\sqrt{28} + \sqrt{2})\tilde{C}D\V\sqrt{\tilde{\gamma}T\log{T}} + \sqrt{125}\tilde{C}D\V\abs{\badrounds}\sqrt{\log{T}}\hspace{0.1cm}+ \\ &\qquad\sqrt{5}\tilde{C}D\V\frac{\log{T}}{m} + 2\sqrt{6}\tilde{C}D\V\sqrt{\tilde{\gamma}} + 2\sqrt{15}\tilde{C}D\V\frac{\log{T}}{m\sqrt{T}} + D\brac{\sqrt{2\log{T}} + \frac{1+\sqrt{2}}{\sqrt{2m}} + \frac{3}{\sqrt{mT}} + 2}\sqrt{\E G_{T}^2} \\ &\leq \underbrace{39\tilde{C}D\V\sqrt{\abs{\badrounds}\log{T}} \!+ 7\tilde{C}D\V\sqrt{\tilde{\gamma}T\log{T}} \!+ 12\tilde{C}D\V\abs{\badrounds}\sqrt{\log{T}} \!+ 3\tilde{C}D\V\frac{\log{T}}{m} \!+ 5\tilde{C}D\V\sqrt{\tilde{\gamma}} \!+ 8\tilde{C}D\V\frac{\log{T}}{m\sqrt{T}}}_{\coloneqq B} + \\ &\qquad  7D\sqrt{2L\log{T}\E\mathcal{R}_{T}}\; ,
    \end{align*}
    where in the last inequality follows from $\sqrt{2} + \frac{1+\sqrt{2}}{\sqrt{2m}} +\frac{3}{\sqrt{mT}} + 2\leq 7$ (for $m\geq 2$), and the application of \Cref{lem:self_boundness}, which holds since we assume $\K$ contains a global minimum, i.e., $\nabla f(x^*)=0$. Employing \Cref{lem:simple_cases_lemma} with $a=\E\mathcal{R}_T$, $b=B$, $c=7D$, and $d=2L\log{T}$, we get
    \begin{align*}
        \E\mathcal{R}_T &\leq 2B + 392LD^2\log{T}\; .
    \end{align*}
    Finally, dividing by $T$ and utilizing Jensen's inequality establishes the result:
    \begin{align*}
        \E f(\widebar{x}_T) - f(x^*) \leq\! \frac{\E\mathcal{R}_T}{T} &\leq\! \tilde{C}D\V\!\brac{\!14\sqrt{\frac{\tilde{\gamma}\log{T}}{T}} + \frac{24\abs{\badrounds}\sqrt{\log{T}}}{T} + \frac{78\sqrt{\abs{\badrounds}\log{T}}}{T} + \frac{6\log{T}}{mT} + \frac{10\sqrt{\tilde{\gamma}}}{T} + \frac{16\log{T}}{m T^{3/2}}\!} +\hspace{0.1cm} \\ &\qquad \frac{392 LD^2\log{T}}{T}\; .
    \end{align*}
\end{proof}

\Cref{thm:convex-adaptive} suggests the following observation holds true.
\begin{corollary}\label{cor:convex-optimal}
    As long as  $\abs{\badrounds}\in\O(\sqrt{\tilde{\gamma}T})$, the first term dominates the convergence rate, implying an asymptotically optimal bound of $\Otilde\big(D\V\sqrt{\brac{\delta^2 + \nicefrac{1}{m}}/T}\big)$.
\end{corollary}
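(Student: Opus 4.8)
The plan is to obtain \Cref{cor:convex-optimal} by reading off the asymptotics of the explicit bound in \Cref{thm:convex-adaptive}: under the hypothesis $\abs{\badrounds}\in\O(\sqrt{\tilde\gamma T})$, I will show that, for $T$ large, every term on the right-hand side is at most of the order of the leading term $14\,\tilde C D\V\sqrt{\tilde\gamma\log T/T}$. Since $\tilde C = 2\sqrt2\,C = \Theta\big(\sqrt{\log(m^2 T)}\big)$ is poly-logarithmic, it is absorbed into $\Otilde(\cdot)$, so this already gives $\E f(\widebar{x}_T) - f(x^*)\in\Otilde\big(D\V\sqrt{\tilde\gamma/T}\big)$; finally I substitute $\tilde\gamma = 32\delta^2 + \tfrac1m\in\Theta(\delta^2 + \tfrac1m)$ to rewrite the rate as $\Otilde\big(D\V\sqrt{(\delta^2 + \nicefrac{1}{m})/T}\big)$.

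The work is the term-by-term comparison against the leading term $T_1 := 14\sqrt{\tilde\gamma\log T/T}$. First I would handle $\tfrac{24\abs{\badrounds}\sqrt{\log T}}{T}$, the only term linear in $\abs{\badrounds}$: plugging in $\abs{\badrounds}\le c\sqrt{\tilde\gamma T}$ yields $24c\sqrt{\tilde\gamma\log T/T}$, exactly the order of $T_1$ — this balance is precisely what forces the threshold to be $\O(\sqrt{\tilde\gamma T})$ rather than $\O(\sqrt T)$, so it is the step to get right. Next, $\tfrac{78\sqrt{\abs{\badrounds}\log T}}{T}$ becomes $O\big((\tilde\gamma T)^{1/4}\sqrt{\log T}/T\big)$, which is $o(T_1)$ once $\tilde\gamma T\to\infty$ (automatic as $T\to\infty$, since $\tilde\gamma\ge\tfrac1m$). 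The remaining terms $\tfrac{6\log T}{mT}$, $\tfrac{10\sqrt{\tilde\gamma}}{T}$, $\tfrac{16\log T}{mT^{3/2}}$, together with the standalone $\tfrac{392 LD^2\log T}{T}$, all decay like $T^{-1}$ up to logs, hence are negligible beside the $T^{-1/2}$ leading term. Summing the surviving $O$-contributions gives $\E f(\widebar{x}_T) - f(x^*)\le C'\tilde C D\V\sqrt{\tilde\gamma\log T/T}$ for an absolute constant $C'$ and all sufficiently large $T$, i.e., the claimed $\Otilde$ rate.

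I do not expect a genuine obstacle here: the corollary is a bookkeeping consequence of \Cref{thm:convex-adaptive}. The one point worth flagging is that the tolerated number of dynamic rounds scales with $\sqrt{\tilde\gamma}$ — this comes entirely from balancing $\tfrac{24\abs{\badrounds}\sqrt{\log T}}{T}$ against $T_1$, a more restrictive threshold than the $\O(\sqrt{T/\gamma})$ of \Cref{cor:nonconvex} — and that because $\tilde\gamma = \Theta(\delta^2 + \tfrac1m)$, the same result can be stated in terms of $\delta$ and $m$, yielding the (near-)optimal convergence rate $\Otilde\big(D\V\sqrt{(\delta^2 + \nicefrac{1}{m})/T}\big)$ advertised in \Cref{sec:adaptivity}.
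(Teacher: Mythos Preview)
Your proposal is correct and matches the paper's approach: the corollary is stated there as an immediate observation following \Cref{thm:convex-adaptive}, with no separate proof, and your term-by-term asymptotic comparison is exactly the bookkeeping that justifies it. In particular, your identification of the $\tfrac{24\abs{\badrounds}\sqrt{\log T}}{T}$ term as the one that forces the $\O(\sqrt{\tilde\gamma T})$ threshold is spot on.
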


\subsection{Non-convex Case}\label{subapp:nonconvex-proof-mfm}
\begin{customthm}{5.2}
    Suppose \Cref{assump:bounded-noise} holds, with $f$ bounded by $M$ (i.e., $\max_{x}{\abs{f(x)}}\leq M$). Define $\zeta\coloneqq\frac{2M}{\eta_0} + \eta_0 L$, and consider \Cref{alg:method-new} with \textcolor{purple}{\textbf{Option 2}} and the AdaGrad-Norm learning rate. For every $T\geq 3$,
    \begin{align*}
        \frac{1}{T}\sum_{t=1}^{T}{\E\!\norm{\nablat}^2} &\leq 8\zeta\tilde{C}\V\brac{\frac{27\sqrt{\abs{\badrounds}\log{T}}}{T} + 8\sqrt{\frac{\tilde{\gamma}\log{T}}{T}} + \frac{6\log{T}}{m T^{3/2}}}\hspace{0.1cm}+ \\ &\qquad 4\tilde{C}^2\V^2\!\brac{\!\frac{125\abs{\badrounds}\log{T}}{T} + \frac{6\tilde{\gamma}}{T} + \frac{15\log^2{T}}{m^2 T^2}} + \frac{1024 \zeta^2\log{T}}{T}\; .
    \end{align*}
\end{customthm}
\begin{proof}
    Utilizing \Cref{lem:nonconvex-adagrad} gives:
    \begin{equation}\label{eq:regret_bound_nonconvex}
        \E G_T^2 \leq 2\zeta\brac{\sqrt{\E V_{1:T}^2} + \sqrt{2\E S_{T}^2} + \sqrt{2\E G_T^2}} + \E S_{T}^2\; .
    \end{equation}
    Employing the bounds on $\E V_{1:T}^2$ and $\E S_{T}^2$ as given in  \Cref{eq:bound_on_V_T,eq:bound_on_S_T}, respectively, we obtain
    \begin{align*}
        \E G_T^2 &\leq 2\zeta\Bigg(\sqrt{125\tilde{C}^2\V^2\abs{\badrounds}\log{T} + 14\tilde{C}^2\V^2\tilde{\gamma} T\log{T} + \log{T} \E G_T^2}\hspace{0.1cm}+ \\ &\qquad \sqrt{250\tilde{C}^2\V^2\abs{\badrounds}\log{T} + 12\tilde{C}^2\V^2\tilde{\gamma} + \frac{30\tilde{C}^2\V^2\log^2{T}}{m^2 T} + \frac{9}{2mT}\E G_T^2} + \sqrt{2\E G_T^2}\Bigg)\hspace{0.1cm}+ \\ &\qquad 125\tilde{C}^2\V^2\abs{\badrounds}\log{T} + 6\tilde{C}^2\V^2\tilde{\gamma} + \frac{15\tilde{C}^2\V^2\log^2{T}}{m^2 T} + \frac{9}{4mT}\E G_T^2 \\ &\leq 2\zeta\tilde{C}\V\brac{\brac{\sqrt{125} + \sqrt{250}}\sqrt{\abs{\badrounds}\log{T}} + 4\sqrt{\tilde{\gamma}T\log{T}} + 4\sqrt{\tilde{\gamma}} + \frac{6\log{T}}{m\sqrt{T}}}\hspace{0.1cm}+ \\ &\qquad \tilde{C}^2\V^2\brac{125\abs{\badrounds}\log{T} + 6\tilde{\gamma} + \frac{15\log^2{T}}{m^2 T}} +\hspace{0.1cm} \\&\qquad 2\zeta\brac{\sqrt{\log{T}} + \frac{3}{\sqrt{2mT}} + \sqrt{2}}\sqrt{\E G_T^2} + \frac{9}{4mT}\E G_T^2 \\ &\leq 2\zeta\tilde{C}\V\brac{27\sqrt{\abs{\badrounds}\log{T}} + 8\sqrt{\tilde{\gamma}T\log{T}} + \frac{6\log{T}}{m\sqrt{T}}} + \tilde{C}^2\V^2\brac{125\abs{\badrounds}\log{T} + 6\tilde{\gamma} + \frac{15\log^2{T}}{m^2 T}}\hspace{0.1cm}+ \\ &\qquad 8\zeta\sqrt{\log{T}\E G_T^2} + \frac{1}{2}\E G_T^2\; ,
    \end{align*}
    where in the last inequality we used $1+\frac{3}{\sqrt{2mT}} + \sqrt{2}\leq 4$ and $\frac{9}{4mT}\leq \frac{1}{2}$, both of which hold for $mT\geq 6$. Subtracting $\frac{1}{2}\E G_T^2$ and multiplying by $2$ gives:
    \begin{align*}
        \E G_T^2 &\leq \underbrace{4\zeta\tilde{C}\V\brac{27\sqrt{\abs{\badrounds}\log{T}} + 8\sqrt{\tilde{\gamma}T\log{T}} + \frac{6\log{T}}{m\sqrt{T}}} + 2\tilde{C}^2\V^2\brac{125\abs{\badrounds}\log{T} + 6\tilde{\gamma} + \frac{15\log^2{T}}{m^2 T}}}_{\coloneqq B}\hspace{0.1cm}+ \\ &\qquad 16\zeta\sqrt{\log{T} \E G_T^2}\; .
    \end{align*}
    Similarly to the proof of \Cref{lem:convex-adagrad}, we apply \Cref{lem:simple_cases_lemma} with $a = \E G_T^2$, $b=B$, $c=16\zeta$, and $d=\log{T}$ to obtain:
    \begin{equation*}
        \E G_T^2 \leq 2B + 1024\zeta^2\log{T}\; .
    \end{equation*}
    Dividing by $T$ concludes the proof,
    \begin{align*}
        \frac{1}{T}\sum_{t=1}^{T}{\E\!\norm{\nablat}^2} = \frac{\E G_T^2}{T} &\leq 8\zeta\tilde{C}\V\brac{\frac{27\sqrt{\abs{\badrounds}\log{T}}}{T} + 8\sqrt{\frac{\tilde{\gamma}\log{T}}{T}} + \frac{6\log{T}}{m T^{3/2}}}\hspace{0.1cm}+ \\ &\qquad 4\tilde{C}^2\V^2\brac{\frac{125\abs{\badrounds}\log{T}}{T} + \frac{6\tilde{\gamma}}{T} + \frac{15\log^2{T}}{m^2 T^2}} + \frac{1024 \zeta^2\log{T}}{T}\; .
    \end{align*}
\end{proof}

The above convergence bounds implies the subsequent result, mirroring \Cref{cor:nonconvex}.
\begin{corollary}\label{cor:nonconvex-optimal}
    \Cref{thm:nonconvex-adaptive} establishes the following asymptotic convergence rate:
    \[
        \frac{1}{T}\sum_{t=1}^{T}{\E\norm{\nablat}^2}\in\Otilde\brac{\zeta\V\sqrt{\frac{1}{T}\brac{\delta^2 + \frac{1}{m}}} + \V^2\frac{\abs{\badrounds}}{T}}\; .
    \]
    Thus, as long as the number of bad rounds $\abs{\badrounds}$ is $\Otilde(\sqrt{(\delta^2 + \nicefrac{1}{m})T})$ (omitting the dependence on $\eta_0, L, M$, and $\V$), the established convergence rate is asymptotically optimal. 
\end{corollary}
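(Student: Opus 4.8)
The plan is to derive the corollary directly from the fully explicit (non-asymptotic) bound already proven in \Cref{thm:nonconvex-adaptive}, so no new probabilistic or optimization arguments are needed -- only an asymptotic simplification in which the problem constants $\eta_0, L, M$ (hence $\zeta$), $\V$, $m$, and $\delta$ (hence $\tilde\gamma = 32\delta^2 + 1/m$) are held fixed and only $T$ (and possibly $\abs{\badrounds}$ as a function of $T$) is allowed to grow. First I would absorb every poly-logarithmic factor -- including $\tilde C^2 = 64\log(16m^2T)$ -- into the $\Otilde(\cdot)$ notation. This collapses the right-hand side of \Cref{thm:nonconvex-adaptive} to (up to $\Otilde$) the handful of terms $\zeta\V\sqrt{\abs{\badrounds}}/T$, $\zeta\V\sqrt{\tilde\gamma/T}$, $\V^2\abs{\badrounds}/T$, $\V^2\tilde\gamma/T$, $\zeta^2/T$, together with strictly smaller contributions such as $\zeta\V/(mT^{3/2})$ and $\V^2/(m^2T^2)$.

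Next I would show that only two of these survive asymptotically. Since $\tilde\gamma = \Theta(\delta^2 + 1/m)$, the term $\zeta\V\sqrt{\tilde\gamma/T}$ equals $\Otilde\!\brac{\zeta\V\sqrt{(\delta^2 + 1/m)/T}}$, the first claimed term, and $\V^2\abs{\badrounds}/T$ is the second. The mixed term is handled by AM--GM: writing $\zeta\V\sqrt{\abs{\badrounds}}/T = \sqrt{(\zeta^2/T)\cdot(\V^2\abs{\badrounds}/T)} \le \tfrac12(\zeta^2/T + \V^2\abs{\badrounds}/T)$ bounds it by $\V^2\abs{\badrounds}/T$ plus the $\zeta^2/T$ term. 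Finally, every remaining term is $\Otilde(1/T)$ or decays faster in $T$, hence is $o$ of the $\Theta(1/\sqrt T)$ quantity $\zeta\V\sqrt{\tilde\gamma/T}$ -- here it is crucial that $\zeta$, $\V$ and $\tilde\gamma$ are $T$-independent and that $\tilde\gamma \ge 1/m$ is bounded away from zero, so for instance $\zeta^2/T$ and $\V^2\tilde\gamma/T$ are asymptotically negligible against the leading term. Combining these observations gives the stated rate $\Otilde\!\brac{\zeta\V\sqrt{(\delta^2+1/m)/T} + \V^2\abs{\badrounds}/T}$.

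For the second assertion I would simply require the dynamic-rounds term not to exceed the leading term up to logarithmic factors, i.e. $\V^2\abs{\badrounds}/T \lesssim \zeta\V\sqrt{(\delta^2+1/m)/T}$; solving for $\abs{\badrounds}$ yields $\abs{\badrounds} \lesssim (\zeta/\V)\sqrt{(\delta^2+1/m)\,T}$, which after omitting the dependence on $\eta_0, L, M$ (collected in $\zeta$) and on $\V$ is precisely $\abs{\badrounds}\in\Otilde\!\brac{\sqrt{(\delta^2+1/m)T}}$. In this regime the convergence rate reduces to $\Otilde\!\brac{\V\sqrt{(\delta^2+1/m)/T}}$, whose Byzantine-dependent component scales like $\V\delta/\sqrt T$; this matches the $\Omega(\delta/\sqrt T)$ lower bound under bounded noise \cite{alistarh2018byzantine,allen2020byzantine} (and the first term also recovers the unavoidable $\V/\sqrt{mT}$ statistical rate), which is why the rate is (near-)optimal. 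The one point requiring care is making the notion of ``asymptotic'' precise -- fixing all problem constants and letting only $T$ grow -- so that $\Otilde(1/T)$ contributions are genuinely lower-order; the AM--GM split of $\zeta\V\sqrt{\abs{\badrounds}\log T}/T$ is the only non-routine algebraic step, and I do not anticipate any substantive obstacle beyond it.
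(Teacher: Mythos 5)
Your proposal is correct and follows the same route as the paper, which states \Cref{cor:nonconvex-optimal} as an immediate asymptotic reading of the explicit bound in \Cref{thm:nonconvex-adaptive}: absorb the $\log$ factors (including $\tilde{C}$) into $\Otilde(\cdot)$, identify $\tilde{\gamma}=\Theta(\delta^2+\nicefrac{1}{m})$, discard the terms that are $\Otilde(1/T)$ or smaller for fixed problem constants, and balance $\V^2\abs{\badrounds}/T$ against $\zeta\V\sqrt{\tilde{\gamma}/T}$ to get the $\abs{\badrounds}\in\Otilde(\sqrt{(\delta^2+\nicefrac{1}{m})T})$ threshold. Your AM--GM treatment of the mixed term $\zeta\V\sqrt{\abs{\badrounds}\log T}/T$ is a fine (and correctly handled) way to fold it into the two displayed terms plus a lower-order $\zeta^2/T$ contribution.
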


\section{Technical Lemmata}\label{app:technical_lemmata}
In this section, we provide all technical results required for our analysis.

The following result by \citet{pinelis1994optimum} is a concentration inequality for bounded martingale difference sequence.
\begin{lemma}[\citealp{alistarh2018byzantine}, Lemma 2.4]\label{lem:concentration}
    Let $X_1,\ldots,X_T\in\reals^{d}$ be a random process satisfying $\E[X_t|X_1,\ldots,X_{t-1}]=0$ and $\norm{X_t}\leq M$ a.s. for all $t\in\sbrac{T}$. Then, with probability at least $1-p$:
    \[
        \norm{\sum_{t=1}^{T}{X_t}}\leq M\sqrt{2T\log{(2/p)}}\; .
    \]
\end{lemma}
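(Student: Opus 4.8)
\textit{Proof proposal.} This is the Hilbert-space instance of Pinelis' exponential inequality for martingales with bounded increments, and the plan is to reproduce the standard Chernoff-type argument built around an exponential supermartingale. Set $S_t \coloneqq \sum_{s=1}^{t} X_s$ (with $S_0 = 0$), so that $(S_t)$ is an $\reals^d$-valued martingale with $\norm{S_t - S_{t-1}} = \norm{X_t} \le M$ almost surely. First I would fix $\lambda > 0$ and consider the process $Z_t \coloneqq \cosh\!\brac{\lambda \norm{S_t}}\, e^{-\lambda^2 M^2 t / 2}$, noting $Z_0 = 1$.

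The core step is the one-step estimate
\begin{equation*}
    \E\!\sbrac{\cosh\!\brac{\lambda\norm{S_t}} \mid X_1,\ldots,X_{t-1}} \le e^{\lambda^2 M^2 / 2}\,\cosh\!\brac{\lambda\norm{S_{t-1}}}\; ,
\end{equation*}
which shows $(Z_t)$ is a supermartingale. To prove it, condition on the past, write $a \coloneqq \norm{S_{t-1}}$, rotate coordinates so that $S_{t-1}$ lies along the first axis, and decompose $X_t = (\xi, Y)$ with $\xi \in \reals$ its first coordinate and $Y \in \reals^{d-1}$ the remaining ones; then $\norm{S_t}^2 = (a+\xi)^2 + \norm{Y}^2$. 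Using the elementary inequality $\cosh\!\brac{\sqrt{u^2 + v^2}} \le \cosh(u)\cosh(v)$, monotonicity of $\cosh$ on $[0,\infty)$, and the constraint $\xi^2 + \norm{Y}^2 = \norm{X_t}^2 \le M^2$ (which couples $\xi$ and $\norm{Y}$), one reduces to a scalar bound: apply Hoeffding's lemma to the mean-zero, $[-M,M]$-valued $\xi$ to get $\E[e^{\pm\lambda\xi}] \le e^{\lambda^2 M^2/2}$, and bound $\cosh(\lambda\norm{Y}) \le e^{\lambda^2 \norm{Y}^2/2}$. Carrying out the bookkeeping carefully --- this is exactly Pinelis' $2$-smoothness computation for the Hilbert norm --- yields the one-step inequality with constant $e^{\lambda^2 M^2/2}$; a lossier but fully self-contained version with constant $e^{\lambda^2 M^2}$ already suffices if one is content with a bound of the form $\mathcal{O}\!\brac{M\sqrt{T\log(1/p)}}$.

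Given the supermartingale property and $Z_0=1$, we obtain $\E[\cosh(\lambda\norm{S_T})] \le e^{\lambda^2 M^2 T/2}$. Then I would finish with a Chernoff bound: since $\cosh(x) \ge \tfrac12 e^{x}$ for $x \ge 0$,
\begin{equation*}
    \prob\!\brac{\norm{S_T} \ge r} \le \prob\!\brac{\cosh(\lambda\norm{S_T}) \ge \tfrac12 e^{\lambda r}} \le 2 e^{-\lambda r}\,\E[\cosh(\lambda\norm{S_T})] \le 2\,e^{-\lambda r + \lambda^2 M^2 T/2}\; ,
\end{equation*}
and optimizing over $\lambda$ (taking $\lambda = r/(M^2 T)$) gives $\prob(\norm{S_T} \ge r) \le 2e^{-r^2/(2M^2 T)}$; setting the right-hand side to $p$ and solving for $r$ yields $r = M\sqrt{2T\log(2/p)}$, which is the claim. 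The main obstacle is precisely the one-step estimate: unlike scalar Azuma--Hoeffding, $\norm{S_t}$ is not itself a martingale, and since only a martingale-difference (not an independence) structure is assumed, the Doob-martingale route is unavailable, so one genuinely needs the Hilbert-space smoothness inequality that turns $\cosh(\lambda\norm{S_t})$ into an exponential supermartingale; the remaining steps are routine.
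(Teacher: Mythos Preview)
The paper does not supply a proof of this lemma; it quotes it as a known concentration result due to Pinelis (1994), cited via Alistarh et al.\ (2018, Lemma~2.4). Your plan---build the $\cosh(\lambda\norm{S_t})\,e^{-\lambda^2 M^2 t/2}$ supermartingale and finish with a Chernoff bound---is precisely Pinelis' original argument, so you are correctly reconstructing the proof the paper defers to; your identification of the one-step estimate as the only non-routine ingredient (requiring the Hilbert-space $2$-smoothness computation rather than a direct reduction to scalar Hoeffding) is accurate.
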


In our convex analysis, we use the following classical result for projected SGD.

\begin{lemma}[\citealp{alistarh2018byzantine}, Fact 2.5]\label{lem:classical_psgd}
    If $x_{t+1} = \proj{\K}{x_t - \eta g_t}\coloneqq \argmin_{y\in\K}{\norm{y - (x_t - \eta g_t)}^2}$, then for every $x\in\K$, we have
    \[
        g_t^\top(x_t - x) \leq g_t^\top(x_t - x_{t+1}) - \frac{\norm{x_t - x_{t+1}}^2}{2\eta} + \frac{\norm{x_t - x}^2}{2\eta} - \frac{\norm{x_{t+1} - x}^2}{2\eta}\; .
    \]
\end{lemma}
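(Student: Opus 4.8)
The plan is to derive the inequality from the variational (first-order optimality) characterization of the Euclidean projection onto the convex set $\K$, followed by an application of the standard three-point identity. First I would recall that since $x_{t+1}=\proj{\K}{x_t-\eta g_t}$ minimizes $y\mapsto\norm{y-(x_t-\eta g_t)}^2$ over the convex set $\K$, the optimality condition gives, for every $x\in\K$,
\[
    \brac{x_t-\eta g_t-x_{t+1}}^\top\brac{x-x_{t+1}}\leq 0\; .
\]
Rearranging this yields $\eta\, g_t^\top(x_{t+1}-x)\leq (x_t-x_{t+1})^\top(x_{t+1}-x)$.

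Next I would decompose the quantity of interest as $g_t^\top(x_t-x)=g_t^\top(x_t-x_{t+1})+g_t^\top(x_{t+1}-x)$, and substitute the bound just obtained on the second term (after dividing by $\eta>0$):
\[
    g_t^\top(x_t-x)\leq g_t^\top(x_t-x_{t+1})+\frac{1}{\eta}(x_t-x_{t+1})^\top(x_{t+1}-x)\; .
\]
Then I would invoke the polarization/three-point identity $a^\top b=\tfrac12\brac{\norm{a+b}^2-\norm{a}^2-\norm{b}^2}$ with $a=x_t-x_{t+1}$ and $b=x_{t+1}-x$, so that $a+b=x_t-x$, giving
\[
    (x_t-x_{t+1})^\top(x_{t+1}-x)=\tfrac12\brac{\norm{x_t-x}^2-\norm{x_t-x_{t+1}}^2-\norm{x_{t+1}-x}^2}\; .
\]
Plugging this in produces exactly the claimed bound.

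Since this is a classical fact, I do not anticipate any real obstacle; the only points requiring a little care are (i) correctly orienting the projection inequality (the vector $x_t-\eta g_t-x_{t+1}$ points ``away'' from $\K$ at $x_{t+1}$, so its inner product with any feasible direction $x-x_{t+1}$ is nonpositive), and (ii) matching the signs when expanding the three-point identity. Both are routine, so the argument is short and self-contained, relying only on convexity and compactness of $\K$ (\Cref{assump:bounded-domain}) to guarantee the projection is well-defined and the optimality condition holds.
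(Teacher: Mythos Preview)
Your argument is correct and is the standard derivation of this classical projection inequality. The paper does not actually supply its own proof of this lemma; it is stated as a cited fact (Fact~2.5 of \citealp{alistarh2018byzantine}) in the technical lemmata section, so there is nothing to compare against beyond noting that your approach is precisely the canonical one. One small remark: compactness of $\K$ is not needed here---closedness and convexity are enough for the projection to be well defined and for the first-order optimality condition to hold---so your parenthetical appeal to \Cref{assump:bounded-domain} is stronger than necessary, but this does not affect correctness.
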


Next is a classical result for smooth functions.
\begin{lemma}[\citealp{levy2018online}, Lemma 4.1; \citealp{attia2023sgd}, Lemma 8]\label{lem:self_boundness}
    Let $f:\reals^d\to\reals$ be an $L$-smooth function and $x^*=\argmin_{x\in\reals^d}{f(x)}$. Then,
    \[
        \norm{\nabla f(x)}^2 \leq 2L\brac{f(x) - f(x^*)}, \quad \forall x\in\reals^d\; .
    \]
\end{lemma}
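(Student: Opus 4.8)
The plan is to use the $L$-smoothness inequality with a single, well-chosen comparison point, namely the point obtained by a gradient step from $x$ of step size $1/L$. The key observation is that $L$-smoothness gives a quadratic upper bound on $f$ that, when minimized over this one-parameter family of points, produces exactly the $\frac{1}{2L}\norm{\nabla f(x)}^2$ decrease we need, and then optimality of $x^*$ closes the gap.

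Concretely, I would proceed as follows. Fix $x\in\reals^d$. By $L$-smoothness, for every $y\in\reals^d$ we have $f(y)\leq f(x) + \nabla f(x)^\top(y-x) + \frac{L}{2}\norm{y-x}^2$. Plug in $y = x - \frac{1}{L}\nabla f(x)$; then $\nabla f(x)^\top(y-x) = -\frac{1}{L}\norm{\nabla f(x)}^2$ and $\frac{L}{2}\norm{y-x}^2 = \frac{1}{2L}\norm{\nabla f(x)}^2$, so the bound becomes $f(y)\leq f(x) - \frac{1}{2L}\norm{\nabla f(x)}^2$. Since $x^*$ is a global minimizer, $f(x^*)\leq f(y)$, hence $f(x^*)\leq f(x) - \frac{1}{2L}\norm{\nabla f(x)}^2$. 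Rearranging gives $\norm{\nabla f(x)}^2\leq 2L\brac{f(x) - f(x^*)}$, which is the claim; since $x$ was arbitrary, it holds for all $x\in\reals^d$.

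There is essentially no obstacle here: the only ingredient beyond the smoothness inequality is the existence of a global minimizer $x^*$, which is part of the hypothesis, and the fact that gradient-step point $y$ lies in $\reals^d$ (trivially true since the domain is the whole space). One could optionally remark that the step size $1/L$ is optimal in this family — minimizing $f(x) - t\norm{\nabla f(x)}^2 + \frac{Lt^2}{2}\norm{\nabla f(x)}^2$ over $t$ yields $t = 1/L$ — but this is not needed for the stated inequality. No induction, no concentration, no case analysis is required; the whole argument is three lines of algebra after substituting the comparison point.
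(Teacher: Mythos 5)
Your proof is correct: the substitution $y = x - \frac{1}{L}\nabla f(x)$ into the smoothness inequality gives $f(y)\leq f(x) - \frac{1}{2L}\norm{\nabla f(x)}^2$, and global optimality of $x^*$ then yields the claim. The paper does not prove this lemma itself but cites it from the literature, and your argument is exactly the standard one-step-descent proof used in those references, so there is nothing to add.
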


We utilize the following lemma by \citet{auer2002adaptive}, commonly used in the online learning literature, when analyzing our method with the AdaGrad-Norm learning rate.
\begin{lemma}[\citealp{mcmahan2010adaptive}, Lemma 5; \citealp{levy2017online}, Lemma A.1]\label{lem:sum_sqrt_lemma}
    For any sequence $a_1,\ldots,a_n\in\mathbb{R}_{+}$,
    \begin{equation*}
        \sum_{i=1}^{n}{\frac{a_i}{\sqrt{\sum_{j=1}^{i}{a_j}}}} \leq 2\sqrt{\sum_{i=1}^{n}{a_i}}\; .
    \end{equation*}
\end{lemma}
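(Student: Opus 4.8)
The plan is to bound each summand by a telescoping increment of $2\sqrt{\,\cdot\,}$ and then collapse the sum. Write $S_i\coloneqq\sum_{j=1}^{i}a_j$ for $i\ge 1$ and $S_0\coloneqq 0$; since every $a_j\ge 0$, the prefix sums satisfy $S_0\le S_1\le\cdots\le S_n$. Any index with $a_i=0$ contributes nothing to the left-hand side (with the convention that a vanishing numerator gives a vanishing summand, which also covers the degenerate case $S_i=0$), so it suffices to prove a per-term bound at the remaining indices.

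The key step is the elementary inequality $a_i/\sqrt{S_i}\le 2(\sqrt{S_i}-\sqrt{S_{i-1}})$ for every $i$ with $a_i>0$. To see it, rationalize the right-hand side:
\[
    2\bigl(\sqrt{S_i}-\sqrt{S_{i-1}}\bigr)=\frac{2(S_i-S_{i-1})}{\sqrt{S_i}+\sqrt{S_{i-1}}}=\frac{2a_i}{\sqrt{S_i}+\sqrt{S_{i-1}}}\ge\frac{2a_i}{2\sqrt{S_i}}=\frac{a_i}{\sqrt{S_i}},
\]
using $\sqrt{S_{i-1}}\le\sqrt{S_i}$. (Equivalently, since $x\mapsto x^{-1/2}$ is decreasing, $a_i/\sqrt{S_i}=\int_{S_{i-1}}^{S_i}S_i^{-1/2}\,dx\le\int_{S_{i-1}}^{S_i}x^{-1/2}\,dx=2\sqrt{S_i}-2\sqrt{S_{i-1}}$.) Summing over $i=1,\dots,n$ — the inequality being trivial, $0\le 0$, when $a_i=0$ — and telescoping gives
\[
    \sum_{i=1}^{n}\frac{a_i}{\sqrt{S_i}}\le 2\sum_{i=1}^{n}\bigl(\sqrt{S_i}-\sqrt{S_{i-1}}\bigr)=2\sqrt{S_n}-2\sqrt{S_0}=2\sqrt{\sum_{i=1}^{n}a_i},
\]
which is the claim.

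An alternative route is induction on $n$: the base case $n=1$ is $\sqrt{a_1}\le 2\sqrt{a_1}$, and, granting the bound for the first $n-1$ terms, the inductive step reduces to $2\sqrt{S_nS_{n-1}}\le 2S_{n-1}+a_n$, which follows by completing the square, $\sqrt{(S_{n-1}+a_n)S_{n-1}}=\sqrt{S_{n-1}^{2}+a_nS_{n-1}}\le S_{n-1}+a_n/2$. There is no genuine obstacle here: the only point to watch is leading zeros in the sequence (hence vanishing prefix sums), handled by the convention above since a run of zeros leaves $\sqrt{S_i}$ unchanged; all the content is in choosing the right telescoping comparison.
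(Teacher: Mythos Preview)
Your proof is correct. Note that the paper does not actually prove this lemma; it merely cites it from \citet{mcmahan2010adaptive} and \citet{levy2017online}, so there is no in-paper argument to compare against. Your telescoping approach via $a_i/\sqrt{S_i}\le 2(\sqrt{S_i}-\sqrt{S_{i-1}})$ is precisely the standard proof given in those references, so nothing is missing.
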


The next two lemmas arise from fundamental probability calculations.
\begin{lemma}\label{lem:expectation_indicator}
    For any random variable $X$ and event $\Ecal$, 
    \begin{equation*}
        \E[X\cdot\mathbbm{1}_{\Ecal}] = \E[X] - \E[X|\Ecal^c]\prob(\Ecal^c)\; .
    \end{equation*}
\end{lemma}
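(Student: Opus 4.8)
The plan is to reduce the claim to the law of total expectation applied to the indicator decomposition of $X$. First I would write $X = X\cdot\mathbbm{1}_{\Ecal} + X\cdot\mathbbm{1}_{\Ecal^c}$, which holds pointwise since $\mathbbm{1}_{\Ecal} + \mathbbm{1}_{\Ecal^c} = 1$. Taking expectations and using linearity gives $\E[X] = \E[X\cdot\mathbbm{1}_{\Ecal}] + \E[X\cdot\mathbbm{1}_{\Ecal^c}]$.

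Next I would rewrite the second term using the definition of conditional expectation given an event: $\E[X\cdot\mathbbm{1}_{\Ecal^c}] = \E[X \mid \Ecal^c]\,\prob(\Ecal^c)$. Substituting this back and rearranging yields $\E[X\cdot\mathbbm{1}_{\Ecal}] = \E[X] - \E[X\mid\Ecal^c]\prob(\Ecal^c)$, which is exactly the stated identity.

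The only point deserving a remark is the degenerate case $\prob(\Ecal^c) = 0$, in which $\E[X\mid\Ecal^c]$ is not defined; there the convention is that the product $\E[X\mid\Ecal^c]\prob(\Ecal^c)$ is taken to be $0$, and the identity still holds trivially because $\mathbbm{1}_{\Ecal} = 1$ almost surely, so $\E[X\cdot\mathbbm{1}_{\Ecal}] = \E[X]$. There is no real obstacle here: the entire argument is a two-line manipulation of elementary expectations, and its role in the paper is purely to serve as a citable black box in the bias computations of \Cref{lem:bias-var-mlmc-general,lem:bias-var-mlmc-mfm}.
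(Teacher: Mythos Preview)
Your proof is correct and essentially identical to the paper's: both use the decomposition $\E[X]=\E[X\cdot\mathbbm{1}_{\Ecal}]+\E[X\cdot\mathbbm{1}_{\Ecal^c}]$ together with $\E[X\cdot\mathbbm{1}_{\Ecal^c}]=\E[X\mid\Ecal^c]\prob(\Ecal^c)$, with the paper merely phrasing the first step as conditioning $X\cdot\mathbbm{1}_{\Ecal}$ on $\Ecal$ versus $\Ecal^c$. Your added remark on the degenerate case $\prob(\Ecal^c)=0$ is a nice touch the paper omits.
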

\begin{proof}
    By the law of total expectation:
    \begin{equation*}
        \E[X\cdot\mathbbm{1}_{\Ecal}] = \E[X\cdot\mathbbm{1}_{\Ecal}|\Ecal]\prob(\Ecal) + \underbrace{\E[X\cdot\mathbbm{1}_{\Ecal}|\Ecal^c]}_{=0}\prob(\Ecal^c) = \E[X|\Ecal]\prob(\Ecal) = \E[X] - \E[X|\Ecal^c]\prob(\Ecal^c)\; .
    \end{equation*}
\end{proof}

\begin{lemma}\label{lem:3events}
    For any three events $A,B,C$ satisfying $\prob(A)>0$ and $\prob(B|A)\geq \prob(C|A)$, we have $\prob(B)\geq \prob(C) - \prob(A^c)$.
\end{lemma}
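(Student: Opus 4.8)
The plan is to reduce everything to a chain of three elementary inequalities obtained by intersecting with the conditioning event $A$ and its complement. First I would rewrite the hypothesis $\prob(B\mid A)\geq\prob(C\mid A)$ in unconditional form: since $\prob(A)>0$, dividing is reversible, so this is equivalent to $\prob(B\cap A)\geq\prob(C\cap A)$.

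Next I would bound $\prob(B)$ from below by discarding the part outside $A$: $\prob(B)\geq\prob(B\cap A)$. Chaining with the reformulated hypothesis gives $\prob(B)\geq\prob(C\cap A)$. Finally, I would bound $\prob(C\cap A)$ from below using $\prob(C)=\prob(C\cap A)+\prob(C\cap A^c)\leq\prob(C\cap A)+\prob(A^c)$, which rearranges to $\prob(C\cap A)\geq\prob(C)-\prob(A^c)$. Concatenating the three steps yields $\prob(B)\geq\prob(C)-\prob(A^c)$, as desired. An equivalent route, if one prefers to keep conditional probabilities throughout, is to write $\prob(B)\geq\prob(B\mid A)\prob(A)\geq\prob(C\mid A)\prob(A)$ and then note $\prob(C\mid A)\prob(A)=\prob(C)-\prob(C\mid A^c)\prob(A^c)\geq\prob(C)-\prob(A^c)$; this is the same argument dressed differently.

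There is no real obstacle here — the statement is a routine probability fact whose only subtlety is making sure the condition $\prob(A)>0$ is invoked exactly where the hypothesis $\prob(B\mid A)\geq\prob(C\mid A)$ is converted to the unconditional comparison $\prob(B\cap A)\geq\prob(C\cap A)$, and that the final crude bound $\prob(C\cap A^c)\leq\prob(A^c)$ is applied with the correct sign.
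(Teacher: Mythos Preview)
Your proof is correct and essentially identical to the paper's: the paper writes $\prob(B)\geq\prob(B\mid A)\prob(A)\geq\prob(C\mid A)\prob(A)$ and then bounds $\prob(C\mid A)\prob(A)\geq\prob(C)-\prob(A^c)$ via $\prob(C\mid A^c)\leq 1$, which is precisely the ``equivalent route'' you describe at the end. Your primary phrasing via intersections is arguably slightly cleaner, since it avoids ever writing $\prob(\cdot\mid A^c)$ and hence needs no side remark about the case $\prob(A^c)=0$.
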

\begin{proof}
    By the law of total probability, we have
    \begin{equation}
        \prob(B) = \prob(B |A)\prob(A) + \underbrace{\prob(B | A^c)\prob(A^c)}_{\geq 0} \geq \prob(C|A)\prob(A)\; .
    \end{equation}
    Again, by the law of total probability, 
    \[
        \prob(C) = \prob(C|A)\prob(A) + \underbrace{\prob(C|A^c)}_{\leq 1}\prob(A^c) \leq \prob(C|A)\prob(A) + \prob(A^c)\; .
    \]
    Since $\prob(A)>0$, we can establish a lower bound for $\prob(C|A)$ as follows: 
    \[
        \prob(C|A) \geq \frac{\prob(C) - \prob(A^c)}{\prob(A)}\; .
    \]
    Substituting this bound back gives:
    \[
        \prob(B) \geq \frac{\prob(C) - \prob(A^c)}{\prob(A)}\cdot\prob(A) = \prob(C) - \prob(A^c)\; .
    \]
\end{proof}

Finally, we utilize the following lemmas in our analysis to establish convergence rates.
\begin{lemma}\label{lem:lr_min_of_2_lrs}
    Let $a,b\geq 0$, $c>0$, and consider $\eta = \min\cbrac{\sqrt{\nicefrac{a}{b}}, \nicefrac{1}{c}}$. Then, 
    \[
        \frac{a}{\eta} + b\eta \leq 2\sqrt{ab} + ac\; .
    \]
\end{lemma}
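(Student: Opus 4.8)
The plan is to bound the two summands $a/\eta$ and $b\eta$ separately, exploiting the fact that $\eta$ is the \emph{minimum} of two candidate values, and then add. I would first dispose of the degenerate cases: if $a=0$ the claim reads $0\le 0$ (with the convention $0/\eta=0$), and if $b=0$ then $\eta=1/c$ and the left-hand side equals $ac$, matching the right-hand side. So assume henceforth $a,b>0$, whence $\sqrt{a/b}$ is a well-defined positive number.

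For the term $b\eta$, I would simply use $\eta=\min\cbrac{\sqrt{a/b},\,1/c}\le \sqrt{a/b}$ and multiply by $b>0$ to get $b\eta\le b\sqrt{a/b}=\sqrt{ab}$. For the term $a/\eta$, I would take reciprocals to obtain $1/\eta=\max\cbrac{\sqrt{b/a},\,c}$, and then apply the only substantive step, the elementary inequality $\max\cbrac{u,v}\le u+v$ valid for $u,v\ge 0$. This yields $1/\eta\le\sqrt{b/a}+c$, and multiplying by $a>0$ gives $a/\eta\le a\sqrt{b/a}+ac=\sqrt{ab}+ac$.

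Adding the two displays,
\[
    \frac{a}{\eta}+b\eta \;\le\; \brac{\sqrt{ab}+ac}+\sqrt{ab}\;=\;2\sqrt{ab}+ac\;,
\]
which is exactly the asserted bound. I do not expect any genuine obstacle in this argument; the only points requiring a moment's care are the sign conventions (all of $a,b,\eta$ are nonnegative and $c>0$, so all divisions and square roots are legitimate in the non-degenerate case) and the explicit handling of $a=0$ or $b=0$, which is why I isolate those cases at the outset.
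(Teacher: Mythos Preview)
Your proof is correct. It differs from the paper's argument in a small but clean way: the paper splits into the two cases $\sqrt{a/b}\le 1/c$ and $\sqrt{a/b}\ge 1/c$, evaluates $a/\eta+b\eta$ exactly in each case (obtaining $2\sqrt{ab}$ and $ac+b/c\le ac+\sqrt{ab}$, respectively), and then bounds the maximum by the sum. You instead bound the two summands separately and uniformly---$b\eta\le\sqrt{ab}$ from $\eta\le\sqrt{a/b}$, and $a/\eta\le\sqrt{ab}+ac$ from $1/\eta=\max\{\sqrt{b/a},c\}\le\sqrt{b/a}+c$---which sidesteps the case analysis entirely. Your route is marginally more elegant and also handles the degenerate boundary cases $a=0$, $b=0$ explicitly, which the paper's proof leaves implicit.
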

\begin{proof}
    Assume that $\sqrt{\nicefrac{a}{b}}\leq \nicefrac{1}{c}$. In this case we have $\frac{a}{\eta} + b\eta = 2\sqrt{ab}$. Alternatively, if $\nicefrac{1}{c}\leq\sqrt{\nicefrac{a}{b}}$, then $\frac{a}{\eta} + b\eta = ac + \nicefrac{b}{c}\leq ac + \sqrt{ab}$. Therefore, we always have
    \[
        \frac{a}{\eta} + b\eta \leq \max\cbrac{2\sqrt{ab}, ac+\sqrt{ab}} \leq 2\sqrt{ab} + ac\; .
    \]
\end{proof}

\begin{lemma}\label{lem:simple_cases_lemma}
    Let $a, b, c, d \geq 0$ with $a > 0$. If $a \leq b + c\sqrt{da}$, then $a \leq 2b + 4dc^2$
\end{lemma}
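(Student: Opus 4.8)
\textbf{Proof proposal for Lemma~\ref{lem:simple_cases_lemma}.}
The plan is to run a short case analysis on the right-hand side $b + c\sqrt{da}$, in the same spirit as the proof of \Cref{lem:lr_min_of_2_lrs}. Observe that exactly one of the two terms $b$ and $c\sqrt{da}$ is at least half of their sum, so $b + c\sqrt{da} \le 2\max\{b,\, c\sqrt{da}\}$, and I will treat the two possibilities separately.

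First I would handle the case $c\sqrt{da} \le b$. Here the hypothesis gives directly $a \le b + c\sqrt{da} \le 2b \le 2b + 4dc^2$, using nonnegativity of $d$ and $c$ for the last step, and we are done.

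Next I would handle the case $b \le c\sqrt{da}$. Then $a \le b + c\sqrt{da} \le 2c\sqrt{da} = 2c\sqrt{d}\cdot\sqrt{a}$. Since $a>0$ we may divide both sides by $\sqrt{a}$ to obtain $\sqrt{a} \le 2c\sqrt{d}$, and squaring yields $a \le 4c^2 d \le 2b + 4dc^2$, again by nonnegativity of $b$. Combining the two cases completes the proof.

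The only point requiring any care is the division by $\sqrt{a}$, which is exactly why the hypothesis $a>0$ is imposed; everything else is immediate from the nonnegativity assumptions, so I do not anticipate any genuine obstacle here. (An alternative one-line argument is to set $x=\sqrt a$, note $x^2 - c\sqrt d\,x - b \le 0$ forces $x$ below the larger root $\tfrac12(c\sqrt d + \sqrt{c^2 d + 4b})$, and then use $(u+v)^2 \le 2u^2 + 2v^2$; this gives the even sharper bound $a \le 2b + c^2 d$. I would nonetheless prefer the case analysis above for consistency with the surrounding lemmas.)
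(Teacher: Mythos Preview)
Your proof is correct and matches the paper's own argument essentially line for line: the paper also splits into the two cases $b \ge c\sqrt{da}$ and $b < c\sqrt{da}$, bounds $a$ by $2b$ or $4dc^2$ respectively (using division by $\sqrt{a}>0$ in the second case), and then passes to the sum $2b + 4dc^2$. Your aside about the quadratic-root argument giving the sharper bound $a \le 2b + c^2 d$ is a nice observation, but the case analysis you chose is exactly what the paper does.
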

\begin{proof}
    Consider two cases. If $b\geq c\sqrt{d\cdot a}$, then we can bound,
    \[
        a\leq b + c\sqrt{d\cdot a} \leq 2b\; .
    \]
    Otherwise, $b<c\sqrt{d\cdot a}$ and we can bound,
    \[
        a\leq b + c\sqrt{d\cdot a} \leq 2c\sqrt{d\cdot a}\; .
    \]
    Dividing by $\sqrt{a}>0$, we get that $\sqrt{a}\leq 2c\sqrt{d}$, which is equivalent to $a\leq 4dc^2$. We can thus conclude that
    \[
        a\leq\max\cbrac{2b, 4dc^2}\leq 2b + 4dc^2\; .
    \]
\end{proof}
\section{Experimental Details}\label{app:experiments}
In this section, we describe the experimental setup and training details for the image classification tasks in \Cref{sec:experiments}.

\paragraph{Hardware and training times. } We run all experiments on a machine with a single NVIDIA GeForce RTX 4090 GPU. For the MNIST experiments, the runtime of each configuration is $\approx 5$ minutes and for CIFAR-10 it is $\approx 45$ minutes.

\paragraph{Architectures and training details. } We adopt the CNN architectures from \citet{allouah2023fixing}, as detailed in \Cref{tab:training_details}. 

\begin{table}[h!]
    \centering
    \caption{Training details and hyperparameters.}
    \vspace{0.1in}
\begin{tabular}{|l|cc|}
\hline
\textbf{Dataset}                            & \multicolumn{1}{c|}{MNIST}                                                                                                                 & CIFAR-10                                                                                                                                                                                                                                    \\ \hline
\textbf{Architecture} & \multicolumn{1}{c|}{\begin{tabular}[c]{@{}c@{}}Conv(20)-ReLU-MaxPool-\\ Conv(20)-ReLU-MaxPool-\\ FC(500)-ReLU-FC(10)-SoftMax\end{tabular}} & \begin{tabular}[c]{@{}c@{}}Conv(64)-ReLU-BatchNorm-\\ Conv(64)-ReLU-BatchNorm-MaxPool-Dropout(0.25)-\\ Conv(128)-ReLU-BatchNorm-\\ Conv(128)-ReLU-BatchNorm-MaxPool-Dropout(0.25)-\\ FC(128)-ReLU-Dropout(0.25)-FC(10)-Softmax\end{tabular} \\ \hline
\textbf{\# of iterations}                 & \multicolumn{1}{c|}{5000}                                                                                                                  & 8000                                                                                                                                                                                                                                        \\ \hline
\textbf{Learning rate}                      & \multicolumn{1}{c|}{$\times$10 drop after 4000 iterations}                                                                                            & $\times$10 drop after 6000 iterations                                                                                                                                                                                                                  \\ \hline
\textbf{Weight decay}                       & \multicolumn{2}{c|}{$10^{-4}$}                                                                                                                                                                                                                                                                                                                                                           \\ \hline
\textbf{Base batch size}                    & \multicolumn{1}{c|}{32}                                                                                                                    & 64                                                                                                                                                                                                                                          \\ \hline
\textbf{\# of workers ($m$)}                & \multicolumn{1}{c|}{17}                                                                                                                    & 25                                                                                                                                                                                                                                          \\ \hline
\end{tabular}\label{tab:training_details}
\end{table}

In our experiments, we use a base mini-batch of size $B$ ($32$ for the MNIST experiments and $64$ for the CIFAR-10 experiments). That is, in each iteration, each worker observes $B$ samples. This implies that for the baselines we use a fixed mini-batch of size $B$, whereas for our method, which employs the MLMC estimator, in level $J$ we use $B\cdot 2^{J}$ samples for gradient estimation. Following \citet{dorfman2022adapting}, we limit the maximal value of $J$ to be smaller than $\floor{\log{T}}$, specifically $\Jmax=7$, which we found to perform well in practice. For both MNIST and CIFAR-10 we use a learning rate schedule where the initial learning rate is reduced by a factor of $10$ for the final $1000/2000$ iterations, respectively. 

\paragraph{Byzantine Attacks. } Next, we describe the attacks that the Byzantine workers employ in our experiments. Let $\bar{g}$ denote the average of honest messages in some round. We consider the following attacks:
\begin{itemize}
    \item \textbf{Sign-flip (SF,~\citealp{allen2020byzantine}):} each Byzantine worker computes a stochastic gradient and returns its negative.
    \item \textbf{Inner-Product Manipulation (IPM,~\citealp{xie2020fall}):} all Byzantine workers return the negative, re-scaled average of the honest messages, i.e., $-\epsilon \bar{g}$. We follow \citet{karimireddy2021learning} and use $\epsilon=0.1$. 
    \item \textbf{A Little is Enough (ALIE,~\citealp{baruch2019little}):} Let $\sigma$ denote the (element-wise) standard deviation vector of honest messages. All Byzantine workers return the vector $\bar{g} - z\sigma$, where $z\in\reals$ is computed as in \citet{baruch2019little,karimireddy2021learning} (cf. Appendix G in the latter). For the MNIST experiments with $m=17$ and $\delta m=8$, we have $z\approx 1.22$.  
\end{itemize}


\subsection{MNIST Classification with Periodic Identity-Switching Strategy}\label{subapp:mnist_exp}
In \Cref{fig:mnist_sf_cwtm_full}, we show the full training curves on MNIST, corresponding to the final accuracy results presented in \Cref{fig:mnist_sf_cwtm}. That is, we show the test accuracy as a function of the number of observed samples ($\times$ batch size $\times$ \# of workers, i.e., total sample complexity) under the \textbf{Periodic} switching strategy for different values of $K$, where Byzantine workers employ the SF attack and the server uses CWTM. For the momentum and SGD methods, we used an initial learning rate of $\eta=0.01$ and for \methodName{} it is $\eta=0.05$. We observe that our method performs well across all values of $K$. However, when the switch rate $K$ is small (specifically, smaller than the effective momentum horizon of $\frac{1}{1-\beta}$), momentum SGD fails to learn something meaningful and performs similarly to a random guess. When $K$ is larger, momentum performs better, e.g., when $K=100$ momentum with parameter $\beta=0.9$ slightly outperforms our method. 
\begin{figure}[h]
    \centering
    \vspace{-0.5em}
    \includegraphics[trim={0.1cm 0.1cm 0.1cm 0.5cm},clip,width=.75\linewidth]{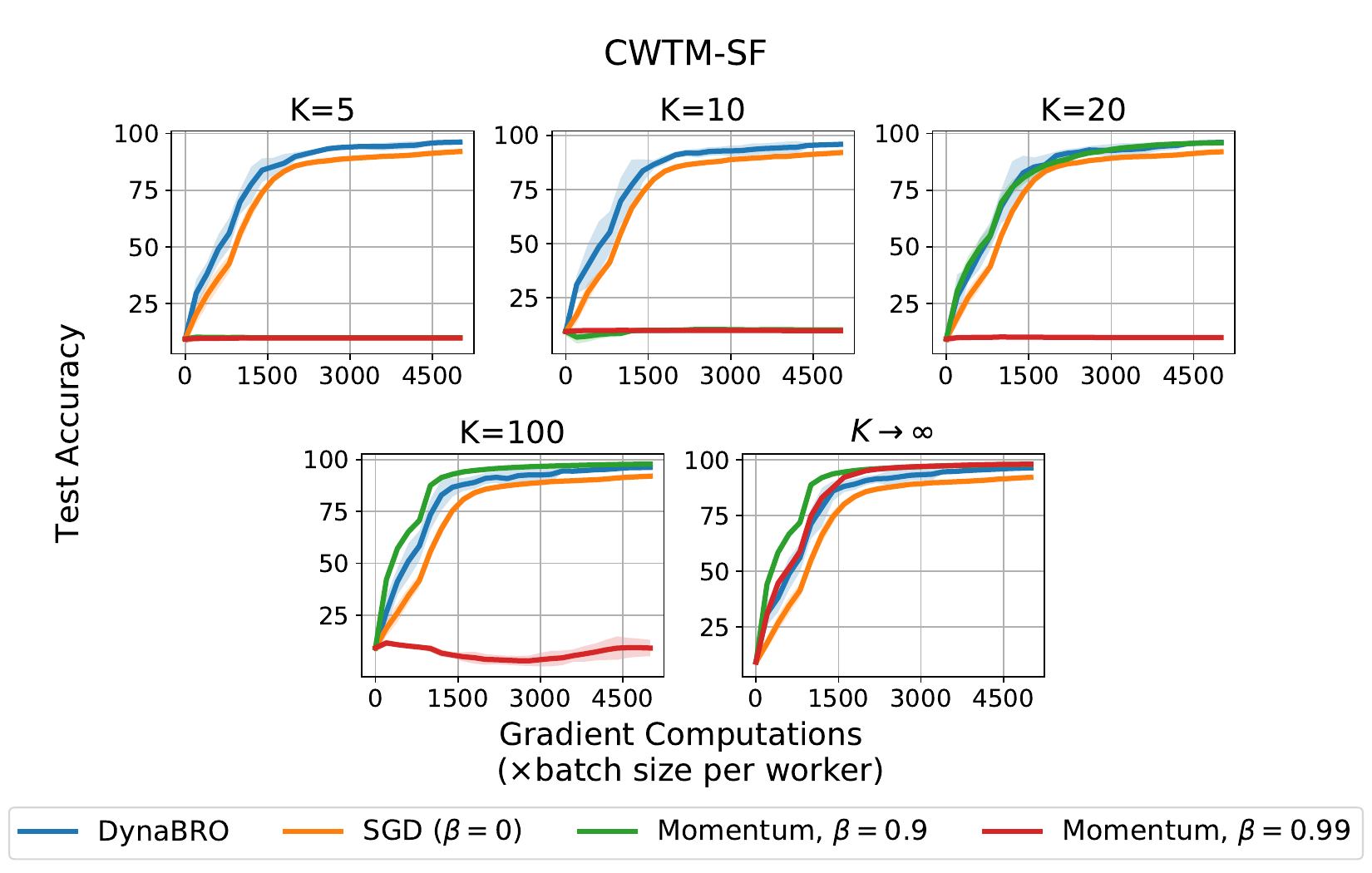}
    \vspace{-0.5em}
    \caption{Test accuracy on MNIST under the \textbf{Periodic($K$)} identity-switching strategy for different values of $K$. Byzantine workers employ SF attack and the server implements CWTM aggregation.}
    \vspace{-1em}
\label{fig:mnist_sf_cwtm_full}
\end{figure}

\begin{figure}[h]
    \centering
    \includegraphics[trim={0.1cm 0.1cm 0.1cm 0.5cm},clip,width=0.55\linewidth]{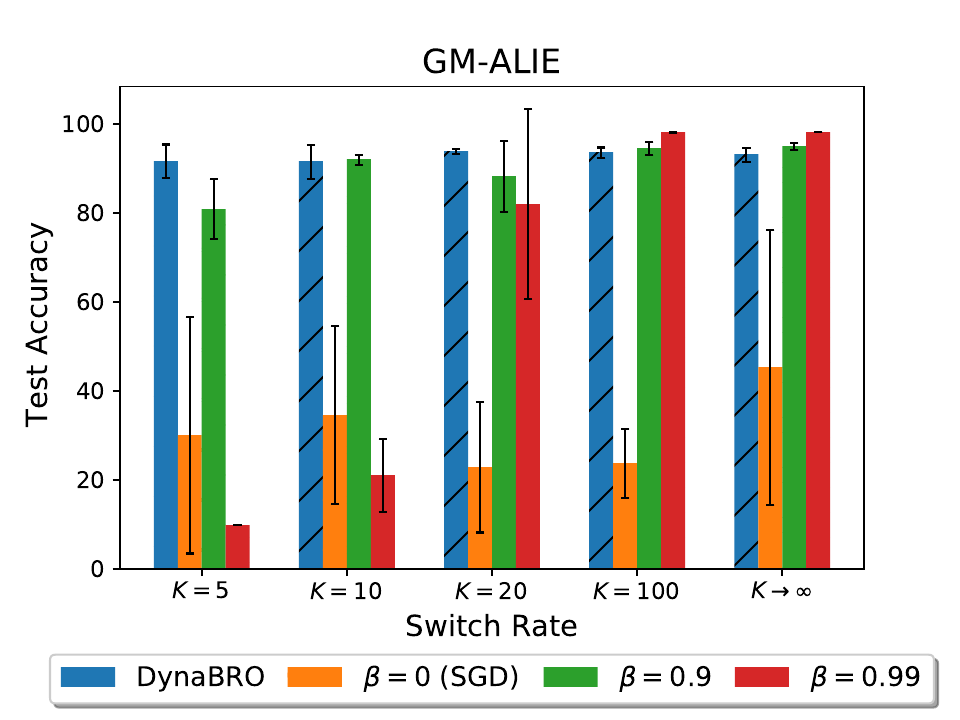}
    \caption{Final test accuracy on MNIST under the \textbf{Periodic($K$)} identity-switching strategy for different values of $K$. Byzantine workers employ ALIE attack and the server implements geometric median (GM) aggregation.}
\label{fig:mnist_alie_gm}
\end{figure}
In \Cref{fig:mnist_alie_gm,fig:mnist_alie_gm_full}, we show the test accuracy on the same MNIST configuration, but with Byzantine workers utilizing the ALIE attack~\cite{baruch2019little} and the server using the geometric median (GM,~\citealp{pillutla2022robust}) aggregator. For this configuration, all methods are trained with the same initial learning rate of $\eta=0.01$. While here momentum with parameter $0.9$ performs reasonably when $K=5$, the general trend is similar -- our method has consistent accuracy across different values of $K$ and momentum improves as $K$ increases. 

\begin{figure}[h]
    \centering
    \includegraphics[trim={0.1cm 0.1cm 0.1cm 0.5cm},clip,width=.75\linewidth]{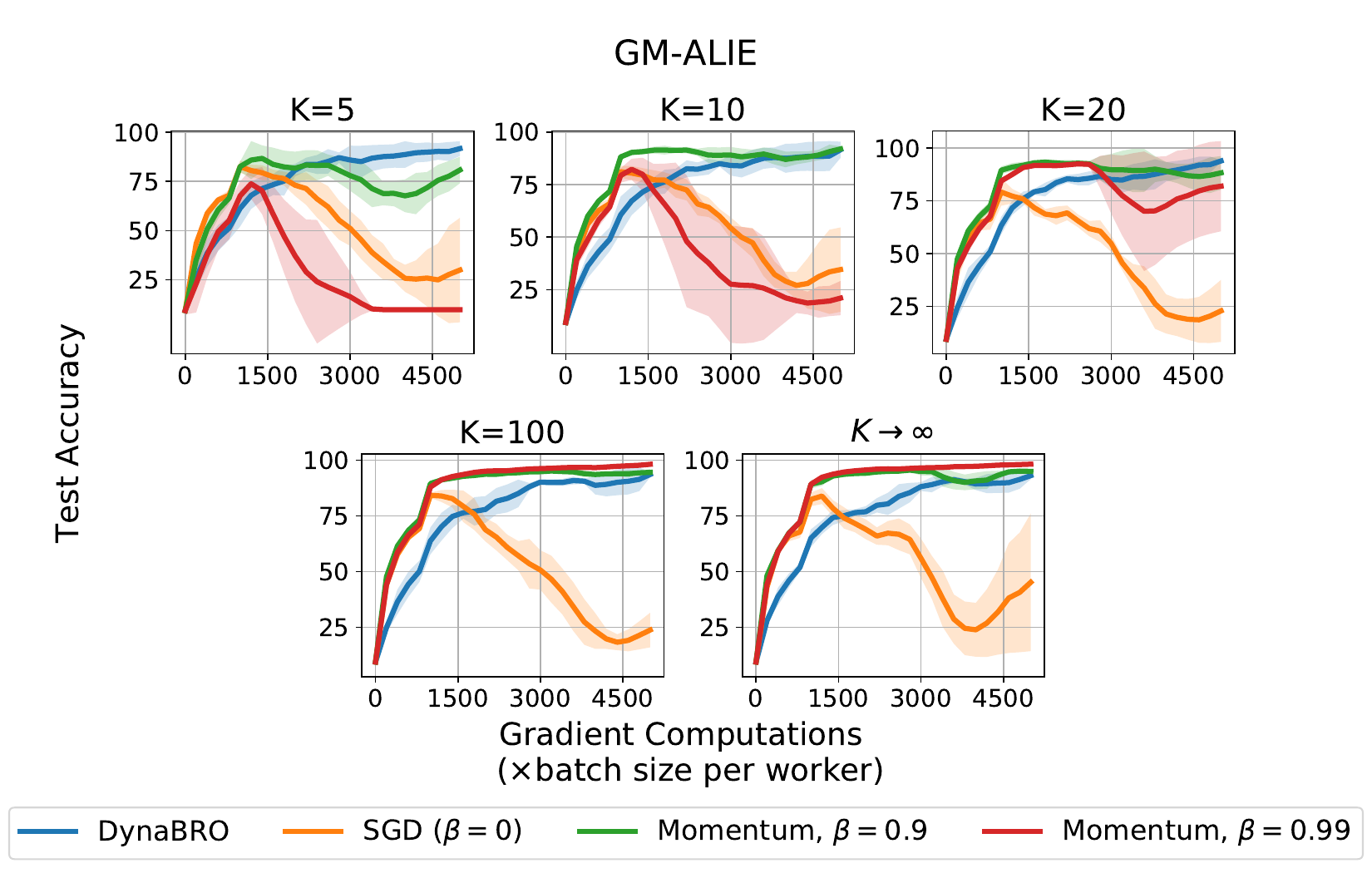}
    \vspace{-1em}
    \caption{Test accuracy curves corresponding to the final results presented in \Cref{fig:mnist_alie_gm}.}
\label{fig:mnist_alie_gm_full}
\end{figure}


\subsection{CIFAR-10 Classification with Bernoulli Identity-Switching Strategy}\label{subapp:cifar_exp}
In this section, we present an additional configuration for the CIFAR-10 task under the \textbf{Bernoulli}$(p, D, \delta_{\max})$ switching strategy, with a different maximum fraction of Byzantine workers. Instead of using $\delta_{\max}=0.72$ (maximum 18 Byzantine workers per iteration), we limit the maximum to 12 Byzantine workers, corresponding to $\delta_{\max}=0.48$. This demonstrates that similar results to those in the main text are observed when there are fewer than half Byzantine workers in all iterations. For both configurations, we used an initial learning rate of $\eta=0.01$ for momentum and SGD, and $\eta=0.05$ for \methodName{}. In \Cref{fig:cifar10_ipm_cwmed_deltamax12}, we show the results when $\delta_{\max}=0.48$, mirroring those in \Cref{fig:cifar10_ipm_cwmed}. 

\begin{figure}[h]
    \centering
    \includegraphics[trim={0.4cm 0.1cm 0.1cm 0.2cm},clip,width=.7\linewidth]{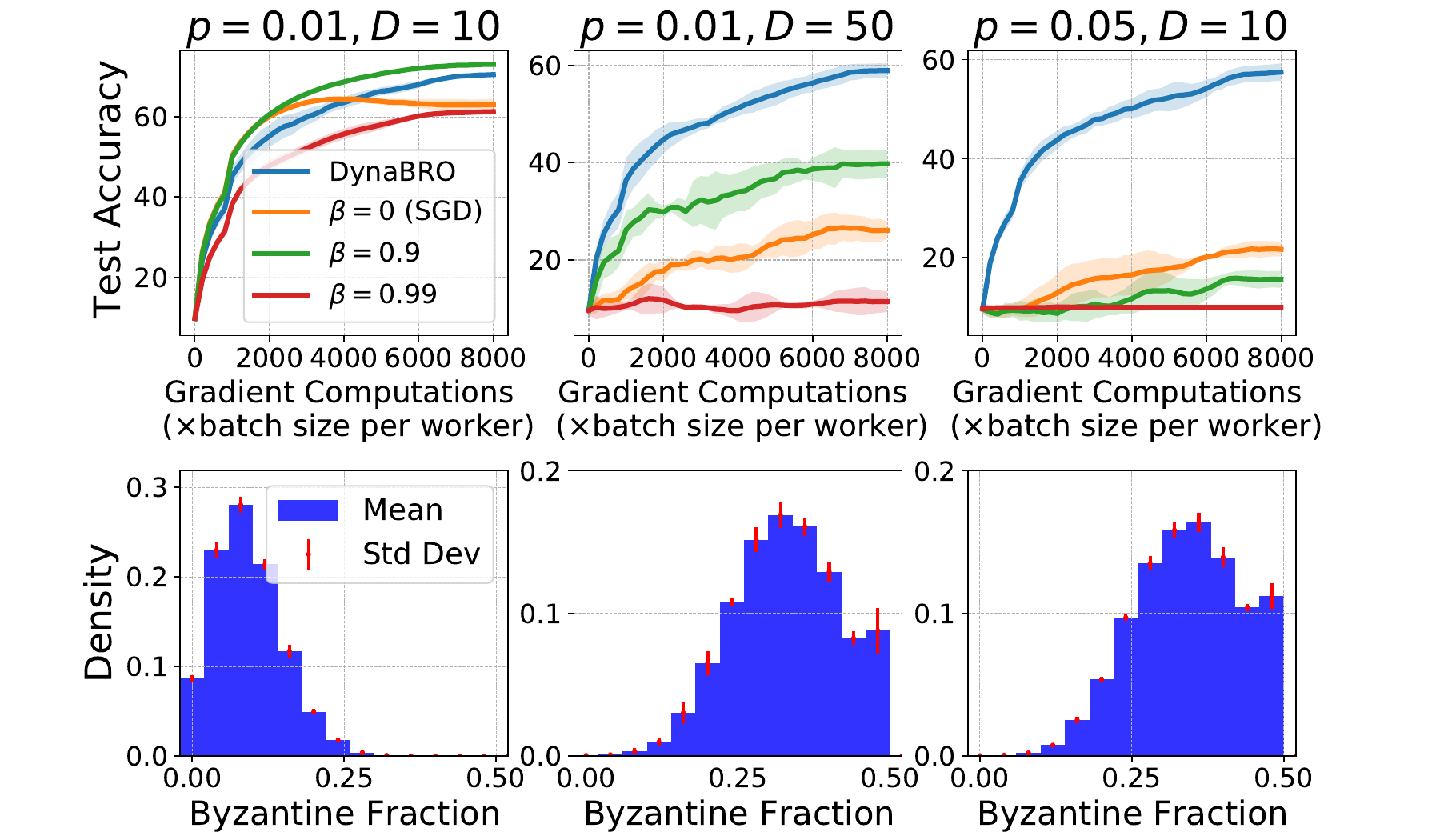}
    \caption{Test accuracy and histogram of the fraction of Byzantine workers on CIFAR-10 under the \textbf{Bernoulli($p, D, \delta_{\max}$)} strategy for different values of $p$ and $D$ with $\delta_{\max}=0.48$. Byzantine workers employ the IPM attack and the server uses CWMed.}
    \label{fig:cifar10_ipm_cwmed_deltamax12}
\end{figure}

\end{document}